\newtheorem{assumption}{Assumption}
\crefname{theorem}{Theorem}{Theorems}
\crefname{assumption}{Assumption}{Assumptions}
\newtheorem{remark}{Remark}
\newtheorem{definition}{Definition}
    \renewenvironment{proof}[1][]{\par\noindent{\bf \ifthenelse{\equal{#1}{}}{Proof}{#1}\ }}{\hfill\BlackBox\\[2mm]}
    \newenvironment{proof}[1][]{\par\noindent{\bf Proof\ifthenelse{\equal{#1}{}}{Proof}{#1}\ }}{\hfill\BlackBox\\[2mm]}
\definecolor{ao(english)}{rgb}{0.0, 0.5, 0.0}
\definecolor{alizarin}{rgb}{0.82, 0.1, 0.26}
\newcommand{\tagaligneq}{\refstepcounter{equation}\tag{\theequation}}
\renewcommand{\complement}{\mathsf{c}}
\def\bV{\mathbf{V}}
\def\bW{\mathbf{W}}
\def\bX{\mathbf{X}}
\def\bY{\mathbf{Y}}
\def\bZ{\mathbf{Z}}
\def\ba{\mathbf{a}}
\def\bb{\mathbf{b}}
\def\bc{\mathbf{c}}
\def\bh{\mathbf{h}}
\def\bs{\mathbf{s}}
\def\bu{\mathbf{u}}
\def\bv{\mathbf{v}}
\def\bx{\mathbf{x}}
\def\bz{\mathbf{z}}
\def\E{\mathbb{E}}
\def\R{\mathbb{R}}
\def\S{\mathbb{S}}
\def\X{\mathbb{X}}
\def\Y{\mathbb{Y}}
\def\Z{\mathbb{Z}}
\def\cA{\mathcal{A}}
\def\cB{\mathcal{B}}
\def\cC{\mathcal{C}}
\def\cH{\mathcal{H}}
\def\cN{\mathcal{N}}
\def\cO{\mathcal{O}}
\def\cP{\mathcal{P}}
\def\cQ{\mathcal{Q}}
\newcommand{\Var}{\text{\rm Var}}
\newcommand{\indicator}{\mathbbm{1}}
\newcommand*\diff{\mathop{}\!\mathrm{d}}
\newcommand{\ksd}{D}
\newcommand{\MMD}{\mathrm{MMD}}
\newcommand{\KSD}{\mathrm{KSD}}
\newcommand{\cmark}{\ding{51}}
\newcommand{\xmark}{\ding{55}}
\newcommand{\Hc}{H^\mathrm{C}}
\newcommand{\paranoheading}[1]{\textit{#1.}}
\begin{document}

\etocsettocdepth.toc{chapter}

\title{On the Robustness of Kernel Goodness-of-Fit Tests}

\author{\name Xing Liu 
        \email xingliu97@outlook.com \\
        \addr QuantCo
        \AND
        \name Fran\c{c}ois-Xavier Briol 
        \email f.briol@ucl.ac.uk \\
        \addr Department of Statistical Science\\
        University College London}

\editor{Bharath Sriperumbudur}

\maketitle

\begin{abstract}
    Goodness-of-fit testing is often criticized for its lack of practical relevance: since ``all models are wrong'', the null hypothesis that the data conform to our model is ultimately always rejected as the sample size grows. Despite this, probabilistic models are still used extensively, raising the more pertinent question of whether the model is \emph{good enough} for the task at hand. This question can be formalized as a robust goodness-of-fit testing problem by asking whether the data were generated from a distribution that is a mild perturbation of the model. In this paper, we show that existing kernel goodness-of-fit tests are not robust under common notions of robustness including both qualitative and quantitative robustness. We further show that robustification techniques using tilted kernels, while effective in the  parameter estimation literature, are not sufficient to ensure both types of robustness in the testing setting. To address this, we propose the first robust kernel goodness-of-fit test, which resolves this open problem by using kernel Stein discrepancy (KSD) balls. This framework encompasses many well-known perturbation models, such as Huber's contamination and density-band models.
\end{abstract}

\begin{keywords}
  robustness, hypothesis testing, kernel methods, Stein’s method
\end{keywords}

\section{Introduction}
\label{sec: introduction}

Goodness-of-fit (GOF) testing \citetext{\citealp{d2017goodness}; \citealp[Chapter 16]{lehmann2008testing}} tackles the question of how well a given probabilistic model describes some observed data. More formally, given a model $P$ and observations drawn independently from some distribution $Q$, GOF testing compares the null hypothesis $H_0: Q = P$ against the alternative hypothesis $H_1: Q \neq P$. This process is fundamental to validating a model before it is used for predictions, decision-making, or providing probabilistic guarantees on important quantities of interest. Indeed, GOF testing is closely linked to the concept of statistical significance, which now permeates almost all areas of science and engineering. 

In this paper, we focus exclusively on kernel-based GOF tests \citep{chwialkowski2016kernel,liu2016kernelized}. Unlike most classical alternatives such as likelihood-ratio tests \citep[Chapter 8.7]{hogg1977probability} and Kolmogorov-Smirnov tests \citep{kolmogorov1933sulla}, kernel GOF tests can be used for models whose density function is known only up to a multiplicative constant. This is a significant advantage since it enables their use for models which are beyond the reach of other tests, including many modern flexible density estimation models, energy-based models, large graphical models, and Bayesian posteriors. Kernel GOF tests achieve this through a tractable test statistic based on the \emph{score function} (i.e., the gradient of the log-density), which is widely available since it can be evaluated without knowledge of the normalization constant of the density. This property has made kernel GOF tests popular and has led to numerous extensions specializing the approach to problems involving time-to-event data \citep{fernandez2020kernelized}, discrete data \citep{yang2018goodness}, point processes \citep{yang2019stein}, manifold data \citep{xu2020stein}, graphs \citep{xu2021stein}, protein structures \citep{amin2023kernelized}, and text documents of variable lengths \citep{baum2023kernel}.

A significant drawback of GOF testing is the following conundrum: in almost all real-world applications, models are wrong, causing GOF tests to ultimately reject the null hypothesis as the sample size grows. However, there is often a difference between statistical significance from a GOF perspective, and the practical relevance of the model. For instance, \emph{data corruption} may arise in signal processing due to sensor failures \citep{rizzoni1991detection,sharma2010sensor}, in classification tasks due to mislabelled entities \citep{frenay2013classification}, and in radio systems due to impulsive noise \citep{blackard1993measurements}. In these cases, $H_0$ may be rejected even though $P$ is \emph{almost} correct up to these mild perturbations, and thus it could still be useful for downstream tasks such as prediction or probabilistic inference. 

This conundrum has led to the development of \emph{robust tests} \citep{huber1965robust,lecam1973convergence,dabak1994geometrically,fauss2021minimax}, which aim to control the Type-I error rate (the probability of falsely rejecting the null hypothesis) under a \emph{composite} null hypothesis $\Hc_0: Q \in \cP_0$. The set $\cP_0$ is a family of probability distributions, called the uncertainty set \citep{fauss2021minimax}, which is constructed to include $P$ and distributions similar to $P$ in some appropriate notion. By designing $\cP_0$ to contain potential contaminations, such as adversarial contaminations or outliers, the composite null hypothesis $\Hc_0$ can now hold in realistic scenarios, thus avoiding the aforementioned conundrum and aligning statistical and practical significance. 

As a concrete example, consider a scenario where a practitioner models a data set of sensor signals that are potentially contaminated by outliers. Suppose the fitted model accurately captures the true signal in the data set. A standard GOF test might still reject the standard null hypothesis $H_0: Q = P$ due to the presence of outliers. Instead, one can adopt a composite null hypothesis $\Hc_0: Q \in \cP_0$, where $\cP_0$ is constructed to include ``contaminated versions'' of $P$ within a certain tolerance. A GOF test designed to be calibrated against $\Hc_0$ can now reject the model with the correct test level.

Unfortunately, robust kernel-based GOF tests have only received limited attention in the existing literature. The closest work is that of \citet{sun2023kernel,schrab2024robust}, who proposed robust tests using uncertainty sets defined by the Maximum Mean Discrepancy (MMD; \citealt{muller1997integral,gretton2012kernel}). These tests require samples from \emph{both} $Q$ and $P$. However, generating samples from $P$ may be infeasible or computationally expensive, and approximating $P$ by finite samples can also reduce statistical efficiency. 

We propose a novel class of robust kernel GOF tests based on the \emph{kernel Stein discrepancy} (KSD). These tests are appropriate in the \emph{one-sample} setting, where we assume samples are available only from $Q$, and an unnormalized density is available for $P$. To guarantee robustness against mild perturbations, we use \emph{tilted} kernels proposed in \citet{barp2019minimum,matsubara2022robust}, in a related construction for robust parameter estimation. We then study \emph{qualitative} and \emph{quantitative} robustness, two notions of robustness which capture a test's insensitivity to perturbations around $P$, as formally introduced in \Cref{sec: robustness GOF test}. Our contributions are summarized in \Cref{tab: summary} and detailed below:
\begin{enumerate}
    \item In \Cref{sec: lack_robustness}, we study the robustness properties of existing KSD tests. We show that KSD tests with stationary kernels are \emph{not necessarily} robust under infinitesimal model deviation (which we call \emph{qualitative robustness}; see \Cref{def: qualitative robustness}), and \emph{never} robust against fixed classes of misspecified models (called \emph{quantitative robustness}; see \Cref{def: quantitative robustness}). We then show that qualitative robustness can be guaranteed by using appropriately tilted kernels, but this is \emph{not} sufficient to guarantee quantitative robustness.
    \item In \Cref{sec: robust KSD test}, we propose a kernel GOF test that controls the Type-I error rate against all distributions in a KSD-ball of radius $\theta > 0$ centered at $P$. This test is straightforward to implement since it is identical to existing KSD tests up to a shift of the test statistic by $\theta$. We then make recommendation on how to select $\theta$ in practice to ensure robustness to common perturbations such as Huber's contaminations (which includes outliers) or density-band contaminations. 
\end{enumerate}

\begin{table}[t]
    \begin{center}
        \resizebox{\textwidth}{!}{
        \begin{tabular}{lccc}
            \multicolumn{1}{c}{\bf Goodness-of-Fit Test} &\multicolumn{1}{c}{\bf Qualitative robustness}  &\multicolumn{1}{c}{\bf Quantitative Robustness}  &\multicolumn{1}{c}{\bf Result} \\  
            \\ \hline \\
            Existing KSD test $+$ stationary kernels    & {\color{red}\xmark} & {\color{red}\xmark}  & \Cref{thm: non robust stationary} \\
            Existing KSD test $+$ tilted kernels    & {\color{ao(english)}\cmark}   & {\color{red}\xmark} & \Cref{thm: robust tilted} \\
            Novel KSD tests proposed in this paper & {\color{ao(english)}\cmark} & {\color{ao(english)}\cmark} & \Cref{thm: bootstrap validity} \\
        \end{tabular}
        }
        \vspace{-5mm}
    \end{center}
    \caption{Summary of theoretical contributions to the robustness of kernel GOF tests provided in this paper.}
    \label{tab: summary}
    \vspace{-2mm}
\end{table}


\section{Background}
\label{sec: Background}
We now review the necessary background for this work, covering kernel Stein discrepancy and robust GOF testing. We begin by briefly summarizing our notation.

Let $\cP(\R^d)$ denote the set of probability measures on $\R^d$. Given an integer $r > 0$, $\cC^r$ is the set of functions $f: \R^d \to \R$ that are $r$-times continuously differentiable. The gradient of $f \in \cC^1$ is denoted $\nabla f(\bx) = (\partial_1 f(\bx), \ldots, \partial_d f(\bx))^\top$. The set $\cC_b^1$ contains functions in $\cC^1$ that are bounded and with bounded gradient, and the set $\cC_b^2$ contains $f \in \cC_b^1$ for which $\sup_{\bx \in \R^d} | \nabla^\top \nabla f(\bx) | < \infty$, where $\nabla^\top \nabla f(\bx) = \sum_{j=1}^d \partial_j \partial_j f(\bx)$. For a function $k: \R^d \times \R^d \to \R$ with two arguments, $\nabla_i k(\bx, \bx')$ will be used to denote its gradient with respect to the $i$-th argument respectively for $i = 1, 2$, and we define $\nabla_1^\top \nabla_2 k(\bx, \bx') = \sum_{j=1}^d \partial_{1 j} \partial_{2 j}  k(\bx, \bx')$, where $\partial_{ij} k(\bx, \bx') = [\nabla_i k(\bx, \bx')]_j$ for $i=1, 2$ and $j=1, \ldots, d$. The set $\cC^{(1, 1)}$ (resp., $\cC^{(1,1)}_b$) denotes all functions $k: \R^d \times \R^d \to \R$ with $(\bx, \bx') \mapsto \partial_{1j} \partial_{2j} k(\bx, \bx')$ continuous (resp., continuous and bounded) for all $j = 1, \ldots, d$.

\subsection{The Kernel Stein Discrepancy}
A key ingredient for kernel GOF tests is the kernel Stein discrepancy \citep[KSD,][]{chwialkowski2016kernel,liu2016kernelized,gorham2017measuring,oates2017control}.
Throughout, we assume $P,Q \in \cP(\R^d)$, and $P$ admits a Lebesgue density function $p \in \cC^1$ such that $p > 0$ on $\mathbb{R}^d$. Let $k \in \cC^{(1, 1)}$ be a scalar-valued reproducing kernel with associated reproducing kernel Hilbert space \citep[RKHS,][]{berlinet2004reproducing} denoted by $\cH_k$. The \emph{Langevin KSD} gives a notion of discrepancy between $Q$ and $P$ and is defined as 
\begin{align*}
    \ksd(Q, P)
    \;&\coloneqq\;
    \sup_{f \in \cB}
    \big| \E_{\bX \sim Q}[ (\cA_p f) (\bX) ] \big| = \sup_{f \in \cB}
    \left| \int_{\mathbb{R}^d} (\cA_p f) (\bx) Q(\diff\bx)  \right|
    \;,
\end{align*}
where $\cB \coloneqq \{ h=(h_1, \ldots, h_d): \; h_j \in \cH_k \textrm{ and } \sum_{j=1}^d \| h_j \|_{\cH_k}^2 \leq 1 \}$ is the unit ball in the $d$-times Cartesian product $\cH_k^d$ of the RKHS $\cH_k$. The operator $\cA_p$ is a linear operator called the \emph{Langevin Stein operator} and it maps (suitably regular) vector-valued functions $f$ to scalar-valued ones via $(\cA_p f)(\bx) \coloneqq \bs_p(\bx)^\top f(\bx) + \nabla^\top f(\bx)$, where $\bs_p(\bx) \coloneqq \nabla_\bx \log p(\bx)$ is known as the \emph{(Hyv\"{a}rinen) score function} of $P$ \citep{hyvarinen2005estimation}. Throughout this paper, we will shorten Langevin KSD to KSD for brevity, but note that other Stein operators can also be used to construct KSDs; see \cite{anastasiou2023stein} for a review. When $\E_{\bX \sim P}[\| \bs_p \|_2 ] < \infty$, the \emph{squared} KSD admits the following double-integral form \citetext{\citealp[Corollary 1]{barp2022targeted}; \citealp{gorham2017measuring}}
\begin{subequations}
    \begin{align}
        \ksd^2(Q, P)
        \;&=\;
        \E_{\bX, \bX' \sim Q}[ u_p (\bX, \bX') ] = \int_{\mathbb{R}^d} \int_{\mathbb{R}^d} u_p (\bx, \bx')Q(\diff\bx) Q(\diff\bx')
        \;,
        \nonumber
        \\
        u_p(\bx, \bx')
        \;&\coloneqq\;
        \bs_p(\bx)^\top \bs_p(\bx') k(\bx, \bx') + \bs_p(\bx)^\top \nabla_2 k(\bx, \bx')
        \\
        &\;\quad
        + \nabla_1 k(\bx, \bx')^\top \bs_p(\bx') + \nabla_1^\top \nabla_2 k(\bx, \bx') 
        \;,
    \end{align}
    \label{eq:Stein_kernel}%
\end{subequations}
whenever $D^2(Q, P) $ is well-defined. The function $u_p$ is also a reproducing kernel, known as the \emph{Stein (reproducing) kernel}. The main advantage of the KSD is that it is computable even if $p$ is unnormalized. Indeed, $u_p$ depends on $p$ only through $\bs_p$, which does not depend on the normalizing constant of $p$ (as the constant is cancelled due to the differentiation). The KSD can be straightforwardly estimated at $\cO(n^2)$ cost using a V-statistic estimator. It is formed by replacing $Q$ with its empirical version $Q_n$ based on independent observations $\X_n \coloneqq \{\bX_i \}_{i=1}^n$ drawn from $Q$:
\begin{align}
    D^2(\X_n) 
    \;&\coloneqq\;
    \ksd^2(Q_n, P) = 
    \frac{1}{n^2}\sum_{i=1}^n\sum_{j=1}^n u_p(\bX_i, \bX_j)
    \;,
    \label{eq: KSD V stat}
\end{align}
where we have overloaded the notation by writing $D^2: (\R^d)^n \to [0, \infty)$ as a test statistic. The V-statistic is non-negative, biased but consistent. 

An alternative estimator is a U-statistic, which differs from \eqref{eq: KSD V stat} by summing only over \emph{disjoint} index pairs $i \neq j$. Although a U-statistic is unbiased, it can take negative values. Therefore, we focus on V-statistics in this paper, and leave a discussion on how to extend our results to U-statistics to \Cref{app: extension to u-stats}. 

When the kernel $k$ is \emph{characteristic}, the KSD is \emph{$P$-separating}, meaning that $\ksd(Q, P) \geq 0$, with equality if and only if\ $Q = P$ for all $Q$ that finitely integrates $\|\bs_p\|_2$; see, e.g., \citet[Theorem 3]{barp2022targeted}. Most characteristic kernels used in kernel GOF tests are sufficiently smooth stationary kernels of the form $k(\bx,\bx') = h(\bx-\bx')$, where $h \in \mathcal{C}^2_b$ and $h(0)>0$. These include the squared-\emph{exponential kernel} $k(\bx, \bx') = \exp(-\| \bx - \bx' \|^2 / (2\lambda^2))$ and the \emph{inverse multi-quadric} (IMQ) kernel $k(\bx, \bx') = (1 + \| \bx - \bx' \|_2^2 /\lambda^2 )^{-b} $, where $b > 0$, and $\lambda > 0$ is a hyperparameter known as the \emph{bandwidth}. The IMQ kernel is often preferred in practice since the resulting KSD has the desirable property of \emph{$P$-convergence control} \citep{gorham2017measuring,barp2022targeted}. The bandwidth $\lambda$ also plays a crucial role in the performance of kernel-based tests \citep{reddi2015high,ramdas2015decreasing,huang2023highdimensional}, and a \emph{median-heuristic} \citep[Section 5]{fukumizu2009kernel} is often used to select its value in practice \citep{liu2016kernelized,chwialkowski2016kernel}.

\subsection{Kernel Goodness-of-Fit Testing}
\label{sec: KSD-based GOF Testing}

KSD is a natural choice of test statistic for GOF testing, since, by the $P$-separation property of KSD, testing $H_0: Q = P$ is equivalent to testing whether $D(Q, P) = 0$. As KSD is non-negative, A GOF test can therefore be constructed whereby $H_0$ is rejected for large values of the KSD estimate \citep{chwialkowski2016kernel,liu2016kernelized}. 
 KSD tests have been used for GOF testing with unnormalized models in a wide range of applications and data structures \citep{yang2018goodness,fernandez2020kernelized,xu2021stein,xu2021interpretable,amin2023kernelized}. 
Extensions have also been developed to address its limitations, such as high computational cost \citep{jitkrittum2017linear,huggins2018random}, difficulty in bandwidth selection \citep{schrab2022ksd} and lack of test power against certain alternatives \citep{liu2023using}.

A significant practical challenge with KSD tests is determining an appropriate decision threshold. A valid threshold can be obtained by considering the quantiles of the null distribution of $D^2(\X_n)$, but since this distribution is not usually tractable, bootstrapping is often employed to estimate it. One approach is to compute the empirical quantiles of bootstrap samples of the form
\begin{align}
    D^2_{\bW}(\X_n)
    \;=\;
    \frac{1}{n^2}\sum_{i=1}^n\sum_{j=1}^n (W_i - 1) (W_j - 1) u_p(\bX_i, \bX_j)
    \label{eq: bootstrap sample}
    \;,
\end{align} 
where $\bW \coloneqq (W_1, \ldots, W_n) \sim \textrm{Multinomial}(n; 1/n, \ldots, 1/n)$. This procedure, called \emph{Efron's bootstrap} or \emph{weighted bootstrap} \citep{arcones1992bootstrap,janssen1994weighted}, assigns a multinomial weight to each observation, which mimics recomputing KSD using data sampled with replacement from $\X_n$. Other bootstrap methods such as wild bootstraps \citep{leucht2013dependent,shao2010dependent} are also viable, particularly when the samples are potentially correlated \citep{chwialkowski2016kernel}, but we focus on the weighted bootstrap since we find that they perform similarly in our setting with independent samples; see \Cref{app: efron vs wild}. 

The test threshold is selected as the $(1-\alpha)$-quantile of the distribution of $D^2_{\bW}(\X_n)$ conditional on $\X_n$, i.e.,
\begin{align}
    q^2_{\infty, 1-\alpha}(\X_n)
    \;\coloneqq\;
    \inf\big\{ u \in \R: \; 1 - \alpha \leq \Pr\nolimits_{\bW}\big( D^2_{\bW}(\X_n) \leq u \;|\; \X_n \big) \big\}
    \;.
    \label{eq: boot quantile population}
\end{align}
In practice, this quantile is approximated with Monte-Carlo estimation by first drawing $B$ independent copies $\{ \bW^b \}_{b=1}^B$ of $\bW$, and then computing
\begin{align}
    q^2_{B, 1-\alpha}(\X_n)
    \;\coloneqq\;
    \inf\Bigg\{ u \in \R: \; 1 - \alpha \leq \frac{1}{B + 1} \bigg( \indicator\{D^2(\X_n) \leq u\} + \sum_{b=1}^{B} \indicator\{ D^2_{\bW^b}(\X_n) \leq u \} \bigg) \Bigg\}
    \label{eq: boot quantile}
    \;,
\end{align}
where $\indicator\{\cA\}$ denotes the indicator function for the event $\cA$. The KSD test then rejects $H_0$ if $D^2(\X_n) > q^2_{B, 1-\alpha}(\X_n)$. This test is asymptotically well-calibrated for  $H_0: Q=P$ \citep[Theorem 4.3]{liu2016kernelized}. When KSD is $P$-separating, this test is also \emph{consistent}, meaning that, whenever $Q \neq P$, the probability of rejection approaches one as the sample size $n \to \infty$ \citep{chwialkowski2016kernel,liu2016kernelized}.

\subsection{Robustness for Goodness-of-Fit Testing}
\label{sec: robustness GOF test}

Robustness of GOF tests refers to the lack of sensitivity of the test outcome to small model deviations \citep{rieder1982qualitative,lambert1982qualitative}. Model deviations can be formalized as a neighborhood $\cP_0$ around a nominal distribution $P$. The neighborhood $\cP_0$ is often called an \emph{uncertainty set}, and encodes the practitioner's uncertainty on $P$. One popular example for $\cP_0$ is \emph{Huber's contamination model} \citep{huber1964robust,huber1965robust}
\begin{align}
    \cP(P; \epsilon)
    \;\coloneqq\;
    \{ (1 - \epsilon') P + \epsilon' R: \; 0 \leq \epsilon' \leq \epsilon, \; R \in \cP(\R^d)\}
    \label{eq: Huber model}
    \;,
\end{align}
where $\epsilon \in [0, 1]$ is the maximal contamination ratio, and the probability measure $R$ acts as arbitrary contamination. This model is appropriate when practitioners believe that $P$ accurately describes all but a small proportion of the data. 

Huber's models have been studied in the context of robust GOF testing including \citet{huber1965robust,qin2017robust}, as well as in robust estimation \citep{hampel1974influence,huber2011robust}. Special interests lie in the case when $R$ is restricted to point masses, i.e., $R = \delta_\bz$ for some $\bz \in \R^d$, where robustness is often called \emph{bias-robustness} \citep{huber2011robust}. Beyond Huber's models, $\cP_0$ can also be chosen as \emph{density-band models} \citep{kassam1981robust}, which assume the density of the data-generating distribution is close to the model density up to a small error (see \Cref{sec: choosing uncertainty radius}). Moreover, $\cP_0$ can be set to a ball defined via a statistical divergence or metric, such as Hellinger distance \citep{lecam1973convergence}, Wasserstein metric \citep{gao2018robust}, and Maximum Mean Discrepancy \citep{sun2023kernel}.

Uncertainty sets $\cP_0$ provide a framework for assessing the robustness of a GOF test, by studying the rejection probability when $Q$ deviates from the nominal distribution $P$ but remains within $\cP_0$. In the robust testing literature, a common approach is to consider a sequence $\{\cP_0^n\}_{n=1}^\infty$ of uncertainty sets with \emph{shrinking} size as the sample size $n$ increases. This is formalized in the following notion of \emph{qualitative robustness}, inspired by \citet{rieder1982qualitative}.
\begin{definition}[Qualitative robustness to a sequence of neighborhood]
    \label{def: qualitative robustness}
    Let $\cP_0^1 \supseteq \cP_0^2 \supseteq \ldots$ be a sequence of subsets of $\cP(\R^d)$ that contain $P$ and such that $\cap_{n=1}^\infty \cP_0^n = \{P\}$. For any positive integer $n$, let $T_n: \big(\R^d\big)^n \to \R$ be a test statistic where a large value of $T_n$ suggests deviation from the null $H_0: Q = P$, and let $\gamma_n: \big(\R^d\big)^n \to \R$ be a function that computes the decision threshold. A sequence of hypothesis tests (indexed by $n$) that reject $H_0$ when $T_n > \gamma_n$ is \emph{qualitatively robust} to $\{\cP_0^n\}_{n=1}^\infty$ if, as $n \to \infty$,
    \begin{align}
        \sup\nolimits_{Q \in \cP_0^n } \big| \Pr\nolimits_{\X_n \sim Q}\big( T_n(\X_n) > \gamma_n(\X_n) \big) - \Pr\nolimits_{\X_n^\ast \sim P}\big( T_n(\X_n^\ast) > \gamma_n(\X_n^\ast) \big) \big|
        \;\to\; 0
        \;.
        \label{eq: qualitative robustness}
    \end{align}
\end{definition}
Intuitively, this notion of robustness asks how sensitive the test outcome is under infinitesimally small model deviation. The shrinking-size condition ensures that the rejection probability under any $Q \neq P$ does not trivially approach one due to the consistency of the test. By choosing the $\cP_0^n$ in \Cref{def: qualitative robustness} to be Prokhorov balls \citep{prokhorov1956convergence}, we would recover the qualitative robustness introduced in \citet[Definition 2.1]{rieder1982qualitative}, which parallels the conventional notion of qualitative robustness for estimators \citep[Remark 2]{rieder1982qualitative}. However, distributions in Prokhorov balls do not have a simple form, posing challenges to the analysis. Our definition extends it to a general sequence of neighborhood. In \Cref{sec: lack_robustness}, we will choose $\cP_0^n$ to be Huber's contamination models \eqref{eq: Huber model}, which both significantly simplifies the analysis and encompasses a wide range of relevant scenarios. We will show that, within Huber's neighborhood, the standard KSD test is \emph{not necessarily} qualitatively robust with stationary kernels, but is qualitatively robust with appropriately \emph{tilted} kernels.

Moreover, our definition of qualitative robustness is tied to a \emph{sequence} of shrinking neighborhood $\{ P_0^n \}_{n=1}^\infty$. Clearly, if a test is qualitatively robust to a sequence $\{ P_0^n \}_{n=1}^\infty$, then it is also qualitatively robust to any sequences that decay faster. Therefore, loosely speaking, the rate of decay of $\{ P_0^n \}_{n=1}^\infty$ characterizes the \emph{degree} of qualitative robustness. In \Cref{sec: lack_robustness}, we will explicitly derive the rate required for the standard KSD test to retain qualitative robustness.

However, qualitative robustness has its own limitations. It only concerns the insensitivity of a test to \emph{sufficiently small} model deviations, but offers no guarantees for deviations of a \emph{fixed} size. The latter scenario is more practically pertinent, as practitioners often need to account for a specific form of model misspecification and require the test to remain well-calibrated under that level of uncertainty. This can be formalized by relaxing the point null hypothesis $H_0$ to a composite hypothesis $\Hc_0: Q \in \cP_0$, and requiring calibration under $\Hc_0$. This motivates the following notion of \emph{quantitative robustness}.
\begin{definition}[Quantitative robustness to a single neighborhood]
\label{def: quantitative robustness}
    Given $\alpha \in (0, 1)$ and $\cP_0 \subset \cP(\R^d)$, a test is \emph{quantitatively robust to $\cP_0$} at level $\alpha$ if its rejection probability under \emph{any} $Q \in \cP_0$ does not exceeds $\alpha$.
\end{definition}
Quantitatively robust GOF tests have been developed for various types of uncertainty sets $\cP_0$, including Huber's contamination model \citep{huber1965robust} and neighborhoods defined by Kullback-Leibler divergence \citep{levy2008robust,yang2018robust} or $\alpha$-divergence \citep{gul2016robust}. These tests enjoy minimax optimality but require the normalizing constant of $P$ to be known, thus not applicable to unnormalized models. Two-sample tests that are quantitatively robust to Hellinger distance \citep{lecam1973convergence}, Wasserstein distance \citep{gao2018robust}, and Maximum Mean Discrepancy \citep{sun2023kernel,gao2021maximum} have also been proposed. However, to be used for GOF testing, they require approximation of $P$ by finite samples, which incurs extra approximation error and can be computationally demanding due to the non-trivial task of sampling from $P$. 

In \Cref{sec: robust KSD test}, we will choose $\cP_0$ to be a \emph{KSD ball} centered at $P$. We choose KSD balls over other types of neighborhood because \emph{(i)}  this choice naturally lends itself to a GOF test that is both easy to implement and guarantees robustness at little extra computational cost, and \emph{(ii)} it is easy to select the radius of such balls so that our proposed test is quantitatively robust to various types of contamination of interest, such as Huber's contaminations or density-band contaminations.

We conclude this section with a brief comparison between qualitative and quantitative robustness. Qualitative robustness concerns the limiting behavior of a test against some specific class of local alternatives. Thus, it is closely tied to the minimax separation boundary of a test \citep{ingster1987minimax,ingster1993asymptotically}. 
Moreover, it offers a notion of ``degree of robustness'' of a given test via the decay rate of the uncertainty neighbouhood sequence $\{\cP_0^n\}_{n=1}^\infty$. In contrast, quantitative robustness cannot be used for this purpose, because any consistent test is not quantitatively robust by definition. 
However, quantitative robustness has more \emph{practical} relevance, as it concerns a \emph{fixed} composite null set, which can be explicitly constructed to enforce robustness. This is why we have introduced two different notions of robustness.


\section{The (Lack of) Robustness of Existing Kernel Goodness-of-fit Tests}
\label{sec: lack_robustness}

We will now study the robustness of standard KSD GOF tests using stationary kernels in \Cref{sec: Non-robustness of KSD Test}, and then their tilted counterparts which are popular in the parameter estimation literature in \Cref{sec: robustness of KSD test with tilted kernel}. 

\subsection{Existing KSD Tests with Stationary Kernels are not Qualitatively Robust}
\label{sec: Non-robustness of KSD Test}

Our first result states that contamination of Huber's type can considerably affect the probability of the standard KSD test rejecting the null hypothesis $H_0: Q = P$. Our result holds with the bootstrap threshold defined in \eqref{eq: boot quantile population}, and all probabilities are taken over the randomness of both the sample and the bootstrap weights $\bW \sim \textrm{Multinomial}(n; 1/n, \ldots, 1/n)$. The proof is in \Cref{app: non-robust stationary}.
\begin{theorem}
    \label{thm: non robust stationary}
    Assume $\E_{\bX \sim P}[ \| \bs_p(\bX) \|_2^4 ] < \infty$, the function $\bx \mapsto \| \bs_p(\bx) \|_2$ is unbounded, $k(\bx, \bx') = h(\bx - \bx')$ with $h \in \cC_b^2$ and $h(0) > 0$, and assume the integrability conditions
    \begin{align*}
        \sup_{\bz \in \R^d} \|\bs_p(\bz)\|_2^4 \E_{\bX \sim P} \big[ h(\bX - \bz)^4 \big]
        \;<\; \infty 
         \quad \text{and} \quad 
        \sup_{\bz \in \R^d} \|\bs_p(\bz)\|_2^2 \E_{\bX \sim P} \big[ \|\nabla h(\bX - \bz)\|_2^2 \big]
        \;<\; \infty
        \;.
    \end{align*} 
    Then, for any test level $\alpha \in (0, 1)$ and any sequence $\{\epsilon_n\}_{n=1}^\infty$ with $\epsilon_n = o(n)^{-1}$, the following holds as $n \to \infty$,
    \begin{align*}
        &
        \sup_{Q \in \cP(P; \epsilon_n)}\big| \Pr\nolimits_{\X_n \sim Q, \bW}\big( D^2(\X_{n}) > q^2_{\infty, 1-\alpha}(\X_{n}) \big)
         - \Pr\nolimits_{\X_n^\ast \sim P, \bW}\big( D^2(\X_{n}^\ast) > q^2_{\infty, 1-\alpha}(\X_{n}^\ast) \big) \big|
         \\
         \;&\to\;
         1 - \alpha
         \;,
    \end{align*}
    where $\cP(P; \epsilon_n)$ is the Huber's contamination model defined in \eqref{eq: Huber model}.
\end{theorem}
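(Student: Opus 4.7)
The plan is to exhibit, for each $n$, a specific Huber contamination $Q_n = (1-\epsilon_n) P + \epsilon_n \delta_{z_n} \in \cP(P; \epsilon_n)$ under which the bootstrap KSD test rejects with probability tending to one. Since the null rejection probability tends to $\alpha$ by standard bootstrap validity, the difference will then approach $1-\alpha$; the matching upper bound is trivial as probabilities lie in $[0,1]$ and the null probability converges to $\alpha$. The adversarial point $z_n$ is chosen using the unboundedness of $\|\bs_p\|_2$: pick $z_n$ so that $u_p(z_n, z_n) \to \infty$ at any rate faster than $1/(n\epsilon_n^2)$, which is possible because, for a stationary kernel with even $h \in \cC_b^2$, we have $\nabla_2 k(z,z) = -\nabla h(0) = 0$ and $\nabla_1^\top \nabla_2 k(z,z)$ is a constant, so $u_p(z_n, z_n) = \|\bs_p(z_n)\|_2^2\, h(0) + O(1)$, which can be made arbitrarily large.

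Next, I analyze $D^2(\X_n)$ under $\X_n \sim Q_n$. Let $N$ denote the number of samples equal to $z_n$ and $S$ their index set; then $N \sim \mathrm{Bin}(n, \epsilon_n)$, and since $n\epsilon_n \to \infty$ (the interpretation of $\epsilon_n^{-1} = o(n)$) we have $N/(n\epsilon_n) \to 1$ in probability. Splitting $\X_n$ yields
\[
D^2(\X_n) = \frac{1}{n^2}\bigl( T_{pp} + 2 T_{pz} + T_{zz} \bigr), \qquad T_{zz} = N^2 u_p(z_n, z_n),
\]
where $T_{pz} = N \sum_{i \notin S} u_p(X_i, z_n)$ and $T_{pp}$ is a V-statistic on the clean samples. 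The Stein identity $\E_P[u_p(\cdot, z_n)] = 0$, combined with the two integrability conditions which bound $\E_P[u_p(X, z_n)^2]$ uniformly in $z_n$, makes $T_{pz}/n^2$ of smaller order than $T_{zz}/n^2$; the standard KSD rate under $P$ (using $\E_P[\|\bs_p\|_2^4] < \infty$) gives $T_{pp}/n^2 = O_P(1/n)$. Hence $D^2(\X_n) = (1+o_P(1))\,\epsilon_n^2 u_p(z_n, z_n)$.

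The third step bounds the bootstrap quantile $q^2_{\infty, 1-\alpha}(\X_n)$. Conditional on $\X_n$, I decompose $D^2_{\bW}$ analogously. The dominant contribution is $u_p(z_n, z_n)\bigl(\sum_{i \in S}(W_i-1)\bigr)^2/n^2$, and the multinomial identities $\Var(W_i) = 1 - 1/n$ and $\Cov(W_i, W_j) = -1/n$ for $i \neq j$ yield
\[
\E_{\bW}\!\left[\bigl(\tsum_{i \in S}(W_i - 1)\bigr)^2 \,\Big|\, \X_n\right] = N(1 - N/n) = O(n\epsilon_n),
\]
so this term has conditional mean of order $\epsilon_n u_p(z_n, z_n)/n$. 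A parallel computation, using the integrability conditions to control $\E_P[u_p(z_n, X)^2]$ uniformly in $z_n$ together with the multinomial fourth-moment structure, shows that $\Var_{\bW}(D^2_{\bW}\mid \X_n)$ is of smaller order than the square of its conditional mean. Chebyshev then gives $q^2_{\infty, 1-\alpha}(\X_n) \leq C_\alpha \epsilon_n u_p(z_n, z_n)/n$ with probability tending to one under $\X_n \sim Q_n$.

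Combining the two bounds, the ratio of the test statistic to the bootstrap quantile is at least of order $n\epsilon_n \to \infty$, so $\Pr_{\X_n \sim Q_n, \bW}\bigl(D^2(\X_n) > q^2_{\infty, 1-\alpha}(\X_n)\bigr) \to 1$. Together with $\Pr_{\X_n^\ast \sim P, \bW}\bigl(D^2(\X_n^\ast) > q^2_{\infty, 1-\alpha}(\X_n^\ast)\bigr) \to \alpha$, this yields convergence of the supremum to $1 - \alpha$. The main technical obstacle is the quantile bound in the third step: one must rigorously control the conditional fourth-moment expressions of $D^2_{\bW}$, which mix multinomial covariances of several orders with kernel evaluations $u_p(z_n, X_j)$ and $u_p(X_i, X_j)$ for clean $X_i, X_j \sim P$, and ensure the resulting bounds hold simultaneously with high probability over the random contaminated sample.
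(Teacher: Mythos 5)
Your proposal follows essentially the same route as the paper: a Dirac contamination at a point where the score explodes, the same three-block decomposition of the V-statistic, a quantile bound showing the bootstrap threshold scales like $\epsilon_n u_p(z_n,z_n)/n$ against the statistic's $\epsilon_n^2 u_p(z_n,z_n)$, and the ratio $n\epsilon_n \to \infty$ driving rejection; the paper merely conditions on the binomial contamination count (plus a binomial concentration bound) and controls the multinomial quadratic form with a Hanson--Wright inequality where you use conditional Chebyshev. Two small repairs are needed. First, the claim that $\Var_{\bW}(D^2_{\bW}\mid\X_n)$ is of smaller order than the squared conditional mean is false in general: the Frobenius norm of the contaminated block of the Stein Gram matrix is of order $N\,u_p(z_n,z_n)$, the same order as the leading term of the conditional mean, so the conditional standard deviation is comparable to the mean; Chebyshev still gives $q^2_{\infty,1-\alpha}(\X_n)\le C_\alpha\,\epsilon_n u_p(z_n,z_n)/n$, only with $C_\alpha$ absorbing an $\alpha^{-1/2}$ factor, so the conclusion survives. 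Second, dominating the cross term $T_{pz}/n^2 = O_P(\epsilon_n/\sqrt{n})$ requires $u_p(z_n,z_n)\gg 1/(\epsilon_n\sqrt{n})$ in addition to your stated $\gg 1/(n\epsilon_n^2)$; both growth conditions can be met simultaneously because $\|\bs_p\|_2$ is unbounded, so $z_n$ can always be pushed further out.
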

\Cref{thm: non robust stationary} immediately implies that the standard KSD test is \emph{not} qualitatively robust to any sequence of Huber's models that satisfies the rate condition $\epsilon_n = o(n)^{-1}$. At first glance, the lack of qualitative robustness might seem trivial given that the KSD test is consistent, meaning that the rejection probability under any $Q \neq P$ approaches one as $n \to \infty$. However, one subtlety is that the set $\cP(P; \epsilon_n)$ can also shrink in size, since $\epsilon_n$ is allowed to decay. The condition $\epsilon_n = o(n)^{-1}$ imposes a lower bound on this decay rate. Intuitively, this rate condition requires $\epsilon_n n$, the expected number of contamination in the sample, to grow with $n$. In particular, this excludes the case where $\epsilon_n n$ is a constant. Moreover, since Huber's models $\cP(P; \epsilon_n)$ are contained within Prokhorov balls, \Cref{thm: non robust stationary} immediately implies non-qualitative robustness to Prokhorov neighborhoods. 

\begin{remark}[Unbounded Stein kernel]
\label{rem: unbounded stein kernels}
    \Cref{thm: non robust stationary} is a consequence of the fact that the Stein kernel evaluated at $\bx$, i.e., $\bx \mapsto u_p(\bx, \bx)$, is \emph{unbounded} when $P$ has an exploding score function. Exploding score functions are often associated with light tails. Examples of such distributions include those with a density of the form $p(\bx) \propto \exp(- \| \bx \|_2^r)$ with $r \geq 2$. These distributions have sub-Gaussian tails, and their score function has the form $\bs_p(\bx) = - r \bx \|\bx\|_2^{r-2}$, which is unbounded for $r \geq 2$. 
    On the other hand, this exploding-score condition excludes heavy-tailed models with bounded score functions, such as super-Laplacian or t-distributions \citep{gorham2017measuring,barp2022targeted}.  
\end{remark}
\begin{remark}[Moment conditions]
    The moment conditions in \Cref{thm: non robust stationary} are mild. For the example from \Cref{rem: unbounded stein kernels}, namely $p(\bx) \propto \exp(- \| \bx \|_2^r)$ with $r \geq 2$, direct computation shows that these conditions hold if $h(\bx)$ and $\nabla h(\bx)$ decay at least as fast as $\|| \bx \|_2^{-(r-1)}$ so as to cancel the growth of the score $\| \bs_p(\bx) \|_2 = r\|\bx\|_2^{r-1}$. Such kernels include squared-exponential kernels, IMQ kernels $h(u) = (1 + \| u \|_2^2)^{-s/2}$ with $s \geq r - 1$, and Mat\'{e}rn kernels with sufficient smoothness. \Cref{sec: contam gaussian} will provide numerical evidence using a Gaussian model.
\end{remark}

\begin{remark}[Connection to separation boundaries]
\label{rem: separation boundary}
    Let $\cQ_\delta \coloneqq \{Q \in \cP(\R^d): S(Q, P) \geq \delta \}$, where $S$ is some statistical divergence or metric. The \emph{separation boundary} \citep{ingster1987minimax,ingster1993asymptotically} of a test is the fastest decaying sequence $\{\delta_n\}_n$ such that the test power under any $Q \in \cQ_{\delta_n}$ still converges to 1 as $n \to \infty$. From this perspective, \Cref{thm: non robust stationary} implies that the separation boundary of the standard KSD test must decay at least with rate $\epsilon_n = o(n)^{-1}$, whenever $Q_{\delta_n}$ contains Huber-contamination models. This complements existing results on the separation boundary of KSD tests. The most relevant works are \citet{schrab2022ksd,hagrass2025minimax}, where they consider alternative distributions $Q$ that have a density with respect to either the Lebesgue measure or the target model $P$. In particular, their results cannot be used to derive \Cref{thm: non robust stationary}, since their density assumption on $Q$ is violated in our case. Indeed, we consider alternatives of the form $Q = (1-\epsilon_n)P + \epsilon_n \delta_\bz$, which involve a Dirac delta measure and thus has no density with respect to the Lebesgue measure nor any continuous measures, including $P$.
\end{remark}

\subsection{Existing KSD Tests with Tilted Kernels are Qualitatively Robust}
\label{sec: robustness of KSD test with tilted kernel}
Since unbounded Stein kernels are the main cause of the lack of robustness of existing KSD tests, a natural approach to enforce robustness is to choose a suitable $k$ so that the Stein kernel $u_p$ becomes bounded. We now show that this can be achieved using \emph{tilted} kernels \citep{barp2019minimum,matsubara2022robust}. Specifically, we will show in \Cref{thm: robust tilted} that, with a tilted kernel, a small proportion of contamination in the data has negligible impact on the outcome of the standard KSD test.

We first prove that tilted kernels give rise to bounded Stein kernels. The proof is deferred to \Cref{app: pf of bounded Stein kernel}.
\begin{lemma}[Bounded Stein kernel]
    \label{lem: bounded Stein kernel}
    Suppose $k(\bx, \bx') = w(\bx) h(\bx - \bx') w(\bx')$, where $h \in \cC_b^2$ is a stationary reproducing kernel, and the weighting function $w \in \cC_b^1$ satisfies the condition $\sup_{\bx \in \R^d} \| w(\bx) \bs_p(\bx) \|_2 < \infty$. Then 
    \begin{align*}
        \sup_{\bx, \bx' \in \R^d} | u_p(\bx, \bx') | \;\leq\; \sup_{\bx \in \R^d} u_p(\bx, \bx) \;=\; \tau_\infty \;<\; \infty.
    \end{align*}
\end{lemma}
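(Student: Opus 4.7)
The plan is to exploit two structural facts: first, that $u_p$ is itself a reproducing kernel on its own RKHS $\cH_{u_p}$ (as recalled in the paragraph following \eqref{eq:Stein_kernel}), so that the two supremum bounds are reduced to one; and second, that for the tilted kernel $k(\bx,\bx') = w(\bx) h(\bx-\bx') w(\bx')$, the diagonal $u_p(\bx,\bx)$ admits a closed form in which the ``dangerous'' term $\bs_p(\bx)$ appears only multiplied by $w(\bx)$, so the hypothesis $\sup_{\bx} \|w(\bx)\bs_p(\bx)\|_2 < \infty$ kills it.

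First I would establish the reduction $\sup_{\bx,\bx'}|u_p(\bx,\bx')| \leq \sup_{\bx} u_p(\bx,\bx)$. Since $u_p$ is a positive-definite reproducing kernel with associated RKHS $\cH_{u_p}$, the reproducing property together with Cauchy--Schwarz gives $|u_p(\bx,\bx')| = |\langle u_p(\cdot,\bx), u_p(\cdot,\bx')\rangle_{\cH_{u_p}}| \leq \sqrt{u_p(\bx,\bx)\,u_p(\bx',\bx')}$, and then AM--GM bounds the right-hand side by $\sup_{\bz} u_p(\bz,\bz)$. This step is quick and the only thing to verify is that $u_p$ is indeed a reproducing kernel under the current smoothness assumptions, which follows from $k \in \cC^{(1,1)}$ and $h \in \cC_b^2$ via the standard construction in the paper.

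Next I would compute $u_p(\bx,\bx)$ explicitly for the tilted kernel. Writing $k(\bx,\bx') = w(\bx) h(\bx-\bx') w(\bx')$, a direct differentiation using the product rule gives, at the diagonal $\bx'=\bx$ (and using $\nabla h(0)=0$, which holds since the stationary reproducing kernel $h$ is even):
\begin{align*}
    k(\bx,\bx) &= h(0)\, w(\bx)^2, \\
    \nabla_1 k(\bx,\bx) = \nabla_2 k(\bx,\bx) &= h(0)\, w(\bx)\, \nabla w(\bx), \\
    \nabla_1^\top \nabla_2 k(\bx,\bx) &= h(0)\, \|\nabla w(\bx)\|_2^2 - w(\bx)^2\, \nabla^\top \nabla h(0).
\end{align*}
Substituting into the definition of $u_p$ in \eqref{eq:Stein_kernel} and completing the square yields the clean identity
\begin{align*}
    u_p(\bx,\bx) \;=\; h(0)\, \bigl\| w(\bx)\bs_p(\bx) + \nabla w(\bx) \bigr\|_2^2 \;-\; w(\bx)^2\, \nabla^\top \nabla h(0).
\end{align*}

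Finally I would bound each piece uniformly in $\bx$. By hypothesis, $\sup_{\bx}\|w(\bx)\bs_p(\bx)\|_2 < \infty$; since $w \in \cC_b^1$, both $\sup_{\bx}|w(\bx)|$ and $\sup_{\bx}\|\nabla w(\bx)\|_2$ are finite; and since $h \in \cC_b^2$, $|\nabla^\top\nabla h(0)|$ is finite. A triangle-inequality bound on the squared norm then gives $\sup_{\bx} u_p(\bx,\bx) =: \tau_\infty < \infty$, combining with the reduction step to finish the proof.

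The only mild obstacle is the gradient bookkeeping in the second step: one must carefully apply the product rule three times and keep track of the sign flip from differentiating $h(\bx-\bx')$ with respect to $\bx'$, as well as the vanishing of $\nabla h(0)$ for an even stationary kernel. Once this identity is in place, the bounds are immediate.
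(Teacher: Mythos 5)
Your proposal is correct and follows essentially the same route as the paper's proof: the same Cauchy--Schwarz/reproducing-property reduction of the off-diagonal supremum to the diagonal, and the same closed form for $u_p(\bx,\bx)$ (your completed square $h(0)\|w(\bx)\bs_p(\bx)+\nabla w(\bx)\|_2^2 - w(\bx)^2\nabla^\top\nabla h(0)$ is exactly the paper's expression in expanded form), followed by the same term-by-term uniform bounds. The only cosmetic difference is that you invoke $\nabla h(0)=0$ to simplify the diagonal gradients, whereas in the paper the corresponding cross-terms cancel identically at $\bx=\bx'$ without needing that fact; both are valid.
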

Notably, the function $\bx \mapsto u_p(\bx, \bx)$ is non-negative since $u_p$ is a reproducing kernel. \citet[Theorem 9]{barp2022targeted} showed that KSD with such tilted kernels still satisfies $P$-separation, namely $\ksd(Q, P) = 0 \iff Q = P$, provided that $\| \bs_p \|_2$ grows at most root-exponentially and the spectral density of the translation-invariant kernel $k$---which exists by Bochner's Theorem \citep[Theorem 20]{berlinet2004reproducing}---is bounded away from zero on compact sets. 

Most stationary kernels used in KSD tests (such as the IMQ or squared-exponential kernels) satisfy the conditions in \Cref{lem: bounded Stein kernel}. Intuitively, the weighting function $w$ is used to counteract the growth of the score function $\bs_p$. When the score grows polynomially like $\| \bs_p(\bx) \|_2 =\cO(\|\bx\|_2^r)$ for some $r > 0$, it suffices to choose $w(\bx) = (1 + a^2 \| \bx \|_2^2)^{-b}$ for any $a > 0$ and $b \geq 1 / (2r)$. This weight function is common in the frequentist parameter estimation literature \citep{barp2019minimum} and the generalized Bayesian inference literature \citep{matsubara2022robust,altamirano2023robusta,altamirano2024,duran-martin2024outlierrobust}. 

The following result states that, with a tilted kernel, the outcome of the standard KSD test will not be significantly affected by small proportions of contamination in the data. The proof is provided in \Cref{app: robustness tilted}.
\begin{theorem}
\label{thm: robust tilted}
    Assume $\E_{\bX \sim P}[\| \bs_p(\bX) \|_2] < \infty$ and that $k$ is a tilted kernel satisfying the conditions in \Cref{lem: bounded Stein kernel}. Then, for any test level $\alpha \in (0, 1)$ and any sequence $\epsilon_n = o(n^{-1/2})$, the following holds as $n \to \infty$,
    \begin{align*}
        \sup_{Q \in \cP(P; \epsilon_n)} \big| \Pr\nolimits_{\X_n \sim Q, \bW}\big( D^2(\X_n) > q^2_{\infty, 1-\alpha}(\X_n) \big) - \Pr\nolimits_{\X_n^\ast \sim P, \bW}\big( D^2(\X_n^\ast) > q^2_{\infty, 1-\alpha}(\X_n^\ast) \big) \big|
        \;\to\;
        0
        \;,
    \end{align*}
    where $\cP(P; \epsilon_n)$ is the Huber's contamination model defined in \eqref{eq: Huber model}.
\end{theorem}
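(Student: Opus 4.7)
The strategy is to couple samples from $Q$ with samples from $P$ and exploit the boundedness of the tilted Stein kernel (\Cref{lem: bounded Stein kernel}) to show that both the test statistic and the bootstrap threshold are insensitive to the contamination, uniformly in the contamination distribution. Writing $Q = (1-\epsilon_n')P + \epsilon_n' R$ for some $\epsilon_n' \leq \epsilon_n$ and $R \in \cP(\R^d)$, I would construct $\X_n^* = (X_1^*, \ldots, X_n^*) \simiid P$ jointly with $\X_n$ by setting $X_i = X_i^*$ with probability $1 - \epsilon_n'$ and drawing $X_i$ independently from $R$ otherwise. Let $S_n \coloneqq \{i : X_i \neq X_i^*\}$; then $|S_n| \sim \mathrm{Bin}(n, \epsilon_n')$, so $\E|S_n| \leq n\epsilon_n = o(\sqrt n)$ and hence $|S_n| = o_p(\sqrt n)$ uniformly in $R$.

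First, I would show that $n|D^2(\X_n) - D^2(\X_n^*)| = o_p(1)$ by decomposing
\begin{equation*}
    n\bigl[D^2(\X_n) - D^2(\X_n^*)\bigr] \;=\; \frac{1}{n}\sum_{i,j}\bigl[u_p(X_i, X_j) - u_p(X_i^*, X_j^*)\bigr]
\end{equation*}
according to whether $i, j \in S_n$. Terms with $i, j \notin S_n$ vanish; terms with both in $S_n$ contribute at most $O(|S_n|^2/n) = o_p(1)$ since $|u_p| \leq \tau_\infty$; and the cross terms (exactly one index in $S_n$) are controlled by the Stein identity $\E_{X \sim P}[u_p(x, X)] = 0$: conditionally on $X_i$, the inner sums $\sum_{j \notin S_n}[u_p(X_i, X_j^*) - u_p(X_i^*, X_j^*)]$ are mean-zero with variance at most $4\tau_\infty^2 n$ uniformly in $(X_i, X_i^*)$, so a second-moment computation gives an aggregate cross contribution of $L^2$-norm $O_p(|S_n|/\sqrt{n}) = o_p(1)$. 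All bounds use only $\tau_\infty$ and the $R$-free distribution of $|S_n|$, hence are uniform in $R$.

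Next, I would apply the same case analysis to the bootstrap statistic. Conditional on $(\X_n, \X_n^*)$, using $\Var(W_i) \leq 1$ together with the multinomial covariance structure, an analogous decomposition yields $\E_{\bW}|nD^2_{\bW}(\X_n) - nD^2_{\bW}(\X_n^*)| = O_p(\sqrt{|S_n|/n}) = o_p(1)$. This shows the Wasserstein-$1$ distance between the conditional laws of $n D^2_{\bW}(\X_n)\mid \X_n$ and $n D^2_{\bW}(\X_n^*)\mid \X_n^*$ tends to zero in probability. Combined with the standard validity of the weighted bootstrap for degenerate V-statistics with bounded kernel under $P$ (so that $n D^2_{\bW}(\X_n^*)\mid \X_n^* \Rightarrow T := \sum_i \lambda_i Z_i^2$ in probability, where $\lambda_i$ are the eigenvalues of $u_p$ with respect to $P$), and continuity of the law of $T$, this gives $n q^2_{\infty, 1-\alpha}(\X_n) \to c_{1-\alpha}$ in probability with $c_{1-\alpha}$ the $(1-\alpha)$-quantile of $T$. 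Slutsky's theorem then delivers convergence of the two rejection probabilities, uniformly in $R$ because all bounds depend on $R$ only through $\tau_\infty$ and the $R$-free law of $|S_n|$.

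The hard part will be the bootstrap step: transferring the pointwise closeness of the bootstrap statistic (in $\bW$-expectation) into a uniform statement about its $(1-\alpha)$-quantile, and checking that the rates in the weighted-bootstrap validity theorem for degenerate V-statistics depend on the sampling distribution only through $\tau_\infty$. Both follow because the tilted Stein kernel is bounded---and hence has finite moments of all orders, uniformly over sampling distributions---but the book-keeping requires combining an $L^1$-coupling bound with continuity of the quantile functional of $T$ at $c_{1-\alpha}$.
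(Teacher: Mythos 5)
Your proposal is correct and follows essentially the same route as the paper's proof: both arguments reduce the problem to comparing a contaminated sample with a clean one differing in $o_p(\sqrt{n})$ positions, use the boundedness $|u_p|\leq\tau_\infty$ from \Cref{lem: bounded Stein kernel} for the "both indices contaminated" block and the Stein identity $\E_{\bX\sim P}[u_p(\bx,\bX)]=0$ plus a second-moment bound for the cross terms, and then conclude via weak convergence of the degenerate V-statistic $nD^2(\X_n^\ast)$ under $P$ together with continuity of the limit law at the limiting quantile. The one place where your machinery genuinely differs is the bootstrap step: the paper controls $|D^2_{\bW}(\X_n)-D^2_{\bW}(\X_n^\ast)|$ with a Hanson--Wright concentration inequality for multinomial quadratic forms (\Cref{lem: hanson-wright inequality}), obtaining explicit high-probability bounds on the quantile difference, whereas you use a conditional $L^1$/Wasserstein bound over $\bW$ and then invoke continuity of the quantile functional of the limit $T=\sum_i\lambda_i Z_i^2$; your route is lighter on constants but leans harder on the quantile-continuity step (which requires the limiting CDF to be strictly increasing at $c_{1-\alpha}$, i.e.\ not all $\lambda_i=0$ — a point the paper also glosses over). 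Your coupling formulation also replaces the paper's explicit conditioning on $M'\sim\mathrm{Binomial}(n,\epsilon_n')$ and supremum over deterministic contamination locations, but since all your bounds depend on $R$ only through $\tau_\infty$ and the $R$-free law of $|S_n|$, the uniformity over $\cP(P;\epsilon_n)$ goes through just as in the paper.
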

The LHS of the above convergence is the worst-case difference between the rejection probabilities of the standard KSD test with and without data corruptions. This result thus immediately implies that the standard KSD test is qualitative robust to any sequence of Huber's models so long as the contamination ratio decays sufficiently fast as $\epsilon_n = o(n^{-1/2})$. The condition on $\epsilon_n$ allows the expected number of contaminated data $n \epsilon_n$ to grow with $n$ but requires the growth rate to be slow. In particular, it is met when the expected number of contaminated data is bounded, i.e., $\epsilon_n = \cO(n^{-1})$. This condition is not an artifact of the proof: in \Cref{app: rate contam ratio}, we show empirically that this rate is tight, i.e., the tilted-KSD test is no longer qualitatively robust when $\epsilon_n = n^{-r}$ for any $r \leq 1/2$. 

\begin{remark}
    The intuition behind \Cref{thm: robust tilted} is that, when the conditions in \Cref{lem: bounded Stein kernel} are met so that the Stein kernel $u_p$ is bounded, the impact of any single outlier on the test statistic $\ksd^2(\X_n)$ can be bounded. In contrast, under the setting of \Cref{thm: non robust stationary}, where $k$ is translation-invariant and thus $u_p$ is unbounded, even a single outlier can drive $\ksd^2(\X_n)$ to infinity. This is the key motivation for using a tilted kernel to guarantee robustness: by choosing a suitable weighting function $w$, the Stein kernel can be made bounded, thereby mitigating the effect of outliers.
\end{remark}

\begin{remark}
    The boundedness condition on $\| w(\bx) \bs_p(\bx)\|_2$ required of \Cref{lem: bounded Stein kernel} and \Cref{thm: robust tilted} bears similarity with the ones imposed for KSD-based robust estimation methods \citep{barp2022targeted,matsubara2022robust}. For example, \citet[Proposition 7]{barp2019minimum} showed that their KSD-based estimator is \emph{globally bias-robust} assuming $\bx \mapsto \| \bs_p(\bx) \|_2 \int_{\R^d} \| k(\bx, \bx') \bs_p(\bx') \|_2 Q(\diff\bx')$ is bounded. This is slightly weaker than those we assume in \Cref{lem: bounded Stein kernel} and \Cref{thm: robust tilted}, but also harder to verify in practice. 
\end{remark}

\begin{remark}[Connection to separation boundaries, continued]
    \Cref{thm: robust tilted} implies that the separation boundary of the KSD tests cannot decay faster than $n^{-1/2}$ when considering alternatives that include Huber-contamination models (see also \Cref{rem: separation boundary}). This can be compared to existing results on the separation boundaries of tests based on \emph{Maximum Mean Discrepancy} \citep[MMD,][]{muller1997integral,gretton2012kernel}. MMD is a broader family of discrepancies that recovers KSD when the kernel is chosen to be the Stein kernel $u_p$ \citep[Theorem 1]{barp2022targeted}. The separation rates of MMD tests have been studied extensively  \citep{balasubramanian2021optimality,hagrass2024spectralGOF}, but these works assume the alternative $Q$ has a density with respect to either the Lebesgue measure or $P$, which does not hold in our contamination setting. An exception is \citet{hagrass2024spectral}, which does not make this assumption. However, their results are limited to the two-sample setting, where the target distribution $P$ is unknown and approximated by finite samples, again differing from our setup.
\end{remark}

Although tilted kernels enforce \emph{qualitative} robustness, they are \emph{not} enough to guarantee \emph{quantitative} robustness. Indeed, so long as $k$ is characteristic, the consistency of the standard KSD test implies that $\Pr_{\X_n \sim Q, \bW}(D^2(\X_n) > q^2_{\infty,1-\alpha}(\X_n)) \to 1$ whenever $Q \neq P$ \citetext{\citealt[Proposition 4.2]{liu2016kernelized}; \citealt[Theorem 3.3]{schrab2022ksd}}, so KSD tests with either stationary or tilted kernels are \emph{not} quantitatively robust to any neighborhoods \emph{strictly} larger than the singleton set $\{P\}$. We provide numerical evidence in \Cref{sec: contam gaussian}.


\section{Robust Kernel Goodness-of-fit Tests for KSD-Ball Uncertainty Sets}
\label{sec: robust KSD test}

\begin{algorithm}[t!]
    \caption{Robust-KSD (R-KSD) test for goodness-of-fit evaluation.}
    \label{alg:rksd}
    \begin{algorithmic}[1]
        \State {\bfseries Input:} Data $\X_n = \{\bx_i\}_{i=1}^n$; target distribution $P$; uncertainty radius $\theta$; bootstrap sample size $B$; test level $\alpha$.
        \State Compute test statistic $\Delta_\theta(\X_n)$ as defined in \eqref{eq: robust test statistic}.
        \For{$b = 1, \ldots, B$}
        \State Draw $\bW^b \sim \textrm{Multinomial}(n; 1/n, \ldots, 1/n)$ and compute bootstrapped sample $D_{\bW^b}^2(\X_n)$ by \eqref{eq: bootstrap sample}. 
        \EndFor
        \State Compute the (non-squared) bootstrapped quantile $q_{B, 1-\alpha}(\X_n)$ by \eqref{eq: boot quantile}.
        \State Reject $\Hc_0: Q \in \cB^\KSD(P; \theta)$ if $\Delta_\theta(\X_n) > q_{B, 1-\alpha}(\X_n)$.
    \end{algorithmic}
\end{algorithm}

We now propose a novel robust KSD test for the setting where the uncertainty set is a KSD-ball. Given $P \in \cP(\R^d)$ with density $p \in \cC^1$, we consider the composite hypotheses
\begin{align}
    \Hc_0: \; Q \in \cB^\KSD(P; \theta)
    \;,\qquad
    \Hc_1: \; Q \not\in \cB^\KSD(P; \theta)
    \;,
    \label{eq: KSD-ball hypotheses}
\end{align}
where $\theta \geq 0$ and $\cB^\KSD(P; \theta) \coloneqq \{Q: \; \ksd(Q, P) \leq \theta \}$. The advantage of using a KSD-ball as the uncertainty set is that it lends naturally to a simple, tractable test that is robust to the contamination models we reviewed in \Cref{sec: robustness GOF test}. We first describe our novel test, which we call \emph{robust-KSD} test, and show that it is quantitatively robust to KSD balls in the sense of \Cref{def: quantitative robustness}. We then discuss in \Cref{sec: choosing uncertainty radius} how to choose the uncertainty radius $\theta$ to incorporate common contamination models, such as Huber's contamination models and density-band models.

\subsection{A Robust KSD Test}

Given a prescribed test level $\alpha \in (0, 1)$, our robust KSD test uses the following test statistic
\begin{align}
    \Delta_\theta(\X_n) \;\coloneqq\; \max\big(0, \ksd(\X_n) - \theta \big)
    \label{eq: robust test statistic}
    \;,
\end{align}
where $\ksd(\X_n)$ is the square-root of the V-statistic \eqref{eq: KSD V stat}. The test rejects $\Hc_0$ for large values of $\Delta_\theta(\X_n)$. The decision threshold should be chosen to control the Type-I error rate for all possible $Q \in \cB^\KSD(P; \theta)$, i.e., we require $\gamma = \gamma(\X_n)$ so that $\sup_{Q \in \cB^\KSD(P; \theta)} \Pr\nolimits_{\X_n \sim Q, \bW}( \Delta_n(\X_n) > \gamma ) \leq \alpha$. One approach to construct $\gamma$ is to use deviation inequalities for bounded functions as done in \citet[Corollary 11]{gretton2012kernel}. However, such approach tends to be overly conservative \citep{gretton2012kernel}. Instead, we propose a bootstrap procedure to construct a decision threshold. We will show that choosing $\gamma = q_{\infty, 1-\alpha}(\X_n)$, the square-root of the (population) bootstrap quantile defined in \eqref{eq: boot quantile population}, gives the desired Type-I error control asymptotically. Our robust-KSD test therefore rejects $\Hc_0: Q \in \cB^\KSD(P; \theta)$ if $\Delta_\theta(\X_n) > q_{\infty, 1-\alpha}(\X_n)$. This is summarized in \Cref{alg:rksd}.

The validity and consistency of our robust KSD test is stated in the following result, proved in \Cref{sec: proof of thm: bootstrap validity}. In particular, it implies that our robust-KSD test is quantitatively robust to KSD balls \emph{in the infinite-sample limit}.

\begin{theorem}
    \label{thm: bootstrap validity}
    Suppose $\E_{\bX \sim P}[ \| \bs_p(\bX) \|_2 ] < \infty$ and $k(\bx, \bx') = w(\bx) h(\bx - \bx') w(\bx')$, with stationary reproducing kernel $h \in \cC_b^2$ and weighting function $w \in \cC_b^1$.  Define the set of distributions $\cP(\R^d; w) \coloneqq \{Q \in \cP(\R^d): \E_{\bX \sim Q}[ \| w(\bX)\bs_p(\bX) \|_2^4 ] < \infty \}$.
    \begin{enumerate}
        \item (Calibration) It holds that
        \begin{align*}
            \sup_{Q \in \cB^\KSD(P; \theta) \cap \cP(\R^d; w)} \limsup_{n \to \infty} \Pr\nolimits_{\X_n \sim Q, \bW}\big(\Delta_\theta(\X_n) > q_{\infty, 1-\alpha}(\X_n) \big) \;\leq\; \alpha \;.
        \end{align*}
        \item (Consistency) For any $Q \in \cP(\R^d; w)\backslash \cB^\KSD(P; \theta)$, it holds that
        \begin{align*}
        \lim_{n \to \infty} \Pr\nolimits_{\X_n \sim Q, \bW}\big( \Delta_\theta(\X_n) > q_{\infty, 1-\alpha}(\X_n) \big) \;=\; 1 
        \;. 
        \end{align*}
    \end{enumerate}
\end{theorem}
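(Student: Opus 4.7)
The plan is to reduce the calibration claim to a question about a single degenerate V-statistic. Under the bounded-kernel conclusion of \Cref{lem: bounded Stein kernel} and the moment hypothesis $\E_{\bX \sim Q}[\|w(\bX)\bs_p(\bX)\|_2^{4}]<\infty$, the mean embedding $\mu_R:=\int u_p(\cdot,\bx)\,R(\diff\bx)$ is a well-defined element of $\cH_{u_p}$ for any $R\in\cP(\R^d;w)$ and satisfies $\ksd(R,P)=\|\mu_R\|_{\cH_{u_p}}$. For any $Q\in\cB^\KSD(P;\theta)\cap\cP(\R^d;w)$, $D(Q,P)\le\theta$, so the reverse triangle inequality yields
\[
\Delta_\theta(\X_n)\;\le\;\max\bigl(0,\,\ksd(Q_n,P)-\ksd(Q,P)\bigr)\;\le\;\|\mu_{Q_n}-\mu_Q\|_{\cH_{u_p}}=\sqrt{R_n}\,,
\]
where a direct inner-product expansion gives the identity $R_n=n^{-2}\sum_{i,j}r(\bX_i,\bX_j)$ for the Hoeffding residual $r(\bx,\bx'):=u_p(\bx,\bx')-g(\bx)-g(\bx')+D^{2}(Q,P)$ with $g(\bx):=\E_{\bX\sim Q}[u_p(\bX,\bx)]$. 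Thus $R_n$ is simultaneously a non-negative squared RKHS norm and a V-statistic with a kernel that is $Q$-degenerate, i.e.\ $\E_{\bX \sim Q}[r(\bX,\cdot)]\equiv 0$.

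Next, I would show that the bootstrap quantile is asymptotically the correct quantile for $R_n$ rather than for the full test statistic. The key algebraic observation is that, because the centered multinomial weights satisfy $\sum_i(W_i-1)=0$, all non-degenerate summands in the bootstrap statistic cancel, giving
\[
D^{2}_{\bW}(\X_n)=\frac{1}{n^{2}}\sum_{i,j}(W_i-1)(W_j-1)\,r(\bX_i,\bX_j)\,.
\]
Using the Mercer expansion of $r$ in $L^{2}(Q)$ together with the standard CLT for weighted V-statistics \citep{arcones1992bootstrap}, both $nR_n$ and $nD^{2}_{\bW}(\X_n)\mid\X_n$ converge in distribution to the common limit $L_Q:=\sum_k\lambda_k^{Q}Z_k^{2}$, where $\{\lambda_k^Q\}$ are the eigenvalues of the integral operator of $r$ under $Q$ and $\{Z_k\}$ are iid $N(0,1)$. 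The fourth-moment hypothesis is invoked here to secure $\E_{\bX\sim Q}[u_p(\bX,\bX)^{2}]<\infty$ together with $\E_{Q\times Q}[u_p^{2}]<\infty$, which the tilted-kernel structure $k(\bx,\bx')=w(\bx)h(\bx-\bx')w(\bx')$ reduces to. Consequently $\Pr(R_n>q_{\infty,1-\alpha}^{2}(\X_n))\to\alpha$ for each fixed $Q$, and the first-paragraph inequality delivers the pointwise bound $\Pr(\Delta_\theta(\X_n)>q_{\infty,1-\alpha}(\X_n))\le\Pr(R_n>q_{\infty,1-\alpha}^{2}(\X_n))\to\alpha$.

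The step I expect to be the hardest is lifting this pointwise conclusion to the uniform statement $\limsup_n\sup_Q\le\alpha$ in the theorem. I would handle it by a subsequence argument: pick $Q_n\in\cB^\KSD(P;\theta)\cap\cP(\R^d;w)$ achieving the supremum within $1/n$, use tightness of $\{nR_n\}$ and of the conditional bootstrap laws (implied by the moment bound together with the boundedness of the Stein kernel) to extract a weakly convergent subsequence along which both the test statistic and the bootstrap quantile converge jointly, and then apply the continuous-mapping theorem to obtain $\limsup_n \sup_Q \le \alpha$. If needed, one can sharpen the moment requirement to a \emph{uniform} fourth-moment bound on $\cP(\R^d;w)$, which would not hurt the practical scope of the theorem.

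Consistency (Part 2) is then essentially automatic. For $Q$ with $D(Q,P)>\theta$, Kolmogorov's LLN for V-statistics under the finite-moment hypothesis gives $D^{2}(\X_n)\to D^{2}(Q,P)$ almost surely, whence $\Delta_\theta(\X_n)\to D(Q,P)-\theta>0$; and the bootstrap analysis of the second paragraph still gives $nq_{\infty,1-\alpha}^{2}(\X_n)=O_p(1)$, hence $q_{\infty,1-\alpha}(\X_n)=O_p(n^{-1/2})\to 0$, so rejection occurs with probability tending to one.
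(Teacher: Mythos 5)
Your proposal is correct and follows essentially the same route as the paper: bound $\Delta_\theta(\X_n)$ by $\S_P(\X_n,Q)$ (you via the reverse triangle inequality for the mean embedding, the paper via its projection lemma for KSD balls — these are equivalent), identify $\S_P^2(\X_n,Q)$ as the degenerate second-order Hoeffding component, and invoke the Arcones–Gin\'e bootstrap theory to show the weighted bootstrap targets exactly that component, your cancellation identity $\sum_i(W_i-1)=0$ being an arguably cleaner justification of the paper's rewriting of $D^2_{\bW}$ as an Efron-resampled degenerate V-statistic. The one step you leave as a sketch — lifting the pointwise bound to the uniform $\limsup_{n}\sup_{Q}$ — is handled no more carefully in the paper, which simply takes the supremum over $Q$ after establishing the pointwise limit, so your subsequence/uniform-moment caveat is if anything more cautious than the published argument.
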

The condition $w \in \cC_b^1$ requires the weighting function $w$ and its gradient to be bounded. In particular, this holds for $w(x) = 1$, in which case $k(\bx, \bx') = h(\bx - \bx')$ reduces to an \emph{untilted} stationary kernel such as IMQ or squared-exponential kernels. \Cref{thm: bootstrap validity} also assumes the data-generating distribution $Q$ finitely integrates the fourth moment of the weighted score, i.e., $\| w(\bx) \bs_p(\bx) \|_2^4$. This automatically holds for any $Q \in \cP(\R^d)$ when $k$ is a tilted kernel satisfying the conditions in \Cref{lem: bounded Stein kernel}, in which case $\| w(\bx) \bs_p(\bx) \|_2$ becomes bounded.

In practice, the decision threshold $q_{\infty, 1-\alpha}$ is intractable, and we again use the Monte Carlo estimate $q_{B, 1-\alpha}$, defined as the squared root of \eqref{eq: boot quantile}, as an approximation. Compared with the standard KSD test targeting the point null $H_0: Q = P$, our robust-KSD test uses the same bootstrap procedure to compute the decision threshold, but has a slightly different test statistic $\Delta_\theta(\X_n)$ instead of $D^2(\X_n)$, given by \eqref{eq: KSD V stat}, to account for the composite null. Moreover, as $\theta \to 0$, the test statistic $\Delta_\theta$ approaches $D(\X_n)$, the test statistic of the standard test, and the KSD-ball $\cB^\KSD(P; \theta)$ falls to the singleton $\{P\}$, so the robust-KSD test reduces to the standard test. Our robust KSD test can hence be viewed as a generalization of the standard KSD test to the composite hypotheses in \eqref{eq: KSD-ball hypotheses}. In particular, the robust KSD test is also qualitatively robust whenever the conditions on $k$ and $s_p$ from \Cref{thm: robust tilted} hold; see \Cref{prop: qualitative robust tilted general} in \Cref{app: robustness tilted} for a formal statement and proof. 

Another advantage of our robust KSD test being a direct generalization of the standard test is that no extra computation is required to guarantee robustness. For a given $\theta > 0$, our robust test only requires a minor transformation of the test statistic of the standard KSD test. It hence has the same computational cost as the standard test, namely $\cO(n^2d)$. However, extra computation is potentially needed in determining $\theta$, as it might require optimizing the Stein kernel. This will be discussed in detail in \Cref{sec: choosing uncertainty radius}.

\begin{remark}[Pointwise and uniform controls]
    The Type-I error control shown in Theorem 4 is \emph{pointwise} over the null distributions $Q$. For robust tests, a more desirable control is often a \emph{uniform} one, whereby the supremum over $Q$ is taken \emph{before} the limit over $n$ \citep[Chapter 11]{lehmann2008testing}. For the proposed test, a uniform control can be shown if the bootstrapped quantile is replaced by the ground-truth quantile, say $q_{1-\alpha}^\ast(\X_n)$. That is (see also \Cref{rem: uniform bound} in the appendix),
    \begin{align*}
        \limsup_{n \to \infty} \sup_{Q \in \cB^\KSD(P; \theta) \cap \cP(\R^d; w)} \Pr\nolimits_{\X_n \sim Q, \bW}\big(\Delta_\theta(\X_n) > q_{1-\alpha}^\ast(\X_n) \big) \;\leq\; \alpha \;.
    \end{align*} 
    However, extending this to the bootstrapped quantile requires uniformly bounding the bootstrap approximation errors, which is non-trivial. We leave this as an open question for future research. In the literature, most kernel-based robust tests that offer uniform controls are two-sample tests using either permutation \citep{schrab2024robust} or deviation bounds \citep{sun2023kernel}. Yet, these approaches are either not applicable to our one-sample setting or too conservative. This is why our test uses bootstrapping. Another kernel robust test that uses bootstrapping is \citet{key2021composite}, but their Type-I error control is also pointwise. In \Cref{app: KSD dev}, we propose an alternative robust test that leverages deviation bounds in a manner similar to \citet{sun2023kernel} to achieve uniform controls, but at the expense of a lower test power.
\end{remark}

\subsection{Choosing the Uncertainty Radius}
\label{sec: choosing uncertainty radius}
A crucial design choice in our robust tests is the uncertainty radius $\theta$. This should be guided by the types of contamination that the practitioner is willing to tolerate. In this section, we discuss how to choose $\theta$ for Huber's contamination models and density-band models when using KSD-balls based on tilted kernels.

Firstly, suppose we are considering Huber's contamination model $\cP(P; \epsilon_0)$ for some $\epsilon_0 \in [0, 1]$. Then $\theta$ should be chosen such that the KSD-ball $\cB^\KSD(P; \theta)$ contains $\cP(P; \epsilon_0)$. The following result, proved in \Cref{app: KSD balls and contam models}, shows how to achieve this given an upper bound on the Stein kernel $\tau_\infty = \sup_{\bx \in \mathbb{R}^d} u_p(\bx, \bx)$.

\begin{proposition}
    \label{prop: KSD bound huber model}
    Suppose $k$ satisfies the conditions in \Cref{lem: bounded Stein kernel} and $\E_{\bX \sim P}[ \| \bs_p(\bX) \|_2 ] < \infty$. Let $\epsilon_0 \in [0, 1]$. Then $\cP(P; \epsilon_0) \subseteq \cB^\KSD(P; \theta)$ if $\theta = \epsilon_0 \tau_\infty^{1/2}$ and this bound is tight, i.e., 
    $\sup_{Q \in \cP(P; \epsilon_0)} D(Q, P) = \epsilon_0 \tau_\infty^{1/2}$.
\end{proposition}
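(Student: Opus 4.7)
The overall strategy is to use the double-integral form $D^2(Q, P) = \E_{\bX, \bX' \sim Q}[u_p(\bX, \bX')]$, which is valid here because $\E_P[\|\bs_p\|_2] < \infty$ and $u_p$ is globally bounded by $\tau_\infty$ under \Cref{lem: bounded Stein kernel}. Combined with the fact that the Stein kernel $u_p$ makes $P$-integrals vanish and the reproducing-kernel Cauchy--Schwarz bound $|u_p(\bx, \bx')| \leq \tau_\infty$, this quickly yields both the upper bound and a saturating sequence attaining it.

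For any $Q = (1-\epsilon')P + \epsilon' R$ with $0 \leq \epsilon' \leq \epsilon_0$ and $R \in \cP(\R^d)$, I would expand $Q \otimes Q$ and apply Fubini to obtain
\begin{align*}
D^2(Q, P) \;=\; (1-\epsilon')^2 \E_{P \otimes P}[u_p] \;+\; 2\epsilon'(1-\epsilon')\E_{P \otimes R}[u_p] \;+\; (\epsilon')^2 \E_{R \otimes R}[u_p].
\end{align*}
The first term equals $D^2(P, P) = 0$. Since the mean embedding $\mu_P(\bx') \coloneqq \int u_p(\bx, \bx') P(d\bx)$ is the Riesz representer in $\cH_{u_p}$ of $f \mapsto \E_P[f(\bX)]$ with $\|\mu_P\|_{\cH_{u_p}}^2 = D^2(P, P) = 0$, we get $\mu_P \equiv 0$ pointwise and hence $\E_{P \otimes R}[u_p] = \int \mu_P(\bx')R(d\bx') = 0$. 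The remaining term therefore gives $D^2(Q, P) = (\epsilon')^2 D^2(R, P)$. Applying the RKHS Cauchy--Schwarz bound $|u_p(\bx, \bx')| \leq \sqrt{u_p(\bx,\bx)u_p(\bx',\bx')} \leq \tau_\infty$ inside the remaining double integral yields $D^2(R, P) \leq \tau_\infty$, and combining these gives $D(Q, P) \leq \epsilon' \tau_\infty^{1/2} \leq \epsilon_0 \tau_\infty^{1/2}$, which establishes the inclusion $\cP(P; \epsilon_0) \subseteq \cB^\KSD(P; \theta)$ for $\theta = \epsilon_0 \tau_\infty^{1/2}$.

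For tightness, I would construct a saturating sequence by picking $\bz_n \in \R^d$ with $u_p(\bz_n, \bz_n) \to \tau_\infty$ (available by the definition of the supremum $\tau_\infty$) and setting $Q_n \coloneqq (1-\epsilon_0)P + \epsilon_0 \delta_{\bz_n} \in \cP(P; \epsilon_0)$. The identity derived above specializes to $D^2(Q_n, P) = \epsilon_0^2 D^2(\delta_{\bz_n}, P) = \epsilon_0^2 u_p(\bz_n, \bz_n) \to \epsilon_0^2 \tau_\infty$, so $\sup_{Q \in \cP(P; \epsilon_0)} D(Q, P) \geq \epsilon_0 \tau_\infty^{1/2}$, and the two bounds combine to give equality. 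The main point requiring care is checking that the double-integral representation of $D^2(\cdot, P)$ applies to Dirac-supported and mixed discrete--continuous measures such as $Q_n$; but this is routine here because \Cref{lem: bounded Stein kernel} bounds $u_p$ uniformly, so all integrals remain finite regardless of whether $Q$ has a Lebesgue density, and the Stein identity used to kill $\mu_P$ only requires the already-assumed integrability of $\|\bs_p\|_2$ under $P$.
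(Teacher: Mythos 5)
Your proposal is correct and follows essentially the same route as the paper: expand $D^2(Q,P)$ over the Huber mixture, annihilate the $P$-terms via the vanishing of $\E_{\bX\sim P}[u_p(\bX,\cdot)]$ (you recover this from $\|\mu_P\|_{\cH_{u_p}}=0$, the paper cites the Stein identity directly — equivalent facts), bound $D^2(R,P)\leq\tau_\infty$ by the RKHS Cauchy--Schwarz bound, and saturate with Dirac contaminations $\delta_{\bz}$ approaching the supremum $\tau_\infty$. The paper packages the identity $D(Q,P)=\epsilon' D(R,P)$ as a separate lemma (\Cref{lem: KSD decomposition}), but the content is identical.
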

This result suggests that, when at most a proportion $\epsilon_0$ of data is corrupted, setting $\theta = \epsilon_0 \tau_\infty^{1/2}$ will ensure quantitative robustness to $\cP(P; \epsilon_0)$. Note that using a similar proof, Huber's contamination models can also be related to the MMD balls; see \Cref{lem: MMD ball Huber model} in  \Cref{app: mmd tests}. 

One practical issue is that $\tau_\infty$, the supremum of the Stein kernel, does not always admit a closed-form expression. One way to compute it is by numerical optimization of $\bx \mapsto u_p(\bx, \bx)$, but this assumes the optimizer converges and requires extra computation, thus not suitable in high dimension or when evaluation of the score function is costly. We propose an alternative approach, where $\tau_\infty$ is approximated by the maximum of the Stein kernel evaluated at the observed data, i.e., $\max_{i=1, \ldots, n}u_p(\bX_i, \bX_i)$. This approach requires no extra computation and gives a reasonable estimate with moderate or large $n$. In \Cref{sec: Experiments}, we use this approach and show empirically that the resulting test is still well-calibrated despite this approximation. Further discussions on how this approximation affects the test performance can be found in \Cref{app: tau_infty}, where we demonstrate that this approach still controls the Type-I error rate, even with a small sample size.

\begin{remark}
    \Cref{prop: KSD bound huber model} holds for Huber's contamination models of the form $Q = (1-\epsilon) P + \epsilon R$ with $\epsilon \leq \epsilon_0$. In particular, it holds for \emph{any} contamination distribution $R$. This agnosticism can be useful in practice, because the exact form of contamination is often unknown. On the other hand, when prior knowledge about $R$ \emph{is} available, it is also possible to incorporate it into the proposed test. For example, if $R$ is known to have a support bounded by some $B > 0$, then the bound in \Cref{prop: KSD bound huber model} can be tightened by replacing the worst-case bound $\tau_\infty = \sup_{\bx \in \R^d} u_p(\bx, \bx)$ with the localized version $\tau_\infty = \sup_{\| \bx\|_2 \leq B} u_p(\bx, \bx)$. As discussed in \citet{fauss2021minimax}, such assumptions are realistic in, e.g., highly regulated experimental environments, where any outliers are known to lie within bounded regions.
\end{remark}

Another uncertainty model commonly studied in the literature is the \emph{density-band} model \citep{kassam1981robust,hafner1993construction}, defined as distributions whose density function lies within an error band of a nominal model with density $p$, i.e., $\{Q \in \cP(\R^d): \; Q \textrm{ has density } q \textrm{ with } | q(\bx) - p(\bx)| \leq \delta(\bx) \textrm{ for all } \bx \}$, for some function $\delta: \R^d \to [0, \infty)$. It can be shown that density-band models can be rewritten as Huber's contamination models \eqref{eq: Huber model} with additional constraints on the outlier distribution \citep{fauss2016old}. However, compared with Huber's models, density-band models have the advantage of being more interpretable and more natural for certain forms of model disparity such as heavy tails. The following result suggests how the uncertainty radius $\theta$ should be chosen for such models and is proved in \Cref{sec: pf of KSD bound density band}.
\begin{proposition}
    \label{prop: KSD bound density band}
     Suppose $k$ satisfies the conditions in \Cref{lem: bounded Stein kernel} and $\E_{\bX \sim P}[\|\bs_p(\bX)\|_2] < \infty$. Furthermore, assume $Q, P \in \cP(\R^d)$ admit positive densities $q, p$ on $\R^d$ and $p \in \cC^1$. If $| q(\bx) - p(\bx) | \leq \delta(\bx)$ for some function $\delta: \R^d \to [0, \infty)$ such that $\delta_0 \coloneqq \int_{\R^d} \delta(\bx) \diff\bx < \infty$, then $\ksd(Q, P) \leq \delta_0 \tau_\infty^{1/2}$.
\end{proposition}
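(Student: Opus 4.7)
The plan is to exploit the variational (IPM) form of the KSD together with Stein's identity for the model $P$, and then bound the worst-case integral pointwise via the reproducing property of the Stein kernel $u_p$.

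First, I would rewrite $\ksd(Q,P)$ as
\begin{align*}
    \ksd(Q,P) \;=\; \sup_{f \in \cB} \left| \int_{\R^d} (\cA_p f)(\bx) \, Q(\diff \bx) \right|
    \;=\; \sup_{f \in \cB} \left| \int_{\R^d} (\cA_p f)(\bx) \, (q(\bx) - p(\bx)) \diff \bx \right|,
\end{align*}
where the second equality uses Stein's identity $\E_{\bX \sim P}[(\cA_p f)(\bX)] = 0$ for every $f \in \cB$. This identity is standard under the integrability assumption $\E_{\bX \sim P}[\|\bs_p\|_2] < \infty$ together with $f$ and $\nabla f$ bounded (which holds for $f$ in the unit ball of $\cH_k^d$ because $k$ is bounded with bounded mixed derivatives, by the conditions of \Cref{lem: bounded Stein kernel}); the usual integration-by-parts argument then delivers it.

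Next, I would apply the triangle inequality, the bound $|q(\bx) - p(\bx)| \leq \delta(\bx)$, and a pointwise bound on $(\cA_p f)(\bx)$. The pointwise bound comes from the reproducing property of the Stein kernel: for any $f \in \cB$,
\begin{align*}
    |(\cA_p f)(\bx)| \;=\; |\langle f, \xi_p(\bx, \cdot) \rangle_{\cH_k^d}| \;\leq\; \|f\|_{\cH_k^d} \sqrt{u_p(\bx,\bx)} \;\leq\; \sqrt{u_p(\bx,\bx)} \;\leq\; \tau_\infty^{1/2},
\end{align*}
where $\xi_p(\bx, \cdot)$ is the Stein feature satisfying $\langle \xi_p(\bx, \cdot), \xi_p(\bx',\cdot)\rangle_{\cH_k^d} = u_p(\bx, \bx')$, and the final inequality uses the bound on $u_p(\bx,\bx)$ from \Cref{lem: bounded Stein kernel}. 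Putting everything together,
\begin{align*}
    \ksd(Q,P) \;\leq\; \sup_{f \in \cB} \int_{\R^d} |(\cA_p f)(\bx)| \, \delta(\bx)\diff \bx \;\leq\; \tau_\infty^{1/2} \int_{\R^d} \delta(\bx)\diff \bx \;=\; \tau_\infty^{1/2}\, \delta_0,
\end{align*}
which is the desired bound.

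The main (minor) obstacle is justifying Stein's identity rigorously under the stated hypotheses, i.e., checking that the boundary terms vanish when moving derivatives off of $f$ onto $p$. Since $f \in \cH_k^d$ with a tilted stationary kernel gives bounded $f$ and bounded $\nabla f$, and since $\E_P[\|\bs_p\|_2] < \infty$ makes $\bx \mapsto \bs_p(\bx)^\top f(\bx) p(\bx)$ integrable, the identity follows from the divergence theorem applied to $f(\bx)p(\bx)$ (or by directly invoking a standard result such as \citet[Lemma 5.1]{liu2016kernelized} or \citet[Proposition 1]{gorham2017measuring}); everything else is just Cauchy--Schwarz and monotonicity of the integral.
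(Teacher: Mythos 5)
Your proof is correct and rests on the same two ingredients as the paper's: Stein's identity under $\E_{\bX \sim P}[\|\bs_p(\bX)\|_2]<\infty$ to replace $q$ by $q-p$, and the uniform bound $\tau_\infty$ from \Cref{lem: bounded Stein kernel}. The only difference is presentational: the paper works with the squared KSD in its double-integral form and bounds $|u_p(\bx,\bx')|\leq\tau_\infty$ before taking a square root, whereas you work with the variational form and bound the Stein feature pointwise via $|(\cA_p f)(\bx)|\leq\sqrt{u_p(\bx,\bx)}\leq\tau_\infty^{1/2}$ — the linear view of the same estimate, yielding the identical bound with the same source of slack.
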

The bound in \Cref{prop: KSD bound density band} is not tight, as its proof involves bounding an integrand that can take negative values by its absolute value. Nevertheless, as we will show in \Cref{sec: exp: misspecified tails}, this bound is not overly loose and can still be useful. In particular, it allows us to design a KSD test that is robust to tail misspecification while still maintaining non-trivial power.

\Cref{prop: KSD bound huber model} and \ref{prop: KSD bound density band} also reveal a limitation of our robust KSD test---it can be overly conservative to alternatives that are not the intended contamination but still lie within the chosen KSD-ball uncertainty set. For example, to ensure robustness to Huber's contamination models \eqref{eq: Huber model} with tolerance $\epsilon_0$, \Cref{prop: KSD bound huber model} suggests setting the uncertainty radius to $\theta = \epsilon_0 \tau_\infty^{1/2}$. However, the resulting KSD-ball $\cB^\KSD(P; \theta)$ contains not only Huber's models, but also other distributions such as density-band models. As a result, the robust KSD test will control Type-I errors for all these distributions, regardless of whether they are the intended contamination type. This limitation is not unique to our method; it is a generic drawback of all robust tests based on uncertainty sets \citep{fauss2021minimax}. 

\section{Numerical Experiments}
\label{sec: Experiments}
We will now evaluate the proposed GOF tests using both synthetic and real data. Unless otherwise mentioned, all standard KSD tests are based on an IMQ kernel $k(\bx, \bx') = h_\mathrm{IMQ}(\bx - \bx')$ where $h_\mathrm{IMQ}(\bu) = (1 + \| \bu \|_2^2 / \lambda^2)^{-1/2}$ with a bandwidth $\lambda^2 > 0$ selected via the median heuristic, i.e., $\lambda_\mathrm{med} \;=\; \mathrm{Median}\big\{ \| \bX_i - \bX_j \|_2: \; 1 \leq i < j \leq n \big\}$. All tilted-KSD and robust-KSD tests are based on a \emph{tilted} IMQ kernel with weight $w(\bx) = (1 + \| \bx - \ba \|_2^2 / c)^{-b}$, where $\ba \in \R^d$ and $c > 0$. Intuitively, $\ba$ and $c$ respectively centers and scales the input. We fix $\ba = 0$ and $c = 1$ in all experiments, as all data will always be centered and on a suitable scale. More generally, we could replace $\| \bx - \ba \|_2^2 / c$ by a weighted norm of the form $(\bx - \ba)^\top C (\bx - \ba)$, where $C \in \R^{d \times d}$ is a pre-conditioning matrix, chosen possibly as the empirical covariance matrix or robust estimates of it. Since our experiments will focus on sub-Gaussian models, we choose $b = 1/2$. This ensures the Stein kernel is bounded.

All tests have nominal level $\alpha = 0.05$. The probability of rejection is computed by averaging over 100 repetitions, and the $95\%$ confidence intervals are reported. Our robust-KSD test will be shortened as \emph{R-KSD}. Code for reproducing all experiments can be found at \href{https://github.com/XingLLiu/robust-kernel-test}{\texttt{github.com/XingLLiu/robust-kernel-test}}.

\subsection{Toy Gaussian Model}
\label{sec: contam gaussian}
We first consider a Gaussian model $ P = \cN(0, 1)$, in which case the score function of the model is $\bs_p(x) = -x$ and is hence unbounded. This toy example is simplistic and not representative of the unnormalized models our test is most suited for, but it will nonetheless be helpful to study our algorithmic choices, to verify \Cref{thm: non robust stationary} and 
\Cref{thm: robust tilted} numerically, and to compare against alternative robust tests. 
 
\subsubsection{Decay Rate of Weighting Function}
We first draw random samples $\X_n$ of size $n=500$ from $Q = (1-\epsilon) P + \epsilon \delta_z$, for different values of $\epsilon \in [0, 1]$ and $z \in \R$. This setting mimics the presence of outliers at $z$. 
Given a stationary kernel $h$, a natural question is ``how does the choice of weight $w$ affect the Stein kernel and the test power?''. On the left-hand side of \Cref{fig: stein kernels}, we plot the Stein kernel $x \mapsto u_p(x, x)$ for different values of $b$. As $b$ grows, the tails of the Stein kernel are progressively down-weighted. We then plot the rejection probability of the standard KSD tests using these Stein kernels when the outlier is $z = 10$ on the right-hand side of \Cref{fig: stein kernels}. As $b$ increases, the rejection probability decreases, suggesting that the test power is lower when the tails of the Stein kernel are overly down-weighted. This is not surprising since the Stein kernel decays faster for larger values of $b$, making the test insensitive to model deviations at the tails. Ideally, $w$ should decay just enough so that $u_p$ remains bounded, but not too fast as it would lose power. This highlights the trade-off between robustness and power in the choice of $w$. A similar trade-off has been observed in robust estimation, where an overly decaying $w$ can impede the estimation accuracy \citep[Appendix F.2]{barp2019minimum}. Henceforth, we choose $b = 1/2$ to balance this trade-off, as it is the smallest value that ensures $u_p$ is bounded for sub-Gaussian models.

\begin{figure}
    \begin{minipage}[t]{0.55\textwidth}
        \centering
        \includegraphics[width=1.\textwidth]{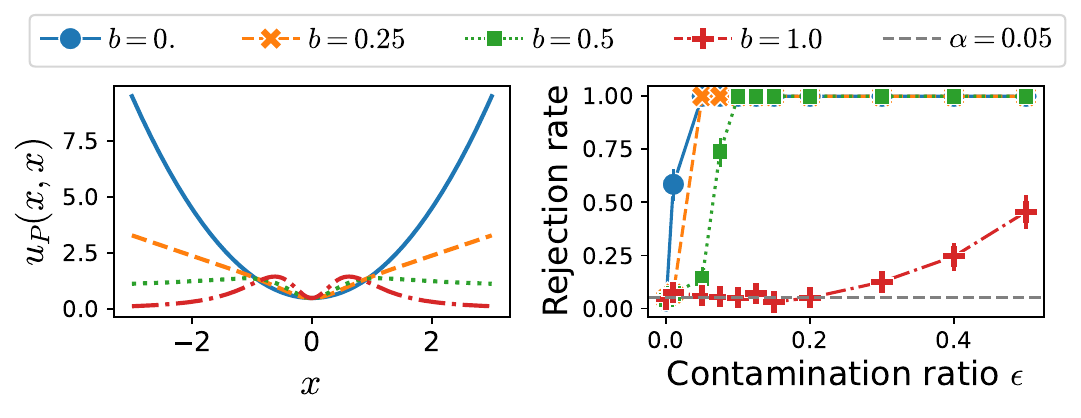}
        \caption{\emph{Left.} Stein kernel for $P=\mathcal{N}(0,1)$ and an IMQ kernel tilted by $w(x) = (1 + x^2)^{-b}$.  The larger $b$ is, the more the tails of the function $x \mapsto u_p(x,x)$ are down-weighted. The choice $b=0$ corresponds to no weighting, reducing to an IMQ kernel.  \emph{Right.} The rejection probability under contamination by $R=\delta_z$ with $z = 10$.}
        \label{fig: stein kernels}
    \end{minipage}
    \hfill
    \begin{minipage}[t]{0.435\textwidth}
        \centering
        \includegraphics[width=.85\textwidth]{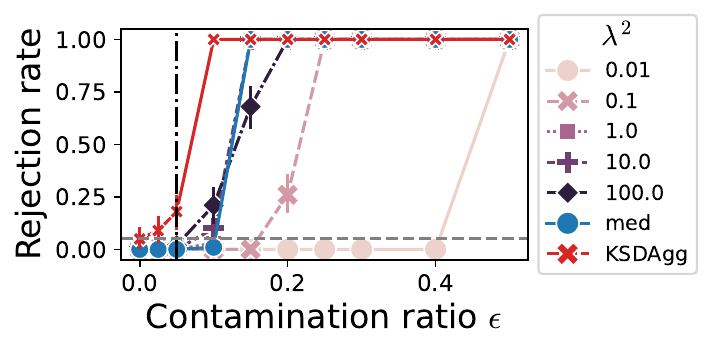}
        \caption{Rejection probability of robust-KSD with different bandwidths $\lambda$. ``med'' is the median heuristic. ``KSDAgg'' is the test of \citet{schrab2022ksd}. The dashed line is $\alpha = 0.05$. The vertical line is the maximal proportion of contamination $\epsilon_0 = 0.05$ controlled by robust-KSD.}
        \label{fig: bw robust}
    \end{minipage}
\end{figure}

\begin{figure}[t]
    \centering
    \includegraphics[width=1.\textwidth]{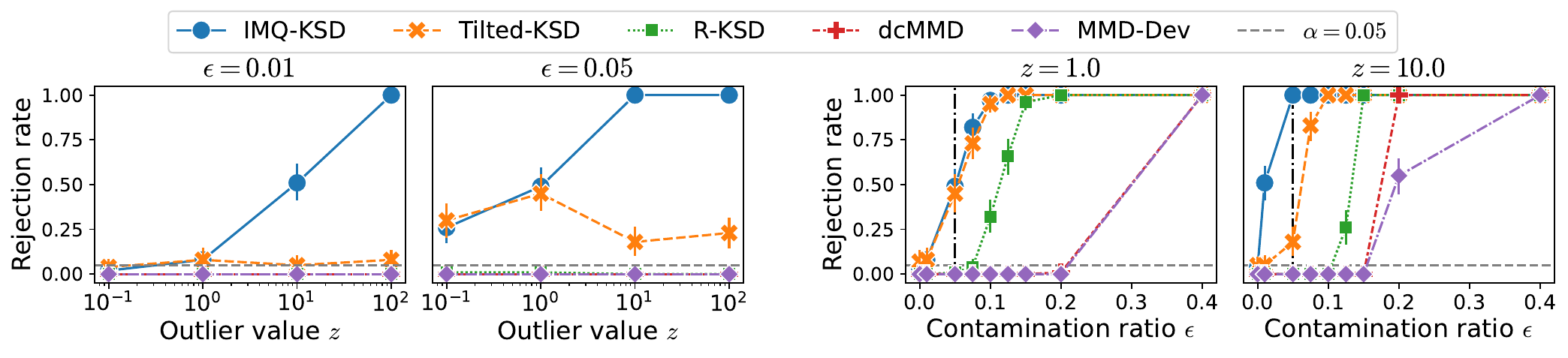}
    \caption{Rejection probability under an outlier-contaminated Gaussian model with different outlier values $z$ and contamination ratios $\epsilon$. The grey dotted horizontal line is the test level $\alpha = 0.05$, and the black dash-dot vertical line corresponds to $\epsilon_0 = 0.05$. The KSD tests with IMQ kernel lack both qualitative and quantitative robustness since they reject even for small $z$ or $\epsilon$. The tilted-KSD test is more robust in cases where $z$ or $\epsilon$ are larger, but ultimately still reject the null due to their lack of quantitative robustness.}
    \label{fig: gauss outliers}
\end{figure}

\subsubsection{Kernel Bandwidth}
We now use the same data set to investigate how the choice of the kernel bandwidth $\lambda$ affects the performance of the robust-KSD test. It is well-known that the bandwidth plays a crucial role in kernel-based tests and can considerably affect the ability of the test to detect model disparities \citep{gretton2012kernel,ramdas2015decreasing,reddi2015high,schrab2022ksd,schrab2023mmd}. In general, a smaller bandwidth is better at detecting local differences, while a larger bandwidth is more suited for global deviations \citep{schrab2023mmd}. A common strategy for bandwidth selection in kernel-based testing is the median heuristic. \citet{schrab2022ksd,schrab2023mmd} also proposed testing frameworks that aggregate the test result with multiple bandwidths to give the final decision, thereby avoiding bandwidth selection and achieving higher test power. A natural question is therefore: ``how does the bandwidth affects the robustness of the proposed test?''. 

We run the robust-KSD test with various choices of bandwidths $\lambda^2 \in \Lambda \cup \{ \lambda_\mathrm{med}^2 \}$, where $\Lambda = \{ 0.01, 0.1, 1, 10, 100\}$ and the uncertainty radius $\theta$ is set to control at most $\epsilon = 0.05$ contaminations in the sample. We also compare it with the KSDAgg test of \citet{schrab2022ksd} using the same tilted kernel and aggregating over the collection of bandwidths $\Lambda$. Other configurations of KSDAgg follow the recommendations in \citet[Section 4.2]{schrab2022ksd}. This experiment is repeated 200 times to reduce randomness, and the results are reported in \Cref{fig: bw robust}. For any choice of $\lambda$, the robust-KSD test is able to control the Type-I error when $\epsilon \leq \epsilon_0$, where $\epsilon$ is the proportion of contamination in the data, suggesting that robust-KSD remains well-calibrated for all bandwidths. If instead $\epsilon > \epsilon_0$, all robust-KSD tests eventually reject $\Hc_0$, with $\lambda$ chosen by median heuristic achieving the highest power. The KSDAgg test is not robust and more sensitive to contamination than all robust-KSD tests at all contamination levels $\epsilon$. This is not surprising given that KSDAgg is designed to \emph{maximize} the test power over multiple bandwidths, rendering this test more sensitive to a wide range of alternatives, including contaminated models. As a result, we henceforth do not include KSDAgg in our experiments. A similar study for the \emph{standard} KSD test is given in \Cref{app: bw robust}.

\subsubsection{Standard versus Robust KSD Tests}
Using the same data set, we then compare KSD tests with a stationary kernel to tilted-KSD tests and our proposed robust-KSD test in \Cref{fig: gauss outliers}. As shown in the left two plots, with a contamination ratio $\epsilon$ of $0.01$ or $0.05$, the standard IMQ-KSD test rejects the point null with high probability for large outlier values $z$. This aligns with the lack of qualitative robustness of tests based on stationary kernels proved in \Cref{thm: non robust stationary}. In contrast, the standard Tilted-KSD test rejects with lower probability for \emph{all} $z$ values, aligning with the qualitative robustness result with tilted kernels in \Cref{thm: robust tilted}. Notably, our theoretical results in both \Cref{sec: lack_robustness} and \Cref{sec: robust KSD test} apply only to \emph{fixed} kernels, thus excluding kernels based on observed data such as those selected via median heuristic. However, our numerical results suggest that the conclusion of our theory may hold more broadly.  

The Type-I error of the tilted-KSD test still exceeds the nominal level $\alpha=0.05$, suggesting that a tilted kernel alone is not enough to enforce \emph{quantitative} robustness. Notably, the rejection probability of Tilted-KSD is highest when $z = 1$ but declines for larger $z$. This is because $u_p(z, z)$ with the tilted kernel peaks at around $|z| \approx 1$ and then decreases with $z$ (see \Cref{fig: stein kernels}), making the test more sensitive when $|z| \approx 1$ but less sensitive to large outliers. In contrast, running our robust-KSD test with a pre-set contamination control $\epsilon_0 = 0.05$, we can see from the left two plots of \Cref{fig: gauss outliers} that it remains well-calibrated for all values of outlier $z$. The results also suggest that R-KSD is conservative since its rejection probability is close to 0; we attribute this to the fact that the robust tests are designed to control \emph{all} types of contamination, of which the injected outlier is only one specific type. To demonstrate the non-trivial power of the robust tests, we show in the right two plots the rejection probability against different contamination ratios $\epsilon$. As $\epsilon$ grows, R-KSD eventually is capable of rejecting the null hypothesis $\cH_0: Q \in \cB^\KSD(P; \theta)$ with test power approaching one. 

We also compared our tests with two existing robust kernel tests, namely the robust MMD tests of \citet{schrab2024robust} (denoted \emph{dcMMD}) and of \citet[Eq.~38]{sun2023kernel} (denoted \emph{MMD-Dev}). Both tests are provably quantitatively robust to Huber's contamination models, but they are two-sample tests that require extra samples from the model $P$. When the same number of samples from $Q$ and from $P$ are used and the cost of simulation is negligible, the cost of a single evaluation of MMD comparable to that of KSD. However, this cost could be prohibitively expensive when $P$ is a more complex model. Implementation details of these tests are deferred to \Cref{app: mmd tests}. As shown in \Cref{fig: gauss outliers}, both tests are more conservative than our proposed KSD tests. The conservativeness of dcMMD is because it is designed for a contamination model that is \emph{stronger} than Huber's model, while the conservativeness of MMD-Dev is because it uses a deviation inequality to construct the decision threshold, which is known to be conservative; see, e.g., \citet{gretton2012kernel}.

\begin{figure}[t]
    \centering
    \includegraphics[width=0.8\textwidth]{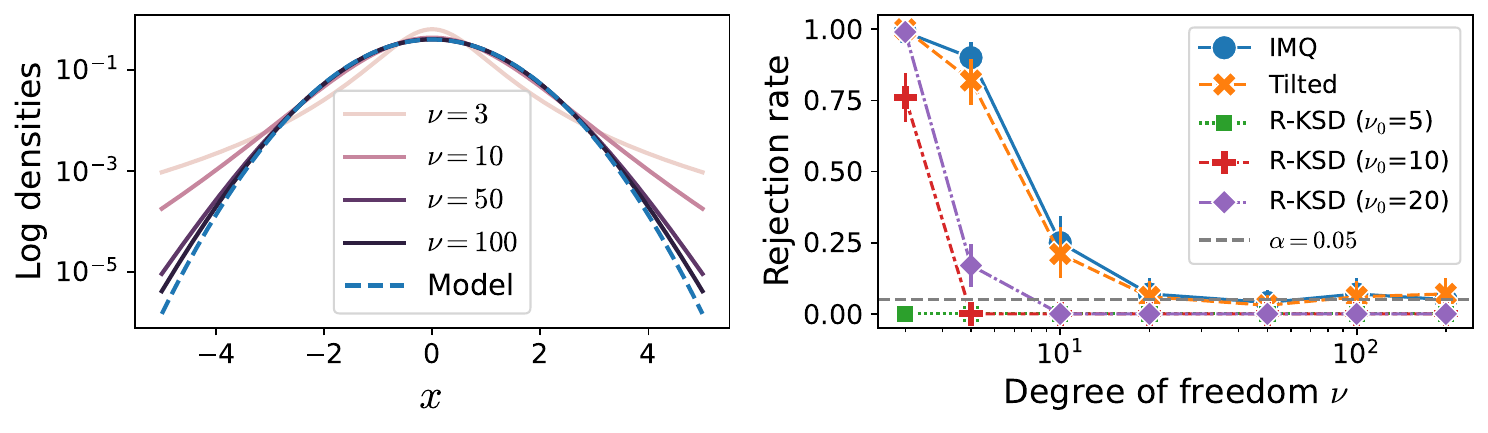}
    \vspace{-.8em}
    \caption{Heavy-tailed experiment. \emph{Left}. Log densities of a standard Gaussian model and scaled t-distributions with different degree-of-freedom (dof) and moment-matched to Gaussian. \emph{Right}. Rejection probability of the standard and robust tests with different $\theta$ set to control the cases $\nu \geq \nu_0$ for different values of $\nu_0$.}
    \label{fig: heavy tail}
\end{figure}

\subsubsection{Misspecified Tails}
\label{sec: exp: misspecified tails}
Finally, we now change the data-generating mechanism to study the impact of misspecified tails. We sample from $Q_\nu = t_\nu \sqrt{(\nu - 2)/\nu} $, where $t_\nu$ is the Student's t-distribution with degree-of-freedom (dof) $\nu > 2$, and the scaling factor $\sqrt{(\nu - 2)/\nu}$ ensures its second moment matches that of $P$. We set the uncertainty radius so that the R-KSD test is robust to $Q_\nu$ with $\nu \geq \nu_0$ for different values of $\nu_0$. This is achieved using \Cref{prop: KSD bound density band} to derive a bound on $\ksd(Q_\nu, P)$, which is discussed in detail in \Cref{prop: KSD ball fat tails} of \Cref{app: KSDball_fat_tails}. The results are reported in \Cref{fig: heavy tail}. For small values of $\nu$, the tail of the model is misspecified, leading the robust tests with $\nu_0 = 10$ or $20$ to reject with high probability because the KSD between $Q_\nu$ and $P$ is large. As $\nu $ grows, $Q_\nu$ converges weakly to the standard Gaussian, making the model well-specified in the limit $\nu \to \infty$. Consequently, neither the standard tests or the robust tests reject the null hypothesis for large $\nu$. The robust test with $\nu_0 = 5$ shows no power because, to be robust to tails that are so misspecified, the uncertainty radius $\theta$ computed using \Cref{prop: KSD bound density band} must be very large, thus resulting in a small test statistic (cf.\ \eqref{eq: robust test statistic}) and low test power. Overall, the robust tests are more conservative than the standard one. This is because the robust tests control \emph{all} possible contaminations within a KSD-ball, while misspecified tails are only one of the many possible such forms of contamination.

\subsection{Gaussian-Bernoulli Restricted Boltzmann Machines}

Our next experiment is an example of energy-based model called a Gaussian-Bernoulli Restricted Boltzmann Machine (RBM) model \citep{cho2013gaussian}. RBMs have been used in a wide range of scientific applications \citep[see][for a review]{fischer2014training,zhang2019review}, and they are a common benchmarks for assessing kernel GOF tests \citep{liu2016kernelized,jitkrittum2017linear,schrab2022ksd}. RBMs are latent-variable models with joint density $p(\bx, \bh) \propto \exp(\frac{1}{2} \bx^\top B \bh + \bb^\top \bx + \bc^\top \bh - \frac{1}{2} \| \bx \|_2^2 )$, where $\bx \in \R^d$ is an observable variable, $\bh \in \{\pm 1\}^{d'}$ is a binary hidden variable with latent dimension $d'$, and $B \in \R^{d \times d'}, \bb \in \R^d$ and $\bc \in \R^{d'}$ are model parameters. Computing the normalizing constant of the marginal $p(\bx)$ requires summing over $2^{d'}$ terms, so it becomes intractable when $d'$ is large. However, its score function has a closed-form expression $\bs_p(\bx) = b - \bx + B \mathrm{tanh}( B^\top \bx + \bc )$, where $\mathrm{tanh}$ is applied entry-wise. We set $d = 50$ and $d' = 10$, and randomly initialize $B, \bb, \bc$ by sampling each entry independently from a $\cN(0, 1)$. We then generate $n = 500$ samples from the model by block Gibbs sampling following \citet{cho2013gaussian,jitkrittum2017linear}, and randomly replace $\epsilon \in [0, 1]$ proportions of the data with outliers drawn from $R = \cN(0, 0.1^2I_d)$; see the left and middle of \Cref{fig: rbm}. As shown by the plots, the outliers are clearly abnormal since they have low density values.

\begin{figure}[t]
    \centering
    \includegraphics[width=.9\textwidth]{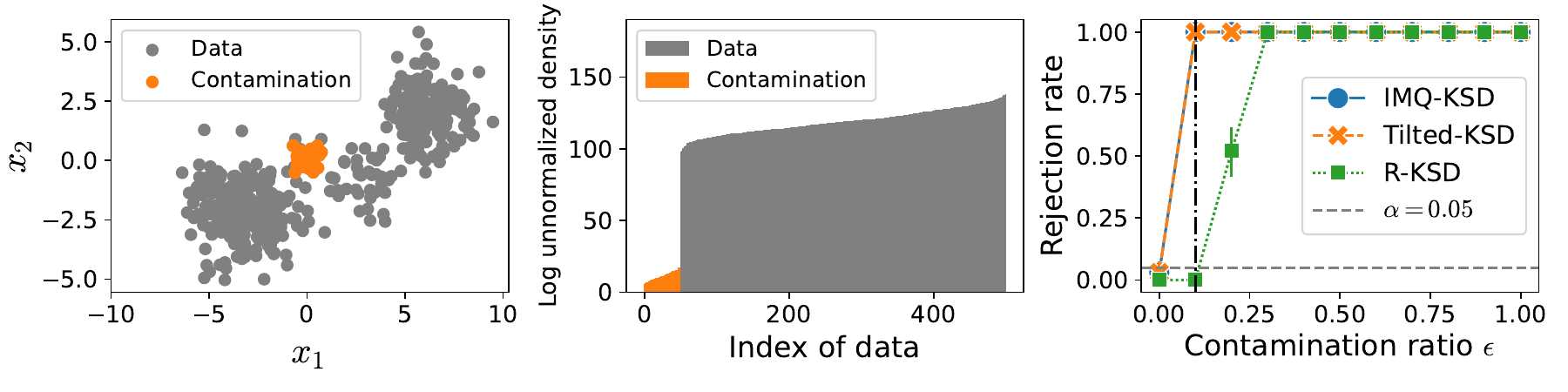}
    \caption{Gaussian-Bernoulli RBM experiment. \emph{Left.} Data generated from $P$ and injected contamination in the first two dimensions. \emph{Middle.} Log unnormalized densities of the data and contamination ordered from small to large; the injected contamination is indeed abnormal since they have much lower densities. \emph{Right.} Probability of rejection with a Gaussian-Bernoulli RBM against the contamination ratio $\epsilon$. The robust tests are calibrated to control the Type-I error with no more than $\epsilon_0 = 0.1$ proportion of contamination.}
    \label{fig: rbm}
\end{figure}

We plot the rejection probabilities of the tests on the right-hand side of \Cref{fig: rbm}. The standard IMQ-KSD and Tilted-KSD tests are not well-calibrated under the composite null $\Hc_0: Q \in \cB^\KSD(P; \theta)$, while robust-KSD with $\theta$ chosen by setting $\epsilon_0 = 0.1$ controls the rejection probability below the level $\alpha = 0.05$ when the contamination ratio $\epsilon$ does not exceed $\epsilon_0$. On the other hand, when $\epsilon$ exceeds the maximal allowed proportion $\epsilon_0$, the robust test rejects $\Hc_0$ with probability approaching $1$, thus showing its power.

When generating the synthetic data from $P$, we run a Gibbs sampler for $7000$ steps and discard the first $2000$ as burn-in and keep every $10$-th datum to reduce correlation. This takes $58.40$ seconds on average, compared with $0.84$ seconds required to perform the robust-KSD test. As a result, it will clearly not be reasonable to use the robust MMD tests from \citet{sun2023kernel,schrab2023mmd} here as simulating a large enough number of samples from $P$ would cost orders of magnitude more than the cost of the GOF test. Moreover, the samples generated through Gibbs sampling are no longer i.i.d., thus violating the assumptions in our theoretical results. Extension to the non-independent case could potentially be made following the approach in \citet{Cherief-Abdellatif2019}, which studied the robustness of estimators (instead of GOF tests) under correlated data.

\subsection{A Kernel Exponential Family Model for Density Estimation}

\begin{figure}[t]
    \centering
    \begin{subfigure}[b]{0.52\linewidth}
        \centering
        \includegraphics[width=1\textwidth]{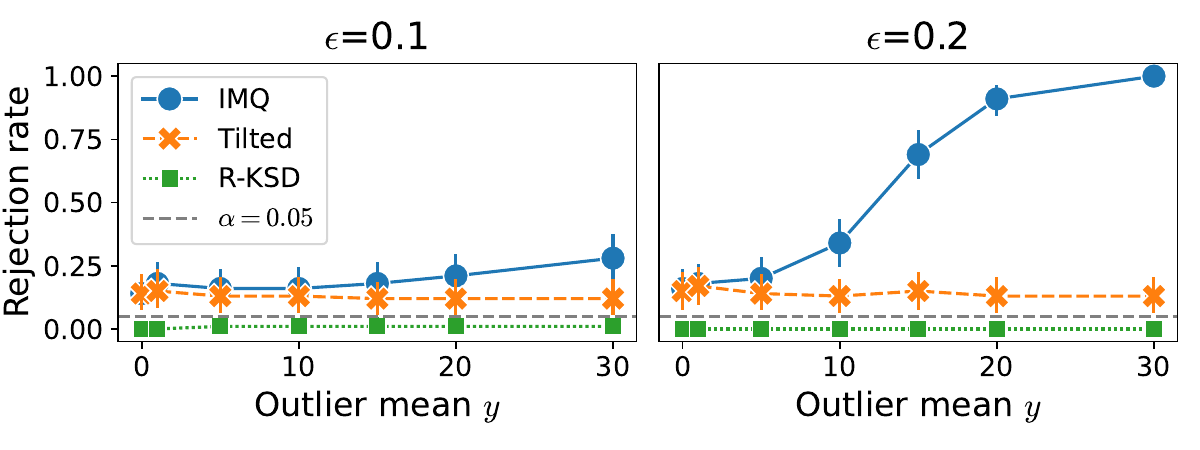}
    \end{subfigure}
    \hfill
    \begin{subfigure}[b]{0.4\linewidth}
        \centering
        \includegraphics[width=1\textwidth]{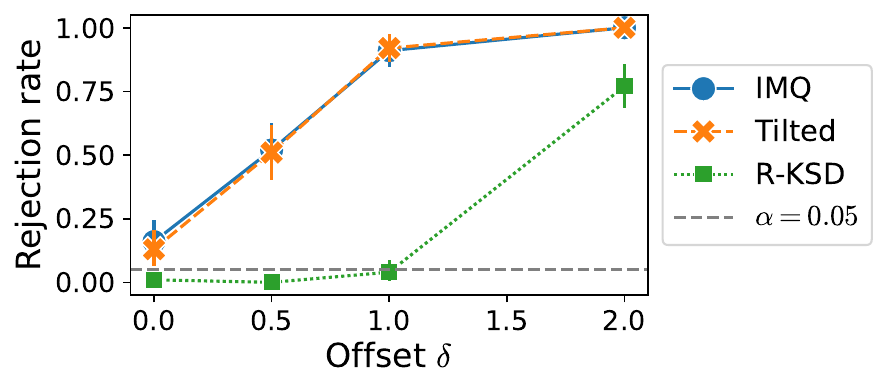}
    \end{subfigure}
    \begin{subfigure}[b]{0.52\linewidth}
        \centering
        \includegraphics[width=1\textwidth]{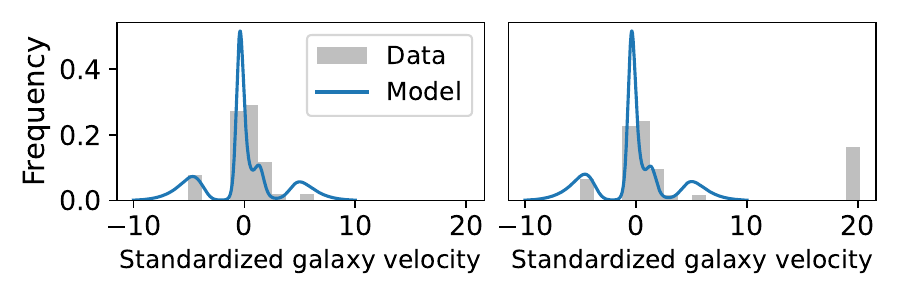}
        \caption{Level}
        \label{fig: kef level}
    \end{subfigure}
    \hfill
    \begin{subfigure}[b]{0.4\linewidth}
        \centering
        \includegraphics[width=.75\textwidth]{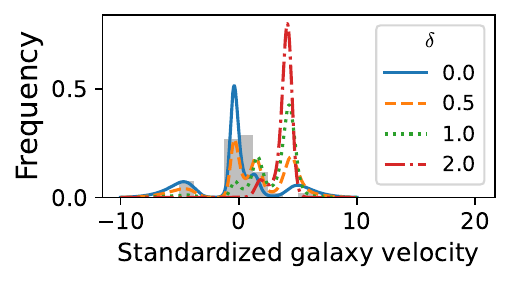}
        \caption{Power}
        \label{fig: kef power}
    \end{subfigure}
    \hspace{2em}
    \vspace{-.5em}
    \caption{Probability of rejection (top) and data and fitted models (bottom) for the KEF experiment. \emph{(a)} Extra outliers are added to the data; data and fitted models with no contamination (bottom left) and contamination ratio $\epsilon = 0.2$ (bottom right). \emph{(b)} Different offset values $\delta$ are added to the fitted parameter to create model deviation. The radius $\theta$ is chosen to control contamination of up to a proportion of $\epsilon_0 = 0.2$.}
    \label{fig:kef}
\end{figure}

Our next example concerns density estimation using a kernel exponential family (KEF) model \citep{canu2006kernel}. Given a reproducing kernel $k_0: \R^d \times \R^d \to \R$ and a reference density $p_0$ on $\R^d$, we have $p_\eta(\bx) \propto p_0(\bx) \exp(- f_\eta(\bx) )$, where $f_\eta$ lies in the RKHS $ \cH_{k_0}$ associated with $k_0$. We follow the setting in \citet{matsubara2022robust} which uses a finite-basis approximation of the RKHS function so that $f_\eta(\bx) = \sum_{l=1}^{25} \eta_{l} \phi_l(\bx)$, where $\eta = (\eta_1,\ldots,\eta_{25})^\top \in \R^{25}$ and $\phi_l(\bx) = (x^l/\sqrt{l!}) \exp(- x^2 / 2)$ for $l = 1, 2, \ldots, 25$. 

We are interested in testing the goodness-of-fit of the KEF model under data contamination. Suppose that the practitioners believe the observed data are subject to contamination, so that a \emph{robust} estimator for $\eta$ is used. Robust parameter estimation for exponential family models was studied in \citet{barp2019minimum}, which proposed a minimum-distance estimator using KSD with tilted kernels. With a suitable weighting function, the resulting estimator is \emph{bias-robust} \citep[Proposition 7]{barp2019minimum} and thus not susceptible to contamination. However, how to test the goodness-of-fit of the estimated model in this setting remains an open question. The standard IMQ-KSD test is not suitable since it is not robust and can thus falsely reject a good model due to contamination. Using a tilted kernel is also not enough since it is not quantitatively robust and hence can suffer from uncontrolled Type-I error. We therefore propose to use our robust-KSD test for this task. One challenge with the KEF model is that its density does not have a closed form, and although sampling from $P$ is feasible in this low-dimensional example ($d=1$), it would become challenging in high dimensions, making MMD-based tests not well-suited for this model. Our robust-KSD test, however, does not suffer from this limitation.

We use the data set as \citet{matsubara2022robust,key2021composite}, which is a $1$-dimensional data set of $82$ galaxy velocities \citep{postman1986probes,roeder1990density}. 
To avoid using the same data for model training and testing, we randomly split the data into equal halves, each containing $n_\mathrm{data} = 41$ data points. To each half, we then add $n_{\mathrm{ol}}$ independent draws from $R=\cN(z, 0.1^2)$ for some value of $z$ for the mean contamination, so that the resulting data set has a size of $n = n_{\mathrm{data}} + n_{\mathrm{ol}}$ and a contamination ratio of $\epsilon = n_{\mathrm{ol}} / (n_{\mathrm{data}} + n_{\mathrm{ol}})$. We then fit the KEF model to one half of the data, and perform the tests on the other half. Both the robust minimum KSD estimation and the robust-KSD test use the same tilted kernel $k(x, y) = w(x) h_\Lambda(x-y) w(y)$ with $w(x) = (1 + x^2)^{-1/2}$ and a sum of IMQ kernels $h_\Lambda(x-y) = \sum_{\lambda^2 \in \Lambda}(1 + | x - y|^2 / (2\lambda^2))^{-1/2}$ where $\Lambda = \{0.6, 1, 1.2\}$. The choice of the weighting function follows \citet{matsubara2022robust} and the use of a sum kernel is recommended in \citet{key2021composite}. Direct computation shows that the Stein kernel $ u_p$ is bounded in this case. We choose $\epsilon_0 = 0.2$ when computing $\theta$ for R-KSD. 

The results are shown in \Cref{fig:kef}. We first study the level of our tests in \Cref{fig: kef level}. As the outlier mean $z$ increases, the IMQ-KSD test rejects the null $H_0: Q = P$ with increasing probability, suggesting that this is due to the presence of outliers. Notably, this happens even though the parameter estimator is not susceptible to contamination; see the bottom row of \Cref{fig: kef level} for the fitted model with and without outliers. When the tilted kernel was used, the standard KSD test has a lower rejection probability, although it is still above the nominal level. On the other hand, the robust-KSD test is well-calibrated for all values of $z$.

We then study the power of KSD tests in \Cref{fig: kef power}.  To show the robust test has non-trivial power when no outliers are present, we fit the model parameters $\hat{\eta}$ on half of the data (without contamination) and replace the first component by $\hat{\eta}_1 + \delta$, where $\delta \in \{0, 0.5, 1, 2\}$ is an error offset, so that the resulting model has a poor fit. Other experimental setups including the choice of $\theta$ remain the same as before. As expected, the robust test has lower power than the standard tests; however, as the offset value $\delta$ increases, the robust test eventually rejects with high probability, thus showing its power under no contamination.

\subsection{Limitations for Multimodal Models}
\label{sec: limitations}

\begin{figure}[t]
    \centering
    \includegraphics[width=1\textwidth]{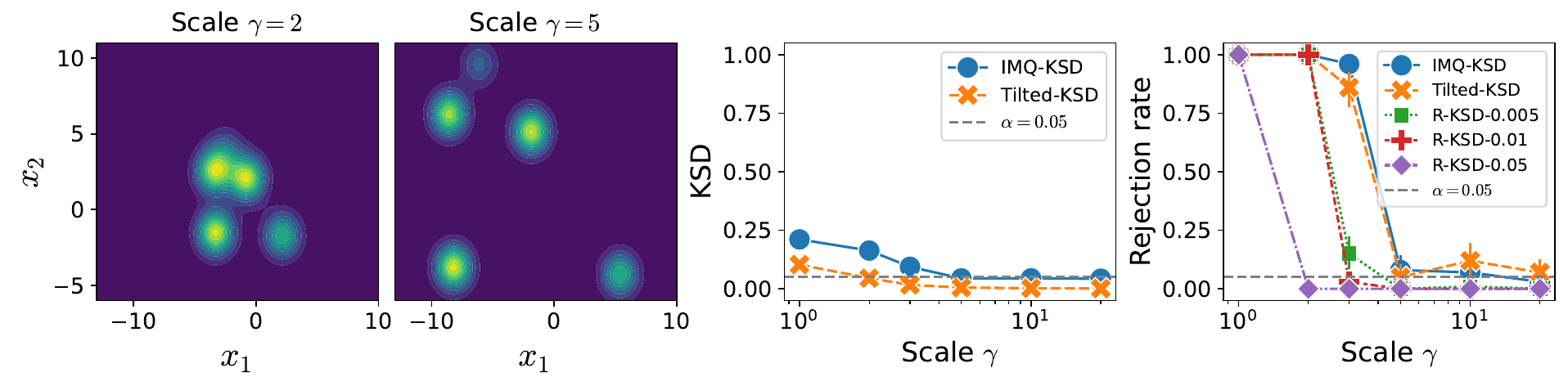}
    \caption{Mixture-of-Gaussian experiment. \emph{Left}. Contour plots of model densities. \emph{Middle}. KSD value with IMQ and tilted kernels. \emph{Right}. Rejection probability of the standard and robust tests. ``R-KSD-0.01'' refers to the R-KSD test with uncertainty radius $\theta$ chosen by setting the maximal contamination tolerance to $\epsilon_0=0.01$, and similarly for the others.}
    \label{fig: mixture}
\end{figure}

We investigate the performance of the robust test when the model has multimodality and the data are drawn from the same model with misspecified mixing ratios, thus revealing a limitation of our robust-KSD test. It is well-known that the standard KSD test performs poorly with mixture models with well-separated modes \citep{liu2023using}. This is intuitively due to the myopia of the score function, making the test blind to disparities in the mixing ratios, and it is a general issue for KSD and other score-based discrepancies \citep{wenliang2020blindness,zhang2022healing}. We demonstrate that our robust test can also suffer from this limitation, which is made even more prominent due to the relaxation of the point null to a KSD-ball. We consider a 2-dimensional mixture of Gaussian model $P = \sum_{j=1}^5 \pi_j \cN(\gamma \mu_j, I_2)$, where $\pi_j \propto u_j$ with $u_1, \ldots, u_5 $ drawn independently from $\mathrm{Uniform}(0, 1)$, each mean $\mu_j \in \R^d$ has entries drawn independently from $\mathrm{Uniform}(-2, 2)$, and $\gamma > 0$ is a scaling factor controlling the mode separation. The data-generating distribution $Q$ has a different set of mixture ratios that are randomly generated. For R-KSD, the uncertainty radius is chosen to be $\theta=\epsilon_0 \tau_\infty^{1/2}$ with different values of $\epsilon_0$, and we label the corresponding result as R-KSD-$\epsilon_0$. 

The results are reported in \Cref{fig: mixture}. As the scale $\gamma$ increases, the modes become more separated, leading to smaller values of $\ksd(Q, P)$, as shown in the left and middle plots. Consequently, $Q$ eventually becomes indistinguishable from $P$ under KSD, and all tests fail to reject with high probability. This is a result of the aforementioned myopia of KSD, which affects both the standard and the robust tests and regardless of whether IMQ or tilted kernels are used. The opposite is observed when $\gamma$ is small and the modes overlap, where all tests can correctly reject the null with high probability. Compared with the standard tests, the robust one is more conservative because it controls all possible contamination; nevertheless, it still achieves non-trivial power when $\epsilon_0$, thus also the radius $\theta$, is set to a small value.

\section{Conclusion and Discussion}
\label{sec: Discussions and Conclusion}

This paper studied the robustness of the kernel GOF test of \citet{liu2016kernelized,chwialkowski2016kernel}. We showed that existing KSD-based tests lack qualitative robustness when stationary kernels are used, but achieve it when employing tilted kernels inspired by robust estimation methods \citep{barp2019minimum,matsubara2022robust}. Since qualitative robustness alone does not ensure calibration under fixed contamination levels, we proposed a novel robust KSD test that provably controls the Type-I error when the data-generating distribution lies in a KSD-ball around the reference model, while consistently rejecting alternatives outside of it. We then discussed how to select the radius of this ball under different contamination models, namely Huber's contamination model and the density-band model. Empirical results on synthetic and real data sets demonstrate that our robust GOF test is well-calibrated and has non-trivial power. We conclude by discussing promising directions for future work.

Firstly, we established consistency of the proposed robust KSD test against \emph{all} alternative distributions outside the null set, i.e., $\Hc_1: Q \not\in \cB^\KSD(P; \theta)$. In the broader robust testing literature, however, it is also common to restrict the alternative set to a neighborhood centered at a specific alternative $P' \neq P$ and study minimax properties against such hypothesis \citep{huber1965robust,levy2008robust,gul2016robust,sun2023kernel}. Understanding whether our robust KSD test achieves minimax optimality in this setting is an interesting future work.

Secondly, as discussed in \Cref{sec: robustness GOF test}, our proposed robust KSD test does not require the Stein kernel $u_p$ to be bounded. A bounded Stein kernel is convenient as the resulting KSD-ball $\cB^\KSD(P; \theta)$ contains many contamination models of interest, such as Huber's model with an \emph{arbitrary} noise distribution. However, this could be too strict in some applications where outliers cannot take an arbitrary value, e.g., when the domain is bounded or the data acquisition procedure is highly regulated to prevent extreme outliers. In these cases, even with an unbounded kernel, the KSD-ball could still capture all contamination models of interest. Unbounded Stein kernels possess appealing theoretical properties such as convergence-control of moments \citep{kanagawa2022controlling}. Exploring whether such kernels can lead to more powerful robust-KSD tests is another worthwhile future research direction.


\acks{The authors would like to thank Antonin Schrab, Andrew Duncan, Matias Altamirano, and Charita Dellaporta for helpful discussions. XL was supported by the President’s PhD Scholarships of Imperial College London, the EPSRC StatML CDT programme [EP/S023151/1], and the Alan Turing Enrichment Scheme Award. FXB was supported through the EPSRC grant [EP/Y011805/1].}


\newpage

\appendix

{
\begin{center}
\Large
    \textbf{Appendix}
\end{center}
}

\etocsettocdepth.toc{subsection}
\tableofcontents

\section{Proofs of Theoretical Results}
\label{app:proof_theoretical_results}
This section holds proofs of the theoretical results. Throughout the appendix, given i.i.d.\ random variables $\bX_1, \ldots, \bX_n \sim Q \in \cP(\R^d)$ and an integrable function $f: (\R^d)^n \to \R$, we will write for brevity $\E[ f(\bX_1, \ldots, \bX_n) ] = \int_{\R^d} \cdots \int_{\R^d} f(\bx_1, \ldots, \bx_n) Q(\diff\bx_1) \cdots Q(\diff\bx_n)$.

\subsection{Proof of \Cref{thm: non robust stationary} and Related Preliminary Results}
\label{app: non-robust stationary}

\paranoheading{Overview of proof}
For any $\epsilon \in [0, 1]$ and $R \in \cP(\R^d)$, a random variable $\bX_i$ drawn from $Q = (1-\epsilon) P + \epsilon R$ can be written as $\bX_i = (1 - \xi_i) \bX_i^\ast + \xi_i \bZ_i$, where $\xi_i \sim \mathrm{Bernoulli}(\epsilon)$, $\bX_i^\ast \sim P$ and $\bZ_i \sim R$ are independent, and the equality is understood as equality in distribution. Defining the random variable $M' = \sum_{i=1}^n \xi_i$, then $M' \sim \mathrm{Binomial}(n, \epsilon)$ and represents the number of contaminated data in a random sample $\X_n$ drawn from $Q$. We will first show that the conclusion in \Cref{thm: non robust stationary} holds \emph{conditional} on the event $\{ M' \geq m'_0 \}$ for some integer $m'_0 \in [0, n]$, i.e., when there are at least $m'$ contaminated data in the sample. This will be sufficient to show \Cref{thm: non robust stationary} by proving that this event occurs with high probability. 

We will use the following notation in the rest of this section: Let $\X_n^\ast$ be a random sample drawn from $Q$. For any integer $m' \in [0, n]$ giving the number of corrupted data and any $\bz \in \R^d$, we define $\X_{n,\bz} = \X^\ast_{n-m'} \cup \{ \bz \}^{m'}$.

The rest of this section is organized as follows:
\begin{itemize}
    \item We first present three preliminary results. \Cref{lem: bound vstat} and \Cref{lem: bound boot quantile} respectively bound the test statistic $D^2(\X_{n,\bz})$ and $q^2_{\infty,1-\alpha}(\X_{n,\bz})$ for a given number of corrupted data, $m'$, by quantities of the form $a_n u_p(\bz, \bz) + b_n$ for some $a_n, b_n$. These results allow us to compare the relative rate at which $D^2(\X_{n,\bz})$ and $q^2_{\infty,1-\alpha}(\X_{n,\bz})$ scale with $\bz$. Both results will be used to prove \Cref{prop: non-robust stationary}, which states that the claim in \Cref{thm: non robust stationary} holds conditionally on the number of corrupted data.
    \item \Cref{pf: bound vstat}, \ref{pf: bound boot quantile}, \ref{pf: prop non-robust stationary} and \ref{pf: thm non-robust stationary} provide the proofs for \Cref{lem: bound vstat}, \Cref{lem: bound boot quantile}, \Cref{prop: non-robust stationary} and \Cref{thm: non robust stationary}, respectively.
\end{itemize}

\begin{assumption}
\label{assump: score and kernel}
    Assume $\E_{\bX \sim P}[\| \bs_p(\bX) \|_2^4] < \infty$ and $k(\bx, \bx') = h(\bx - \bx')$ for some $h \in \cC_b^2$ with $h(0) > 0$.
\end{assumption}

\begin{lemma}
    \label{lem: bound vstat}
    Assume $\sigma_\infty^2 \coloneqq \sup_{\bz \in \R^d} \E_{\bX \sim P}[ u_p(\bX, \bz)^2 ] < \infty$ and suppose \Cref{assump: score and kernel} holds. Then, for any $\delta > 0$ and integers $0 < m' < n$, the following event holds with probability at least $1 - \delta$:
    \begin{align*}
        D^2(\X_{n,\bz}) \;\geq\; \epsilon_{m, n}^2 u_p(\bz, \bz) - \frac{2\sigma_\infty\epsilon_{m, n}}{\sqrt{\delta m}} 
        \;,
    \end{align*}
    where $\epsilon_{m,n} = m'/n$ and $m = n - m'$.
\end{lemma}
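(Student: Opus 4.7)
The plan is to decompose the V-statistic into three parts based on whether each summation index corresponds to a clean $P$-sample or to the contamination point $\bz$, then control each piece separately. Writing $\X_{n,\bz} = \{\bX_1, \ldots, \bX_m, \bz, \ldots, \bz\}$ with $m' = n - m$ copies of $\bz$, the V-statistic splits as
\begin{align*}
D^2(\X_{n,\bz}) \;=\; \frac{1}{n^2}\sum_{i,j=1}^m u_p(\bX_i, \bX_j) \;+\; \frac{2m'}{n^2}\sum_{i=1}^m u_p(\bX_i, \bz) \;+\; \epsilon_{m,n}^2\, u_p(\bz, \bz).
\end{align*}
The first ``clean'' term is non-negative because $u_p$ is a reproducing kernel (making any V-statistic in $u_p$ a squared RKHS norm), so it can simply be dropped. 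The third term is exactly the leading $\epsilon_{m,n}^2\, u_p(\bz, \bz)$ in the claim. Therefore the whole task reduces to lower-bounding the cross term $\tfrac{2m'}{n^2}\, S_m$, where $S_m \coloneqq \sum_{i=1}^m u_p(\bX_i, \bz)$.

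The key observation is that $\E_{\bX \sim P}[u_p(\bX, \bz)] = 0$ for every fixed $\bz$. This is a standard consequence of Stein's identity: $u_p(\cdot, \bz) = \cA_p\bigl(\cA_p^{(2)} k(\cdot,\bz)\bigr)$, so $u_p(\cdot,\bz)$ lies in the image of $\cA_p$, and the regularity implied by \Cref{assump: score and kernel} (namely $h \in \cC_b^2$ and $\E_P[\|\bs_p\|_2^4] < \infty$) is ample to invoke the identity. Together with the uniform variance bound $\E_P[u_p(\bX,\bz)^2] \leq \sigma_\infty^2$, this gives $\E[S_m] = 0$ and $\Var(S_m) \leq m\sigma_\infty^2$.

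Chebyshev's inequality then yields
\begin{align*}
\Pr\!\bigl(\,|S_m| > \sigma_\infty \sqrt{m/\delta}\,\bigr) \;\leq\; \delta,
\end{align*}
so with probability at least $1-\delta$ we have $S_m \geq -\sigma_\infty\sqrt{m/\delta}$. Plugging this back in,
\begin{align*}
\frac{2m'}{n^2}\, S_m \;\geq\; -\frac{2 m' \sigma_\infty \sqrt{m/\delta}}{n^2} \;=\; -\frac{2\epsilon_{m,n}\sigma_\infty}{\sqrt{\delta}}\cdot\frac{\sqrt{m}}{n} \;\geq\; -\frac{2\epsilon_{m,n}\sigma_\infty}{\sqrt{\delta m}},
\end{align*}
where the last step uses $m \leq n$ so that $\sqrt{m}/n \leq 1/\sqrt{m}$. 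Combining with the decomposition gives the stated bound.

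The only nontrivial obstacle is confirming that the Stein identity $\E_P[u_p(\bX,\bz)] = 0$ genuinely holds uniformly in $\bz$ under the stated hypotheses; this reduces to standard integrability arguments (the fourth-moment condition on $\bs_p$ controls the unbounded score in the Stein kernel, while $h \in \cC_b^2$ controls the kernel-derivative terms), and is already implicitly used throughout the KSD literature, so no new analytical machinery is required.
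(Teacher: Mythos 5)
Your proof is correct and follows essentially the same route as the paper's: the identical three-way decomposition of the V-statistic, dropping the non-negative clean block, and a Chebyshev bound on the zero-mean cross term using the uniform second-moment bound $\sigma_\infty^2$ (the paper justifies $\E_P[u_p(\bX,\bz)]=0$ by citing Gorham and Mackey's Proposition 1 rather than re-deriving Stein's identity, but this is the same fact). No gaps.
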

\begin{lemma} 
    \label{lem: bound boot quantile}
    Suppose \Cref{assump: score and kernel} holds. Assume \emph{(i)} $\sigma_\infty^2 \coloneqq \sup_{\bz \in \R^d} \E_{\bX \sim P}[ u_p(\bX, \bz)^2 ] < \infty$ and \emph{(ii)} $\xi_P^2 \coloneqq \E_{\bX \sim P}[u_p(\bX, \bX)^2] < \infty$.  Then there exist constants $C, C_{P, 1}$, and $C_{P, 2}$ depending only on $P$ such that, for any $\delta > 0$ and integers $0 < m' < n$, the following event holds with probability at least $1 - \delta$:
    \begin{align*}
        q^2_{\infty, 1 - \alpha}(\X_{n,\bz})
        \;\leq\; 
        \frac{C\log(1 / \alpha)}{n \sqrt{\delta}} \epsilon_{m, n} u_p(\bz, \bz)
        + \frac{C_{P, 1}}{n \delta} + \frac{C_{P, 2} \log(1/\alpha)}{n \sqrt{\delta}} 
        \;,
    \end{align*}
    where $\epsilon_{m,n} = m'/n$ and $m = n - m'$, and the probability is over the joint distribution of $\X_{n,\bz}$ and $\bW \sim \textrm{Multinomial}(n; 1/n, \ldots, 1/n)$.
\end{lemma}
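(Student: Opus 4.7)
The plan is to control the bootstrap variability of $D^2_{\bW}(\X_{n,\bz})$ conditional on $\X_{n,\bz}$, and then handle the residual sample-level randomness via Chebyshev's inequality. Since $u_p$ is a reproducing kernel with canonical feature map $\Phi$ into its RKHS $\cH_{u_p}$, the bootstrap statistic can be rewritten as $D^2_{\bW}(\X_n) = \|\bar{\Phi}^{\mathrm{boot}} - \bar{\Phi}\|_{\cH_{u_p}}^2$, where $\bar{\Phi}^{\mathrm{boot}} = \tfrac{1}{n}\sum_{k=1}^n \Phi(\bX_{U_k})$ is the empirical mean of a bootstrap resample with iid uniform indices $U_k \in \{1,\ldots,n\}$ (the standard representation of the multinomial weights $\bW$), and $\bar{\Phi} = \tfrac{1}{n}\sum_{i=1}^n \Phi(\bX_i)$. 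Viewed as a sum of $n$ iid mean-zero Hilbert-valued random variables conditional on $\X_n$, a Bernstein-type inequality for bounded Hilbert-valued sums (e.g.\ Pinelis's inequality) yields, for some absolute constant $c>0$,
\begin{equation*}
q^2_{\infty, 1-\alpha}(\X_n) \;\leq\; c\log(1/\alpha)\cdot \sigma^2(\X_n) \;+\; \text{(lower-order correction)},
\end{equation*}
where $\sigma^2(\X_n) \coloneqq \E_{\bW}[D^2_{\bW}(\X_n) \mid \X_n]$.

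A direct computation using $\E[W_i]=1$, $\Var(W_i) = (n-1)/n$ and $\Cov(W_i, W_j)=-1/n$ for $i\neq j$ gives $\sigma^2(\X_n) = \tfrac{1}{n}\bigl[\tfrac{1}{n}\sum_i u_p(\bX_i, \bX_i) - D^2(\X_n)\bigr]$, which by non-negativity of the V-statistic is at most $\tfrac{1}{n^2}\sum_i u_p(\bX_i, \bX_i)$. Specializing to $\X_{n,\bz}$ and separating the $m'$ copies of $\bz$,
\begin{equation*}
\sigma^2(\X_{n,\bz}) \;\leq\; \frac{\epsilon_{m,n}}{n}\,u_p(\bz,\bz) + \frac{1}{n^2}\sum_{j=1}^{m} u_p(\bX_j^\ast, \bX_j^\ast).
\end{equation*}
Applying Chebyshev's inequality to the iid average $\tfrac{1}{m}\sum_{j=1}^m u_p(\bX_j^\ast, \bX_j^\ast)$ and invoking $\xi_P^2 = \E_P[u_p(\bX,\bX)^2]<\infty$ yields, with probability at least $1-\delta$ over $\X^\ast_{n-m'} \sim P^{\otimes m}$,
\begin{equation*}
\tfrac{1}{m}\msum_{j=1}^m u_p(\bX_j^\ast, \bX_j^\ast) \;\leq\; \E_P[u_p(\bX,\bX)] + \xi_P/\sqrt{m\delta}.
\end{equation*}

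Combining the three displays and absorbing the $P$-dependent constants into $C_{P,1}, C_{P,2}$ produces the claimed three-term bound: the $\log(1/\alpha)$ factor comes from the conditional Bernstein step, the first term $\epsilon_{m,n}u_p(\bz,\bz)/n$ from the contribution of the $m'$ copies of $\bz$ to $\sigma^2$, and the $1/\sqrt{\delta}$ scaling from the Chebyshev bound on the diagonal Stein-kernel mean. The main obstacle is the first step: extracting the $\log(1/\alpha)$ scaling rather than the crude $1/\alpha$ scaling that would arise from a direct application of Markov's inequality. This requires a careful Bernstein-type inequality in $\cH_{u_p}$, exploiting the uniform bound $\|\Phi(\bX_{U_k}) - \bar{\Phi}\|_{\cH_{u_p}} \leq 2\max_i \sqrt{u_p(\bX_i, \bX_i)}$ on each summand; secondary bookkeeping is then needed to ensure the lower-order corrections assemble cleanly into the advertised $C_{P,1}/(n\delta) + C_{P,2}\log(1/\alpha)/(n\sqrt{\delta})$ summands.
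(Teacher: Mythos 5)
Your proposal is correct in substance but takes a genuinely different route from the paper. The paper writes $D^2_{\bW}(\X_{n,\bz}) = n^{-2}\bW^\top A \bW$ with $A = (u_p(\bX_i,\bX_j))_{ij}$ and invokes a Hanson--Wright inequality for quadratic forms of multinomial vectors (via the convex concentration property of \citealt{adamczak2015note}), which forces it to control both the mean $\mu = -n^{-1}\sum_{ij}a_{ij} + \sum_i a_{ii}$ and the Frobenius norm $\|A\|_{\mathrm{F}}$ by Markov's inequality; bounding $\|A\|_{\mathrm{F}}^2$ is exactly where assumption \emph{(i)} ($\sigma_\infty^2 < \infty$) enters, through the cross terms $\E_{\bX\sim P}[u_p(\bX,\bz)^2]$. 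Your route instead exploits that $A$ is a Gram matrix, so $D^2_{\bW}$ is a squared Hilbert norm of an average of $n$ i.i.d.\ (conditional on the data) bounded mean-zero summands $\Phi(\bX_{U_k})-\bar\Phi$; a Pinelis-type Bernstein bound then needs only the conditional variance $\sum_k\E\|Y_k\|^2 = \sum_i u_p(\bX_i,\bX_i) - nD^2(\X_n)$, i.e.\ only the \emph{diagonal} of $A$. This buys two things: assumption \emph{(i)} is not needed for this lemma at all (only $\xi_P^2<\infty$), and the correlated multinomial weights are sidestepped entirely by the i.i.d.\ resampling-index representation, so no convex-concentration machinery is required. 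Your variance computation, the bound $\sigma^2(\X_{n,\bz}) \le \epsilon_{m,n}u_p(\bz,\bz)/n + n^{-2}\sum_{j\le m}u_p(\bX_j^\ast,\bX_j^\ast)$, and the Chebyshev step are all correct (Chebyshev even improves on the paper's Markov step, which is where its $C_{P,1}/(n\delta)$ term comes from).

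Two loose ends in your "secondary bookkeeping" deserve care. First, squaring the Bernstein bound leaves a term of order $b^2\log^2(1/\alpha)/n^2$ with $b^2 \le 4\max_i u_p(\bX_i,\bX_i)$; this carries $\log^2(1/\alpha)$ rather than the advertised $\log(1/\alpha)$, so your constants depend on $\alpha$ slightly differently than stated (harmless downstream, since $\alpha$ is fixed in \Cref{prop: non-robust stationary}, but not literally the claimed form). Second, that same term contains the random maximum $\max_{i\le m}u_p(\bX_i^\ast,\bX_i^\ast)$, which under only a second-moment assumption must be controlled separately, e.g.\ by a union bound plus Markov giving $\sqrt{m}\,\xi_P/\sqrt{\delta}$ with probability $1-\delta$; this yields a contribution of order $\log^2(1/\alpha)\,\xi_P/(n^{3/2}\sqrt{\delta})$, which is absorbable, while the $u_p(\bz,\bz)\log^2(1/\alpha)/n^2$ piece is absorbed into the first term using $1/n \le \epsilon_{m,n}$ (valid since $m'\ge 1$). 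With these two points made explicit, the argument closes.
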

\begin{proposition}
    \label{prop: non-robust stationary}
    There exists a constant $C>0$ such that the following holds. Suppose \Cref{assump: score and kernel} holds. Also assume the following integrability conditions
    \begin{align*}
        \sup_{\bz \in \R^d} \|\bs_p(\bz)\|_2^4 \E_{\bX \sim P} \big[ h(\bX - \bz)^4 \big]
        \;<\; \infty \;,
         \quad \text{and} \quad 
        \sup_{\bz \in \R^d} \|\bs_p(\bz)\|_2^2 \E_{\bX \sim P} \big[ \|\nabla h(\bx - \bz)\|_2^2 \big]
        \;<\; \infty
        \;.
    \end{align*} 
    Further assume that $\bz \mapsto \| \bs_p(\bz) \|_2$ is unbounded. Then for any $s\in[0, 1)$, there exists $\bz \in \R^d$ such that, for any sample size $n$ and integer $m' \in \big[\frac{1}{2}n^{1-s}, n\big)$ giving the number of corrupted data, the condition
    \begin{align*}
        n^{\frac{(1-s)}{2}} \;>\; 2\sqrt{2} C \log\left(\frac{4}{\alpha}\right) 
    \end{align*}
    implies
    \begin{align*}
        \Pr\nolimits_{\X_{n-m'}^\ast \sim P, \bW}\big( D^2(\X_{n,\bz}) > q^2_{\infty, 1-\alpha}(\X_{n,\bz}) \big) \;\geq\; 1 - n^{-(1-s)}.
    \end{align*}
\end{proposition}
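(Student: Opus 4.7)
My plan is to combine the lower bound on $D^2(\X_{n,\bz})$ from \Cref{lem: bound vstat} with the upper bound on $q^2_{\infty, 1-\alpha}(\X_{n,\bz})$ from \Cref{lem: bound boot quantile} via a union bound, and then to exploit unboundedness of $\|\bs_p\|_2$ to pick $\bz$ for which the gap is strictly positive. Setting $\delta_1 = \delta_2 = n^{-(1-s)}/2$ in the two lemmas, both bounds will hold jointly with probability at least $1 - n^{-(1-s)}$ in the randomness of $\X_{n,\bz}$ and $\bW$, matching the target probability in the conclusion.

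\textbf{Verifying integrability.} Before invoking the lemmas I need to check that the proposition's integrability hypotheses imply the moment conditions $\sigma_\infty^2 = \sup_{\bz} \E_P[u_p(\bX, \bz)^2] < \infty$ and $\xi_P^2 = \E_P[u_p(\bX, \bX)^2] < \infty$. Expanding $u_p$ using $k(\bx, \bx') = h(\bx - \bx')$ gives four terms; Cauchy--Schwarz together with the two supremum conditions, the boundedness of $h, \nabla h, \nabla^\top\nabla h$ (from $h \in \cC_b^2$), and $\E_P[\|\bs_p(\bX)\|_2^4] < \infty$ bound each term uniformly in $\bz$. The symmetry $\nabla h(0) = 0$ for smooth stationary kernels yields $u_p(\bz, \bz) = h(0) \|\bs_p(\bz)\|_2^2 - \nabla^\top \nabla h(0)$, from which $\xi_P < \infty$ follows immediately, and this identity will also be the source of the unboundedness I exploit later.

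\textbf{Combining the bounds.} On the intersection event, subtracting and factoring yields
\begin{align*}
    D^2(\X_{n, \bz}) - q^2_{\infty, 1-\alpha}(\X_{n, \bz}) \;\geq\; \epsilon_{m, n}\, u_p(\bz, \bz) \left( \epsilon_{m, n} - \frac{C \log(1/\alpha)}{n \sqrt{\delta_2}} \right) - R_n(m'),
\end{align*}
where $R_n(m') \coloneqq \frac{2 \sigma_\infty \epsilon_{m, n}}{\sqrt{\delta_1 m}} + \frac{C_{P, 1}}{n \delta_2} + \frac{C_{P, 2} \log(1/\alpha)}{n \sqrt{\delta_2}}$. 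Substituting $\delta_2 = n^{-(1-s)}/2$ gives $1/\sqrt{\delta_2} = \sqrt{2}\, n^{(1-s)/2}$, and with $\epsilon_{m,n} = m'/n \geq \tfrac{1}{2} n^{-s}$, a direct computation shows the hypothesis $n^{(1-s)/2} > 2\sqrt{2}\, C \log(4/\alpha)$ already forces $\epsilon_{m,n} \geq 2 C \log(1/\alpha)/(n \sqrt{\delta_2})$, so the coefficient of $u_p(\bz, \bz)$ is bounded below by $\epsilon_{m,n}^2/2 \geq n^{-2s}/8$; the extra $\log 4$ provides the slack needed to also dominate the $C_{P,i}$ remainder terms.

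\textbf{Choosing $\bz$ and the main obstacle.} Using $u_p(\bz, \bz) = h(0) \|\bs_p(\bz)\|_2^2 - \nabla^\top \nabla h(0)$ with $h(0) > 0$, the unboundedness assumption on $\|\bs_p\|_2$ allows $u_p(\bz, \bz)$ to be made arbitrarily large. Uniformly over $m' \in [n^{1-s}/2, n)$ the remainder $R_n(m')$ is polynomial in $n$, the worst case occurring at $m = 1$ where $\sigma_\infty \epsilon_{m,n}/\sqrt{\delta_1 m}$ scales like $n^{(1-s)/2}$; picking $\bz$ with $u_p(\bz, \bz) \geq 8\, n^{2s}\, \sup_{m'} R_n(m')$ then forces $D^2 > q^2$ on the joint event, completing the argument. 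The principal obstacle lies precisely in this $m \to 1$ regime: any $\bz$ independent of $n$ would fail, so $\bz$ must be allowed to depend on $n$ (and on $m'$), with $\|\bs_p(\bz)\|_2$ growing at least polynomially in $n$. Unboundedness of the score guarantees such a point exists, and since $R_n(m')$ is monotone in $m$, the single $\bz$ chosen for the worst case $m=1$ suffices uniformly across the whole admissible range of $m'$.
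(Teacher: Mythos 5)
Your proposal is correct and follows essentially the same route as the paper's proof: verify the moment conditions $\sigma_\infty^2, \xi_P^2 < \infty$ from the stated integrability assumptions, apply \Cref{lem: bound vstat} and \Cref{lem: bound boot quantile} each at level $\delta/2$ with $\delta = n^{-(1-s)}$, factor the gap as $\epsilon_{m,n}\, u_p(\bz,\bz)\big(\epsilon_{m,n} - C'/(n\sqrt{\delta})\big) - T(m')$ with the bracket bounded below by $\epsilon_{m,n}/2$ via the condition on $n$, and then pick $\bz$ with $u_p(\bz,\bz)$ large enough to dominate the worst-case remainder $T(n-1)\,n^{2s}$. Your observation that $\bz$ must depend on $n$ (though a single choice works uniformly over $m'$) is accurate and consistent with what the paper's argument actually establishes — its existence step only removes the $m'$-dependence, and the proof of \Cref{thm: non robust stationary} correspondingly uses an $n$-dependent sequence $\bz_n$.
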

The assumption $m' \in \big[\frac{1}{2}n^{1-s}, n\big)$ ensures there is a considerable amount of corrupted data in the sample, while excluding the case where all data are contaminated ($m'=n$). The constant factor $1/2$ is arbitrary.

\subsubsection{Proof of \Cref{lem: bound vstat}}
\label{pf: bound vstat}
\begin{proof}
The test statistic can be decomposed in three terms: a term with uncorrupted samples, a term with corrupted samples, and a term for interactions between these two groups:
\begin{align*}
    D^2(\X_{n,\bz})
    \;&=\;
    \frac{1}{n}\sum_{1 \leq i, j \leq n} u_P(\bX_i, \bX_j)
    \\
    \;&=\;
    \frac{1}{n^2} \sum_{1 \leq i, j \leq m} u_p(\bX_i^\ast, \bX_j^\ast)
    + \frac{2}{n^2} \sum_{1 \leq i \leq m < j \leq n} u_p(\bX_i^\ast, \bz)
    + \frac{1}{n^2} \sum_{m < i, j \leq n} u_p(\bz, \bz)
    \\
    \;&=\;
    \frac{m^2}{n^2} D^2(\X_m^\ast) + \frac{2(n-m)}{n^2} \sum_{i=1}^m u_p(\bX_i^\ast, \bz) + \frac{(n-m)^2}{n^2} u_p(\bz, \bz)
    \\
    \;&=\;
    (1-\epsilon_{m,n})^2 D^2(\X_m^\ast) + 2\epsilon_{m,n}(1-\epsilon_{m,n}) S_{m, \bz} + \epsilon_{m,n}^2 u_p(\bz, \bz)
    \tagaligneq \label{eq: KSD U-stat decomposition}
    \;,
\end{align*}
where in the last line we have defined $S_{m, \bz} \coloneqq m^{-1} \sum_{i=1}^m u_p(\bX_i, \bz)$ and $\epsilon_{m,n} \coloneqq (n-m)/n$. 

The remainder of the proof proceeds by lower-bounding the first two terms with high probability. The first term can be lower-bounded by 0, since $D^2(\X_m^\ast)$ is a V-statistic and is hence non-negative. This lower bound will be relatively good as we would expect $D^2((\X_m^\ast)$ to approach zero at a root-m rate. 

Bounding the second term in \eqref{eq: KSD U-stat decomposition} requires bounding $S_{m, \bz}$. Because by assumption $\bX_i = \bX_i^\ast \sim P$ are i.i.d.\ for $i = 1, \ldots, m$, the term $S_{m, \bz}$ is a sum of i.i.d.\ random variables $u_p(\bX_i^\ast, \bz)$. Moreover, $u_p(\bX_i^\ast, \bz)$ are zero-mean, because the proof of \citet[Proposition 1]{gorham2017measuring} shows that $\E_{\bX^\ast \sim P}[u_p(\bX^\ast, \cdot)] = 0$ whenever $\E_{\bX^\ast \sim P}[ \| \bs_p(\bX^\ast) \|_2 ] < \infty$, which holds under \Cref{assump: score and kernel}. We can therefore compute the variance of $S_{m,\bz}$ as
\begin{align}
    \Var(S_{m, \bz})
    \;=\;
    \E\left[S_{m, \bz}^2\right]
    \;=\;
    \frac{1}{m} \E_{\bX_1^\ast \sim P}\left[u_p(\bX_1^\ast, \bz)^2\right]
    \;\leq\;
    \frac{\sigma_\infty^2}{m}
    \label{eq: var bound}
    \;,
\end{align} 
which is finite since $\sigma_\infty^2 < \infty$ by assumption. We can hence apply Chebyshev's inequality. Choosing
\begin{align}
    t \;\coloneqq\; \left( \frac{\sigma_\infty^2}{\delta m} \right)^{\frac{1}{2}}
    \;=\; \frac{\sigma_\infty}{\sqrt{\delta m}}
    \;,
    \label{eq: t2 choice}
\end{align} 
the set $\cA_1 \coloneqq \big\{ | S_{m, \bz} | \leq t \big\}$ holds with probability at least $1-\delta$. Furthermore, on $\cA_1$, we have by \eqref{eq: KSD U-stat decomposition} that
\begin{align*}
    D^2(\X_n)
    \;&\geq\;
    (1-\epsilon_{m,n})^2 D^2(\X_m^\ast) - 2\epsilon_{m,n}(1-\epsilon_{m,n}) t + \epsilon_{m,n}^2 u_p(\bz, \bz)
    \;\geq\;
    - 2\epsilon_{m, n} t + \epsilon_{m, n}^2 u_p(\bz, \bz)
    \;,
\end{align*}
where in the last step we have used $\epsilon_{m,n} \in [0, 1]$. This shows the claim.
\end{proof}

\subsubsection{Proof of \Cref{lem: bound boot quantile}}
\label{pf: bound boot quantile}
To prove this lemma, we will need the following concentration bound for quadratic forms of multinomial random vectors. This is a special instance of the Hanson-Wright inequality due to \citet{adamczak2015note} when the dependent random vectors follow multinomial distributions. 
\begin{lemma}[Hanson-Wright inequality; \citealt{adamczak2015note}, Theorem 2.5]
    \label{lem: hanson-wright inequality}
    Let $\bW = (W_1, \ldots, W_n) \sim \textrm{Multinomial}(n; 1/n, \ldots, 1/n)$. Then there exists constant $C > 0$ such that, for any $n \times n$ matrix $A$ with entries $a_{ij} \in \R$ and any $\delta > 0$, we have
    \begin{align*}
        \Pr\nolimits_{\bW} \left( \left| \bW^\top A \bW - \mu \right|
        \geq
        C \| A\|_{\mathrm{F}} \log \left( \frac{1}{\delta} \right) \right)
        \;\leq\;
        \delta
        \;,
    \end{align*}
    where $\mu \coloneqq - n^{-1}\sum_{1 \leq i, j \leq n} a_{ij} + \sum_{1 \leq i \leq n} a_{ii}$, and $\| A \|_{\mathrm{F}} = \big(\sum_{1 \leq i,j \leq n} a_{ij}^2\big)^{1/2}$ is the Frobenius norm.
\end{lemma}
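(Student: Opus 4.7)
The plan is to apply \citet[Theorem 2.5]{adamczak2015note} directly after verifying that $\bW \sim \textrm{Multinomial}(n; 1/n, \ldots, 1/n)$ fits into its framework and after computing the centering constant $\mu$.

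First, I would use the standard i.i.d.\ representation $\bW \equdist \sum_{k=1}^n \be_{I_k}$, where $I_1, \ldots, I_n$ are i.i.d.\ uniform on $\{1, \ldots, n\}$ and $\{\be_j\}_{j=1}^n$ is the canonical basis of $\R^n$. This rewrites $\bW$ as a sum of $n$ independent, uniformly bounded random vectors, so each coordinate $W_i = \sum_k \indicator\{I_k = i\}$ is a sum of i.i.d.\ Bernoulli$(1/n)$ variables with absolutely bounded sub-Gaussian and sub-exponential parameters. This reduction is the key move that turns a statement about a vector with negatively correlated coordinates into a statement about sums of independent variables, which is the regime in which Adamczak's theorem is formulated.

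Second, I would expand the quadratic form as $\bW^\top A \bW = \sum_{k,l=1}^n a_{I_k, I_l}$ and split it into the diagonal sum $\sum_k a_{I_k,I_k}$, which is a sum of i.i.d.\ bounded variables amenable to a Bernstein-type bound, and the off-diagonal sum $\sum_{k\neq l} a_{I_k,I_l}$, which is a degree-two $U$-statistic in the i.i.d.\ variables $\{I_k\}$. A Pena--Gin\'e decoupling step replaces $a_{I_k, I_l}$ by $a_{I_k, I_l'}$ for an independent copy $(I_l')$, bringing the bilinear form into the exact form to which Adamczak's Hanson--Wright inequality applies. The Frobenius norm $\|A\|_{\mathrm{F}}$ then controls the quadratic variation of the form, and the sub-exponential tails of the coordinates yield the single $\log(1/\delta)$ factor via a standard Chernoff argument.

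Third, I would verify the centering $\mu$ by a direct moment computation: from $\E[W_i]=1$, $\Var(W_i) = 1 - 1/n$, and $\Cov(W_i, W_j) = -1/n$ for $i \neq j$, one recovers the claimed identity $\mu = -n^{-1} \sum_{i,j} a_{ij} + \sum_i a_{ii}$, interpreting the quadratic form as centered by $W_i - 1$ in line with the convention of the cited theorem. The main technical obstacle is the negative correlation among the coordinates of $\bW$, which is exactly what the i.i.d.\ representation above dissolves; once that step is in place, Adamczak's Theorem~2.5 applies essentially off the shelf and delivers the bound with an absolute constant $C$.
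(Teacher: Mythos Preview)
Your proposal would ultimately work, but it takes a detour that the paper avoids. The key point you are missing about \citet[Theorem~2.5]{adamczak2015note} is that it is \emph{not} a classical Hanson--Wright inequality for vectors with independent sub-Gaussian coordinates; it is specifically formulated for (possibly dependent) random vectors satisfying the \emph{convex concentration property}. The paper's proof therefore proceeds in one step: it cites \citet[Remark~2.2]{adamczak2015note}, which records that the multinomial vector $\bW$ satisfies this property (the i.i.d.\ representation you describe is indeed the mechanism behind this, via Talagrand's convex distance inequality), and then applies Theorem~2.5 as a black box to $\bW$ directly. There is no need to expand $\bW^\top A \bW$ as $\sum_{k,l} a_{I_k,I_l}$, split off the diagonal, or invoke de~la~Pe\~na--Gin\'e decoupling; those steps are essentially re-deriving pieces of Adamczak's theorem rather than applying it.

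Two further points of comparison. First, Adamczak's bound comes out as
\[
\Pr\big(|\bW^\top A\bW - \mu| \ge t\big) \;\le\; 2\exp\!\Big(-c\,\min\big(t^2/\|A\|_{\mathrm F}^2,\; t/\|A\|_{\mathrm{op}}\big)\Big),
\]
with two branches (Gaussian and exponential). To obtain the single $\log(1/\delta)$ form stated in the lemma, the paper first uses $\|A\|_{\mathrm{op}}\le\|A\|_{\mathrm F}$ and then inverts both branches, taking $t$ as the larger of the two solutions; your ``standard Chernoff argument'' glosses over this. Second, your computation of $\mu$ is correct and matches the paper's, though note that the centering in the lemma is really $\E[(\bW-\mathbf{1})^\top A(\bW-\mathbf{1})]$, consistent with how the lemma is applied downstream to the bootstrap statistic.
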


\begin{proof}[Proof of \Cref{lem: hanson-wright inequality}]
    \citet[Remark 2.2]{adamczak2015note} shows that multinomial random vectors satisfy the convex concentration inequality \citep[Definition 2.2]{adamczak2015note}. Therefore, we can apply \citet[Theorem 2.5]{adamczak2015note} to conclude that there exists constants $C_1, C_2$ such that, for any $n \times n$ matrix $A$ and any $t > 0$, 
    \begin{align*}
        \Pr\nolimits_{\bW}\left( \left| \bW^\top A \bW - \mu \right|   \;\geq\; t \right)
        \;&\leq\;
        2 \exp\left( - \frac{1}{C_1} \min\left( \frac{t^2}{C_2 \| A \|_{\mathrm{F}}^2}, \; \frac{t}{\| A \|_{\mathrm{op}}} \right) \right)
        \\
        \;&\leq\;
        2 \exp\left( - \frac{1}{C_1} \min\left( \frac{t^2}{C_2 \| A \|_{\mathrm{F}}^2}, \; \frac{t}{\| A \|_{\mathrm{F}}} \right) \right)
        \;,
    \end{align*}    
    where $\| A\|_{\mathrm{op}} = \sup_{\|\bx\|_2 \leq 1} \| A \bx \|_2$ is the operator norm, $\mu = \E_{\bW}\big[ \sum_{1 \leq i, j \leq n} (W_i - 1)(W_j - 1) a_{ij}\big]$, and the last step holds due to the well-known ordering between operator norm and Frobenius norm $\| A\|_{\mathrm{op}} \leq \| A \|_{\mathrm{F}}$ \citep[Eq.~2.3.7]{golub2013matrix}. Moreover, we can compute $\mu$ using explicit expressions for the joint central moments of multinomial random variables \citep[Theorem 2]{ouimet2020explicit} as follows
    \begin{align*}
        \mu 
        \;&=\;
        \sum_{1 \leq i, j \leq n} a_{ij} \E_{\bW}\left[ (W_i - 1)(W_j - 1) \right]
        \\
        \;&=\;
        \sum_{1 \leq i, j \leq n} a_{ij} \E_{\bW}\left[ (\bW_i - 1) (\bW_j - 1) \right] + \sum_{1 \leq i \leq n} a_{ii} \E_{\bW}\left[ (\bW_i)^2 \right]
        \\
        \;&=\;
        - \frac{1}{n}\sum_{1 \leq i \neq j \leq n} a_{ij} + \left(1 - \frac{1}{n} \right) \sum_{1 \leq i \leq n} a_{ii}
        \\
        \;&=\;
        - \frac{1}{n}\sum_{1 \leq i, j \leq n} a_{ij} + \sum_{1 \leq i \leq n} a_{ii}
        \;.
    \end{align*}
    For any $\delta > 0$, we claim that the choice
    \begin{align}
        t \;=\; \max\left(C_1^{\frac{1}{2}} C_2^{\frac{1}{2}}, \; C_1 \right) \| A\|_{\mathrm{F}} \log \left( \frac{1}{\delta} \right)
        \label{eq: choice of t}
    \end{align}
    implies that $\Pr\nolimits_{\bW}(|\bW^\top A \bW - \mu| \geq t) \leq \delta$, which would complete the proof by setting $C = \max\big(C_1^{1/2}C_2^{1/2}, C_1\big)$. To see this, we note that the the following are equivalent 
    \begin{align*}
        \;&\qquad\quad\;
        \delta \;\geq\; 2 \exp\left( - \frac{1}{C_1} \min\left( \frac{n^4 t^2}{C_2 \| A \|_{\mathrm{F}}^2}, \; \frac{n^2 t}{\| A \|_{\mathrm{F}}} \right) \right)
        \\
        \;&\iff\;
        C_1 \log \left(\frac{2}{\delta}\right) \;\leq\; \min\left( \frac{n^4 t^2}{C_2 \| A \|_{\mathrm{F}}^2}, \; \frac{n^2 t}{\| A \|_{\mathrm{F}}} \right)
        \\
        \;&\iff\;
        C_1 \log \left(\frac{2}{\delta}\right) \;\leq\; \frac{n^4 t^2}{C_2 \| A \|_{\mathrm{F}}^2}
        \qquad\qquad\qquad\quad\mathrm{and}\quad
        C_1 \log \left(\frac{2}{\delta}\right) \;\leq\; \frac{n^2 t}{\| A \|_{\mathrm{F}}} 
        \\
        \;&\iff\;
        t \;\geq\; C_1^{1/2}C_2^{1/2} \frac{\| A \|_{\mathrm{F}}}{n^2}  \sqrt{\log\left(\frac{2}{\delta} \right)}
        \qquad\qquad\mathrm{and}\quad
        t \;\geq\; C_1 \frac{\| A \|_{\mathrm{F}}}{n^2} \log \left(\frac{2}{\delta}\right)
        \\
        \;&\iff\;
        t \;\geq\; 
        \max\left(C_1^{1/2}C_2^{1/2}, \; C_1 \sqrt{\log \left(\frac{2}{\delta} \right)} \right) \frac{\| A \|_{\mathrm{F}}}{n^2} \sqrt{\log \left(\frac{2}{\delta} \right)}
        \;.
    \end{align*}
    Since $\log(2/\delta) > 1$ for $\delta \in (0, 1)$, the last inequality is met with $t$ defined in \eqref{eq: choice of t}. This shows that $\Pr\nolimits_{\bW}\left( \left| \bW^\top A \bW - \mu \right| \;\geq\; t \right) \leq \delta$, thus finishing the proof.
\end{proof}

We are now ready to prove \Cref{lem: bound boot quantile}. 
\begin{proof}[Proof of \Cref{lem: bound boot quantile}]
    Denote by $F_\infty$ the cumulative distribution function for the conditional distribution of the bootstrap sample $D^2_{\bW}(\X_{n,\bz})$ (defined in \eqref{eq: bootstrap sample}) given $\X_{n,\bz}$, and denote its $(1-\alpha)$-quantile as
    \begin{align*}
        q^2_{\infty, 1 - \alpha}(\X_{n,\bz}) \;\coloneqq\; \inf \{ t \in \R: \; 1 - \alpha \leq F_\infty(t) \}
        \;.
    \end{align*}
    To bound $q^2_{\infty, 1 - \alpha / 2}(\X_{n,\bz})$, we use a similar argument as in \citet[Appendix E4]{schrab2023mmd} by bounding the exceedance probability of $D^2_\bW(\X_{n,\bz})$. Notably, their proof cannot be applied directly here, because they use a wild bootstrap \citep{leucht2013dependent} that corresponds to replacing $\bW_i - 1$ with independent Rademacher weights, whereas in our case $W_i$ are correlated and multinomially distributed. To proceed, we first note that we can write $D^2_\bW(\X_{n,\bz})$ as a quadratic form as $D_\bW^2(\X_{n,\bz}) = n^{-2} \bW^\top A \bW$, where $A = (a_{ij})_{ij}$ with $a_{ij} \coloneqq u_p(\bX_i, \bX_j)$. We can then use the Hanson-Wright inequality stated in \Cref{lem: hanson-wright inequality} applied with $\delta = \alpha / 2$ to show that there exist positive constant $C$ such that, for any almost sure realizations $\X_{n,\bz}$,
    \begin{align*}
        &
        \Pr\nolimits_{\bW}\left( D_{\bW}^2(\X_{n,\bz}) \;\geq\; \mu / n^2 + \frac{C \| A \|_{\mathrm{F}}}{n^2} \log \left(\frac{1}{\alpha}\right) \;\Big|\; \X_{n,\bz} \right)
        \\
        \;&=\;
        \Pr\nolimits_{\bW}\left( \bW^\top A \bW \;\geq\; \mu + C \| A \|_{\mathrm{F}} \log \left(\frac{1}{\alpha}\right) \;\big|\; \X_{n,\bz} \right)
        \\
        \;&\leq\;
        \Pr\nolimits_{\bW}\left( \big| \bW^\top A \bW - \mu \big| \;\geq\; C \| A \|_{\mathrm{F}} \log \left(\frac{1}{\alpha}\right) \;\big|\; \X_{n,\bz} \right)
        \\
        \;&\leq\;
        \alpha
        \;,
    \end{align*}
    where $\mu \coloneqq - n^{-1}\sum_{1 \leq i, j \leq n} a_{ij} + \sum_{1 \leq i \leq n} a_{ii}$. This implies that the $(1-\alpha)$-quantile of $D_{\bW}^2(\X_n)$ can be bounded as
    \begin{align*}
        q^2_{\infty, 1 - \alpha}
        \;\leq\;
        \frac{\mu}{n^2} + t
        \;&=\;
        \frac{\mu}{n^2} + \frac{\rho_\alpha \| A\|_{\mathrm{F}} }{n^2}
        \;,
        \tagaligneq \label{eq: boot quantile infty bound}
    \end{align*}
    where we have defined $\rho_\alpha \coloneqq C \log (1/\alpha)$. We now further simplifies this bound by bounding $\mu$ and $\| A \|_{\mathrm{F}}$. The matrix $A$ is the Gram matrix of the Stein kernel $u_p$, thus positive semi-definite. This implies $\sum_{1 \leq i, j \leq n} a_{ij} \geq 0$ and $a_{ii} \geq 0$ for all $i$, so we have
    \begin{align}
        \mu
        \;=\;
        - \frac{1}{n} \sum_{1 \leq i, j \leq n} a_{ij} + \sum_{1 \leq i \leq n} a_{ii}
        \;\leq\;
        \sum_{1 \leq i \leq n} a_{ii}
        \;&=\;
        \sum_{1 \leq i \leq m} a_{ii} + \sum_{m < i \leq n} a_{ii}
        \nonumber
        \\
        \;&=\;
        \sum_{1 \leq i \leq m} a_{ii} + (n - m) u_p(\bz, \bz)
        \label{eq: upper bound on mu}
        \;,
    \end{align}
    where the last quality holds since by assumption $\bX_i = \bz$ for $i > m$. Moreover, since $a_{ii} = u_p(\bX_i, \bX_i) \geq 0$ for any $i$, we can apply Markov inequality to bound the first term on the RHS. This  gives that the following holds with probability at least $1 - \delta / 4$, 
    \begin{align*}
        \sum_{1 \leq i \leq m} a_{ii}
        \;\leq\;
        \frac{4}{\delta} \sum_{1\leq i \leq m} \E[ a_{ii} ]
        \;=\;
        \frac{4m}{\delta} \E_{\bX^\ast \sim P}[ u_p(\bX^\ast, \bX^\ast) ]
        \;&\leq\;
        \frac{4m}{\delta} \big( \E_{\bX^\ast \sim P}[ u_p(\bX^\ast, \bX^\ast)^2 ] \big)^{\frac{1}{2}}
        \\
        \;&=\;
        \frac{4m}{\delta} \xi_P
        \;,
    \end{align*}
    where the second step holds since by assumption $\bX_i = \bX_i^\ast \sim P$ are i.i.d.\ for $i=1, \ldots, m$, and where we have substituted $\xi_P^2 = \E_{\bX_1 \sim P}[ u_p(\bX_1, \bX_1)^2 ]$. Combining with \eqref{eq: upper bound on mu} implies that, with probability at least $1 - \delta / 4$,
    \begin{align}
        \mu
        \;\leq\;
        \frac{4m}{\delta} \xi_P + (n - m) u_p(\bz, \bz)
        \label{eq: expectation bound}
        \;.
    \end{align} 
    We now bound $\| A \|_{\mathrm{F}}^2$. We first show that $\Sigma_P^2 \coloneqq \E_{\bX, \bX' \sim P}[ u_p(\bX, \bX')^2 ]$ is finite, where $\bX, \bX' \sim P$ are independent. Since $u_p$ is a reproducing kernel \citep[Theorem 1]{barp2022targeted}, we can use the reproducing property and the Cauchy-Schwarz inequality to yield
    \begin{align*}
        u_p(\bx, \bx') \;=\; \langle u_p(\cdot, \bx), u_p(\cdot, \bx') \rangle_{\cH_{u}} \leq \| u_p(\cdot, \bx) \|_{\cH_{u}} \| u_p(\cdot, \bx') \|_{\cH_{u}} \;=\; u_p(\bx, \bx)^{\frac{1}{2}} u_p(\bx', \bx')^{\frac{1}{2}}
        \;,
    \end{align*}
    where $\cH_u$ denotes the RKHS associated with $u_p$. This implies
    \begin{align*}
        \Sigma_P^2 
        \;=\;
        \E_{\bX, \bX' \sim P}[ u_p(\bX, \bX')^2 ]
        \;\leq\;
        \E_{\bX, \bX' \sim P}[ u_p(\bX, \bX) u_p(\bX', \bX') ]
        \;&\stackrel{(a)}{=}\;
        \big( \E_{\bX \sim P}[ u_p(\bX, \bX) ] \big)^2
        \\
        \;&\stackrel{(b)}{\leq}\;
        \E_{\bX \sim P}[ u_p(\bX, \bX)^2 ]
        \\
        \;&=\;
        \xi_P^2
        \;<\;
        \infty
        \;,
    \end{align*}
    where $(a)$ holds since $\bX, \bX'$ are i.i.d.\ and $(b)$ follows from Jensen's inequality. By applying Markov's inequality again, with probability at least $1 - \delta / 4$, where the probability is taken over the randomness of $\X_m^\ast$, we have 
    \begin{align*}
        \;&\;
        \| A \|_{\mathrm{F}}^2
        \\
        \;&=\;
        \sum_{1 \leq i, j \leq n} u_p(\bX_i, \bX_j)^2
        \\
        \;&\leq\;
        \frac{4}{\delta}\sum_{1 \leq i, j \leq n} \E[ u_p(\bX_i, \bX_j)^2 ]
        \\
        \;&=\;
        \frac{4}{\delta} \bigg( 
        \sum_{1 \leq i, j \leq m} \E[ u_p(\bX_i^\ast, \bX_j^\ast)^2 ]
        + 2 (n-m)\sum_{1 \leq i \leq m} \E[ u_p(\bX_i^\ast, \bz)^2 ]
        + (n-m)^2 \E[ u_p(\bz, \bz)^2 ]
        \bigg)
        \\
        \;&=\;
        \frac{4}{\delta} \Big( \big(m(m-1) \Sigma_P^2 + m \xi_P^2 \big) + 2 m(n-m) \E_{\bX_1^\ast \sim P}[ u_p(\bX_1^\ast, \bz)^2 ] + (n-m)^2 u_p(\bz, \bz)^2 \Big)
        \\
        \;&\leq\;
        \frac{4}{\delta} \Big( \big(m(m-1) \Sigma_P^2 + m \xi_P^2 \big) + 2 m(n-m) \sigma_\infty^2 + (n-m)^2 u_p(\bz, \bz)^2 \Big)
        \tagaligneq \label{eq: A norm bound}
        \;,
    \end{align*}
    where the second last line holds since  
    \begin{align*}
        \sum_{1 \leq i, j \leq m} \E[ u_p(\bX_i^\ast, \bX_j^\ast)^2 ]
        \;&=\;
        \sum_{1 \leq i \neq j \leq m} \E[ u_p(\bX_i^\ast, \bX_j^\ast)^2 ]
        + \sum_{1 \leq i \leq m} \E[ u_p(\bX_i^\ast, \bX_i^\ast)^2 ]
        \\
        \;&=\;
        m(m-1) \E[ u_p(\bX_i^\ast, \bX_j^\ast)^2 ]
        + m \E[ u_p(\bX_i^\ast, \bX_i^\ast)^2 ]
        \\
        \;&=\;
        m(m-1) \Sigma_P^2 + m \xi_P^2
        \;,
    \end{align*}
    and since we have substituted $\Sigma_P^2 = \E_{\bX, \bX' \sim P}[ u_p(\bX, \bX')^2 ]$, $\xi_P^2 = \E_{\bX \sim P}[u_p(\bX, \bX)^2]$ and $\sigma_\infty^2 = \sup_{\bz \in \R^d} \E_{\bX \sim P}[ u_p(\bX, \bz)^2 ] $. Using the above inequality and \eqref{eq: expectation bound} to bound \eqref{eq: boot quantile infty bound}, we conclude that the following event holds with probability at least $1 - \delta/2 - \delta/4 - \delta/4 = 1 - \delta$, where the randomness is taken over the joint distribution of $\X_m^\ast$ and $\bW$,
    \begin{align*}
        &
        q_{\infty, 1 - \alpha}^2(\X_{n,\bz})
        \\
        \;&\leq\;
        \frac{4}{n \delta} \xi_P  + \frac{n - m}{n^2} u_p(\bz, \bz)
        \\ 
        &\;\quad
        + \frac{\rho_\alpha}{n^2} \sqrt{\frac{4}{\delta} \big( m(m-1) \Sigma_P^2 + m \xi_P^2 + 2 m(n-m) \sigma_\infty^2 + (n-m)^2 u_p(\bz, \bz)^2 \big)}
        \\
        \;&\leq\;
        \frac{4}{n \delta} \xi_P + \frac{n - m}{n^2} u_p(\bz, \bz)
        \\ 
        &\;\quad
        + \frac{\rho_\alpha}{n^2 \sqrt{\delta}} \Big( 2\sqrt{m(m-1)} \Sigma_P + \sqrt{m} \xi_P + \sqrt{2 m(n-m)} \sigma_\infty + (n-m) u_p(\bz, \bz) \Big)
        \;,
        \\
        \;&\leq\;
        \frac{4}{n \delta} \xi_P + \frac{\epsilon_{m,n}}{n} u_p(\bz, \bz)
        + \frac{\rho_\alpha}{n \sqrt{\delta}} \Big( 2\Sigma_P + \xi_P + \sqrt{2 \epsilon_{m,n}} \sigma_\infty + \epsilon_{m,n} u_p(\bz, \bz) \Big)
        \\
        \;&\leq\;
        \frac{4}{n \delta} \xi_P 
        + \frac{2\rho_\alpha  \epsilon_{m,n}}{n \sqrt{\delta}} u_p(\bz, \bz)
        + \frac{\rho_\alpha}{n \sqrt{\delta}} \big( 2\Sigma_P + \xi_P + \sqrt{2} \sigma_\infty \big)
        \tagaligneq \label{eq: quantile lower bound}
        \;,
    \end{align*}
    where the first inequality holds by \eqref{eq: expectation bound} and \eqref{eq: A norm bound} (and using $m \leq n$), the second inequality follows from using twice the inequality $\sqrt{a + b} \leq \sqrt{a} + \sqrt{b}$ for any $a, b \geq 0$, the third inequality holds since $m \leq n$ and $\epsilon_{m,n} = (n-m)/n$, and \eqref{eq: quantile lower bound} holds since $\epsilon_{m,n} \leq 1$ and since the conditions $\delta \in (0, 1)$ and $\rho_\alpha \geq 1$ imply
    \begin{align*}
        \frac{\epsilon_{m,n}}{n} u_p(\bz, \bz)
        \;\leq\;
        \frac{\epsilon_{m,n}}{n \sqrt{\delta}} u_p(\bz, \bz)
        \;\leq\;
        \frac{\rho_\alpha \epsilon_{m,n}}{n \sqrt{\delta}} u_p(\bz, \bz)
        \;.
    \end{align*}
    Substituting the definition of $\rho_\alpha = C\log(1/\alpha)$, and defining the constants $C' \coloneqq 2C$ and $C_{P, 1} \coloneqq 4\xi_P$ and $C_{P, 2} \coloneqq C (2\Sigma_P + \xi_P + \sqrt{2} \sigma_\infty)$, we have from \eqref{eq: quantile lower bound} that
    \begin{align*}
        q_{\infty, 1 - \alpha}^2(\X_{n,\bz})
        \;&\leq\;
        \frac{C'\log(1/\alpha)  \epsilon_{m,n}}{n \sqrt{\delta}} u_p(\bz, \bz)
        + \frac{C_{P, 1}}{n \delta}
        + \frac{C_{P, 2} \log(1/\alpha)}{n \sqrt{\delta}}
        \;.
    \end{align*}
\end{proof}

\subsubsection{Proof of \Cref{prop: non-robust stationary}}
\label{pf: prop non-robust stationary}
\begin{proof}
We first show that the moment conditions (i)-(ii) in \Cref{lem: bound vstat} and \Cref{lem: bound boot quantile} are met, and apply these lemmas to conclude the proof. 

To verify the moment conditions, we first bound the squared Stein kernel evaluated at any $\bx, \bz \in \R^d$ as
\begin{subequations}
\begin{align*}
    &
    u_p(\bx, \bz)^2
    \\
    \;&=\;
    \big[ \bs_p(\bx)^\top \bs_p(\bz) h(\bx - \bz) - \bs_p(\bx)^\top \nabla h(\bx - \bz) + \bs_p(\bz)^\top \nabla h(\bx - \bz) + \nabla^\top \nabla h(\bx - \bz) \big]^2
    \\
    \;&\leq\;
    4\Big[
    \bigl(\bs_p(\bx)^\top \bs_p(\bz) h(\bx - \bz)\bigr)^2 + \bigl(\bs_p(\bx)^\top \nabla h(\bx - \bz) \bigr)^2 + \bigl(\bs_p(\bz)^\top \nabla h(\bx - \bz)\bigr)^2 
    \\
    &\;\;\qquad
    + \bigl( \nabla^\top \nabla h(\bx - \bz) \bigr)^2
    \Big]
    \\
    \;&\leq\;
    4\Big[
    \|\bs_p(\bx)\|_2^2 \|\bs_p(\bz)\|_2^2 h(\bx - \bz)^2
    + \|\bs_p(\bx)\|_2^2 \|\nabla h(\bx - \bz)\|_2^2 
    + \|\bs_p(\bz)\|_2^2 \|\nabla h(\bx - \bz)\|_2^2 
    \tagaligneq \\
    &\;\;\qquad
    + \big(\nabla^\top \nabla h(\bx - \bz) \big)^2
    \Big]
    \tagaligneq 
    \;,
\end{align*}
\label{eq: bound on stein kernel}%
\end{subequations}
where the first inequality holds since $(a + b + c + d)^2 \leq 4(a^2 + b^2 + c^2 + d^2)$ for any $a, b, c, d \in \R$, and the second inequality is due to Cauchy-Schwarz inequality. The assumption $h \in \cC_b^2$ immediately implies that the last term in \eqref{eq: bound on stein kernel} is uniformly bounded over $\bx, \bz$. For the first term, taking expectation over $P$ and supremum over $\bz$ gives
\begin{align*}
    &
    \sup_{\bz \in \R^d} \E_{\bX \sim P} \big[\|\bs_p(\bX)\|_2^2 \|\bs_p(\bz)\|_2^2 h(\bX - \bz)^2 \big]
    \\
    \;&\leq\;
    \sup_{\bz \in \R^d} \|\bs_p(\bz)\|_2^2 \big(\E_{\bX \sim P} \big[ h(\bX - \bz)^4 \big]\big)^{1/2} \big(\E_{\bX \sim P} \big[ \| \bs_p(\bX) \|_2^4 \big] \big)^{1/2}
    \\
    \;&=\;
    C_P^{1/2} \sup_{\bz \in \R^d} \|\bs_p(\bz)\|_2^2 \big(\E_{\bX \sim P} \big[ h(\bX - \bz)^4 \big]\big)^{1/2}
    \;,
\end{align*}
where the first step holds by H\"{o}lder's inequality, and $C_P \coloneqq \E_{\bX \sim P} \big[ \| \bs_p(\bX) \|_2^4 \big]$. The RHS of the last line is finite by assumption. Similarly, the second term in \eqref{eq: bound on stein kernel} can also be bounded by H\"{o}lder's inequality as
\begin{align*}
    \sup_{\bz \in \R^d} \E_{\bX \sim P} \big[ \|\bs_p(\bx)\|_2^2 \|\nabla h(\bx - \bz)\|_2^2  \big]
    \;\leq\;
    C_P^{1/2} \sup_{\bz \in \R^d} \E_{\bX \sim P} \big( \big[ \|\nabla h(\bx - \bz)\|_2^4  \big] \big)^{1/2}
    \;,
\end{align*}
which is finite since $h \in \cC_b^2$. For the third term in \eqref{eq: bound on stein kernel},
\begin{align*}
    \sup_{\bz \in \R^d} \E_{\bX \sim P} \big[ \|\bs_p(\bz)\|_2^2 \|\nabla h(\bx - \bz)\|_2^2 \big]
    \;=\;
    \sup_{\bz \in \R^d} \|\bs_p(\bz)\|_2^2 \E_{\bX \sim P} \big[ \|\nabla h(\bx - \bz)\|_2^2 \big]
    \;,
\end{align*}
which is again finite by assumption. Substituting these bounds into \eqref{eq: bound on stein kernel}, we have shown that $\sigma_\infty^2 = \sup_{\bz \in \mathbb{R}^d} \E_{\bX \sim P} [u_p(\bX, \bz)^2 ] < \infty$, which verifies condition (i). 

To show (ii), we note that a translation-invariant kernel corresponds to choosing a constant weighting function $w(\bx) \equiv 1$ in the tilted kernel of \Cref{lem: bounded Stein kernel}, so that we can leverage the derivations in the proof of \Cref{lem: bounded Stein kernel}. Setting $w(\bx) \equiv 1$ in \eqref{eq: stein kernel diag simplificaiton} to simplify $u_p(\bx, \bx)$ yields
\begin{align*}
    \xi_P^2
    \;\coloneqq\;
    \E_{\bX \sim P}\big[ u_p(\bX, \bX)^2 \big]
    \;&=\;
    \E_{\bX \sim P}\Big[ \big( \| \bs_p(\bX) \|_2^2 h(0) - \nabla^\top \nabla h(0) \big)^2 \Big]
    \\
    \;&\leq\;
    2h(0)^2 \E_{\bX \sim P}\big[ \| \bs_p(\bX) \|_2^4 \big] + 2\big| \nabla^\top \nabla h(0) \big|^2
    \;<\; 
    \infty
    \;,
\end{align*}
where the first inequality holds since $(a+b)^2 \leq 2a^2 + 2b^2$ for any $a, b \in \R$. This proves condition (ii).

Having showed the moment conditions, we are now ready to prove the main theorem. Pick any $s \in [0, 1)$ and any integer $m' \in \big[\frac{1}{2}n^{1-s}, n\big)$. Define $\epsilon_{m,n} \coloneqq m' / n$ and $m = n - m'$. Let $C, C_{P, 1}, C_{P, 2}$ be the constants defined in \Cref{lem: bound boot quantile}, and define the events
\begin{subequations}
    \begin{align}
        \cA_1 &\coloneqq \big\{ D^2(\X_n) \;\geq\; \epsilon_{m,n}^2 u_p(\bz, \bz) - t_1(m') \big\}
        \;, \\
        \cA_2 &\coloneqq \bigg\{ q^2_{B, 1-\alpha}(\X_n) \leq \frac{C\log(1 / \alpha) }{n \sqrt{\delta / 2}}  \epsilon_{m, n}  u_p(\bz, \bz) + t_2(m') \bigg\}
        \;,
    \end{align}
    \label{eq: events A1 and A2}
\end{subequations}
where 
\begin{align*}
    t_1(m') \;\coloneqq\; \frac{2\sigma_\infty\epsilon_{m,n}}{\sqrt{\delta m/2}} 
    \;,\qquad\qquad
    t_2(m') \;&\coloneqq\; \frac{2C_{P, 1}}{n \delta} + \frac{C_{P, 2} \log(1/\alpha)}{n \sqrt{\delta / 2}} 
    \;,
\end{align*}
and the dependence on $m'$ in these quantities is through $m = n-m'$. It follows from \Cref{lem: bound vstat} and \Cref{lem: bound boot quantile} applied to $\delta/2$ that the event $\cA \coloneqq \cA_1 \cap \cA_2$ occurs with probability $\Pr(\cA) \geq 1 - \delta/2  - \delta/2 = 1 - \delta$. These events allow us to bound $D^2(\X_{n,\bz})$ and $q_{B,1-\alpha}^2(\X_{n,\bz})$, which then allows us to show that $D^2(\X_{n,\bz}) > q_{B,1-\alpha}^2(\X_{n,\bz})$ for sufficiently large $\bz$. On $\cA$, we have 
\begin{align*}
    D^2(\X_n) - q^2_{B, 1-\alpha}(\X_{n,\bz})
    \;&\geq\;
    \epsilon_{m, n}^2 u_p(\bz, \bz) - t_1(m') 
    - \frac{C\log(1 / \alpha)}{n \sqrt{\delta / 2}} \epsilon_{m, n}  u_p(\bz, \bz) - t_2(m') 
    \\
    \;&=\;
    \epsilon_{m, n} u_p(\bz, \bz) \bigg( \epsilon_{m, n} - \frac{C'}{n\sqrt{\delta}} \bigg) - T(m')
    \;,
\end{align*}
where in the last line we have defined $C' \coloneqq \sqrt{2}C \log(1 / \alpha)$ and $T(m') \coloneqq t_1(m') + t_2(m')$. This implies 
\begin{align*}
    &
    \Pr\nolimits_{\X_m^\ast \sim P, \bW}\big( D^2(\X_{n,\bz}) > \hat{q}_{1-\alpha}^B(\X_{n,\bz}) \big)
    \\
    \;&\geq\;
    \Pr\nolimits_{\X_m^\ast \sim P, \bW}\big( D^2(\X_{n,\bz}) > \hat{q}_{1-\alpha}^B(\X_{n,\bz}) \;|\; \cA \big) \Pr\nolimits_{\X_m^\ast \sim P, \bW}(\cA)
    \\
    \;&\geq\;
    (1 - \delta) \Pr\nolimits_{\X_m^\ast \sim P, \bW}\big( D^2(\X_{n,\bz}) > \hat{q}_{1-\alpha}^B(\X_{n,\bz}) \;|\; \cA \big)
    \\
    \;&\geq\;
    (1 - \delta) \indicator \bigg\{ \epsilon_{m, n} u_p(\bz, \bz) \bigg( \epsilon_{m, n} - \frac{C'}{n\sqrt{\delta}} \bigg) - T(m') \;>\; 0 \bigg\}
    \tagaligneq \label{eq: lower bound on exceedance probability}
    \;,
\end{align*}
where the first step holds due to the law of total probability. 
We claim that the indicator function in \eqref{eq: lower bound on exceedance probability} takes value 1 for all $m' \in \big[\frac{1}{2}n^{1-s}, n\big)$ when the following inequalities hold
\begin{align}
    n^{\frac{(1-s)}{2}} \;>\; 4C'
    \;,\qquad\qquad
    \|\bs_p(\bz)\|_2^2 \;>\; \frac{1}{h(0)}\Big( 4T(n-1) n^{2s} + \nabla_1^\top \nabla_2 h(0) \Big)
    \;.
    \label{eq: condition on delta and score}
\end{align}
Indeed, since $\delta > 0$ was arbitrary, we can set $\delta = n^{-(1-s)}$, so we have
\begin{align}
    \epsilon_{m, n} - \frac{C'}{n\sqrt{\delta}}
    \;=\;
    \epsilon_{m, n} - \frac{C'}{n^{\frac{1}{2}+\frac{s}{2}}}
    \;\stackrel{(a)}{>}\;
    \epsilon_{m, n} - \frac{C'}{4C' n^s}
    \;=\;
    \epsilon_{m, n} - \frac{1}{4n^s}
    \;\stackrel{(b)}{\geq}\;
    \epsilon_{m, n} - \frac{\epsilon_{m, n}}{2}
    \;=\;
    \frac{\epsilon_{m, n}}{2}
    \label{eq: non robustness positive diff}
    \;,
\end{align}
where $(a)$ holds since the first condition in \eqref{eq: condition on delta and score} implies $n^{\frac{1}{2}+\frac{s}{2}} = n^{s + \frac{(1 - s)}{2}} \geq 4C'n^s$, and $(b)$ holds since the assumption $m' \geq \frac{1}{2} n^{1-s}$ implies $n^s \leq n/(2m') = \epsilon_{m,n} /2$. Using this and the second inequality in \eqref{eq: condition on delta and score},
\begin{align*}
    \epsilon_{m,n} u_p(\bz, \bz) \bigg( \epsilon_{m, n} - \frac{C'}{n\sqrt{\delta}} \bigg) 
    \;&=\;
    \epsilon_{m,n} \big( \| \bs_p(\bz) \|_2^2 h(0) - \nabla_1^\top \nabla_2 h(0) \big) \bigg( \epsilon_{m,n} - \frac{C'}{n\sqrt{\delta}} \bigg)
    \\
    \;&>\;
    \epsilon_{m,n} \times 4T(n-1) n^{2s} \times \frac{\epsilon_{m,n}}{2}
    \\
    \;&\geq\;
    \epsilon_{m,n} \times \frac{2T(n-1)}{\epsilon_{m,n}^2} \times \frac{\epsilon_{m,n}}{2}
    \\
    \;&=\;
    T(n-1)
    \\
    \;&\geq\; 
    T(m')
    \;,
\end{align*}
where the first equality holds by setting $w(\bx) \equiv 1$ in \eqref{eq: stein kernel diag simplificaiton} to compute $u_p(\bz, \bz)$, the second line follows from the second inequality in \eqref{eq: condition on delta and score} and \eqref{eq: non robustness positive diff}, the third line holds since $n^{2s} \geq n^{2s}/(m')^2 = \epsilon_{m,n}^{-2}$, and the last line holds as direct computation shows $T(n-1) \geq T(m')$ since $m' < n$ by assumption. 
To summarize, we have shown that for any $n$ and $\bz$ satisfying \eqref{eq: condition on delta and score}, the following holds 
\begin{align*}
    \Pr\nolimits_{\X_m^\ast \sim P, \bW}\big( D^2(\X_{n,\bz}) > q^2_{\infty, 1-\alpha}(\X_{n,\bz}) \big)
    \;\geq\;
    1 - \delta
    \;=\;
    1 - n^{-(1-s)}
    \;.
\end{align*}
It then remains to show that there exists $\bz$ not dependent on $m'$ that satisfies the second inequality in \eqref{eq: condition on delta and score}, which would imply the claimed result. Such $\bz$ always exists, because, for the second inequality in \eqref{eq: condition on delta and score}, the RHS does not depend on $\bz$ nor $m'$ and is finite since $\epsilon_{m,n} = m' / n \in (n^{-s}/2, 1)$ by assumption, while the LHS explodes with $\bz$ as we have assumed $\bz \mapsto \| \bs_p(\bz) \|_2$ is unbounded. This concludes the proof.
\end{proof}

\subsubsection{Proof of \Cref{thm: non robust stationary}}
\label{pf: thm non-robust stationary}

\begin{proof}
It suffices to show the claim for any sequence of the form $\epsilon_n = n^{-s}$, where $s \in [0, 1)$ is arbitrary. Let the random variable $M'$ be defined as at the start of 
\Cref{app: non-robust stationary}. For any random sample $\X_n$, define the event $\cA(\X_n) = \{ D^2(\X_n) > q^2_{\infty, 1-\alpha}(\X_n) \}$. For $n$ sufficiently large so that $n^{(1-s)/2} > 2\sqrt{2} C\log(1/\alpha)$, we can apply \Cref{prop: non-robust stationary} to conclude that, for all $n$, there exists $\bz_n \in \R^d$ such that the following holds for all $m' \in \big[\frac{1}{2}n^{1-s}, n\big) = \big[\frac{1}{2}\epsilon_n n, n\big)$,
\begin{align*}
    \Pr\nolimits_{\X_n \sim Q, \bW}\big( \cA(\X_n) \;|\; M' = m' \big)
    \;&=\;
    \Pr\nolimits_{\X_{n-m'}^\ast \sim P, \bW}\big( D^2(\X_{n,\bz}) > q^2_{\infty, 1-\alpha}(\X_{n,\bz}) \big)
    \\
    \;&\geq\; 1 - n^{-(1-s)} 
    \tagaligneq \label{eq: lb for all m_prime}
    \;,
\end{align*}
where we have used the shorthand notation $\X_{n,\bz_n} = \X^\ast_{n-m'} \cup \{ \bz_n \}^{m'}$. We will show that the sequence of probability measures $Q \coloneqq (1-\epsilon_n)P + \epsilon_n \delta_{\bz_n}$ satisfies 
\begin{align}
    \Pr\nolimits_{\X_n \sim Q, \bW}\big( D^2(\X_{n}) > q^2_{\infty, 1-\alpha}(\X_{n}) \big)
     - \Pr\nolimits_{\X_n^\ast \sim P, \bW}\big( D^2(\X_{n}^\ast) > q^2_{\infty, 1-\alpha}(\X_{n}^\ast) \big)
     \;\to\;
     1 - \alpha
     \label{eq: error diff limit}
     \;,
\end{align}
which will imply the claimed result of our theorem. 

Using a concentration inequality for Binomial distributions \citep[e.g.,][Lemma 2.1]{chung2002connected} applied to the case of $\text{Binomial}(n,\epsilon_n)$, the event $\cB \coloneqq \{ \epsilon_n n/2 \leq M' \leq 3\epsilon_n n/2 \}$ holds with high probability
\begin{align*}
    \Pr\nolimits_{M'}\left( \cB \right)
    \;&=\;
    1 - \Pr\nolimits_{M'}\left(\big| M' - \epsilon_n n \big| < \frac{\epsilon_n n }{2} \right)
    \\
    \;&\geq\;
    1 - \exp\left( - \frac{(\epsilon_n n)^2}{8 \epsilon_n n}\right)
    - \exp\left( - \frac{(\epsilon_n n)^2}{2 (\epsilon_n n + \epsilon_n n /6)}\right)
    \\
    \;&=\;
    1 - \exp\left( - \frac{\epsilon_n n}{8}\right)
    - \exp\left( - \frac{3\epsilon_n n}{7}\right)
    \\
    \;&\eqqcolon\;
    1 - f(n)
    \tagaligneq \label{eq: binom concentration 1}
    \;,
\end{align*}
where in the last line we have defined $f(n) = \exp\left( - \epsilon_n n / 8\right) - \exp\left( - 3\epsilon_n n / 7\right)$. Define the index set $I_n$ to be the set of integers in the interval $[\epsilon_n n /2, 3\epsilon_n n /3]$. Then
\begin{align*}
    &
    \Pr\nolimits_{\X_n \sim Q, \bW}\big( D^2(\X_{n}) > q^2_{\infty, 1-\alpha}(\X_{n}) \big)
    \\
    \;&\geq\;
    \Pr\nolimits_{\X_n \sim Q, \bW}\big( \cA(\X_n) \cap \cB \big) 
    \\
    \;&\stackrel{(a)}{=}\;
    \sum_{m' \in I_n} \Pr\nolimits_{\X_n \sim Q, \bW}\big( \cA(\X_n) \;|\; M' = m' \big) \Pr(M' = m')
    \\
    \;&\geq\;
    \Big(\min_{m' \in I_n} \Pr\nolimits_{\X_n \sim Q, \bW}\big( \cA(\X_n) \;|\; M' = m' \big) \Big)
    \sum_{m' = m'_0}^n \Pr(M' = m')
    \\
    \;&\stackrel{(b)}{=}\;
    \min_{m' \in I_n} \Pr\nolimits_{\X_n \sim Q, \bW}\big( \cA(\X_n) \;|\; M' = m' \big) \Pr(\cB)
    \\
    \;&\stackrel{(c)}{\geq}\;
    \big( 1 - n^{-(1-s)} \big) 
    \big(1 - f(n) \big)
    \;\stackrel{(d)}{=}\;
    1 - \cO\big(n^{-(1-s)}\big)
    \tagaligneq \label{eq: rej prob lower bound}
    \;,
\end{align*}
where the first inequality follows from the law of total probability, $(a)$ and $(b)$ hold by the partition $\cB = \cup_{m' \in I_n}\{ M' = m' \}$ and again the law of total probability, $(c)$ follows from \eqref{eq: lb for all m_prime} and \eqref{eq: binom concentration 1}, and $(d)$ holds since the assumptions $\epsilon_n = n^{-s}$ and $s \in [0, 1)$ imply $f(n) = o\big(n^{-(1-s)}\big)$. On the other hand, the assumed moment conditions imply $\E_{\bX^\ast \sim P}[u_p(\bX^\ast, \cdot)] = 0$ as argued in the paragraph before \eqref{eq: var bound} and $\E_{\bX^\ast_1, \bX^\ast_2 \sim P}[u_p(\bX^\ast_1, \bX^\ast_2)^2] \leq \E_{\bX^\ast \sim P}[u_p(\bX^\ast, \bX^\ast)^2] < \infty$ as argued in \Cref{pf: prop non-robust stationary}, so we can apply \citet[Theorem 3.5]{arcones1992bootstrap} with $r=2$ to yield the asymptotic validity of the KSD test when $Q = P$, namely 
\begin{align*}
    \Pr\nolimits_{\X_n^\ast \sim P, \bW}\big( D^2(\X_{n}^\ast) > q^2_{\infty, 1-\alpha}(\X_{n}^\ast) \big)
    \;\to\; \alpha \;.
\end{align*}
Combining this with \eqref{eq: rej prob lower bound} shows the desired convergence \eqref{eq: error diff limit}.
\end{proof}

\subsubsection{Implications for KSDs on Non-Euclidean Spaces}
The core argument underpinning the proofs in \Cref{thm: non robust stationary} is that the function $\bz \mapsto u_p(\bz,\bz)$ is unbounded on the support of the model, which in our case is $\R^d$. The proof per se does not rely on the support being an Euclidean space. Consequently, this argument can potentially be extended to KSDs constructed for non-Euclidean data, such as Riemannian data \citep{xu2021interpretable}, sequence data with varying dimensionality \citep{baum2023kernel}, functional data \citep{wynne2022spectral}, or censored time-to-event data \citep{fernandez2020kernelized}.

\subsection{Proof of \Cref{lem: bounded Stein kernel}}
\label{app: pf of bounded Stein kernel}

\begin{proof}
Direct expansion of the Stein kernel in \eqref{eq:Stein_kernel} with a tilted kernel $k$ gives
\begin{align*}
    u_p(\bx, \bx')
    \;&=\;
    \langle w(\bx)\bs_p(\bx), \; w(\bx') \bs_p(\bx') \rangle h(\bx, \bx')
    \\
    &\;\quad
    + \langle  w(\bx') \bs_p(\bx'), \; \nabla w(\bx) \rangle h(\bx - \bx') + \langle w(\bx') \bs_p(\bx'), \nabla h(\bx - \bx') \rangle w(\bx) 
    \\
    &\;\quad
    + \langle w(\bx) \bs_p(\bx), \; \nabla w(\bx') \rangle h(\bx - \bx') - \langle w(\bx) \bs_p(\bx), \nabla h(\bx - \bx') \rangle w(\bx')
    \\
    &\;\quad
    + \langle \nabla w(\bx),\; \nabla w(\bx') \rangle h(\bx - \bx') + \langle w(\bx) \nabla h(\bx - \bx'),\; \nabla w(\bx') \rangle
    \\
    &\;\quad
    - \langle \nabla w(\bx),\; w(\bx') \nabla h(\bx - \bx') \rangle - w(\bx) w(\bx') \nabla^\top \nabla h(\bx - \bx')
    \\
    \;&=\;
    \langle \bs_{p, w}(\bx),\; \bs_{p, w}(\bx') \rangle h(\bx - \bx') 
    \\
    &\;\quad
    + \langle \bs_{p, w}(\bx'), \; \nabla w(\bx) \rangle h(\bx - \bx') + \langle \bs_{p, w}(\bx'),\; \nabla h(\bx - \bx') \rangle w(\bx)
    \\
    &\;\quad
    + \langle \bs_{p, w}(\bx), \; \nabla w(\bx') \rangle h(\bx - \bx') 
    - \langle \bs_{p, w}(\bx),\; \nabla h(\bx - \bx') \rangle w(\bx')
    \\
    &\;\quad
    + \langle \nabla w(\bx),\; \nabla w(\bx') \rangle h(\bx - \bx') + \langle \nabla h(\bx - \bx'),\; \nabla w(\bx') \rangle w(\bx) 
    \\
    &\;\quad
    - \langle \nabla w(\bx),\; \nabla h(\bx - \bx') \rangle w(\bx') - w(\bx) w(\bx') \nabla^\top \nabla h(\bx - \bx')
    \;,
\end{align*}
where $\bs_{p, w}(\cdot) \coloneqq w(\cdot) \bs_p(\cdot) $ and $\nabla h(\bx-\bx')$ denotes $\nabla h$ evaluated at $\bx-\bx'$. By setting $\bx = \bx'$ and eliminating identical terms,
\begin{align}
    &
    u_p(\bx, \bx)
    \nonumber \\
    \;&=\;
    \| \bs_{p, w}(\bx) \|_2^2 h(0)  + 2 \langle \bs_{p, w}(\bx), \; \nabla w(\bx) \rangle h(0)
    + \| \nabla w(\bx) \|_2^2 h(0) - w(\bx)^2 \nabla^\top \nabla h(0)
    \label{eq: stein kernel diag simplificaiton}
    \\
    \;&\leq\;
    \| \bs_{p, w}(\bx) \|_2^2 h(0) 
    + 2 \| \bs_{p, w}(\bx) \|_2 \| \nabla w(\bx) \|_2 h(0) 
    + \| \nabla w(\bx) \|_2^2 h(0) + w(\bx)^2 | \nabla^\top \nabla h(0) |
    \label{eq: UB on stein kernel diag}
    \;,
\end{align}
where the last line follows from Cauchy-Schwarz inequality. The RHS of \eqref{eq: UB on stein kernel diag} is bounded over $\bx$ assuming the stated conditions on $h$, $w$ and the supremum of $\|s_p(x) w(x)\|_2 = \|s_{p,w}(x)\|_2$. To prove the bound on $u_p(\bx, \bx')$, we denote by $\cH_u$ the RKHS associated with $u_p$, which is a reproducing kernel \citep[see, for example,][Theorem 1]{barp2022targeted}. We use the reproducing property of the Stein reproducing kernel $u_p$ and the Cauchy-Schwarz inequality for the corresponding RKHS norm $\| \cdot \|_{\cH_u}$ to yield
\begin{align}
    u_p(\bx, \bx') \;=\; \langle u_p(\cdot, \bx), u_p(\cdot, \bx') \rangle_{\cH_{u}} \leq \| u_p(\cdot, \bx) \|_{\cH_{u}} \| u_p(\cdot, \bx') \|_{\cH_{u}} \;=\; u_p(\bx, \bx)^{\frac{1}{2}} u_p(\bx', \bx')^{\frac{1}{2}}
    \label{eq: stein kernel diag bound}
    \;.
\end{align}
Hence, $\sup_{\bx, \bx' \in \mathbb{R}^d} u_p(\bx, \bx') \leq \infty$. In particular, $u_p(\bx, \bx)^{1/2}$ is well-defined, since $u_p(\bx, \bx) \geq 0$, which is because the Stein kernel $u_p$ is positive definite. 
\end{proof}

\subsection{Proof of \Cref{thm: robust tilted} and Related Preliminary Results}
\label{app: robustness tilted}

\paranoheading{Overview of proof}
We will show a general result in \Cref{prop: qualitative robust tilted general}, which states that the robust-KSD test that rejects $\Hc_0: \cB^\KSD(P; \theta)$ if $\Delta_\theta(\X_n) \coloneqq \max(0, D(\X_n) - \theta) > q_{B,1-\alpha}(\X_n)$ is qualitatively robust for any $\theta \geq 0$. This immediately shows \Cref{thm: robust tilted} by setting $\theta = 0$. To show \Cref{prop: qualitative robust tilted general}, we follow a similar approach in the proof of \Cref{thm: non robust stationary}, where we will first show that the result holds conditionally on the number of contaminated data, and use a high-probability argument to complete the proof. 

The rest of this section is organized as follows:
\begin{itemize}
    \item \Cref{pf: diff of quantiles tilted kernel} shows \Cref{lem: diff of quantiles tilted kernel}, which states that the bootstrap quantiles $q_{B,1-\alpha}(\X_n)$ and $q_{B,1-\alpha}(\X_n^\ast)$ computed using any two samples $\X_n$ and $\X_n^\ast$ that differ by at most $m'$ elements are close to each other with high probability.
    \item \Cref{pf: bound on test stat tilted kernel} shows \Cref{lem: bound on test stat tilted kernel}, which states that the difference in the exceedance probabilities of $\Delta_\theta(\X_n)$ and of $\Delta_\theta(\X_n^\ast)$ is small.
    \item \Cref{pf: robustness tilted conditional} shows \Cref{lem: robustness tilted conditional}, which states that the rejection probability of a robust-KSD test using $\X_n$ is close to that of a test using $\X_n^\ast$, assuming $\X_n$ and $\X_n^\ast$ differ by no more than $o(n^{1/2})$ elements. Its proof relies on \Cref{lem: diff of quantiles tilted kernel} and \Cref{lem: bound on test stat tilted kernel}.
    \item \Cref{pf: qualitative robust tilted general} uses \Cref{lem: robustness tilted conditional} to show \Cref{prop: qualitative robust tilted general}, which states that the robust-KSD test is qualitatively robust for any $\theta \geq 0$. This result immediately implies \Cref{thm: robust tilted}, the proof of which is also contained in this subsection.
\end{itemize}
\begin{lemma}
    \label{lem: diff of quantiles tilted kernel}
    Assume $k$ is a tilted kernel satisfying the conditions in \Cref{lem: bounded Stein kernel}. Then there exists absolute constants $C_1, C_2 > 0$ such that, for any $\delta > 0$ and any (deterministic) realizations $\X_n =\{\bx_i\}_{i=1}^n$ and $\X_n^\ast = \{\bx_i^\ast\}_{i=1}^n$ that differ by at most $m'$ elements, we have
    \begin{align*}
        \Pr\nolimits_{\bW} \left( \left| D^2_{\bW}(\X_n) - D^2_{\bW}(\X_n^\ast) \right| \;\leq\; \frac{\tau_\infty \epsilon_{m,n}^{\frac{1}{2}}}{n} \left( C_1 + C_2 \log\left(\frac{8}{\delta}\right) \right) \right) \;\geq\; 1 - \delta
        \;,
    \end{align*}
    where $D^2_{\bW}(\X_n)$, defined in \eqref{eq: bootstrap sample}, is the bootstrap sample computed using $\X_n$, and $\epsilon_{m,n} = m' / n$. Moreover, the above inequality implies
    \begin{align*}
        \Pr\nolimits_{\bW} \left( \left| q_{\infty, 1-\alpha}(\X_n) - q_{\infty, 1-\alpha}(\X_n^\ast) \right| \;\leq\; \frac{\tau_\infty^{\frac{1}{2}} \epsilon_{m,n}^{\frac{1}{4}}}{n^{\frac{1}{2}}} \sqrt{C_1 + C_2 \log\left(\frac{8}{\delta}\right) } \;\right) \geq\; 1 - \delta
        \;.
    \end{align*}
\end{lemma}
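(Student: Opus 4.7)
My plan is to reduce the first inequality to a Hanson-Wright-type concentration bound for a sparse quadratic form in the multinomial weights, and then transfer this bound to the quantiles via the elementary concavity inequality $|\sqrt{a}-\sqrt{b}|\leq\sqrt{|a-b|}$ for $a, b \geq 0$.

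For the first inequality, I would write the difference as a centered quadratic form by defining $M\in\R^{n\times n}$ via $M_{ij} \coloneqq u_p(\bx_i,\bx_j) - u_p(\bx_i^\ast,\bx_j^\ast)$, so that
\begin{align*}
D^2_{\bW}(\X_n) - D^2_{\bW}(\X_n^\ast) \;=\; \frac{1}{n^2}(\bW-\bone)^\top M(\bW-\bone).
\end{align*}
Let $I\subseteq\{1,\ldots,n\}$ index the at most $m'$ positions where the two samples disagree. Then $M$ vanishes outside rows or columns indexed by $I$, so it has at most $2m'n$ nonzero entries, and \Cref{lem: bounded Stein kernel} gives $|M_{ij}|\le 2\tau_\infty$. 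Consequently $\|M\|_{\mathrm F}\le 2\sqrt{2}\,\tau_\infty\sqrt{m'n}$, and the Hanson-Wright mean $\mu = -n^{-1}\sum_{i,j}M_{ij} + \sum_i M_{ii}$ satisfies $|\mu|\le 6\,m'\tau_\infty$.

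Since $\bW-\bone$ inherits the convex-concentration property from $\bW$, the same argument underlying \Cref{lem: hanson-wright inequality} applies to this centered quadratic form and yields, with $\bW$-probability at least $1-\delta$,
\begin{align*}
\bigl|(\bW-\bone)^\top M(\bW-\bone)-\mu\bigr|\;\le\; C\,\|M\|_{\mathrm F}\log(1/\delta).
\end{align*}
Dividing by $n^2$, substituting the bounds on $\|M\|_{\mathrm F}$ and $|\mu|$, and absorbing constants via $\epsilon_{m,n}\leq\sqrt{\epsilon_{m,n}}$ for $\epsilon_{m,n}\in[0,1]$ together with $\log(1/\delta)\leq\log(8/\delta)$ produces the first claim. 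For the quantile statement, I would use $q_{\infty,1-\alpha}(\X_n) = \sqrt{q^2_{\infty,1-\alpha}(\X_n)}$ and $|\sqrt{a}-\sqrt{b}|\leq\sqrt{|a-b|}$ to reduce matters to bounding the squared-quantile gap. Coupling the two bootstrap distributions through a shared $\bW$, the first part yields $F_{\X_n}(u+\eta)\geq F_{\X_n^\ast}(u)-\delta$ for all $u$, where $\eta$ denotes the bound from the first part and $F_{\X_n}, F_{\X_n^\ast}$ are the conditional CDFs of the two squared bootstrap statistics; because $\eta$ depends on $\delta$ only logarithmically, the $\delta$-loss in probability level can be reabsorbed into $\log(8/\delta)$ without degrading the rate.

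The main obstacle is this last transfer: the first bound holds only on a $\bW$-event of probability $\geq 1-\delta$, not almost surely, so a naive coupling controls the squared quantiles only at shifted probability levels, namely $(1-\alpha)$ versus $(1-\alpha-\delta)$. It is the combination of the concavity of the square root (which converts an additive bound of order $\eta$ on the squared quantile into an additive bound of order $\sqrt{\eta}$ on the non-squared one) and the purely logarithmic dependence of $\eta$ on $\delta$ that enables this shift to be swallowed without compromising the rate $\epsilon_{m,n}^{1/4} n^{-1/2}$ claimed in the lemma.
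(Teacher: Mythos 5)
For the first inequality your route is correct and genuinely different from the paper's. The paper splits the centered weight vector into $\bV_1+\bV_2$ (matched versus differing coordinates), uses that the two Gram matrices agree on the matched block and are positive semi-definite, and then controls the resulting terms by Cauchy--Schwarz in the Gram-weighted inner products together with four separate applications of \Cref{lem: hanson-wright inequality}. You instead form the difference matrix $M$ directly, observe that it is supported on at most $2m'n$ entries each bounded by $2\tau_\infty$, so that $\|M\|_{\mathrm F}\le 2\sqrt{2}\,\tau_\infty\sqrt{m'n}$ and $|\mu|\le 6m'\tau_\infty$, and invoke Hanson--Wright once; since \Cref{lem: hanson-wright inequality} holds for arbitrary (not necessarily PSD) symmetric matrices and is already applied in the paper to the centered form $(\bW-\bone)^\top A(\bW-\bone)$, this is legitimate and yields the same rate $\tau_\infty\epsilon_{m,n}^{1/2}n^{-1}\log(1/\delta)$ after using $\epsilon_{m,n}\le\epsilon_{m,n}^{1/2}$. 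Your version is shorter and avoids the PSD bookkeeping entirely.

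The quantile transfer is where your argument has a gap. You correctly observe that the concentration bound holds only on a $\bW$-event of probability $1-\delta$, so a coupling argument controls $q^2_{\infty,1-\alpha}(\X_n)$ only in terms of $q^2_{\infty,1-\alpha\pm\delta}(\X_n^\ast)$, i.e., at a shifted quantile level. Your proposed repair --- reabsorbing the $\delta$-loss into $\log(8/\delta)$ --- does not work: the shift affects the \emph{level} at which the quantile of the conditional bootstrap distribution is taken, whereas $\log(8/\delta)$ controls the \emph{radius} of the concentration event, and these two are not interchangeable. Converting $q^2_{\infty,1-\alpha+\delta}(\X_n^\ast)$ back into $q^2_{\infty,1-\alpha}(\X_n^\ast)$ plus a term of order $\eta$ would require an anti-concentration (density lower bound) statement for the bootstrap distribution near its $(1-\alpha)$-quantile, which is not available. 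For context, the paper does not confront this issue either: it reads the high-probability bound on $|D^2_{\bW}(\X_n)-D^2_{\bW}(\X_n^\ast)|$ directly as a bound on $|q^2_{\infty,1-\alpha}(\X_n)-q^2_{\infty,1-\alpha}(\X_n^\ast)|$, and then passes to non-squared quantiles by exactly the $|\sqrt{a}-\sqrt{b}|\le\sqrt{|a-b|}$ case analysis you propose. So your concavity step matches the paper; the passage from the bootstrap-sample bound to the squared-quantile bound is the delicate point, and the explicit fix you offer for it is not valid as stated.
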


\begin{lemma}
    \label{lem: bound on test stat tilted kernel}
    Assume $\E_{\bX \sim P}[\| \bs_p(\bX) \|_2] < \infty$ and that $k$ is a tilted kernel satisfying the conditions in \Cref{lem: bounded Stein kernel}. For any $\delta, \gamma > 0$ and any integer $m' \in [0, n]$, it holds that
    \begin{align*}
        &
        \sup_{\bz_1, \ldots, \bz_{m'}\in\R^d} \left| \Pr\nolimits_{\X_m^\ast \sim P}\left( D\left(\X_{n-m'}^\ast \cup \{\bz_i\}_{i=1}^{m'}\right) \;>\; \gamma \right) - \Pr\nolimits_{\X_n^\ast \sim P}\left( D(\X_n^\ast) \;>\; \gamma \right)
        \right|
        \\
        \; &\leq\; 
        \Pr\nolimits_{\X_n^\ast \sim P}\left(  \left| D(\X_n^\ast) - \gamma \right| \;\leq\; t_{m,n,\delta} \right) + \delta
        \;,
    \end{align*}
    where $t_{m, n, \delta} \coloneqq \big(4\sqrt{2} \tau_\infty \epsilon_{m, n} / \sqrt{\delta n} + 2 \tau_\infty \epsilon_{m, n}^2 \big)^{1/2}$.
\end{lemma}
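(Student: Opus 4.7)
The plan is to introduce an auxiliary sample $\X_n^\circ = \X_m^\ast \cup \{\tilde{\bX}_i^\ast\}_{i=1}^{m'}$ coupled to $\X_n = \X_m^\ast \cup \{\bz_i\}_{i=1}^{m'}$ through a shared uncorrupted block $\X_m^\ast \sim P^{\otimes m}$ (with $m = n - m'$), where $\tilde{\bX}_1^\ast, \ldots, \tilde{\bX}_{m'}^\ast \sim P$ are independent of $\X_m^\ast$. Under this coupling, $\X_n^\circ$ has the same distribution as the $\X_n^\ast$ in the lemma statement, so the task reduces to controlling $|D(\X_n) - D(\X_n^\circ)|$ with high probability uniformly in the fixed $\bz_1, \ldots, \bz_{m'}$, and then converting this into a bound on the difference of exceedance probabilities at level $\gamma$.

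Using the V-statistic form, I would decompose around the shared block as
\begin{align*}
D^2(\X_n) - D^2(\X_n^\circ) \;=\; (S_1 - S_1') + (A_2 - A_2'),
\end{align*}
where $S_1 = \frac{2}{n^2}\sum_{i\leq m,\, j\leq m'} u_p(\bX_i^\ast, \bz_j)$, $S_1'$ replaces $\bz_j$ with $\tilde{\bX}_j^\ast$, and $A_2, A_2'$ are the corresponding pure-contamination blocks. \Cref{lem: bounded Stein kernel} gives $|u_p|\leq \tau_\infty$, so $|A_2 - A_2'|\leq 2\tau_\infty \epsilon_{m,n}^2$ deterministically, which already accounts for the second summand in $t_{m,n,\delta}^2$. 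The Stein identity $\E_{\bX \sim P}[u_p(\bX,\cdot)]=0$, valid under the moment hypothesis, renders both $S_1$ and $S_1'$ mean-zero: writing $S_1 = \frac{2}{n^2}\sum_{i=1}^m V_i$ with independent $V_i = \sum_{j=1}^{m'} u_p(\bX_i^\ast, \bz_j)$ and using $|V_i|\leq m'\tau_\infty$ yields $\Var(S_1) \leq 4m(m')^2\tau_\infty^2/n^4$; while for $S_1' = \frac{2}{n^2}\sum_{i,j} u_p(\bX_i^\ast, \tilde{\bX}_j^\ast)$, conditioning alternately on the $\bX$- or $\tilde{\bX}$-block kills all cross-covariances via the Stein identity, giving $\Var(S_1')\leq 4mm'\tau_\infty^2/n^4$. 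Chebyshev at level $\delta/2$ for each, combined with $m \leq n$ and (for $m' \geq 1$) $\sqrt{\epsilon_{m,n}}/n \leq \epsilon_{m,n}/\sqrt{n}$, delivers $|S_1|+|S_1'| \leq 4\sqrt{2}\,\tau_\infty \epsilon_{m,n}/\sqrt{n\delta}$ with probability at least $1-\delta$, uniformly in $\bz_1, \ldots, \bz_{m'}$ since only $|u_p|\leq \tau_\infty$ is used on the contamination side. Combining with $|A_2 - A_2'|$ yields $|D^2(\X_n) - D^2(\X_n^\circ)|\leq t_{m,n,\delta}^2$, and the elementary inequality $|\sqrt{a}-\sqrt{b}|\leq \sqrt{|a-b|}$ for $a, b \geq 0$ promotes this to $|D(\X_n) - D(\X_n^\circ)| \leq t_{m,n,\delta}$ on an event $E$ of probability at least $1-\delta$.

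On $E$, a short case analysis shows that whenever exactly one of $\{D(\X_n)>\gamma\}$ and $\{D(\X_n^\circ)>\gamma\}$ holds, the other statistic must lie within $t_{m,n,\delta}$ of $\gamma$; equivalently $|\indicator\{D(\X_n)>\gamma\} - \indicator\{D(\X_n^\circ)>\gamma\}| \leq \indicator\{|D(\X_n^\circ)-\gamma|\leq t_{m,n,\delta}\}$ on $E$. Taking expectations and bounding $\Pr(E^c)\leq \delta$ produces the target inequality for each fixed $\bz_1, \ldots, \bz_{m'}$, and the supremum on the left-hand side is immediate because the right-hand side is $\bz$-free. The main obstacle I anticipate is calibrating the variance bound on $S_1$ tightly enough to land on the constant $4\sqrt{2}$: because the summands $u_p(\bX_i^\ast, \bz_j)$ share $\bX_i^\ast$ across $j$, the $(m')^2$ factor in $\Var(V_i)$ cannot be improved to $m'$, and only after combining $m \leq n$ with $m' = n\epsilon_{m,n}$ does the target $\epsilon_{m,n}/\sqrt{n}$ scaling emerge alongside the deterministic $\epsilon_{m,n}^2$ contribution.
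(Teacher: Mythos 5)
Your proposal is correct and follows essentially the same route as the paper's proof: the same coupling through a shared uncorrupted block, the same three-term decomposition in which the pure-$P$ block cancels, a deterministic $2\tau_\infty\epsilon_{m,n}^2$ bound on the contamination blocks, Chebyshev control of the mean-zero cross terms via the Stein identity, and the same square-root and indicator conversions. The only (immaterial) differences are your tighter variance bound for $S_1'$ exploiting degeneracy in both arguments—the paper simply reuses the $m(m')^2\tau_\infty^2$ bound—and your packaging of the final step as a single indicator-difference inequality rather than two one-sided bounds.
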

\begin{lemma}
    \label{lem: robustness tilted conditional}
    Assume $\E_{\bX \sim P}[\| \bs_p(\bX) \|_2] < \infty$ and that $k$ is a tilted kernel satisfying the conditions in \Cref{lem: bounded Stein kernel}. For any integers $\theta \geq 0$, $\delta > 0$, and any sequence $f_n = o(n^{1/2})$, there exists $n_0$ such that for any $n \geq n_0$, we have
    \begin{align*}
        \max_{m' \leq f_n} \omega(m') \;<\; \delta \;,
    \end{align*}
    where the $\max$ is over all non-negative integers $m' \leq f_n$, and
    \begin{align*}
        \omega(m')
        \;&\coloneqq\; 
        \big| \Pr\nolimits_{\X_m^\ast \sim P, \bW}\big( \Delta_\theta(\X_m^\ast \cup \Z_{m'}) > q_{\infty, 1-\alpha}(\X_m^\ast \cup \Z_{m'}) \big) 
        \\
        &\;\quad\quad
        - \Pr\nolimits_{\X_n^\ast \sim P, \bW}\big( \Delta_\theta(\X_n^\ast) > q_{\infty, 1-\alpha}(\X_n^\ast) \big) \big|
        \;.
    \end{align*}
\end{lemma}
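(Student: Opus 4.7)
The plan is to couple the contaminated sample $\X_n^\circ := \X_m^\ast \cup \Z_{m'}$ with a clean sample $\X_n^\ast := \X_m^\ast \cup \tilde{\X}_{m'}$, where $\tilde{\X}_{m'}$ is i.i.d.\ from $P$ and independent of $\X_m^\ast$ and $\bW$. Under this coupling both samples have size $n$ and differ in at most $m' \leq f_n = o(n^{1/2})$ coordinates, so it suffices to show that swapping those $m'$ coordinates perturbs the test statistic $\Delta_\theta$ and the bootstrap threshold $q_{\infty, 1-\alpha}$ only negligibly. Writing $\gamma_n^\circ := q_{\infty,1-\alpha}(\X_n^\circ)$ and $\gamma_n^\ast := q_{\infty,1-\alpha}(\X_n^\ast)$, I would telescope $\omega(m') \leq T_1 + T_2$, where $T_1 := |\Pr(\Delta_\theta(\X_n^\circ) > \gamma_n^\circ) - \Pr(\Delta_\theta(\X_n^\ast) > \gamma_n^\circ)|$ swaps only the data and $T_2 := |\Pr(\Delta_\theta(\X_n^\ast) > \gamma_n^\circ) - \Pr(\Delta_\theta(\X_n^\ast) > \gamma_n^\ast)|$ swaps only the threshold.

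For $T_1$, using $\{\Delta_\theta(x) > \gamma\} = \{D(x) > \gamma + \theta\}$ for $\gamma \geq 0$, I would invoke \Cref{lem: bound on test stat tilted kernel}. The complication is that $\gamma_n^\circ$ is not deterministic and depends on $\X_n^\circ$. I would handle this by localizing $\gamma_n^\circ$ around the deterministic population quantile $\gamma_\infty$ of the limit law from \citet[Theorem 3.5]{arcones1992bootstrap}: on the event $\{|\gamma_n^\circ - \gamma_\infty| \leq \rho_n\}$, which has probability $1 - o(1)$ for a slowly vanishing $\rho_n$, \Cref{lem: bound on test stat tilted kernel} applied at a fixed $\gamma$ near $\gamma_\infty + \theta$ yields $T_1 \leq \Pr(|D(\X_n^\ast) - \gamma_\infty - \theta| \leq \rho_n + t_{m,n,\delta'}) + \delta' + o(1)$, where $t_{m,n,\delta'} = o(n^{-1/2})$ under $m' = o(n^{1/2})$. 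For $T_2$, \Cref{lem: diff of quantiles tilted kernel} gives $|\gamma_n^\circ - \gamma_n^\ast| \leq \eta_n(\delta')$ with probability $1 - \delta'$, where $\eta_n(\delta') = O(\epsilon_{m,n}^{1/4}/\sqrt{n}) = o(n^{-1/2})$, so that $T_2 \leq \Pr(|\Delta_\theta(\X_n^\ast) - \gamma_n^\ast| \leq \eta_n(\delta')) + \delta'$.

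Both bounds reduce to anti-concentration terms of the form $\Pr(|D(\X_n^\ast) - A_n| \leq \epsilon_n)$ with $\epsilon_n = o(n^{-1/2})$. The classical CLT for degenerate V-statistics gives $\sqrt{n}\,D(\X_n^\ast) \darrow \cZ_\infty$ with $\cZ_\infty$ a continuous weighted chi-squared limit, and \citet[Theorem 3.5]{arcones1992bootstrap} provides the joint convergence with $\sqrt{n}\,\gamma_n^\ast$, so the rescaled interval of width $\sqrt{n}\,\epsilon_n = o(1)$ carries vanishing mass. Choosing $\delta' = \delta/4$ and $n_0$ large enough that every contribution is at most $\delta/4$ concludes the proof; uniformity over $m' \leq f_n$ and $\Z_{m'}$ is automatic because the bounds depend on $m'$ only through $\epsilon_{m,n} \leq f_n/n$ and on $\Z_{m'}$ not at all after taking the sup inside \Cref{lem: bound on test stat tilted kernel}.

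The main obstacle is this scale matching: the swap errors from \Cref{lem: diff of quantiles tilted kernel} and \Cref{lem: bound on test stat tilted kernel} are both of order $o(n^{-1/2})$, which is precisely the $n^{-1/2}$ rate at which $D(\X_n^\ast)$ fluctuates around its limit quantile. Establishing the vanishing anti-concentration therefore hinges on continuity of the weighted chi-squared limit and on joint distributional convergence of the statistic with the bootstrap quantile, plus the preliminary localization of the random $\gamma_n^\circ$ before invoking the pointwise-in-$\gamma$ bound of \Cref{lem: bound on test stat tilted kernel}. The case $\theta > 0$ is easier since $D(\X_n^\ast) \parrow 0$ while $\gamma_n^\ast + \theta \geq \theta$ for all large $n$, so $\omega(m')$ is $o(1)$ trivially.
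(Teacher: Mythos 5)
Your proposal is correct and follows essentially the same route as the paper: the identical $T_1+T_2$ decomposition (swap the data at fixed threshold, then swap the threshold), the same two key lemmas (\Cref{lem: bound on test stat tilted kernel} for the statistic and \Cref{lem: diff of quantiles tilted kernel} for the quantile), and the same reduction to an anti-concentration bound at scale $o(n^{-1/2})$ resolved via the degenerate V-statistic CLT for $\sqrt{n}\,D(\X_n^\ast)$. The only cosmetic difference is that you localize the random threshold $\gamma_n^\circ$ around the deterministic limit quantile, whereas the paper localizes it around $q_{\infty,1-\alpha}(\X_n^\ast)+\theta$ on the high-probability event from \Cref{lem: diff of quantiles tilted kernel}; both resolve the same subtlety in the same way.
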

\begin{proposition}
\label{prop: qualitative robust tilted general}
    Assume $\E_{\bX \sim P}[\| \bs_p(\bX) \|_2] < \infty$ and that $k$ is a tilted kernel satisfying the conditions in \Cref{lem: bounded Stein kernel}. If $\theta \geq 0$ and $\epsilon_n = o(n^{-1/2})$, then as $n \to \infty$,
    \begin{align*}
        \sup_{Q \in \cP(P; \epsilon_n)} \big| \Pr\nolimits_{\X_n \sim Q, \bW}\big( \Delta_\theta(\X_n) > q_{\infty, 1-\alpha}(\X_n) \big) - \Pr\nolimits_{\X_n^\ast \sim P, \bW}\big( \Delta_\theta(\X_n^\ast) > q_{\infty, 1-\alpha}(\X_n^\ast) \big) \big|
        \;\to\;
        0
        \;,
    \end{align*}
    where $\cP(P; \epsilon_n)$ is the Huber's contamination model defined in \eqref{eq: Huber model}.
\end{proposition}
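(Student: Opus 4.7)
The strategy is to reduce the claim to \Cref{lem: robustness tilted conditional} via a conditioning argument on the number of corrupted samples, exactly as in the proof of \Cref{thm: non robust stationary}. Given $Q = (1-\epsilon')P + \epsilon' R$ with $\epsilon' \in [0, \epsilon_n]$ and $R \in \cP(\R^d)$, I would represent $\bX_i \sim Q$ as $\bX_i = (1-\xi_i)\bX_i^\ast + \xi_i \bZ_i$ with $\xi_i \sim \mathrm{Bernoulli}(\epsilon')$, $\bX_i^\ast \sim P$, $\bZ_i \sim R$, all mutually independent, and define $M' = \sum_{i=1}^n \xi_i \sim \mathrm{Binomial}(n, \epsilon')$, the (random) number of corrupted observations. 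Conditional on $M' = m'$ and on the corrupted values $\Z_{m'} \coloneqq \{\bZ_i : \xi_i = 1\}$, the sample $\X_n$ is exactly of the form $\X_{n-m'}^\ast \cup \Z_{m'}$, so \Cref{lem: robustness tilted conditional} controls the conditional contribution to the difference of rejection probabilities, \emph{uniformly} in the choice of $\Z_{m'}$ since the bound $\omega(m')$ already accounts for the worst case.

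\paragraph{Choice of cutoff.}
Since $\epsilon_n = o(n^{-1/2})$, I would pick a cutoff $f_n \coloneqq n^{3/4}\epsilon_n^{1/2}$, so that $f_n / n^{1/2} = n^{1/4}\epsilon_n^{1/2} = o(1)$ (hence $f_n = o(n^{1/2})$, putting us in the regime of \Cref{lem: robustness tilted conditional}) while simultaneously $n\epsilon_n / f_n = n^{1/4}\epsilon_n^{1/2} = o(1)$. Then by Markov's inequality, uniformly in $\epsilon' \in [0, \epsilon_n]$,
\begin{align*}
    \Pr\nolimits_{M'}(M' > f_n) \;\leq\; \frac{n\epsilon'}{f_n} \;\leq\; \frac{n\epsilon_n}{f_n} \;\to\; 0
    \;.
\end{align*}

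\paragraph{Assembling the bound.}
Fix $\delta > 0$. By \Cref{lem: robustness tilted conditional} applied to the sequence $f_n$, there exists $n_0$ such that $\omega(m') < \delta$ for all $n \geq n_0$ and all integers $m' \leq f_n$, where $\omega(m')$ bounds the worst-case over the locations of the corrupted points. For any $Q \in \cP(P; \epsilon_n)$ and $n \geq n_0$, conditioning on $M'$ and on $\Z_{m'}$, and using the law of total probability, gives
\begin{align*}
    &\big| \Pr\nolimits_{\X_n \sim Q, \bW}\big( \Delta_\theta(\X_n) > q_{\infty, 1-\alpha}(\X_n) \big) - \Pr\nolimits_{\X_n^\ast \sim P, \bW}\big( \Delta_\theta(\X_n^\ast) > q_{\infty, 1-\alpha}(\X_n^\ast) \big) \big| \\
    &\;\leq\; \sum_{m' \leq f_n} \omega(m')\, \Pr(M' = m') + 2\Pr(M' > f_n) \;\leq\; \delta + \frac{2 n\epsilon_n}{f_n}
    \;.
\end{align*}
Taking the sup over $Q \in \cP(P; \epsilon_n)$ (which only enters through the choice of $\epsilon' \leq \epsilon_n$ and of $R$, both already absorbed by the above uniform bounds) and letting $n \to \infty$ yields a $\limsup$ of at most $\delta$; since $\delta > 0$ was arbitrary, the claim follows. \Cref{thm: robust tilted} then follows as the special case $\theta = 0$, noting that $\Delta_0(\X_n) = D(\X_n)$ and that comparing $D^2$ against $q^2_{\infty,1-\alpha}$ is equivalent to comparing $D$ against $q_{\infty,1-\alpha}$ since both are non-negative.

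\paragraph{Main obstacle.}
The substantive work has already been done in \Cref{lem: diff of quantiles tilted kernel} and \Cref{lem: bound on test stat tilted kernel} (and their combination in \Cref{lem: robustness tilted conditional}); the remaining difficulty here is mostly bookkeeping around the randomness of $M'$. The only delicate point is ensuring uniformity in the contamination distribution $R$: this is fine because the conditional bound $\omega(m')$ in \Cref{lem: robustness tilted conditional} is stated in terms of \emph{arbitrary} deterministic contamination points $\Z_{m'}$, so integrating over $\Z_{m'} \sim R^{\otimes m'}$ preserves the bound. Matching the rate $\epsilon_n = o(n^{-1/2})$ to the rate $f_n = o(n^{1/2})$ required by the conditional lemma via the intermediate scale $f_n = n^{3/4}\epsilon_n^{1/2}$ is precisely what makes both halves of the split (the conditional control and the Binomial tail) vanish.
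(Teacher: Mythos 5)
Your proof is correct and follows essentially the same route as the paper's: condition on the Binomial number $M'$ of corrupted points, control the bulk $\{M' \le f_n\}$ uniformly via \Cref{lem: robustness tilted conditional}, and kill the tail. The only (cosmetic) differences are that you choose the explicit cutoff $f_n = n^{3/4}\epsilon_n^{1/2}$ and use Markov's inequality for the Binomial tail, whereas the paper picks $f_n$ abstractly with $\epsilon_n n \le f_n = o(n^{1/2})$ and uses a Chernoff-type bound — both suffice since only $\Pr(M' > f_n) \to 0$ is needed.
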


\subsubsection{Proof of \Cref{lem: diff of quantiles tilted kernel}}
\label{pf: diff of quantiles tilted kernel}

\begin{proof}
Without loss of generality, we assume $\X_n$ and $\X_n^\ast$ are ordered so that $\bx_i = \bx_i^\ast$ for $i = 1, \ldots, m$. We also define the $n \times n$ matrices $U \coloneqq (u_{ij})_{1 \leq i, j \leq n}$ and $U' \coloneqq (u_{ij}')_{1 \leq i, j \leq n}$, where $u_{ij} \coloneqq u_p(\bx_i, \bx_j)$ and $u_{ij}' \coloneqq u_p(\bx_i^\ast, \bx_j^\ast)$. Since $u_p$ is a reproducing kernel, $U$ and $U'$ are Gram matrices, thus symmetric and positive semi-definite. Furthermore, we define $W_i^0 \coloneqq W_i - 1$ for any $i = 1, \ldots, n$, and define the vectors $\bV_1 \coloneqq (W_1^0, \ldots, W_m^0, 0, \ldots, 0)^\top$ and $\bV_2 \coloneqq (0, \ldots, 0, W_{m+1}^0, \ldots, W_n^0)^\top$, so that $\bW^0 = \bW - 1 = \bV_1 + \bV_2$. It follows that the bootstrap sample $D^2_\bW(\X_n)$ defined in \eqref{eq: bootstrap sample} can be decomposed as
\begin{align*}
    D^2_{\bW}(\X_n)
    \;=\;
    \frac{1}{n^2} \sum_{1 \leq i, j \leq n} W_i^0 W_j^0 u_{ij}
    \;&=\;
    \frac{1}{n^2} (\bW^0)^\top U \bW^0
    \\
    \;&=\;
    \frac{1}{n^2} \left( \bV_1^\top U \bV_1 + \bV_2^\top U \bV_2 + 2\bV_1^\top U \bV_2 \right)
    \;.
\end{align*}
Similarly, we can decompose $D^2_{\bW}(\X_n^\ast)$ as
\begin{align*}
    D^2_{\bW}(\X_n^\ast)
    \;=\;
    \frac{1}{n^2} \Big( \bV_1^\top U' \bV_1 + \bV_2^\top U' \bV_2 + 2\bV_1^\top U' \bV_2 \Big)
    \;.    
\end{align*}
Since $\bx_i = \bx_i^\ast$ for all $i \leq m$ by construction, we have $u_{ij} = u_{ij}'$ for $1 \leq i, j \leq m$, so $\bV_1^\top U \bV_1 = \sum_{1 \leq i, j \leq m} W_i^0 W_j^0 u_{ij} = \sum_{1 \leq i, j \leq m} W_i^0 W_j^0 u_{ij}' = \bV_1^\top U' \bV_1$, and we can bound the following difference as
\begin{align*}  
    &
    n^2\big| D^2_{\bW}(\X_n) - D^2_{\bW}(\X_n^\ast) \big|
    \\
    \;&=\;
    \big| \bV_2^\top U \bV_2 + 2\bV_1^\top U \bV_2 - \bV_2^\top U' \bV_2 - 2\bV_1^\top U' \bV_2 \big|
    \\
    \;&\leq\;
    | \bV_2^\top U \bV_2 | + 2 | \bV_1^\top U \bV_2 | + | \bV_2^\top U' \bV_2 | + 2 | \bV_1^\top U' \bV_2 |
    \\
    \;&\leq\;
    | \bV_2^\top U \bV_2 | + 2| \bV_1^\top U \bV_1 |^{\frac{1}{2}} | \bV_2^\top U \bV_2 |^{\frac{1}{2}}
    + |\bV_2^\top U' \bV_2 | + 2| \bV_1^\top U' \bV_1 |^{\frac{1}{2}} | \bV_2^\top U' \bV_2 |^{\frac{1}{2}}
    \tagaligneq \label{eq: boot bound diff}    
    \;,
\end{align*}
where the last line follows from Cauchy-Schwarz inequality applied to the $U$-weighted inner product $(\bx, \bx') \mapsto \bx^\top U \bx'$ and the $U'$-weighted inner product $(\bx, \bx') \mapsto \bx^\top U' \bx'$, which are well-defined since $U, U'$ are positive semi-definite. The proof proceeds by bounding each term separately. 

We discuss how to bound $| \bV_1 U \bV_1|$ and $|\bV_2 U \bV_2|$, and the argument for $U'$ is identical. We first define the $n \times n$ matrix with $(A_1)_{ij} = u_{ij} $ for $i, j \leq m$ and $(A_1)_{ij} = 0$, so that we can write $\bV_1^\top U \bV_1 = \bW^\top A_1 \bW$. We can apply the Hanson-Wright lemma (\Cref{lem: hanson-wright inequality}) to conclude that there exists positive constants $C$ such that, for any $\delta > 0$ and any almost-sure sequence $\X_n$, the following event occurs with probability at least $1 - \delta / 4$,
\begin{align*}
    | \bV_1^\top A \bV_1 |
    \;=\;
    | \bW^\top A_1 \bW |
    \;&\leq\;
    \left| - \frac{1}{n} \sum_{1 \leq i, j \leq m} u_{ij} + \sum_{1 \leq i \leq m} u_{ii} \right|
    + C \log\left(\frac{8}{\delta}\right) \left( \sum_{1 \leq i, j \leq m} u_{ij}^2 \right)^{\frac{1}{2}}
    \\
    \;&\leq\;
    \frac{1}{n} \sum_{1 \leq i, j \leq m} |u_{ij}| + \sum_{1 \leq i \leq m} |u_{ii}|
    + C \log\left(\frac{8}{\delta}\right) \left( \sum_{1 \leq i, j \leq m} u_{ij}^2 \right)^{\frac{1}{2}}
    \\
    \;&\leq\;
    \frac{m^2}{n} \tau_\infty + m \tau_\infty + C m \tau_\infty \log\left(\frac{8}{\delta}\right)
    \\
    \;&\leq\;
    2 m \tau_\infty + C m \tau_\infty \log\left(\frac{8}{\delta}\right)
    \tagaligneq \label{eq: boot bound term 1}
    \;,
\end{align*}
where the second last inequality holds since $| u_{ij} | \leq \sup_{\bx, \bx' \in \mathbb{R}^d} | u_p(\bx, \bx') | \leq \tau_\infty$, and the last line holds since $m \leq n$. Similarly, defining the matrix $A_2$ with $(A_2)_{ij} = 0$ for $i, j \leq m$ and $(A_2)_{ij} = u_{ij}$ otherwise, we can write $\bV_2^\top U \bV_2 = \bW^\top A_2 \bW$, and the same argument as before shows that the following holds with probability at least $1 - \delta / 4$,
\begin{align*}
    | \bv_2^\top A \bv_2 |
    \;=\;
    | \bW^\top A_2 \bW |
    \;&\leq\;
    \left| - \frac{1}{n} \sum_{m < i, j \leq n} u_{ij} + \sum_{m < i \leq n} u_{ii} \right|
    + C \log\left(\frac{8}{\delta}\right) \left( \sum_{m < i, j \leq n} u_{ij}^2 \right)^{\frac{1}{2}}
    \\
    \;&\leq\;
    \frac{(n-m)^2 \tau_\infty}{n} + (n-m) \tau_\infty + C (n-m) \tau_\infty \log\left(\frac{8}{\delta}\right)
    \\
    \;&\leq\;
    2(n-m) \tau_\infty + C (n-m) \tau_\infty \log\left(\frac{8}{\delta}\right)
    \tagaligneq \label{eq: boot bound term 2}
    \;,
\end{align*}
Combining \eqref{eq: boot bound term 1} and \eqref{eq: boot bound term 2}, we conclude that the following holds with probability at least $1 - \delta / 2$,
\begin{align*}
    &
    | \bV_2^\top U \bV_2 | + 2| \bV_1^\top U \bV_1 |^{\frac{1}{2}} | \bV_2^\top U \bV_2 |^{\frac{1}{2}}
    \\
    \;&\leq\;
    \left( 2 (n-m) \tau_\infty + C (n-m) \tau_\infty \log\left(\frac{8}{\delta}\right) \right)
    \\ 
    &\;\quad
    + 2 \left( 2 m \tau_\infty + C m \tau_\infty \log\left(\frac{8}{\delta}\right) \right)^{\frac{1}{2}} \left( 2 (n-m) \tau_\infty + C (n-m) \tau_\infty \log\left(\frac{8}{\delta}\right) \right)^{\frac{1}{2}}
    \\
    \;&=\;
    \left( (n-m) + 2m^{\frac{1}{2}}(n-m)^{\frac{1}{2}} \right) \times \tau_\infty \left( 2 + C\log\left(\frac{8}{\delta}\right) \right)
    \;.
\end{align*}
The same argument shows that the last two terms of \eqref{eq: boot bound diff} can be bounded by the same quantity on an event with probability at least $1 - \delta / 2$. To conclude, we have shown that with probability at least $1 - \delta$,
\begin{align*}
    \left| D^2_{\bW}(\X_n) - D^2_{\bW}(\X_n^\ast) \right|
    \;&\leq\;
    \frac{2 }{n^2} \times \left( (n-m) + 2(n-m)^{\frac{1}{2}}m^{\frac{1}{2}} \right) \times \tau_\infty\left( 2 + C\log\left(\frac{8}{\delta}\right) \right)
    \\
    \;&=\;
    \frac{2}{n} \times \frac{(n-m)^{\frac{1}{2}}}{n^{\frac{1}{2}}} \left( \frac{(n-m)^{\frac{1}{2}}}{n^{\frac{1}{2}}} + \frac{2m^{\frac{1}{2}}}{n^{\frac{1}{2}}} \right) \times \tau_\infty \left( 2 + C\log\left(\frac{8}{\delta}\right) \right)
    \\
    \;&\leq\;
    \frac{6\tau_\infty \epsilon_{m,n}^{\frac{1}{2}}}{n} \left( 2 + C\log\left(\frac{8}{\delta}\right) \right)
    \;=\;
    \frac{\tau_\infty \epsilon_{m,n}^{\frac{1}{2}}}{n} \left( 12 + 6C\log\left(\frac{8}{\delta}\right) \right)
    \;,
\end{align*}
where in the second last line we have defined $\epsilon_{m,n} = (n-m)/n$ and used $n-m \leq n$ and $m \leq n$. Defining $C_1 = 12$ and $C_2 = 6C$ shows the first claim. The second claim can be shown by first noting that the above inequality implies that their $(1-\alpha)$-quantiles must satisfy 
\begin{align*}
    \left| q^2_{\infty, 1-\alpha}(\X_n) - q^2_{\infty, 1-\alpha}(\X_n^\ast) \right| 
    \;\leq\; 
    \frac{\tau_\infty \epsilon_{m,n}^{\frac{1}{2}}}{n} \left( C_1 + C_2 \log\left(\frac{8}{\delta}\right) \right)
    \;\eqqcolon\;
    \rho_{m,n,\delta}
    \;.
\end{align*}
We will argue separately for the two cases $(q_{\infty, 1-\alpha}(\X_n) + q_{\infty, 1-\alpha}(\X_n^\ast)) \geq \rho_{m,n,\delta}^{1/2}$ and $(q_{\infty, 1-\alpha}(\X_n) + q_{\infty, 1-\alpha}(\X_n^\ast)) < \rho_{m,n,\delta}^{1/2}$. In the former case, the above inequality implies 
\begin{align*}
    | q_{\infty, 1-\alpha}(\X_n) - q_{\infty, 1-\alpha}(\X_n^\ast) |
    \;&=\;
    \frac{\big| q_{\infty, 1-\alpha}(\X_n) - q_{\infty, 1-\alpha}(\X_n^\ast) \big| \times \big| q_{\infty, 1-\alpha}(\X_n) + q_{\infty, 1-\alpha}(\X_n^\ast) \big|}{q_{\infty, 1-\alpha}(\X_n) + q_{\infty, 1-\alpha}(\X_n^\ast)}
    \\
    \;&=\;
    \frac{\big| q_{\infty, 1-\alpha}^2(\X_n) - q_{\infty, 1-\alpha}^2(\X_n^\ast) \big|}{q_{\infty, 1-\alpha}(\X_n) + q_{\infty, 1-\alpha}(\X_n^\ast) }
    \\
    \;&\leq\; 
    \frac{\rho_{m,n,\delta}}{\rho_{m,n,\delta}^{\frac{1}{2}}} 
    \;=\; 
    \rho_{m,n,\delta}^{\frac{1}{2}}
    \;.
\end{align*}
When instead $(q_{\infty, 1-\alpha}(\X_n) + q_{\infty, 1-\alpha}(\X_n^\ast)) < \rho_{m,n,\delta}^{1/2}$, we have
\begin{align*}
    | q_{\infty, 1-\alpha}(\X_n) - q_{\infty, 1-\alpha}(\X_n^\ast) |
    \;&\leq\;
    \max(q_{\infty, 1-\alpha}(\X_n), \; q_{\infty, 1-\alpha}(\X_n^\ast))
    \\
    \;&\leq\;
    q_{\infty, 1-\alpha}(\X_n) + q_{\infty, 1-\alpha}(\X_n^\ast)
    \\
    \;&\leq\;
    \rho_{m,n,\delta}^{\frac{1}{2}} 
    \;,
\end{align*}
which combined with the previous inequality shows the second claim.
\end{proof}

\subsubsection{Proof of \Cref{lem: bound on test stat tilted kernel}}
\label{pf: bound on test stat tilted kernel}

\begin{proof}
Pick $\{\bz_i\}_{i=1}^{m'} \subset \R^d$, and define $\Z_{m'} \coloneqq \{\bz_i\}_{i=1}^{m'}$. Decomposing the test statistic $D^2(\X_m^\ast \cup \Z_{m'})$ using a similar approach as in \eqref{eq: KSD U-stat decomposition}, we have
\begin{align*}
    D^2(\X_m^\ast \cup \Z_{m'})
    \;&=\;
    \frac{m^2}{n^2} D^2(\X_m^\ast) + \frac{2}{n^2} S_{m} + \frac{1}{n^2} \sum_{m < i, j \leq n} u_p(\bz_i, \bz_j)
    \tagaligneq \label{eq: test stat decomposition}
    \;,
\end{align*}
where $S_{m} \coloneqq \sum_{i=1}^m T_i$ and $T_i \coloneqq \sum_{j=m+1}^n u_p(\bX_i^\ast, \bz_j)$. Similarly, the test statistic computed using the pure sample $\X_n^\ast$ can be written as 
\begin{align*}
    D^2(\X_n^\ast)
    \;&=\;
    \frac{m^2}{n^2} D^2(\X_m^\ast) + \frac{2}{n^2} S_{m}^\ast + \frac{1}{n^2} \sum_{m < i, j \leq n} u_p(\bX_i^\ast, \bX_j^\ast)
    \;,
\end{align*}
where $S_{m}^\ast \coloneqq \sum_{i=1}^m T_i^\ast$ and $T_i^\ast \coloneqq \sum_{j = m+1}^n u_p(\bX_i^\ast, \bX_j^\ast)$. Using a triangle inequality to bound their difference,
\begin{align*}
    n^2\big| D^2(\X_m^\ast \cup \Z_{m'}) - D^2(\X_n^\ast) \big|
    \;&=\;
    \bigg| 2S_m + \sum_{m < i, j \leq n} u_p(\bz_i, \bz_j) - 2S_m^\ast - \sum_{m < i, j \leq n} u_p(\bX_i, \bX_j) \bigg| 
    \\
    \;&\leq\;
    2|S_m| + 2|S_m^\ast| + \sum_{m < i, j \leq n} |u_p(\bz_i, \bz_j) | + | u_p(\bX_i, \bX_j) |
    \\
    \;&\leq\;
    2|S_m| + 2|S_m^\ast| + 2(n-m)^2 \tau_\infty
    \tagaligneq \label{eq: diff vstat bound}
    \;,
\end{align*}
where the last line holds since $\sup_{\bx, \bx' \in \mathbb{R}^d} u_p(\bx, \bx') \leq \tau_\infty$ under the assumed kernel conditions due to \Cref{lem: bounded Stein kernel}. We then bound $|S_m|$ and $|S_m^\ast|$ with high probability. When $\E_{\bX^\ast \sim P}[\| \bs_p(\bX^\ast) \|_2] < \infty$, we have $\E_{\bX^\ast \sim P}[ u_p(\bX^\ast, \cdot) ] = 0$ as argued in the paragraph before \eqref{eq: var bound}, so it is straightforward to see that $\E_{\bX_i^\ast \sim P}[T_i] = 0$ for all $i$, and thus $S_m$ is a sum of zero-mean i.i.d.\ random variables $T_i$. Therefore,
\begin{align*}
    \Var(S_m^2)
    \;&=\;
    m \E_{\bX_1^\ast \sim P}\big[T_1^2\big]
    \;=\;
    m \sum_{m < j, l \leq n} \E_{\bX_1^\ast \sim P}[ u_p(\bX_1^\ast, \bz_j) u_p(\bX_1^\ast, \bz_l) ]
    \;\leq\;
    m (n-m)^2 \tau_\infty^2
    \;,
\end{align*}
where the last step holds again since $\sup_{\bx, \bx' \in \mathbb{R}^d} u_p(\bx, \bx') \leq \tau_\infty$. For any $\delta > 0$, Chebyshev's inequality implies that the event $\cA_1 \coloneqq \{ | S_m |  \leq \sqrt{m}(n-m) \tau_\infty / \sqrt{\delta/2} \}$ occurs with probability at least $1 - \delta/2$. In other words, $|S_m|$ can be upper bounded on the high-probability event $\cA_1$. A similar argument applied to $S_m^\ast$ shows that the event $\cA_2 \coloneqq \big\{ |S_m^\ast| \leq  \sqrt{m} (n-m) \tau_\infty / \sqrt{\delta/2} \}$ also occurs with probability at least $1 - \delta/2$. On $\cA \coloneqq \cA_1 \cap \cA_2$, which occurs with probability at least $1-\delta$, we have by \eqref{eq: diff vstat bound} that
\begin{align*}
    \big| D^2(\X_m^\ast \cup \Z_{m'}) - D^2(\X_n^\ast) \big|
    \;&\leq\;
    \frac{4 \sqrt{m} (n-m) \tau_\infty}{n^2 \sqrt{\delta / 2}} + \frac{2(n-m)^2\tau_\infty}{n^2}
    \\
    \;&\leq\;
    \frac{4\sqrt{2}(n-m)\tau_\infty}{n^{3/2}\sqrt{\delta}} + \frac{2(n-m)^2\tau_\infty}{n^2}
    \\
    \;&=\;
    4\sqrt{2} \epsilon_{m,n} \frac{\tau_\infty}{\sqrt{\delta n}} + 2 \tau_\infty \epsilon_{m,n}^2
    \;\eqqcolon\;
    t_{m, n, \delta}^2
    \tagaligneq \label{eq: UB on D_n on A}
    \;.
\end{align*}
where the second step holds since $\sqrt{m} \leq \sqrt{n}$, and in the last line we have substituted $\epsilon_{m,n} = (n-m) / n$. We claim that this implies that the following holds on $\cA$
\begin{align}
    \big| D(\X_m^\ast \cup \Z_{m'}) - D(\X_n^\ast) \big| \;\leq\; t_{m,n,\delta} \;.
    \label{eq: bound on non-sq D}
\end{align}
To see this, we first assume $D(\X_m^\ast \cup \Z_{m'}) + D(\X_n^\ast) \geq t_{m,n,\delta}$. In this case, \eqref{eq: UB on D_n on A} implies
\begin{align*}
    \big| D(\X_m^\ast \cup \Z_{m'}) - D(\X_n^\ast) \big|
    \;&=\;
    \frac{\big| D(\X_m^\ast \cup \Z_{m'}) - D(\X_n^\ast)\big| \times  \big| D(\X_n) + D(\X_n^\ast) \big|}{D(\X_m^\ast \cup \Z_{m'}) + D(\X_n^\ast)}
    \\
    \;&=\;
    \frac{D^2(\X_m^\ast \cup \Z_{m'}) - D^2(\X_n^\ast)}{D(\X_m^\ast \cup \Z_{m'}) + D(\X_n^\ast)}
    \\
    \;&\leq\;
    t_{m,n,\delta}
    \;.
\end{align*}
On the other hand, when $D(\X_m^\ast \cup \Z_{m'}) + D(\X_n^\ast) < t_{m,n,\delta}$, we have
\begin{align*}
    \big| D(\X_m^\ast \cup \Z_{m'}) - D(\X_n^\ast) \big|
    \;\leq\;
    \max\big(D(\X_m^\ast \cup \Z_{m'}), \; D(\X_n^\ast) \big)
    \;\leq\;
    D(\X_m^\ast \cup \Z_{m'}) + D(\X_n^\ast)
    \;<\;
    t_{m,n,\delta}
    \;,
\end{align*}
where the first inequality holds since $D(\X_m^\ast \cup \Z_{m'}), D(\X_n^\ast)$ are non-negative. Combining these two cases shows \eqref{eq: bound on non-sq D}. It then follows that, for any $\gamma > 0$,
\begin{align*}
    &
    \Pr\nolimits_{\X_n^\ast \sim P}\big( D(\X_m^\ast \cup \Z_{m'}) > \gamma \big)
    \\
    \;&=\;
    \Pr\nolimits_{\X_n^\ast \sim P}\big( \{D(\X_m^\ast \cup \Z_{m'}) > \gamma \} \cap \cA \big) + \Pr\nolimits_{\X_n^\ast \sim P}\big( \{ D(\X_m^\ast \cup \Z_{m'}) > \gamma \} \cap \cA^\complement \big)
    \\
    \;&\leq\;
    \Pr\nolimits_{\X_n^\ast \sim P}\big( D(\X_n^\ast) + t_{m,n,\delta} > \gamma \big) + \delta
    \\
    \;&=\;
    \Pr\nolimits_{\X_n^\ast \sim P}\big( D(\X_n^\ast) > \gamma - t_{m,n,\delta} \big) + \delta
    \;,
\end{align*}
where the first step follows from the law of total probability, and the second line holds since $\Pr\nolimits_{\X_n^\ast \sim P}\big( \{ D(\X_m^\ast \cup \Z_{m'}) > \gamma \} \cap \cA^\complement \big) \leq \Pr(\cA^\complement) \leq \delta$. 
This implies one side of the desired inequality
\begin{align*}
    &
    \Pr\nolimits_{\X_n^\ast \sim P}\big( D(\X_m^\ast \cup \Z_{m'}) > \gamma \big) - \Pr\nolimits_{\X_n^\ast \sim P}\big( D(\X_n^\ast) > \gamma \big)
    \\
    \;&\leq\;
    \Pr\nolimits_{\X_n^\ast \sim P}\big( D(\X_n^\ast) > \gamma - t_{m,n,\delta} \big) - \Pr\nolimits_{\X_n^\ast \sim P}\big( D(\X_n^\ast) > \gamma \big) + \delta
    \\
    \;&=\;
    \Pr\nolimits_{\X_n^\ast \sim P}\big( \gamma - t_{m,n,\delta} \;\leq\; D(\X_n^\ast) \;\leq\; \gamma \big) + \delta
    \\
    \;&\leq\;
    \Pr\nolimits_{\X_n^\ast \sim P}\big( \big| D(\X_n^\ast) - \gamma \big| \;\leq\; t_{m,n,\delta}\big) + \delta
    \;,
\end{align*}
where in the last line we have used 
\begin{align*}
    \Pr\nolimits_{\X_n^\ast \sim P}\big( \gamma - t_{m,n,\delta} \;\leq\; D(\X_n^\ast) \;\leq\; \gamma \big) 
    \;&\leq\;
    \Pr\nolimits_{\X_n^\ast \sim P}\big( \gamma - t_{m,n,\delta} \;\leq\; D(\X_n^\ast) \;\leq\; \gamma + t_{m,n,\delta} \big) 
    \\
    \;&=\;
    \Pr\nolimits_{\X_n^\ast \sim P}\big( \big| D(\X_n^\ast) - \gamma \big| \;\leq\; t_{m,n,\delta}\big)
    \;.
\end{align*}
A similar argument shows the other direction
\begin{align*}
    &
    \Pr\nolimits_{\X_n^\ast \sim P}\big( D(\X_m^\ast \cup \Z_{m'}) > \gamma \big) - \Pr\nolimits_{\X_n^\ast \sim P}\big( D(\X_n^\ast) > \gamma \big)
    \\
    \;&\geq\;
    \Pr\nolimits_{\X_n^\ast \sim P}\big( \{ D(\X_m^\ast \cup \Z_{m'}) > \gamma \} \cap \cA \big)
    - \Pr\nolimits_{\X_n^\ast \sim P}\big( D(\X_n^\ast) > \gamma \big)
    \\
    \;&\geq\;
    \Pr\nolimits_{\X_n^\ast \sim P}\big( D(\X_n^\ast) > \gamma +  t_{m,n,\delta} \big)
    - \Pr\nolimits_{\X_n^\ast \sim P}\big( D(\X_n^\ast) > \gamma \big)
    \\
    \;&=\;
    - \Pr\nolimits_{\X_n^\ast \sim P}\big( \gamma \leq D(\X_n^\ast) \leq \gamma +  t_{m,n,\delta} \big)
    \\
    \;&\geq\;
    - \Pr\nolimits_{\X_n^\ast \sim P}\big( \gamma - t_{m,n,\delta} \leq D(\X_n^\ast) \leq \gamma +  t_{m,n,\delta} \big)
    \\
    \;&=\;
    - \Pr\nolimits_{\X_n^\ast \sim P}\big( \big| D(\X_n^\ast) - \gamma \big| \;\leq\; t_{m,n,\delta} \big)
    \;,
\end{align*}
where the first step holds by the law of total probability, and the second step follows from \eqref{eq: bound on non-sq D}. 
\end{proof}

\subsubsection{Proof of \Cref{lem: robustness tilted conditional}}
\label{pf: robustness tilted conditional}

\begin{proof}
Fix any $\theta, \gamma \geq 0$. Let $m', n$ be any positive integers with $m' \leq f_n$, where $f_n = o(n^{1/2})$. Define $m = n - m'$. Pick $Z_{m'} = \{ \bz_i\}_{i=1}^{m'} \subset \R^d$, and denote the LHS of the inequality by
\begin{align*}
    T(\bz_1, \ldots, \bz_m')
    \;&\coloneqq\;
    \big| \Pr\nolimits_{\X_m^\ast \sim P, \bW}\big( \Delta_\theta(\X_m^\ast \cup \Z_{m'}) > q_{\infty, 1-\alpha}(\X_m^\ast \cup \Z_{m'}) \big) 
    \\
    &\;\qquad
    - \Pr\nolimits_{\X_n^\ast \sim P, \bW}\big( \Delta_\theta(\X_n^\ast) > q_{\infty, 1-\alpha}(\X_n^\ast) \big) \big|
    \;.
\end{align*}
The LHS of the inequality can be bounded as
\begin{align*}
    T(\bz_1, \ldots, \bz_m')
    \;&=\;
    \big| \Pr\nolimits_{\X_m^\ast \sim P, \bW}\big( D(\X_m^\ast \cup \Z_{m'}) > q_{\infty, 1-\alpha}(\X_m^\ast \cup \Z_{m'}) + \theta \big) 
    \\ 
    &\;\qquad
    - \Pr\nolimits_{\X_n^\ast \sim P, \bW}\big( D(\X_n^\ast) > q_{\infty, 1-\alpha}(\X_n^\ast) + \theta \big) \big|
    \\
    \;&\leq\;
    \Big| \Pr\nolimits_{\X_m^\ast \sim P, \bW}\big( D(\X_m^\ast \cup \Z_{m'}) > q_{\infty, 1-\alpha}(\X_m^\ast \cup \Z_{m'}) + \theta \big) 
    \\ 
    &\;\qquad
    - \Pr\nolimits_{\X_n^\ast \sim P, \bW}\big( D(\X_n^\ast) > q_{\infty, 1-\alpha}(\X_m^\ast \cup \Z_{m'}) + \theta \big) \Big|
    \\
    &\;\quad + \Big| \Pr\nolimits_{\X_m^\ast \sim P, \bW}\big( D(\X_n^\ast) > q_{\infty, 1-\alpha}(\X_m^\ast \cup \Z_{m'}) + \theta \big) 
    \\
    &\;\qquad\;\;\;
    - \Pr\nolimits_{\X_n^\ast \sim P, \bW}\big( D(\X_n^\ast) > q_{\infty, 1-\alpha}(\X_n^\ast)+ \theta \big) \Big|
    \\
    \;&\eqqcolon\;
    T_1 + T_2
    \tagaligneq \label{eq: bound diff probs T1 T2}
    \;,
\end{align*}
where the first equality holds since, for any $\gamma \geq 0$, it can be checked that the inequality $\Delta_\theta = \max(0, D(\X_n) - \theta) > \gamma$ holds if and only if $D(\X_n) < \gamma + \theta$. We will now bound the terms $T_1$ and $T_2$, respectively. Denote for brevity $\gamma_\bz \coloneqq q_{\infty, 1-\alpha}(\X_m^\ast \cup \Z_{m'}) + \theta$ and $\gamma_\ast \coloneqq q_{\infty, 1-\alpha}(\X_n^\ast) + \theta$. Fix any $\delta > 0$. Applying \Cref{lem: bound on test stat tilted kernel} with $\gamma = \gamma_\bz$ yields
\begin{align}
    T_1
    \;&\leq\;
    \Pr\nolimits_{\X_n^\ast \sim P, \bW}\big(  \big| D(\X_n^\ast) - \gamma_\bz \big| \;\leq\; t_{m,n,\delta} \big) + \delta
    \nonumber \\
    \;&\leq\;
    \Pr\nolimits_{\X_n^\ast \sim P, \bW}\big(  \big| D(\X_n^\ast) - \gamma_\ast \big| \;\leq\; t_{m,n,\delta} + | \gamma_\bz - \gamma_\ast | \big) + \delta
    \label{eq: bound on T1 proof of robustness}
    \;,
\end{align}
where $t_{m,n,\delta} \coloneqq \big(4\sqrt{2} \tau_\infty \epsilon_{m, n} / \sqrt{\delta n} + 2 \tau_\infty \epsilon_{m, n}^2 \big)^{1/2}$ is defined in \Cref{lem: bound on test stat tilted kernel}, and \eqref{eq: bound on T1 proof of robustness} follows from a triangle inequality. Moreover, by \Cref{lem: diff of quantiles tilted kernel}, there exists an event, say $\cA$, with probability at least $1-\delta$ such that, on $\cA$,
\begin{align*}
   | \gamma_\bz - \gamma_\ast |
    \;=\;
    \big| q_{\infty, 1-\alpha}(\X_m^\ast \cup \Z_{m'}) - q_{\infty, 1-\alpha}(\X_n^\ast) \big| 
     \; \leq\; 
    \frac{\tau_\infty^{\frac{1}{2}} \epsilon_{m,n}^{\frac{1}{4}}}{n^{\frac{1}{2}}} \sqrt{ C_1 + C_2 \log\left(\frac{8}{\delta}\right) }
    \;\eqqcolon\;
    \rho_{m,n,\delta}
    \;.
\end{align*}
Combining this with \eqref{eq: bound on T1 proof of robustness} implies that $T_1$ can be bounded as 
\begin{align*}
    T_1
    \;&\leq\;
    \Pr\nolimits_{\X_n^\ast \sim P, \bW}\big( \big\{ \big| D(\X_n^\ast) - \gamma_\ast \big| \;\leq\; t_{m,n,\delta} + | \gamma_\bz - \gamma_\ast | \big\} \cap \cA \big) + \Pr(\cA^\complement) + \delta
    \\
    \;&\leq\;
    \Pr\nolimits_{\X_n^\ast \sim P, \bW}\big( \big| D(\X_n^\ast) - \gamma_\ast \big| \;\leq\; t_{m,n,\delta} + \rho_{m,n,\delta} \big) + 2\delta
    \tagaligneq \label{eq: bound on T1 proof of robustness 2}
    \;,
\end{align*}
where the second line holds since $\Pr(\cA^\complement) \leq \delta$.
To bound $T_2$, we first note that
\begin{align*}
    T_2
    \;&=\;
    \big| \Pr\nolimits_{\bX_n^\ast \sim P}\big( D(\X_n^\ast) > \gamma_\bz \big)
    - \Pr\nolimits_{\bX_n^\ast \sim P}\big( D(\X_n^\ast) > \gamma_\ast \big) \big|
    \\
    \;&=\;
    \Pr\nolimits_{\bX_n^\ast \sim P}\big( \min(\gamma_\bz,\; \gamma_\ast) \;<\; D(\X_n^\ast) \;\leq\; \max(\gamma_\bz,\; \gamma_\ast)  \big)
    \\
    \;&\leq\;
    \Pr\nolimits_{\bX_n^\ast \sim P}\big( \big\{ \min(\gamma_\bz,\; \gamma_\ast) \;<\; D(\X_n^\ast) \;\leq\; \max(\gamma_\bz,\; \gamma_\ast) \big\} \cap \cA  \big) + \Pr(\cA^\complement)
    \\
    \;&\leq\;
    \Pr\nolimits_{\bX_n^\ast \sim P}\big( \big| D(\X_n^\ast) - \gamma_\ast \big| \;\leq\; \rho_{m,n,\delta} \big) + \delta
    \tagaligneq \label{eq: bound on T2 proof of robustness}
    \;,
\end{align*}
where the second line holds since $|\Pr(Y > a) - \Pr(Y > b)| = \Pr( \min(a,b) < Y \leq \max(a, b))$ for any constants $a, b$ and random variable $Y$, and the last line holds since $\Pr(\cA^\complement) \leq \delta$ and since on $\cA$ we have
\begin{align*}
    \gamma_\ast - \rho_{m,n,\delta}
    \;\leq\;
    \gamma_\ast - \big| \gamma_\bz - \gamma_\ast \big|
    \;\leq\;
    \min(\gamma_\bz,\; \gamma_\ast)
    \;\leq\;
    \max(\gamma_\bz,\; \gamma_\ast)
    \;&\leq\;
    \gamma_\ast + \big| \gamma_\bz - \gamma_\ast \big|
    \\
    \;&\leq\;
    \gamma_\ast + \rho_{m,n,\delta}
    \;.
\end{align*}
Substituting the bounds \eqref{eq: bound on T1 proof of robustness 2} and \eqref{eq: bound on T2 proof of robustness} into \eqref{eq: bound diff probs T1 T2} gives
\begin{align*}
    T(\bz_1, \ldots, \bz_m')
    \;&\leq\;
    \Pr\nolimits_{\X_n^\ast \sim P, \bW}\big( \big| D(\X_n^\ast) - \gamma_\ast \big| \;\leq\; t_{m,n,\delta} + \rho_{m,n,\delta} \big)
    \\
    &\;\quad
    + \Pr\nolimits_{\X_n^\ast \sim P, \bW}\big( \big| D(\X_n^\ast) - \gamma_\ast \big| \;\leq\; \rho_{m,n,\delta} \big)
    + 3\delta
    \\
    \;&\leq\;
    2 \Pr\nolimits_{\X_n^\ast \sim P, \bW}\big( \big| D(\X_n^\ast) - \gamma_\ast \big| \;\leq\; t_{m,n,\delta} + \rho_{m,n,\delta} \big)
    + 3\delta
    \tagaligneq \label{eq: bound on diff in rej probs}
    \;,
\end{align*}
where the last inequality holds since $t_{m,n,\delta} \geq 0$. Since $m' \leq f_n$ and $f_n = o(n^{1/2})$ by assumption, we have $\epsilon_{m,n} = m' / n \leq f_n / n = o(n^{-1/2})$. This implies
\begin{align*}
    t_{m, n, \delta} \;=\; \left(4\sqrt{2} \tau_\infty \delta^{-\frac{1}{2}} \epsilon_{m, n} n^{-\frac{1}{2}} + 2 \tau_\infty \epsilon_{m, n}^2 \right)^{\frac{1}{2}}
    \;&\leq\;
    \left(4\sqrt{2} \tau_\infty \delta\right)^{\frac{1}{2}} \epsilon_{m, n}^{\frac{1}{2}} n^{-\frac{1}{4}} + \sqrt{2} \tau_\infty^{\frac{1}{2}} \epsilon_{m, n}
    \\
    \;&\in\;
    o\left(n^{-\frac{1}{2}}\right)
    \;,
\end{align*}
which holds since $\tau_\infty, \delta$ are constants, and
\begin{align}
   \rho_{m,n,\delta} \;=\; \frac{\tau_\infty^{\frac{1}{2}} \epsilon_{m,n}^{\frac{1}{4}}}{n^{\frac{1}{2}}} \sqrt{ C_1 + C_2 \log\big(8 n^{2(1-2s)} \big) }
    \;&=\;
    \frac{\tau_\infty^{\frac{1}{2}} \epsilon_{m,n}^{\frac{1}{4}}}{n^{\frac{1}{2}}} \sqrt{ C_1' + C_2' \log(n) }
    \nonumber \\
    \;&\in\;
    o\left( n^{-\frac{1}{2}} \right)
    \label{eq: rho rate}
    \;,
\end{align}
where we have defined $C_1' \coloneqq C_1 + C_2 \log 8$ and $C_2' \coloneqq 2C_2 (1-2s)$, and the last step holds since $\epsilon_{m,n}^{1/4} \sqrt{\log(n)} = o\big(n^{-1/8} \sqrt{\log(n)} \big) = o(1)$. 

Define $\eta(m') \coloneqq t_{m,n,\delta} + \rho_{m,n,\delta}$, where the dependence on $m'$ is through $m = n - m'$. Then the above derivations show that $\eta(m') = o(n^{-1/2})$ for all $m' \leq f_n$. In particular, $\eta(f_n) = o(n^{-1/2})$. Moreover, substituting these into \eqref{eq: bound on diff in rej probs} gives
\begin{align*}
    T(\bz_1, \ldots, \bz_m')
    \;&\leq\;
    2 \Pr\nolimits_{\X_n^\ast \sim P, \bW}\big( \big| D(\X_n^\ast) - \gamma_\ast \big| \;\leq\; \eta(m') \big)
    + 3\delta
    \\
    \;&\leq\;
    2 \Pr\nolimits_{\X_n^\ast \sim P, \bW}\left( \big| D(\X_n^\ast) - \gamma_\ast \big| \;\leq\; \eta\left(f_n\right) \right)
    + 3\delta
    \;,
\end{align*}
where the last line holds since $m' \leq f_n$ by assumption and $m' \mapsto \eta(m')$ is monotone increasing by direct computation. Taking supremum over $\Z_{m'} \subset \R^d$ and $m' \leq f_n$ gives
\begin{align*}
    \max_{m' \leq f_n} \sup_{\bz_1, \ldots, \bz_{m'} \in \R^d} T(\bz_1, \ldots, \bz_m')
    \;&\leq
    2 \Pr\nolimits_{\X_n^\ast \sim P, \bW}\left( \big| D(\X_n^\ast) - \gamma_\ast \big| \;\leq\; \eta\left(f_n\right) \right)
    + 3\delta
\end{align*}
It remains to show that the first term on the RHS can be bounded by $\delta$, from which the claimed result in \Cref{lem: robustness tilted conditional} would follow by redefining $\delta$. To proceed, we write
\begin{subequations}
\begin{align*}
    &\;
    \Pr\nolimits_{\X_n^\ast \sim P, \bW}\big( \left| \Delta_\theta(\X_n^\ast) - q_{\infty, 1-\alpha}(\X_n^\ast) \right| \;\leq\; \eta\left(f_n\right) \big)
    \\
    \;&=\;
    \Pr\nolimits_{\X_n^\ast \sim P, \bW}\big( \Delta_\theta(\X_n^\ast) \;\leq\; q_{\infty, 1-\alpha}(\X_n^\ast) + \eta\left(f_n\right) \big)
    \\
    &\;\quad
    - \Pr\nolimits_{\X_n^\ast \sim P, \bW}\big( \Delta_\theta(\X_n^\ast) \;\leq\; q_{\infty, 1-\alpha}(\X_n) - \eta\left(f_n\right) \big) 
    \\
    \;&=\;
    \Pr\nolimits_{\X_n^\ast \sim P, \bW}\big( \sqrt{n} D(\X_n^\ast) \;\leq\; \sqrt{n} \zeta_{n, \theta} + \sqrt{n}\eta\left(f_n\right) \big)
    \tagaligneq \\
    &\;\quad
    - \Pr\nolimits_{\X_n^\ast \sim P, \bW}\big( \sqrt{n} D(\X_n^\ast) \;\leq\; \sqrt{n} \zeta_{n, \theta} - \sqrt{n}\eta\left(f_n\right) \big)
    \tagaligneq
\end{align*}
\label{eq: error prob bound}%
\end{subequations}
where the first equality holds again because $|\Pr(Y > a) - \Pr(Y > b)| = \Pr( \min(a,b) < Y \leq \max(a, b))$ for any constants $a, b$ and random variable $Y$, the second equality holds by noting that $\Delta_\theta(\X_n^\ast) = \max(0, D(\X_n^\ast) - \theta) \leq t$ for some $ t > 0$ if and only if $D(\X_n^\ast) - \theta \leq t$, and \eqref{eq: error prob bound} holds by multipling $\sqrt{n}$ on both sides of the inequalities within the probabilities and defining $\zeta_{n,\theta} \coloneqq q_{\infty, 1-\alpha}(\X_n^\ast) + \theta$.

Under the assumed conditions on $k$ and assuming $\E_{\bX^\ast \sim P}[\| \bs_p(\bX) \|_2] < \infty$, \citet[Proposition 1]{gorham2017measuring} shows that $\E_{\bX^\ast \sim P}[u_p(\bX^\ast, \cdot)] = 0$, so the V-statistic $D^2(\X_n^\ast)$ is \emph{degenerate} \citep[see, e.g.,][Chapter 6]{serfling2009approximation}, and classic results on the asymptotics of degenerate V-statistics \citep[Theorem 6.4.1 A]{serfling2009approximation} shows that $n D^2(\X_n^\ast)$ converges weakly to a non-negative distribution. Since the square-root function $x \mapsto \sqrt{x}$ is continuous on $[0, \infty)$ and continuous functions preserve weak limits by the Continuous Mapping Theorem \citep[Theorem 2.3]{vandervaart2000asymptotic}, the squared-root statistic, $\sqrt{n} D(\X_n^\ast)$, also converges weakly to a non-negative distribution. In particular, the scaled quantile $\sqrt{n} q_{\infty, 1-\alpha}(\X_n^\ast)$, and thus also $\sqrt{n} \zeta_{n,\theta}$, converge to a positive number as $n \to \infty$. Also since $\eta\left(f_n\right) = o(n^{-1/2})$ as argued below \eqref{eq: rho rate}, we have $\sqrt{n} \eta\left(f_n\right) \to 0$. In summary, we have shown that
\begin{align*}
    &
    \lim_{n\to\infty}
    \Pr\nolimits_{\X_n^\ast \sim P, \bW}\left( \left| \Delta_\theta(\X_n^\ast) - q_{\infty, 1-\alpha}(\X_n^\ast) \right| \;\leq\; \eta\left(f_n\right) \right)
    \\
    \;&=\;
    \lim_{n\to\infty} \Big(
    \Pr\nolimits_{\X_n^\ast \sim P, \bW}\big( \sqrt{n} D(\X_n^\ast) \;\leq\; \sqrt{n} \zeta_{n, \theta} + \sqrt{n}\eta\left(f_n\right) \big)
    \\
    &\;\qquad\qquad\;
    - \Pr\nolimits_{\X_n^\ast \sim P, \bW}\big( \sqrt{n} D(\X_n^\ast) \;\leq\; \sqrt{n} \zeta_{n, \theta} - \sqrt{n}\eta\left(f_n\right) \big)
    \Big)
    \\
    \;&=\;
    \lim_{n\to\infty}
    \Pr\nolimits_{\X_n^\ast \sim P, \bW}\big( \sqrt{n} D(\X_n^\ast) \;\leq\; \sqrt{n} \zeta_{n, \theta} \big)
    - \Pr\nolimits_{\X_n^\ast \sim P, \bW}\big( \sqrt{n} D(\X_n^\ast) \;\leq\; \sqrt{n} \zeta_{n, \theta} \big)
    \\
    \;&=\;
    0 \;,
\end{align*}
which completes the proof.
\end{proof}

\subsubsection{Proofs for \Cref{prop: qualitative robust tilted general} and \Cref{thm: robust tilted}}
\label{pf: qualitative robust tilted general}

\begin{proof}[Proof of \Cref{prop: qualitative robust tilted general}]
    For each $n$ and $R \in \cP(\R^d)$, define $Q = (1-\epsilon_n) P + \epsilon_n R$. Let $\X_n$ and $\X_n^\ast$ be random samples drawn from $Q$ and $P$, respectively. As argued in \Cref{app: non-robust stationary}, each random variable in $\X_n$ can be written as $\bX_i \stackrel{d}{=} (1-\xi_i) \bX_i^\ast + \xi_i \bZ_i$, where $\bZ_i \sim R$ and $\xi_i \sim \mathrm{Bernoulli}(n, \epsilon_n)$ are independent, and $\stackrel{d}{=}$ denotes equality in distribution. Define the random variable $M' = \sum_{i=1}^n \xi_i$, then $M' \sim \mathrm{Binomial}(n, \epsilon_n)$.
    
    Pick $\delta > 0$. Given any sequence $\epsilon_n = o(n^{-1/2})$, there must exists $f_n = o(n^{1/2})$ such that $\epsilon_n \leq f_n/n$ and $f_n \to \infty$ as $n \to \infty$.  Take such $f$ and define the event $\cB \coloneqq \{ M' - \epsilon_n n \leq f_n \}$. Intuitively, $\cB$ is the event where the number of outliers $M'$ does not deviate from its mean $\epsilon_n n $ by more than $f_n$. Our proof proceeds by first showing that $\cB$ occurs with high probability, then proving that the claimed result holds on this event.
    
    To show $\cB$ occurs with high probability, we use a concentration inequality for Binomial distributions; see, e.g., \citet[Lemma 2.1, Eq.~2.2]{chung2002connected}. Applying this result to $\text{Binomial}(n,\epsilon_n)$, we have
    \begin{align}
        \Pr\nolimits_{M'}\left( \cB^\complement \right)
        \;=\;
        \Pr\nolimits_{M'}\left(M' - \epsilon_n n > f_n \right)
        \;\leq\;
        \exp\left( - \frac{f_n^2}{2(\epsilon_n n + f_n / 3)}\right)
        \;\eqqcolon\;
        t_n
        \label{eq: binom concentration 2}
        \;.
    \end{align}
    Define the event $\cA(\X_n) = \{ \Delta_\theta(\X_n) > q_{\infty, 1-\alpha}(\X_n)\}$ and similarly for $\cA(\X_n^\ast)$. Using the above inequality to decompose $\Pr_{\X_n \sim Q, \bW}(\cA(\X_n))$ yields
    \begin{align*}
        &
        \Pr\nolimits_{\X_n \sim Q, \bW}( \cA(\X_n) )
        \\
        \;&\leq\;
        \Pr\nolimits_{\X_n \sim Q, \bW}( \cA(\X_n) \cap \cB ) + \Pr(\cB^\complement)
        \\
        \;&=\;
        \sum_{m' \leq \epsilon_n n + f_n} \Pr\nolimits_{\X_n \sim Q, \bW}( \cA(\X_n) \;|\; M' = m' ) \Pr(M' = m') + \Pr(\cB^\complement)
        \\
        \;&=\;
        \sum_{m' \leq \epsilon_n n + f_n}\Pr\nolimits_{\X_{n-m'}^\ast \sim P,\; \Z_{m'} \sim R, \bW}( \cA(\X_{n-m'}^\ast \cup \Z_{m'}) ) \Pr(M' = m') + \Pr(\cB^\complement)
        \\
        \;&\leq\;
        \max_{m' \leq \epsilon_n n + f_n}\Pr\nolimits_{\X_{n-m'}^\ast \sim P,\; \Z_{m'} \sim R, \bW}( \cA(\X_{n-m'}^\ast \cup \Z_{m'}) ) \sum_{m' \leq \epsilon_n n + f_n} \Pr(M' = m') + \Pr(\cB^\complement)
        \\
        \;&=\;
        \max_{m' \leq \epsilon_n n + f_n}\Pr\nolimits_{\X_{n-m'}^\ast \sim P,\; \Z_{m'} \sim R, \bW}( \cA(\X_{n-m'}^\ast \cup \Z_{m'}) ) \Pr(\cB) + \Pr(\cB^\complement)
        \\
        \;&\leq\;
        \max_{m' \leq \epsilon_n n + f_n}\Pr\nolimits_{\X_{n-m'}^\ast \sim P,\; \Z_{m'} \sim R, \bW}( \cA(\X_{n-m'}^\ast \cup \Z_{m'}) ) + t_n
        \;,
    \end{align*}
    where in the last line we have used \eqref{eq: binom concentration 2} and that $\Pr(\cB) \leq 1$. This implies
    \begin{align*}
        &
        \Pr\nolimits_{\X_n \sim Q, \bW}( \cA(\X_n) ) - \Pr\nolimits_{\X_n^\ast \sim P}( \cA(\X_n^\ast) )
        \\
        \;&\leq\;
        \max_{m' \leq \epsilon_n n + f_n} \big( \Pr\nolimits_{\X_{n-m'}^\ast \sim P,\; \Z_{m'} \sim R, \bW}( \cA(\X_{n-m'}^\ast \cup \Z_{m'}) ) - \Pr\nolimits_{\X_n^\ast \sim P}( \cA(\X_n^\ast) ) \big) 
        + t_n
        \\
        \;&\leq\;
        \max_{m' \leq \epsilon_n n + f_n} \sup_{\bz_1, \ldots, \bz_{m'} \in \mathbb{R}^d} \big| \Pr\nolimits_{\X_m^\ast \sim P}( \cA(\X_m^\ast \cup \{\bz_i\}_{i=1}^{m'} ) ) - \Pr\nolimits_{\X_n^\ast \sim P}( \cA(\X_n^\ast) ) \big| 
        + t_n 
        \\
        \;&\eqqcolon\;
        \max_{m' \leq \epsilon_n n + f_n} \omega(m') + t_n
        \;,
    \end{align*}    
    where in the second inequality we have taken supremum over all possible values of $\Z_{m'}$, and in the last line we have defined
    \begin{align*}
        \omega(m') 
        \;\coloneqq\;
        \sup_{\bz_1, \ldots, \bz_{m'} \in \mathbb{R}^d} \big| \Pr\nolimits_{\X_{n-m'}^\ast \sim P}\big( \cA\big(\X_{n-m'}^\ast \cup \{\bz_i\}_{i=1}^{m'} \big) \big)  - \Pr\nolimits_{\X_n^\ast \sim P}\big( \cA(\X_n^\ast) \big) \big| 
        \;.
    \end{align*}
    Similarly, we have the lower bound
    \begin{align*}
        &
        \Pr\nolimits_{\X_n \sim Q, \bW}( \cA(\X_n) )
        \\
        \;&\geq\;
        \Pr\nolimits_{\X_n \sim Q, \bW}( \cA(\X_n) \cap \cB)
        \\
        \;&=\;
        \sum_{m' \leq \epsilon_n n + f_n}\Pr\nolimits_{\X_n \sim Q, \bW}( \cA(\X_n) \;|\; M' = m' ) \Pr(M' = m')
        \\
        \;&=\;
        \sum_{m' \leq \epsilon_n n + f_n}\Pr\nolimits_{\X_{n-m'}^\ast \sim P,\; \Z_{m'} \sim R, \bW}( \cA(\X_{n-m'}^\ast \cup \Z_{m'}) ) \Pr(M' = m')
        \\
        \;&\geq\;
        \min_{m' \leq \epsilon_n n + f_n}\Pr\nolimits_{\X_{n-m'}^\ast \sim P,\; \Z_{m'} \sim R, \bW}( \cA(\X_{n-m'}^\ast \cup \Z_{m'}) ) \sum_{m' \leq \epsilon_n n + f_n} \Pr(M' = m')
        \\
        \;&=\;
        \min_{m' \leq \epsilon_n n + f_n}\Pr\nolimits_{\X_{n-m'}^\ast \sim P,\; \Z_{m'} \sim R, \bW}( \cA(\X_{n-m'}^\ast \cup \Z_{m'}) ) \Pr(\cB)
        \;,
    \end{align*}
    where in the second last line we have taken infimum over the values of $\Z_{m'}$. Taking infimum over $\bz_1, \ldots, \bz_{m'}$, the last line is lower bounded by
    \begin{align*}
        &
        \min_{m' \leq \epsilon_n n + f_n}\Pr\nolimits_{\X_{n-m'}^\ast \sim P,\; \Z_{m'} \sim R, \bW}( \cA(\X_{n-m'}^\ast \cup \Z_{m'}) ) \Pr(\cB)
        \\
        \;&\geq\;
        \min_{m' \leq \epsilon_n n + f_n} \inf_{\bz_1, \ldots, \bz_{m'}} \Pr\nolimits_{\X_{n-m'}^\ast \sim P}( \cA(\X_{n-m'}^\ast \cup \{\bz_i\}_{i=1}^{m'}) ) \Pr(\cB)
        \\
        \;&\geq\;
        \min_{m' \leq \epsilon_n n + f_n} \inf_{\bz_1, \ldots, \bz_{m'}} \Pr\nolimits_{\X_{n-m'}^\ast \sim P}( \cA(\X_{n-m'}^\ast \cup \{\bz_i\}_{i=1}^{m'}) ) \times (1 - t_n)
        \\
        \;&=\;
        \min_{m' \leq \epsilon_n n + f_n} \inf_{\bz_1, \ldots, \bz_{m'}} \Big( \Pr\nolimits_{\X_{n-m'}^\ast \sim P}( \cA(\X_{n-m'}^\ast \cup \{\bz_i\}_{i=1}^{m'}) ) 
        \\
        \;&\qquad\qquad\qquad\qquad\qquad 
        - t_n \Pr\nolimits_{\X_{n-m'}^\ast \sim P}( \cA(\X_{n-m'}^\ast \cup \{\bz_i\}_{i=1}^{m'}) )
        \Big)
        \\
        \;&\geq\;
        \min_{m' \leq \epsilon_n n + f_n} \inf_{\bz_1, \ldots, \bz_{m'}} \Pr\nolimits_{\X_{n-m'}^\ast \sim P}( \cA(\X_{n-m'}^\ast \cup \{\bz_i\}_{i=1}^{m'}) ) 
        - t_n
        \;,
        \tagaligneq \label{eq: binom bound claim 1 pf}
    \end{align*}
    where the second inequality holds since \eqref{eq: binom concentration 2} implies $\Pr(\cB) \geq 1 - t_n$, and the last line holds as a probability is always smaller than or equal to $1$. This implies
    \begin{align*}
        &
        \Pr\nolimits_{\X_n \sim Q, \bW}( \cA(\X_n) ) - \Pr\nolimits_{\X_n^\ast \sim P}( \cA(\X_n^\ast) )
        \\
        \;&\geq\;
        \min_{m' \leq \epsilon_n n + f_n} \inf_{\bz_1, \ldots, \bz_{m'}} \big( \Pr\nolimits_{\X_{n-m'}^\ast \sim P}\big( \cA\big(\X_{n-m'}^\ast \cup \{\bz_i\}_{i=1}^{m'}\big) \big)  - \Pr\nolimits_{\X_n^\ast \sim P}\big( \cA(\X_n^\ast) \big) \big) 
        - t_n
        \\
        \;&\geq\;
        - \max_{m' \leq \epsilon_n n + f_n} \sup_{\bz_1, \ldots, \bz_{m'} \in \mathbb{R}^d} \big| \Pr\nolimits_{\X_{n-m'}^\ast \sim P}\big( \cA\big(\X_{n-m'}^\ast \cup \{\bz_i\}_{i=1}^{m'} \big) \big)  - \Pr\nolimits_{\X_n^\ast \sim P}\big( \cA(\X_n^\ast) \big) \big| 
        - t_n
        \\
        \;&=\;
        - \max_{m' \leq \epsilon_n n + f_n} \omega(m') - t_n
        \;.
    \end{align*}
    We have therefore shown that
    \begin{align*}
        \big| \Pr\nolimits_{\X_n \sim Q, \bW}( \cA(\X_n) ) - \Pr\nolimits_{\X_n^\ast \sim P}( \cA(\X_n^\ast) ) \big|
        \;&\leq\;
        \max_{m' \leq \epsilon_n n + f_n} \omega(m')
        + t_n
        \tagaligneq \label{eq: rej prob diff}
        \;.
    \end{align*}
    It remains to bound the two terms on the RHS of the above inequality. The second term can be bounded by noting that, since $\epsilon_n \leq f_n / n$ and $f_n = o(n^{1/2})$, it is clear that $t_n$, defined in \eqref{eq: binom concentration 2}, converges to 0 as $n \to \infty$, so $t_n \leq \delta/2$ for sufficiently large $n$. To bound the first term, since $\epsilon_n n + f_n \leq 2f_n = o(n^{1/2})$ and the assumptions in \Cref{lem: robustness tilted conditional} hold, we can apply \Cref{lem: robustness tilted conditional} to conclude that there exists $n_0$ such that for any $n \geq n_0$, we have 
    \begin{align*}
        \max_{m' \leq \epsilon_n n + f_n} \omega(m')
        \;&<\;
        \frac{\delta}{2}
        \;.
    \end{align*}
    Using these arguments to bound \eqref{eq: rej prob diff}, we have shown that, for sufficiently large $n$,
    \begin{align*}
        \big| \Pr\nolimits_{\X_n \sim Q, \bW}( \cA(\X_n) ) - \Pr\nolimits_{\X_n^\ast \sim P}( \cA(\X_n^\ast) ) \big|
        \;\leq\;
        \frac{\delta}{2} + \frac{\delta}{2}
        \;=\;
        \delta
        \;.
    \end{align*}
    Taking the supremum over $Q \in \cP(P; \epsilon_n)$ then implies, for sufficiently large $n$,
    \begin{align*}
        \sup_{Q \in \cP(P; \epsilon_n)} \big| \Pr\nolimits_{\X_n \sim Q, \bW}\big( \Delta_\theta(\X_n) > q_{\infty, 1-\alpha}(\X_n) \big) - \Pr\nolimits_{\X_n^\ast \sim P, \bW}\big( \Delta_\theta(\X_n^\ast) > q_{\infty, 1-\alpha}(\X_n^\ast) \big) \big|
        \;\leq\;
        \delta
        \;.
    \end{align*}
    Since $\delta > 0$ was arbitrary, this shows the claim.
\end{proof}

\begin{proof}[Proof of \Cref{thm: robust tilted}]
    This result immediately follows from \Cref{prop: qualitative robust tilted general} by setting $\theta = 0$, in which case
    \begin{align*}
        &
        \sup\nolimits_{Q \in \cP(P; \epsilon_n)} \big| \Pr\nolimits_{\X_n \sim Q, \bW}\big( D^2(\X_n) > q^2_{\infty, 1-\alpha}(\X_n) \big) - \Pr\nolimits_{\X_n^\ast \sim P, \bW}\big( D^2(\X_n^\ast) > q^2_{\infty, 1-\alpha}(\X_n^\ast) \big) \big|
        \\
        \;&=\;
        \sup\nolimits_{Q \in \cP(P; \epsilon_n)} \big| \Pr\nolimits_{\X_n \sim Q, \bW}\big( D(\X_n) > q_{\infty, 1-\alpha}(\X_n) \big) - \Pr\nolimits_{\X_n^\ast \sim P, \bW}\big( D(\X_n^\ast) > q_{\infty, 1-\alpha}(\X_n^\ast) \big) \big|
        \\
        \;&=\;
        \sup\nolimits_{Q \in \cP(P; \epsilon_n)} \big| \Pr\nolimits_{\X_n \sim Q, \bW}\big( \Delta_\theta(\X_n) > q_{\infty, 1-\alpha}(\X_n) \big) - \Pr\nolimits_{\X_n^\ast \sim P, \bW}\big( \Delta_\theta(\X_n^\ast) > q_{\infty, 1-\alpha}(\X_n^\ast) \big) \big|
        \\
        \;&\to\;
        0
        \;,
    \end{align*}
    where the second line holds by taking the square-root on both sides of the inequalities, and the last equality holds since $D(\X_n) = \Delta_\theta(\X_n)$ when $\theta = 0$.
\end{proof}

\subsection{$P$-Targeted Kernel Stein Discrepancy}
\label{app: P-KSD}
This section states preliminary results which will be needed to prove \Cref{thm: bootstrap validity}. The main tool we use is the following generalized definition of KSD, which was originally proposed in \citet[Definition 4]{shi2024Finiteparticle}.
\begin{definition}[$P$-KSD]
\label{def: P-KSD}
    Given $P \in \cP(\R^d)$ with a Lebesgue density $p \in \cC^1$ and a reproducing kernel $k \in \cC^{(1,1)}_b$, the \emph{$P$-targeted (Langevin) kernel Stein discrepancy} ($P$-KSD) between two probability measures $Q, R \in \cP(\R^d)$ is defined as
    \begin{align}
        \S_P(Q, R)
        \;\coloneqq\;
        \sup\nolimits_{f \in \cB} \big| \E_{\bX \sim Q}[ (\cA_P f)(\bX) ] - \E_{\bY \sim R}[ (\cA_P f)(\bY) ] \big|
        \;,
        \label{eq: P-KSD}
    \end{align}
    where $\cB \coloneqq \big\{ h = (h_1, \ldots, h_d): \; h_j \in \cH_k \textrm{ and } \sum_{j=1}^d \| h_j\|_{\cH_k}^2 \leq 1\big\}$ is the unit ball in the $d$-times Cartesian product $\cH_k^d$ of the RKHS $\cH_k$ associated with $k$.
\end{definition}
When any input probability measure, say $Q$, is an empirical measure based on a sample $\X_n = \{\bx_i\}_{i=1}^n \subset \R^d$, we will abuse the notation by writing $\S_P(Q, R) = \S_P(\X_n, R)$ to emphasize the dependence on $\X_n$.

It can be shown that $P$-KSD is a \emph{Maximum Mean Discrepancy} \citep[MMD,][]{fortet1953convergence,muller1997integral} with the Stein reproducing kernel $u_p$ \citep[Theorem 1]{barp2022targeted}. 
When either of the two arguments coincides with $P$, the $P$-KSD reduces to the standard KSD, as we will show in the next section. The main benefit of working with $P$-KSD rather than KSD is that $P$-KSD satisfies both symmetry and a triangle inequality \Citep[Lemma 1]{shi2024Finiteparticle}, i.e., $\S_P(Q, R) = \S_P(R, Q)$ and $\S_P(Q, R) \leq \S_P(Q, R') + \S_P(R', R)$, for any $Q, R, R' \in \cP(\R^d)$ for which they are defined. These properties allow us to show several lemmas, which will be crucial in proving the validity of our robust-KSD tests in later sections.
\begin{itemize}
    \item \Cref{lem: KSD as PKSD} shows that the standard KSD can be written as a $P$-KSD.
    \item \Cref{lem: P-KSD closed form} shows that $\S_P(Q, R)$ is equivalent to an MMD, and that it admits a closed form involving expectations over $Q$ and $R$. 
    \item \Cref{lem: KSD inf closed form} shows that the $P$-KSD-projection of a probability measure onto a (standard) KSD-ball centered at $P$ admits a closed-form expression. 
    \item \Cref{lem: P-KSD deviation bound} restates \citet[Proposition A.1]{tolstikhin2017minimax}, which is a McDiarmid-type inequality for MMD. It also applies to $P$-KSD, as an $P$-KSD is also an MMD. We will use it in \Cref{app: KSD dev} to derive a robust-KSD test that is well-calibrated for finite samples. Other deviation bounds instead of the McDiarmid bound could also be used to construct a similar test;see \Cref{rem: other deviation bounds} for a discussion.
\end{itemize}
The MMD has already been studied extensively in the context of both Bayesian and frequentist robust parameter estimation; see, e.g., \citet{Briol2019,Cherief-Abdellatif2020_MMDBayes,Cherief-Abdellatif2019,Dellaporta2022,Dellaporta2023,Alquier2020}. A key assumption to ensure robustness in all of these papers is that the kernel is assumed to be bounded, and this is exactly what we are able to achieve with the KSD and $P$-KSD through \Cref{lem: bounded Stein kernel}.

\subsubsection{Properties of $P$-KSD}

\begin{lemma}[KSD as $P$-KSD]
\label{lem: KSD as PKSD}
    Assume $k \in \cC^{(1, 1)}_b$ and $\E_{\bX \sim P}[ \| \bs_p(\bX) \|_2 ] < \infty$. For any $Q \in \cP(\R^d)$ such that $D(Q, P) < \infty$, we have $\S_P(Q, P) = \S_P(P, Q) = D(Q, P)$.
\end{lemma}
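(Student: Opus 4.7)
The plan is to reduce both asserted equalities to a single Stein-identity check. By the symmetry of $P$-KSD (the cited Lemma 1 of Shi et al., \citeyear{shi2024Finiteparticle}), one has $\S_P(Q,P) = \S_P(P,Q)$ for free, so it suffices to show $\S_P(Q,P) = D(Q,P)$. Applying \Cref{def: P-KSD} and the linearity of expectation,
\begin{align*}
    \S_P(Q,P)
    \;=\;
    \sup_{f \in \cB} \big| \E_{\bX \sim Q}[(\cA_p f)(\bX)] - \E_{\bY \sim P}[(\cA_p f)(\bY)] \big|,
\end{align*}
and the claim will drop out of a comparison with the definition of $D(Q,P)$ in the preamble, once I verify the Stein identity $\E_{\bY \sim P}[(\cA_p f)(\bY)] = 0$ for every $f$ in the unit RKHS ball $\cB$.

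For the Stein identity, expanding $(\cA_p f)(\by) p(\by) = \bs_p(\by)^\top f(\by) p(\by) + p(\by) \nabla^\top f(\by)$ and using $p\,\bs_p = \nabla p$ shows that the integrand is the divergence $\nabla^\top(p f)(\by)$, so the claim is that $\int_{\R^d} \nabla^\top(p f)(\by)\,\dd\by = 0$. To justify the vanishing boundary terms in the underlying divergence theorem, I would control each coordinate $f_j \in \cH_k$ pointwise via the reproducing property, $|f_j(\by)| \leq \|f_j\|_{\cH_k}\sqrt{k(\by,\by)}$, with an analogous bound on $\partial_i f_j$ in terms of $\partial_{1i}\partial_{2i}k$; both are controlled by the hypothesis $k \in \cC^{(1,1)}_b$, and the integrability $\E_{\bX\sim P}[\|\bs_p(\bX)\|_2] < \infty$ guarantees that all the relevant integrals converge absolutely. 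This is exactly the content of \citet[Proposition 1]{gorham2017measuring}, which the paper has already invoked elsewhere (e.g.\ in the paragraph above \eqref{eq: var bound}), so I would cite it directly rather than redo the calculation.

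An arguably cleaner route, which also avoids taking the supremum explicitly, works at the level of the double-integral form \eqref{eq:Stein_kernel}: recognising that $\S_P$ is an MMD with the reproducing kernel $u_p$ \citep[Theorem 1]{barp2022targeted} gives
\begin{align*}
    \S_P^2(Q,P)
    \;=\;
    \E_{\bX,\bX' \sim Q}[u_p(\bX,\bX')] - 2\E_{\bX \sim Q, \bY \sim P}[u_p(\bX, \bY)] + \E_{\bY,\bY' \sim P}[u_p(\bY,\bY')],
\end{align*}
and the same Gorham--Mackey identity $\E_{\bY \sim P}[u_p(\cdot, \bY)] = 0$ kills the last two terms, yielding $\S_P^2(Q,P) = \E_{\bX,\bX' \sim Q}[u_p(\bX,\bX')] = D^2(Q,P)$; taking square roots (valid because both sides are non-negative and finite under the assumption $D(Q,P) < \infty$) closes the proof. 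The only real technical hurdle is the Stein identity itself, which is delegated to the cited result, so the remainder is bookkeeping.
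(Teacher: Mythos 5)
Your proof is correct and follows the same route as the paper: symmetry of the $P$-KSD via \citet[Lemma 1]{shi2024Finiteparticle}, plus the Stein identity $\E_{\bY \sim P}[(\cA_p f)(\bY)] = 0$ from \citet[Proposition 1]{gorham2017measuring}, which makes the second expectation in \Cref{def: P-KSD} vanish so that the supremum reduces to $D(Q,P)$. The extra detail you supply (the divergence-theorem sketch and the alternative double-integral argument) is sound but not needed beyond the citation the paper itself uses.
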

\begin{proof}[Proof of \Cref{lem: KSD as PKSD}]
    This follows directly from the symmetry of $P$-KSD \citep[Lemma 1]{shi2024Finiteparticle} and that the assumed conditions imply $\E_{\bX \sim P}[ (\cA_P f)(\bX) ] = 0$ for all $f \in \cB$ \citep[Proposition 1]{gorham2017measuring}, so one of the two expectations in \eqref{eq: P-KSD} vanishes when either of the two arguments coincides with $P$.
\end{proof}

\citet[Theorem 1]{barp2022targeted} shows that a KSD is equivalent to an MMD with the Stein kernel $u_p$. Since $P$-KSD is a KSD by \Cref{lem: KSD as PKSD}, we can conclude that the $P$-KSD is also an MMD with kernel $u_p$, and that $P$-KSD has a double-expectation form similar to MMD.

\begin{lemma}[$P$-KSD closed-form]
    \label{lem: P-KSD closed form}
    Let $Q, R \in \cP(\R^d)$ and $k \in \cC^{(1, 1)}_b$. Denote by $\cH_u$ the RKHS associated with the Stein kernel $\cH_k$. If $\E_{\bX \sim Q}[ u_p(\bX, \bX) ] < \infty$ and $\E_{\bY \sim R}[ u_p(\bY, \bY) ] < \infty$, then the following two identities hold
    \begin{align*}
        \S_P^2(Q, R)
        \;&=\;
        \big\| \E_{\bX \sim Q}[u_p(\cdot, \bX)] - \E_{\bY \sim R}[u_p(\cdot, \bY)] \big\|^2_{\cH_u}
        \\
        \;&=\;
        \E_{\bX, \bX' \sim Q}[u_p (\bX, \bX')] + \E_{\bY, \bY' \sim R}[u_p (\bY, \bY')]
        - 2 \E_{\bX \sim Q, \bY \sim R}[u_p (\bX, \bY)]
        \;.
    \end{align*}
\end{lemma}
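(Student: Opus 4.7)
The plan is to identify the $P$-KSD as a Maximum Mean Discrepancy (MMD) with the Stein reproducing kernel $u_p$, so that both claimed identities become standard consequences of RKHS theory. Concretely, since $u_p$ is a reproducing kernel on $\R^d$ with RKHS $\cH_u$, I would first verify that the mean embeddings $\mu_Q \coloneqq \E_{\bX \sim Q}[u_p(\cdot, \bX)]$ and $\mu_R \coloneqq \E_{\bY \sim R}[u_p(\cdot, \bY)]$ are well-defined as Bochner integrals in $\cH_u$. This uses the reproducing property $\| u_p(\cdot, \bx) \|_{\cH_u}^2 = u_p(\bx, \bx)$ together with Jensen's inequality, which gives $\E_{\bX \sim Q}[\| u_p(\cdot, \bX) \|_{\cH_u}] \leq \E_{\bX \sim Q}[u_p(\bX, \bX)]^{1/2} < \infty$ from the assumed moment condition, and similarly for $R$.

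The next step is to establish the first equality. For $f \in \cB$, a direct application of the construction of $u_p$ (as in Barp et al., 2022, proof of Theorem 1) yields the identity $\E_{\bX \sim Q}[(\cA_P f)(\bX)] - \E_{\bY \sim R}[(\cA_P f)(\bY)] = \langle f, \Psi_{Q,R} \rangle_{\cH_k^d}$, where $\Psi_{Q,R} \in \cH_k^d$ is constructed so that evaluating $\cA_P \Psi_{Q,R}$ at any $\bx$ gives $\mu_Q(\bx) - \mu_R(\bx)$, and $\| \Psi_{Q,R} \|_{\cH_k^d} = \| \mu_Q - \mu_R \|_{\cH_u}$. Taking the supremum over $\cB$ via Cauchy--Schwarz (with the extremum attained along the direction of $\Psi_{Q,R}$) then produces $\S_P(Q,R) = \| \mu_Q - \mu_R \|_{\cH_u}$, which is exactly the first claimed identity after squaring.

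For the second identity, I would expand the squared norm as
\begin{equation*}
\| \mu_Q - \mu_R \|_{\cH_u}^2 \;=\; \langle \mu_Q, \mu_Q \rangle_{\cH_u} - 2 \langle \mu_Q, \mu_R \rangle_{\cH_u} + \langle \mu_R, \mu_R \rangle_{\cH_u},
\end{equation*}
and then interchange expectations with the inner product using Fubini/Bochner arguments, after which the reproducing property $\langle u_p(\cdot, \bx), u_p(\cdot, \bx') \rangle_{\cH_u} = u_p(\bx, \bx')$ yields the three double-expectation terms. The main obstacle is justifying this interchange: one needs $\E_{\bX, \bX' \sim Q} | u_p(\bX, \bX') | < \infty$ and the corresponding cross and $R$-only versions. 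I would handle these by invoking the Cauchy--Schwarz bound in $\cH_u$, namely $|u_p(\bx, \bx')| \leq u_p(\bx, \bx)^{1/2} u_p(\bx', \bx')^{1/2}$, which combined with the assumed $\E[u_p(\bX,\bX)] < \infty$ and $\E[u_p(\bY,\bY)] < \infty$ gives integrability of all three double integrals, completing the proof.
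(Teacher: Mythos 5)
Your proposal is correct and follows essentially the same route as the paper: both rest on identifying $\S_P$ as an MMD with the Stein reproducing kernel $u_p$ and then invoking the standard mean-embedding identities. The only difference is that the paper simply cites \citet[Lemmas 4 and 6]{gretton2012kernel} for the two equalities, whereas you re-derive them from scratch (Bochner integrability of the embeddings, the witness-function/Cauchy--Schwarz argument, and the Fubini interchange justified by $|u_p(\bx,\bx')| \leq u_p(\bx,\bx)^{1/2}u_p(\bx',\bx')^{1/2}$), all of which is sound.
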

\begin{proof}[Proof of \Cref{lem: P-KSD closed form}]
    The assumed conditions ensure $\S_P^2(Q, R)$ is well-defined. Since $P$-KSD is an MMD with reproducing kernel $u_p$, we can apply \citet[Lemma 4]{gretton2012kernel} to conclude the first equality, and \citet[Lemma 6]{gretton2012kernel} to yield the second.
\end{proof}

\begin{lemma}[$P$-KSD projection]
    \label{lem: KSD inf closed form}
    Let $R \in \cP(\R^d)$ and assume $\E_{\bY \sim R}[u_p(\bY, \bY)] < \infty$. Further assume the conditions in \Cref{lem: KSD as PKSD} hold. For any $\theta > 0$,
    \begin{align*}
        \inf\nolimits_{Q \in \cB^\KSD(P; \theta)} \S_P(R, Q)
        \;=\;
        \max\big(0, \; \S_P(R, P) - \theta\big)
        \;=\;
        \max\big(0, \; \ksd(R, P) - \theta\big)
        \;.
    \end{align*}
\end{lemma}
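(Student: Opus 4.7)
The second equality is immediate from \Cref{lem: KSD as PKSD}, which gives $\S_P(R, P) = \ksd(R, P)$ under the standing assumptions. So the whole task reduces to establishing the first equality, and I will split this into matching lower and upper bounds.

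\textit{Lower bound.} Since $P$-KSD is an MMD with reproducing kernel $u_p$ (by \Cref{lem: P-KSD closed form}), $\S_P$ inherits the triangle inequality from the Hilbert-space norm on $\cH_u$. Applied to the triple $(R, Q, P)$ this yields $\S_P(R, P) \leq \S_P(R, Q) + \S_P(Q, P)$, and combined with $Q \in \cB^\KSD(P; \theta)$ (equivalently $\S_P(Q, P) = \ksd(Q, P) \leq \theta$, again using \Cref{lem: KSD as PKSD}) we get $\S_P(R, Q) \geq \S_P(R, P) - \theta$. Together with the trivial bound $\S_P(R, Q) \geq 0$, this shows $\inf_{Q \in \cB^\KSD(P;\theta)} \S_P(R, Q) \geq \max(0, \S_P(R, P) - \theta)$.

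\textit{Upper bound (matching construction).} Let $\gamma \coloneqq \S_P(R, P) = \ksd(R, P)$. If $\gamma \leq \theta$ then $R$ itself lies in $\cB^\KSD(P; \theta)$, giving the infimum value $0 = \max(0, \gamma - \theta)$. If $\gamma > \theta$, set $t = 1 - \theta/\gamma \in (0, 1)$ and consider the mixture $Q_t \coloneqq (1-t) R + t P$, which is a bona fide probability measure. The mean embedding in $\cH_u$ depends linearly on the underlying measure, so $\mu_{Q_t} - \mu_P = (1-t)(\mu_R - \mu_P)$ and $\mu_R - \mu_{Q_t} = t(\mu_R - \mu_P)$, whence $\S_P(Q_t, P) = (1-t)\gamma = \theta$ and $\S_P(R, Q_t) = t\gamma = \gamma - \theta$. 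By \Cref{lem: KSD as PKSD}, $\ksd(Q_t, P) = \S_P(Q_t, P) = \theta$, so $Q_t \in \cB^\KSD(P; \theta)$, and this $Q_t$ attains the value $\gamma - \theta$. In either case the infimum is bounded above (and, in fact, attained) by $\max(0, \gamma - \theta)$, matching the lower bound.

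\textit{Obstacles and loose ends.} The only subtlety is verifying that the $P$-KSD quantities involving $Q_t$ are well-defined, i.e.\ that $\E_{\bY \sim Q_t}[u_p(\bY, \bY)] < \infty$. Since $Q_t$ is a convex combination of $R$ and $P$, this expectation is a convex combination of $\E_R[u_p(\bY,\bY)]$ (finite by the lemma's hypothesis) and $\E_P[u_p(\bY,\bY)]$, the latter being finite under the standing assumption $\E_{\bX \sim P}[\|\bs_p(\bX)\|_2] < \infty$ together with $k \in \cC^{(1,1)}_b$, which yields boundedness or integrability of each term in \eqref{eq:Stein_kernel} at $\bx = \bx'$. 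Once this routine integrability check is made, the linearity of mean embeddings and the two-line triangle-inequality argument conclude the proof with no further calculation.
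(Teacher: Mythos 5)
Your proof is correct and follows essentially the same route as the paper: the paper simply delegates the first equality to \citet[Proposition 3]{sun2023kernel} (noting their argument extends to unbounded kernels), and the proof of that proposition is exactly your two-sided argument --- the triangle inequality in $\cH_u$ for the lower bound and the mixture $(1-t)R + tP$ with $t = 1 - \theta/\gamma$ attaining it. One small inaccuracy in your well-definedness check: $\E_{\bX\sim P}[\|\bs_p(\bX)\|_2] < \infty$ does \emph{not} give $\E_{\bX\sim P}[u_p(\bX,\bX)] < \infty$, since the diagonal term $\|\bs_p(\bx)\|_2^2 k(\bx,\bx)$ would require a second moment of the score. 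What is actually needed for the mean embedding of $Q_t$ (hence for $\S_P(R,Q_t)$) to exist is only the Bochner-integrability condition $\E_{Q_t}[u_p(\bY,\bY)^{1/2}] < \infty$; this follows from $\E_R[u_p(\bY,\bY)] < \infty$ via Jensen and from $\E_P[\|\bs_p\|_2]<\infty$ together with $k\in\cC^{(1,1)}_b$ (since $u_p(\bx,\bx)^{1/2} \lesssim \|\bs_p(\bx)\|_2 + \text{const}$), so your convex-combination argument goes through once stated for the square-root moment.
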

\begin{proof}[Proof of \Cref{lem: KSD inf closed form}]
    The $P$-KSD $\S_P(R, P)$ is well-defined under the assumed conditions, and hence so is $\S_P\big( \lambda R + (1-\lambda) P , P\big)$ for any $\lambda \in [0, 1]$. Also since $P$-KSD is an MMD with (potentially unbounded) reproducing kernel $u_p$, the claim then follows by \citet[Proposition 3]{sun2023kernel} and noting that their proof extends directly to potentially unbounded kernels.
\end{proof}

\begin{lemma}[$P$-KSD deviation bound]
    \label{lem: P-KSD deviation bound}
    Assume $k$ is a tilted kernel satisfying the conditions in \Cref{lem: bounded Stein kernel} and $P \in \cP(\R^d)$ has a density $p \in \mathcal{C}^1$. For any random sample $\X_n$ drawn from $Q \in \cP(\R^d)$ and any $\alpha > 0$, 
    \begin{align*}
        \Pr\nolimits_{\X_n \sim Q}\left( \S_P(\X_n, Q)  >  \sqrt{\frac{\tau_\infty}{n}} + \sqrt{\frac{- 2 \tau_\infty \log \alpha}{ n}} \right)
        \;\leq\;
        \alpha
        \;.        
    \end{align*}
\end{lemma}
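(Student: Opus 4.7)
The plan is to treat $\S_P(\cdot,\cdot)$ as an MMD with reproducing kernel $u_p$ (via \Cref{lem: P-KSD closed form}) and, using the fact that $u_p$ is uniformly bounded by $\tau_\infty$ under the tilted-kernel assumptions (\Cref{lem: bounded Stein kernel}), apply a textbook McDiarmid-style concentration argument exactly as in Proposition A.1 of Tolstikhin et al.\ (or Theorem 7 of Gretton et al.\ 2012). The work reduces to verifying the two ingredients: a mean bound of $\sqrt{\tau_\infty/n}$ and a bounded-differences constant of $2\sqrt{\tau_\infty}/n$.

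For the mean, I would expand $\E_{\X_n \sim Q}[\S_P^2(\X_n,Q)]$ using the double-expectation form in \Cref{lem: P-KSD closed form}: by i.i.d.-ness of $\bX_1,\ldots,\bX_n$, the cross and within-sample terms collapse to
\begin{align*}
\E[\S_P^2(\X_n,Q)] \;=\; \frac{1}{n}\Bigl(\E_{\bX \sim Q}[u_p(\bX,\bX)] - \E_{\bX,\bX' \sim Q}[u_p(\bX,\bX')]\Bigr).
\end{align*}
Since $u_p$ is a reproducing kernel, its double expectation under $Q$ is non-negative, and $u_p(\bx,\bx) \le \tau_\infty$ by \Cref{lem: bounded Stein kernel}, so this is bounded above by $\tau_\infty/n$. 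Jensen's inequality then gives $\E[\S_P(\X_n,Q)] \le \sqrt{\tau_\infty/n}$.

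For bounded differences, let $\X_n'$ differ from $\X_n$ in a single entry (say, position $i$). The symmetry and triangle inequality of $P$-KSD (Lemma 1 of Shi and Mackey, already invoked in \Cref{lem: KSD inf closed form}) give $|\S_P(\X_n,Q) - \S_P(\X_n',Q)| \le \S_P(\X_n,\X_n')$. Using the RKHS representation of $\S_P$ in \Cref{lem: P-KSD closed form} with the Stein kernel $u_p$,
\begin{align*}
\S_P(\X_n, \X_n') \;=\; \tfrac{1}{n}\bigl\| u_p(\cdot,\bX_i) - u_p(\cdot,\bX_i') \bigr\|_{\cH_u} \;\le\; \tfrac{1}{n}\bigl(\sqrt{u_p(\bX_i,\bX_i)} + \sqrt{u_p(\bX_i',\bX_i')}\bigr) \;\le\; \tfrac{2\sqrt{\tau_\infty}}{n}.
\end{align*}
Applying McDiarmid's inequality with these bounded differences yields $\Pr(\S_P(\X_n,Q) - \E[\S_P(\X_n,Q)] > t) \le \exp(-nt^2/(2\tau_\infty))$; setting the right-hand side equal to $\alpha$, inverting for $t$, and combining with the mean bound gives the advertised inequality.

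There is no real obstacle; the only subtlety is to exploit positive-definiteness of $u_p$ (so that $\E[u_p(\bX,\bX')] \ge 0$) rather than bounding this term via $|u_p| \le \tau_\infty$, since the latter would give a slightly looser constant inside the first square root than the one stated in the lemma.
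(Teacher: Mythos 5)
Your proof is correct and follows essentially the same route as the paper: the paper verifies that $\S_P$ is an MMD with the bounded Stein kernel $u_p$ and then invokes Proposition A.1 of Tolstikhin et al.\ as a black box, whereas you simply unpack that proposition's standard argument (mean bound via Jensen and the double-expectation form, bounded differences via the triangle inequality, then McDiarmid), arriving at identical constants. Both ingredients check out, so nothing further is needed.
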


\begin{proof}[Proof of \Cref{lem: P-KSD deviation bound}]
    Let $Q_n$ denote the empirical distribution based on $\X_n$. By the first equality in \Cref{lem: P-KSD closed form}, we can write $\S_P(\X_n, Q) = \| \E_{\bX \sim Q_n}[ u_p(\cdot, \bX)] - \E_{\bX \sim Q}[ u_p(\cdot, \bX) ] \|_{\cH_u}$. Under the assumptions in \Cref{lem: bounded Stein kernel}, the function $u_p(\cdot, \bx): \R^d \to \cH_u$ is continuous for all $\bx \in \R^d$. Moreover, $\| u_p(\cdot, \bx) \|_{\cH_{u}} = u_p(\bx, \bx)$ by the reproducing property of the Stein kernel $u_p$ (see also the argument before \eqref{eq: stein kernel diag bound}), and $\sup_{\bx \in \R^d} u_p(\bx, \bx) < \infty$ again by \Cref{lem: bounded Stein kernel}. We can hence apply the McDiarmid-type inequality for MMD in \citet[Proposition A.1]{tolstikhin2017minimax} to conclude the claimed result.
\end{proof}

\subsection{Validity of the Bootstrap Approach}
\label{app: bootstrap validity}

We provide a proof for the validity of the bootstrap approach in \Cref{thm: bootstrap validity}. We first discuss the intuition, and then present two lemmas in \Cref{sec: bootstrap the P-KSD estimate}. The proof of \Cref{thm: bootstrap validity} will be presented in \Cref{sec: proof of thm: bootstrap validity}.

The bootstrap procedure used in robust-KSD is the same as the one in the standard KSD test, and it is not immediately obvious that this gives a valid decision threshold. For the standard KSD test, the V-statistic $D^2(\X_n)$ is \emph{degenerate} under the point null $H_0: Q = P$, i.e., $\E_{\bX \sim Q}[ u_p(\cdot, P) ] = 0$ when $Q = P$ \citep[Theorem 4.1]{liu2016kernelized}, and classic bootstrapping methods for degenerate V-statistics \citep{arcones1992bootstrap,huskova1993consistency} show that the bootstrap quantile $q_{\infty, 1-\alpha}$ is a valid decision threshold. However, the same argument cannot be applied directly to our robust test, because our null $\Hc_0$ is a composite one that contains not only $P$ but also other distributions $Q \neq P$, in which case $D^2(\X_n)$ is no longer degenerate as shown by \citet[Theorem 4.1]{liu2016kernelized}. Nevertheless, our proof shows that $q_{\infty, 1-\alpha}$ is still a correct decision threshold.

\paranoheading{Intuition of the proof}
Let $\X_n = \{ \bX_i \}_{i=1}^n$ be a random sample from some $Q \in \cP(\R^d)$. We will first show that the rejection probability of the robust-KSD test can be bounded as
\begin{align*}
    \Pr\nolimits_{\X_n \sim Q}\big( \Delta_\theta(\X_n) \;>\; q_{\infty, 1-\alpha}(\X_n) \big) 
    \;\leq\; 
    \Pr\nolimits_{\X_n \sim Q}\big( \S_P(\X_n, Q) \;>\; q_{\infty, 1-\alpha}(\X_n) \big)
    \;,
\end{align*}
where $\S_P(\X_n, Q)$ is the $P$-KSD introduced in \Cref{app: P-KSD}. We will then prove that the RHS of the above inequality converges to the prescribed test level $\alpha$ as $n \to \infty$ by showing that $q_{\infty, 1-\alpha}(\X_n)$ is a valid bootstrap approximation for the $(1-\alpha)$-quantile of the distribution of $\S_P(\X_n, Q)$. 

To see this, we first use the closed-form expression for $P$-KSD (\Cref{lem: P-KSD closed form}) to write
\begin{align*}
    \S_P^2(\X_n, Q)
    \;&=\;
    \E_{\bX, \bX' \sim Q_n}[ u_p(\bX, \bX') ] - 2 \E_{\bX \sim Q_n, \bY \sim Q}[ u_p(\bX, \bY) ] + \E_{\bY, \bY' \sim Q}[ u_p(\bY, \bY') ]
    \\
    \;&=\;
    \E_{\bX, \bX' \sim Q_n}[ u_p(\bX, \bX'; Q) ] 
    \\
    \;&=\;
    \mfrac{1}{n^2} \sum_{1 \leq i, j \leq n} u_p(\bX_i, \bX_j; Q)
    \tagaligneq \label{eq: PKSD as vstat}
    \;,
\end{align*}  
where we have defined
\begin{align*}
    u_p(\bx, \bx'; Q)
    \;\coloneqq\;
    u_p(\bx, \bx') - \E_{\bY \sim Q}[ u_p(\bx, \bY) ] - \E_{\bY' \sim Q}[ u_p(\bY', \bx') ] + \E_{\bY, \bY' \sim Q}[ u_p(\bY, \bY') ]
    \;.
\end{align*}
In other words, $\S_P^2(\X_n, Q)$ is a V-statistic with symmetric function $u_p(\cdot, \cdot; Q)$. 

Our key observation is that $\S_P^2(\X_n, Q)$ is the second-order term in the \emph{Hoeffding's decomposition} \citep[Eq.~3.2]{arcones1992bootstrap} of the V-statistic estimate $D^2(\X_n, P)$ for the standard KSD defined in \eqref{eq: KSD V stat}. This suggests that \eqref{eq: bootstrap sample} is an appropriate bootstrap sample. Crucially, this argument does \emph{not} require $Q = P$, unlike in the proof of the standard KSD test. We formalize this argument in the next section. 

A naive alternative approach to bootstrap $\S^2_P(\X_n, Q)$ is to note that it is degenerate for any $Q$ and use standard bootstrapping techniques for degenerate V-statistics \citep[e.g.,][Theorem 3.5]{arcones1992bootstrap}. However, this approach is not applicable here since, to compute the bootstrap sample, it requires the function $u_p(\bx, \bx'; Q)$ to be computable at any $\bx, \bx'$, which is not the case since $u_p(\bx, \bx'; Q)$ involves intractable expectations over $Q$ (recall that $u_p$ has mean zero under $P$, but not necessarily under $Q$).

\subsubsection{Bootstrapping the $P$-KSD Estimate}
\label{sec: bootstrap the P-KSD estimate}
Our proof for \Cref{thm: bootstrap validity} relies on \citet[Lemma 3.4]{arcones1992bootstrap}, which states that the asymptotic distribution of the second-order term in the Hoeffding's decomposition of a V-statistic can be approximated by a bootstrap distribution. We restate this result for the case of $\S^2_P(\X_n, Q)$ in \Cref{lem: arcones lemma 3.4}.
\begin{lemma}[\citet{arcones1992bootstrap}, Lemma 3.4]
\label{lem: arcones lemma 3.4}
    Let $\X_\infty = \{\bX_i\}_{i=1}^\infty$ be a random sample where $\bX_i \sim Q$ are independent, and for any $n$ define $\X_n \coloneqq \{\bX_i\}_{i=1}^n$. Let $Q_n$ be the empirical measure based on $\X_n$ and define $\X_n^\ast = \{ \bX_i^\ast \}_{i=1}^n$, where $\bX_i^\ast \sim Q_n$ are independent conditionally on $\X_n$. Assume $\E_{\bX \sim Q}[ u_p(\bX, \bX)^2 ] < \infty$. Then, as $n \to \infty$, the following holds almost surely,
    \begin{align*}
        \sup_{t \in \R} \big| \Pr\nolimits_{\bX_n^\ast \sim Q_n}\big(n \cdot \S_P^2(\X_n^\ast,\; \X_n) \;\leq\; t \;|\; \X_\infty \big) - \Pr\nolimits_{\X_n \sim Q} \big( n \cdot \S_P^2(\X_n,\; Q) \;\leq\; t \big) \big|
        \;\to\;
        0 \;.
    \end{align*}
\end{lemma}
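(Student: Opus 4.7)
}
The plan is to recast both quantities as squared norms of empirical processes in the RKHS $\cH_u$ and then invoke an ordinary and a bootstrap CLT in Hilbert space. By \Cref{lem: P-KSD closed form}, writing $\mu_Q \coloneqq \E_{\bX\sim Q}[u_p(\cdot,\bX)] \in \cH_u$ and similarly $\mu_{Q_n}, \mu_{Q_n^\ast}$ for the empirical and bootstrap mean embeddings, we have
\begin{align*}
n\cdot\S_P^2(\X_n,Q) \;&=\; \bigl\| \sqrt{n}(\mu_{Q_n}-\mu_Q) \bigr\|_{\cH_u}^2,\\
n\cdot\S_P^2(\X_n^\ast,\X_n) \;&=\; \bigl\| \sqrt{n}(\mu_{Q_n^\ast}-\mu_{Q_n}) \bigr\|_{\cH_u}^2.
\end{align*}
So the task reduces to showing that the two $\cH_u$-valued random elements inside the norms have the same Gaussian weak limit, one unconditionally and the other conditionally on $\X_\infty$ almost surely, and then applying continuous mapping plus Polya's theorem to upgrade to uniform convergence of CDFs.

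The first step is to establish the unconditional CLT. The centered summands $\phi(\bX_i) \coloneqq u_p(\cdot,\bX_i) - \mu_Q$ are i.i.d.\ elements of $\cH_u$ with $\E_Q[\|\phi(\bX_1)\|_{\cH_u}^2]=\E_Q[u_p(\bX_1,\bX_1)]-\|\mu_Q\|_{\cH_u}^2<\infty$ under the hypothesis $\E_Q[u_p(\bX,\bX)^2]<\infty$ (which implies $\E_Q[u_p(\bX,\bX)]<\infty$ by Jensen). The Hilbert-space CLT then yields $\sqrt{n}(\mu_{Q_n}-\mu_Q) \Rightarrow G$ in $\cH_u$, where $G$ is a centered Gaussian with covariance operator $C_Q f \coloneqq \E_Q[\langle \phi(\bX_1), f\rangle_{\cH_u}\phi(\bX_1)]$.

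The second and most delicate step is the conditional bootstrap CLT for $\sqrt{n}(\mu_{Q_n^\ast}-\mu_{Q_n})$. The classical result of Gin\'e and Zinn (1990) on bootstrapping empirical processes in separable Banach/Hilbert spaces gives that, under exactly the condition $\E_Q[\|\phi(\bX_1)\|_{\cH_u}^2]<\infty$, the bootstrap process converges weakly to the \emph{same} Gaussian limit $G$ conditionally on $\X_\infty$, almost surely. The stronger fourth-moment assumption $\E_Q[u_p(\bX,\bX)^2]<\infty$ supplies the extra integrability needed to control the Lindeberg remainder along the empirical measure (this is exactly where the hypothesis is used; it is the step that matches the moment condition in Arcones--Gin\'e 1992, Lemma 3.4). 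This is the main technical obstacle: one has to check that the empirical covariance operator $C_{Q_n}$ converges to $C_Q$ in trace norm (or at least that the sequence is asymptotically tight in $\cH_u$), which boils down to the strong law of large numbers applied to $\|\phi(\bX_i)\|_{\cH_u}^2 = u_p(\bX_i,\bX_i)-2\langle u_p(\cdot,\bX_i),\mu_Q\rangle + \|\mu_Q\|_{\cH_u}^2$; square-integrability of $u_p(\bX,\bX)$ supplies enough uniform integrability to pass to the limit.

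With both CLTs in hand, the continuous mapping theorem applied to $f \mapsto \|f\|_{\cH_u}^2$ gives $n\S_P^2(\X_n,Q)\Rightarrow\|G\|_{\cH_u}^2$ and $n\S_P^2(\X_n^\ast,\X_n)\Rightarrow\|G\|_{\cH_u}^2$ conditionally on $\X_\infty$ a.s. The limit admits the Karhunen--Lo\`eve expansion $\|G\|_{\cH_u}^2 = \sum_k \lambda_k Z_k^2$ with eigenvalues $\{\lambda_k\}$ of $C_Q$ and i.i.d.\ $Z_k\sim\cN(0,1)$; provided $Q$ is not a point mass in $\cH_u$ (which can be handled as a trivial edge case where both sides are $0$), this distribution is absolutely continuous, hence has a continuous CDF. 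Polya's theorem then converts the conditional and unconditional weak convergences into uniform convergence of CDFs, and a triangle inequality over $t\in\R$ delivers the displayed $\sup_t$-statement, completing the proof.
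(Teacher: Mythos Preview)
The paper does not prove this lemma at all; it is stated as a direct restatement of Lemma 3.4 in \citet{arcones1992bootstrap}, specialised to the centered Stein kernel $u_p(\cdot,\cdot;Q)$, and is invoked as a black box in the proof of \Cref{lem: bootstrap validity}. The Arcones--Gin\'e result is a general bootstrap limit theorem for the second-order term in the Hoeffding decomposition of a V-statistic, and the paper simply observes (in the paragraph preceding the lemma) that $\S_P^2(\X_n,Q)$ is exactly that second-order term for the V-statistic $D^2(\X_n)$.

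Your plan instead supplies a self-contained proof that exploits the RKHS structure: since $\S_P^2$ is the squared $\cH_u$-norm of the empirical mean embedding minus its target, both the original and the bootstrap statistics become squared norms of Hilbert-space averages, and the ordinary Hilbert-space CLT together with the Gin\'e--Zinn bootstrap CLT, continuous mapping, and Polya's theorem finish the job. This is a genuinely different and in some ways more transparent route, because it bypasses the Hoeffding decomposition entirely and uses only that $\cH_u$ is a separable Hilbert space.

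One remark on moments: you attribute the hypothesis $\E_Q[u_p(\bX,\bX)^2]<\infty$ to controlling a Lindeberg remainder in the bootstrap step. In fact, in a separable Hilbert space the Gin\'e--Zinn almost-sure bootstrap CLT holds under the second-moment condition $\E_Q[\|\phi(\bX_1)\|_{\cH_u}^2]=\E_Q[u_p(\bX,\bX)]-\|\mu_Q\|_{\cH_u}^2<\infty$ alone (Hilbert spaces are of type~2, so the CLT and its bootstrap are equivalent to square-integrability). The squared condition in the lemma is the standard $\E[h^2]<\infty$ hypothesis inherited from the general V-statistic formulation of Arcones--Gin\'e; in your argument it is only used, via Jensen, to guarantee the weaker first-moment condition on $u_p(\bX,\bX)$. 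So your route actually establishes the conclusion under a slightly weaker assumption than stated, which is harmless.
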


Using this result, we can prove that a valid bootstrap approximation for the distribution of $\S_P(\X_n, Q)$ can be obtained by using the bootstrap sample $D_\bW(\X_n)$ defined in \eqref{eq: bootstrap sample}. This is summarized in the next lemma.

\begin{lemma}
\label{lem: bootstrap validity}
    Assume $\E_{\bX \sim Q}[ u_p(\bX, \bX)^2 ] < \infty$ and let $\bW \coloneqq (W_1, \ldots, W_n)$ be a random vector distributed as $\textrm{Multinomial}(n; 1/n, \ldots, 1/n)$. Under the notation in \Cref{lem: arcones lemma 3.4}, the following holds as $n \to \infty$,
    \begin{align}
        \sup_{t \in \R} \big| \Pr\nolimits_{\bW}\big(\sqrt{n} \cdot D_{\bW}(\X_n) \;\leq\; t \;|\; \X_\infty \big) - \Pr\nolimits_{\X_n \sim Q}\big( \sqrt{n} \cdot \S_P(\X_n, Q) \;\leq\; t \big) \big|
        \;\to\;
        0 \;.
        \label{eq: bootstrap validity}
    \end{align}
\end{lemma}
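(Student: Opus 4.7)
The plan is to reduce the claim to \Cref{lem: arcones lemma 3.4} by proving the deterministic identity $D^2_{\bW}(\X_n) = \S_P^2(\X_n^\ast, \X_n)$ between the weighted bootstrap statistic and the resample-based $P$-KSD, and then passing from squared to square-root quantities via a monotone change of variable.

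First, I couple the two sources of randomness: conditionally on $\X_n$, draw $\bX_1^\ast, \ldots, \bX_n^\ast$ iid from the empirical measure $Q_n$ and set $W_i \coloneqq |\{j : \bX_j^\ast = \bX_i\}|$, so that $(W_1,\ldots,W_n) \sim \textrm{Multinomial}(n;1/n,\ldots,1/n)$ and the bootstrap empirical measure equals $n^{-1}\sum_i W_i \delta_{\bX_i}$. Expanding $\S_P^2(\X_n^\ast, \X_n)$ via the double-sum form in \Cref{lem: P-KSD closed form} and using the symmetry $u_p(\bx, \bx') = u_p(\bx', \bx)$ to merge the two cross terms yields
\begin{align*}
\S_P^2(\X_n^\ast, \X_n)
\;=\; \frac{1}{n^2}\sum_{i,j=1}^n \big(W_i W_j - W_i - W_j + 1\big) u_p(\bX_i, \bX_j)
\;=\; \frac{1}{n^2}\sum_{i,j=1}^n (W_i - 1)(W_j - 1)\, u_p(\bX_i, \bX_j)
\;=\; D^2_{\bW}(\X_n),
\end{align*}
so the conditional law of $n D^2_{\bW}(\X_n)$ given $\X_\infty$ coincides with that of $n \S_P^2(\X_n^\ast, \X_n)$.

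Second, I apply \Cref{lem: arcones lemma 3.4} and substitute the identity on the left to obtain
$$\sup_{t \in \R}\big|\Pr_{\bW}\big(n D^2_{\bW}(\X_n) \leq t \,\big|\, \X_\infty\big) - \Pr_{\X_n \sim Q}\big(n \S_P^2(\X_n, Q) \leq t\big)\big| \;\to\; 0.$$
To conclude the square-root version, I use that $D_{\bW}(\X_n)$ is well-defined and non-negative because $D^2_{\bW} = \S_P^2(\X_n^\ast, \X_n) \geq 0$, and that $\S_P(\X_n, Q) \geq 0$ by construction. Hence for $t < 0$ both events $\{\sqrt{n}\,D_{\bW}(\X_n) \leq t\}$ and $\{\sqrt{n}\,\S_P(\X_n, Q) \leq t\}$ are empty, while for $t \geq 0$ they coincide with $\{n D^2_{\bW}(\X_n) \leq t^2\}$ and $\{n \S_P^2(\X_n, Q) \leq t^2\}$ respectively. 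The monotone change of variable $s = t^2$ then bounds the supremum in \eqref{eq: bootstrap validity} by the supremum in the preceding display, which vanishes.

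The main obstacle is purely algebraic: verifying that the two cross-product sums in the expansion of $\S_P^2(\X_n^\ast, \X_n)$ collapse via $u_p$-symmetry into the factor $(W_i-1)(W_j-1)$. All of the genuine probabilistic content, namely the convergence of the degenerate component of the bootstrap V-statistic, is packaged into \Cref{lem: arcones lemma 3.4}, and the passage to the square root is automatic because both statistics are non-negative.
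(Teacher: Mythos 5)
Your proposal is correct and follows essentially the same route as the paper: establish the identity $D^2_{\bW}(\X_n) = \S_P^2(\X_n^\ast,\X_n)$ by expanding the weighted bootstrap statistic against the resampled empirical measure, invoke \Cref{lem: arcones lemma 3.4}, and then pass to square roots. Your final step, identifying $\{\sqrt{n}D_{\bW}\leq t\}$ with $\{nD^2_{\bW}\leq t^2\}$ for $t\geq 0$ and noting both events are empty for $t<0$, is in fact a slightly more careful justification of the square-root passage than the paper's appeal to the continuous mapping theorem, since the claim concerns uniform (Kolmogorov) distance rather than weak convergence.
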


\begin{proof}[Proof of \Cref{lem: bootstrap validity}]
    As argued in \eqref{eq: PKSD as vstat}, the squared $P$-KSD $\S^2_P(\X_n, Q)$ can be written as a V-statistic with symmetric function $u_p(\cdot, \cdot; Q)$. Moreover, direct computation shows that $\E_{\bX \sim Q}[u_p(\cdot, \bX; Q)] \equiv 0$, so the symmetric function $u_p(\cdot, \cdot; Q)$ is \emph{$Q$-degenerate of order 1} \citep[pp.~5]{arcones1992bootstrap}. On the other hand, it is well-known that the weighted bootstrap form can be equivalently written as an Efron's resampled bootstrap statistic \citep{janssen1994weighted,dehling1994random,janssen1997bootstrapping}, i.e.,
    \begin{align*}
        D^2_{\bW}(\X_n)
        \;&=\;
        \frac{1}{n^2} \sum_{1 \leq i, j \leq n} (W_i - 1)(W_j - 1) u_p(\bX_i, \bX_j) 
        \\
        \;&=\;
        \frac{1}{n^2} \sum_{1 \leq i, j \leq n} W_iW_j u_p(\bX_i, \bX_j) - W_i u_p(\bX_i, \bX_j) - W_j u_p(\bX_i, \bX_j) + u_p(\bX_i, \bX_j)
        \\
        \;&\stackrel{d}{=}\;
        \frac{1}{n^2} \sum_{1 \leq i, j \leq n} u_p(\bX_i^\ast, \bX_j^\ast) - u_p(\bX_i^\ast, \bX_j) - u_p(\bX_i, \bX_j^\ast) + u_p(\bX_i, \bX_j)
        \\
        \;&=\;
        \frac{1}{n^2} \sum_{1 \leq i, j \leq n} u_p(\bX_i^\ast, \bX_j^\ast; Q_n) 
        \\
        \;&=\;
        \S_P^2(\X_n^\ast, \X_n)
        \;,
    \end{align*}
    where $\bX_i^\ast$ and $Q_n$ are defined in \Cref{lem: arcones lemma 3.4}, the notation $\stackrel{d}{=}$ denotes equality in distribution, the second last line follows by the definition of $u_p(\cdot, \cdot; Q_n)$, and the last line follows from \Cref{lem: P-KSD closed form}. Under the assumed moment condition, we can apply \Cref{lem: arcones lemma 3.4} together with the above derivation to conclude that a version of \eqref{eq: bootstrap validity} with the \emph{squared} statistics holds, i.e.,
    \begin{align*}
        \sup_{t \in \R} \big| \Pr\nolimits_{\bW}\big(n \cdot D^2_{\bW}(\X_n) \;\leq\; t \;|\; \X_\infty \big) - \Pr\nolimits_{\X_n \sim Q}\big( n \cdot \S^2_P(\X_n, Q) \;\leq\; t \big) \big|
        \;\to\;
        0 \;.
    \end{align*}
    The claim then follows by noting that the mapping $u \mapsto \sqrt{u}$ is everywhere continuous on $[0, \infty)$ and that weak convergence is preserved by continuous function by the Continuous Mapping Theorem \citep[Theorem 2.3]{vandervaart2000asymptotic}. 
\end{proof}

\subsubsection{Proof of \Cref{thm: bootstrap validity}}
\label{sec: proof of thm: bootstrap validity}
\begin{proof}[Proof of \Cref{thm: bootstrap validity}]
We first show that $\E_{\bX \sim Q}[ u_p(\bX, \bX)^2 ] < \infty$, so in particular $\E_{\bX \sim Q}[ u_p(\bX, \bX) ] < \infty$ and we can apply \Cref{lem: KSD inf closed form}. Defining $\bs_{p, w}(x) = w(x) \bs_p(x)$, we have 
\begin{align*}
    \E_{\bX \sim Q}[ u_p(\bX, \bX)^2 ]
    \;&\leq\;
    \E_{\bX \sim Q} \bigg[ \Big(\| \bs_{p, w}(\bx) \|_2^2 h(0) 
    + 2 \| \bs_{p, w}(\bx) \|_2 \| \nabla w(\bx) \|_2 h(0) 
    + \| \nabla w(\bx) \|_2^2 h(0) 
    \\
    &\;\qquad\qquad\quad
    + w(\bx)^2 \big| \nabla^\top \nabla h(0) \big| \Big)^2 \bigg] 
    \\
    \;&\leq\;
    4\Big( \E_{\bX \sim Q} \big[ \| \bs_{p, w}(\bx) \|_2^4 \big] h^2(0) 
    + 2 \E_{\bX \sim Q} \big[ \| \bs_{p, w}(\bx) \|_2^2 \| \nabla w(\bx) \|_2^2\big] h^2(0) 
    \\
    &\quad\quad\;\;
    + \E_{\bX \sim Q} \big[ \| \nabla w(\bx) \|_2^4\big] h^2(0) 
    + \E_{\bX \sim Q} \big[ w(\bx)^4\big] \big| \nabla^\top \nabla h(0) \big|^2 \Big)
    \;,
\end{align*}
where the second line follows from \eqref{eq: UB on stein kernel diag}, and the last line holds since $(a + b + c + d)^2 \leq 4(a^2 + b^2 + c^2 + d^2)$ for any $a, b, c, d \in \R$. The RHS of the last inequality is finite under the assumed conditions on $k$.

We are now ready to prove the first result of our theorem. For any $Q \in \cB^\KSD(P; \theta) \cap \cP(\R^d; w)$, we have
\begin{align*}
    &
    \Pr\nolimits_{\X_n \sim Q, \bW}\big( \Delta_\theta(\X_n) \;>\; q_{\infty, 1-\alpha}(\X_n) \big)
    \\
    \;&=\;
    \Pr\nolimits_{\X_n \sim Q, \bW}\big( \max(0, \ksd(\X_n) - \theta) \;>\; q_{\infty, 1-\alpha}(\X_n) \big)
    \\
    \;&=\;
    \Pr\nolimits_{\X_n \sim Q, \bW}\big( \inf\nolimits_{Q' \in \cB^\KSD(P; \theta)} \S_P(\X_n, Q') \;>\; q_{\infty, 1-\alpha}(\X_n) \big)
    \\
    \;&\leq\;
    \Pr\nolimits_{\X_n \sim Q, \bW}\big( \S_P(\X_n, Q) \;>\; q_{\infty, 1-\alpha}(\X_n) \big)
    \tagaligneq \label{eq: rej prob upper bound}
    \;,
\end{align*}
where the first equality holds by \Cref{lem: KSD inf closed form} and noting that $\ksd(\X_n)$ is equivalent to the KSD between the empirical measure based on $\X_n$ and $P$, and the last line holds since $Q \in \cB^\KSD(P; \theta)$. To show the first claim of our theorem, it suffices to show that the RHS of \eqref{eq: rej prob upper bound} converges to $\alpha$ as $n \to \infty$. Defining the following bootstrapping error
\begin{align*}
    \delta_n
    \;\coloneqq\;
    \Pr\nolimits_{\X_n \sim Q, \bW}\big( \S_P(\X_n, Q) \;>\; q_{\infty, 1-\alpha}(\X_n) \big) - \Pr\nolimits_{\bW}\big( D_{\bW}(\X_n) \;>\; q_{\infty, 1-\alpha}(\X_n) \;|\; \X_\infty \big)
    \;,
\end{align*}
we can write the RHS of \eqref{eq: rej prob upper bound} as
\begin{align*}
    \Pr\nolimits_{\X_n \sim Q, \bW}\big( \S_P(\X_n, Q) \;>\; q_{\infty, 1-\alpha}(\X_n) \big)
    \;&=\;
    \Pr\nolimits_{\bW}\big( D_{\bW}(\X_n) \;>\; q_{\infty, 1-\alpha}(\X_n) \;|\; \X_\infty \big) + \delta_n
    \\
    \;&=\;
    \alpha + \delta_n
    \tagaligneq \label{eq: type i and boot error}
    \;,
\end{align*}
where the last step holds since $q_{\infty, 1-\alpha}(\X_n)$ is the $(1-\alpha)$-quantile of the conditional distribution of $D_{\bW}(\X_n)$ given $\X_\infty$. Moreover, since we have shown that $\E_{\bX \sim Q}[ u_p(\bX, \bX)^2 ] < \infty$, we can apply \Cref{lem: bootstrap validity} to conclude that $\delta_n \to 0$ as $n \to \infty$. Hence, the probability on the LHS of the above equation converges to $\alpha$. Combining with \eqref{eq: rej prob upper bound}, we have proved that $\lim_{n \to \infty} \Pr\nolimits_{\X_n \sim Q, \bW}\big( \Delta_\theta(\X_n) \;>\; q_{\infty, 1-\alpha}(\X_n) \big) \leq \alpha$. Taking supremum over all $Q \in \cB^\KSD(P; \theta) \cap \cP(\R^d; w)$ shows the first claim of our theorem.

Now assume $Q \not\in \cB^\KSD(P; \theta)$ and $Q \in \cP(\R^d; w)$, so in particular $\theta - \ksd(Q, P) < 0$. We have
\begin{align*}
    &
    \Pr\nolimits_{\X_n \sim Q, \bW}\big( \Delta_\theta(\X_n) > q_{\infty, 1-\alpha}(\X_n))
    \\
    \;&=\;
    \Pr\nolimits_{\X_n \sim Q, \bW}\big( \ksd(\X_n) - \theta > q_{\infty, 1-\alpha}(\X_n) \big)
    \tagaligneq \label{eq: equiv test expression}
    \\
    \;&=\;
    \Pr\nolimits_{\X_n \sim Q, \bW}\Big( \sqrt{n} \big( \ksd^2(\X_n) - \ksd^2(Q, P) \big) > \sqrt{n} ( q_{\infty, 1-\alpha}(\X_n) + \theta)^2 - \sqrt{n} \ksd^2(Q, P) \Big)
    \tagaligneq \label{eq: bootstrap power}
    \;,
\end{align*}
where \eqref{eq: equiv test expression} holds since it can be checked that $\Delta_\theta(\X_n) > \gamma$ if and only if $D(\X_n) - \theta > \gamma$ for any $\gamma \geq 0$. The argument in \citet[Theorem 4.1]{liu2016kernelized} shows that  that $\sqrt{n} (\ksd^2(\X_n) - \ksd^2(Q, P) )$ converges weakly to a Gaussian limit assuming $\E_{\bX \sim Q}[u_p(\bX, \bX')^2] < \infty$, which holds since \eqref{eq: stein kernel diag bound} and Jensen's inequality imply
\begin{align*}
    \E_{\bX, \bX' \sim Q}[u_p(\bX, \bX')^2] 
    \;\leq\;
    \E_{\bX, \bX' \sim Q}[u_p(\bX, \bX) u_p(\bX', \bX')] 
    \;&=\;
    \big( \E_{\bX\sim Q}[u_p(\bX, \bX)] \big)^2
    \\
    \;&\leq\;
    \E_{\bX\sim Q}[u_p(\bX, \bX)^2]
    \;,
\end{align*}
which is finite as shown before. On the other hand, the weak convergence of $\sqrt{n} (\ksd^2(\X_n) - \ksd^2(Q, P) )$ also implies $q_{\infty, 1-\alpha}(\X_n) \to 0$ as $n \to \infty$, so when $D(Q, P) > \theta$,
\begin{align*}
    \lim_{n\to\infty} ( q_{\infty, 1-\alpha}(\X_n) + \theta)^2 - \ksd^2(Q, P)
    \;=\;
    \theta^2 - \ksd^2(Q, P)
    \;<\;
    0
    \;.
\end{align*}
Combing these arguments shows that in the probability in \eqref{eq: bootstrap power}, the LHS converges to a non-degenerate distribution, while the RHS tends to $-\infty$. This implies \eqref{eq: bootstrap power} converges to 1, thus proving the second claim.
\end{proof}

\begin{remark}
\label{rem: uniform bound}
    If the bootstrapped quantile $q_{\infty, 1-\alpha}(\X_n)$ is replaced by the quantile $q^\ast_{1-\alpha}(\X_n)$ of the distribution of the $P$-KSD $\S_P(\X_n, Q)$, then the bootstrap approximation error in \eqref{eq: type i and boot error} vanishes, i.e., $\delta_n = 0$ for all $n$. In that case, we have the following stronger, uniform level control
        \begin{align*}
            \limsup_{n \to \infty} \sup_{Q \in \cB^\KSD(P; \theta) \cap \cP(\R^d; w)}\Pr\nolimits_{\X_n \sim Q, \bW}\big( \Delta_\theta(\X_n) \;>\; q^\ast_{1-\alpha}(\X_n) \big)
            \;\leq\;
            \alpha \;.
        \end{align*}
        In fact, the inequality holds for any finite $n$, since the RHS of \eqref{eq: type i and boot error} equals to $\alpha$ for all $n$.
\end{remark}

\subsection{Connections Between KSD Balls and Contamination Models}
\label{app: KSD balls and contam models}
This section provides proofs for the results in \Cref{sec: choosing uncertainty radius}. 
\begin{itemize}
    \item \Cref{sec: proof of KSD decomposition} states and proves \Cref{lem: KSD decomposition}, an intermediary result which provides a decomposition of the KSD under Huber's contamination models.
    \item \Cref{sec: proof of contam models and KSD-ball} proves \Cref{prop: KSD bound huber model} using \Cref{lem: KSD decomposition}.
    \item \Cref{sec: pf of KSD bound density band} proves \Cref{prop: KSD bound density band}.
    \item \Cref{app: KSDball_fat_tails} provides an example of using \Cref{prop: KSD bound density band} to bound the KSD between t- and Gaussian distributions.
\end{itemize}

\subsubsection{Statement and Proof of \Cref{lem: KSD decomposition}}
\label{sec: proof of KSD decomposition}

\begin{lemma}
    \label{lem: KSD decomposition}
    Let $\epsilon \in [0, 1]$ and $Q = (1 - \epsilon) P + \epsilon R$, for any $R \in \cP(\R^d)$ such that $\E_{\bY \sim R}[ u_p(\bY, \bY)^{1/2} ] < \infty$. Assume $k \in \cC^{(1,1)}_b$ and $\E_{\bX \sim P}[ \| \bs_p(\bX) \|_2 ] < \infty$. Then $\ksd(Q, P) = \epsilon \ksd(R, P)$.
\end{lemma}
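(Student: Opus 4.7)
The plan is to exploit the variational definition of $\ksd$ together with the Stein identity under $P$ and the linearity of expectation under the mixture $Q = (1-\epsilon)P + \epsilon R$.

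First, I would verify that every integral in sight is well-defined. For any $f \in \cB$ and any $\by \in \R^d$, the reproducing property of the Stein kernel (combined with Cauchy--Schwarz in the RKHS $\cH_k^d$, as in the argument leading to \eqref{eq: stein kernel diag bound}) gives the pointwise bound $|(\cA_p f)(\by)| \leq u_p(\by, \by)^{1/2}$. Integrating against $R$ and invoking the assumption $\E_{\bY \sim R}[u_p(\bY,\bY)^{1/2}] < \infty$ shows that $\bY \mapsto (\cA_p f)(\bY)$ is $R$-integrable uniformly in $f \in \cB$, and hence also $Q$-integrable once the corresponding $P$-integrability is established. The latter follows because $\E_{\bX \sim P}[\|\bs_p(\bX)\|_2] < \infty$ together with $k \in \cC^{(1,1)}_b$ and $f \in \cB$ gives that $(\cA_p f)(\bX)$ is $P$-integrable (this is precisely the hypothesis for the Stein identity of Gorham--Mackey).

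Next, I would invoke the Stein identity: under the stated conditions, $\E_{\bX \sim P}[(\cA_p f)(\bX)] = 0$ for every $f \in \cB$ \citep[Proposition 1]{gorham2017measuring}. By linearity of expectation with respect to the mixture $Q$, we then get, for each $f \in \cB$,
\begin{align*}
\E_{\bX \sim Q}[(\cA_p f)(\bX)]
\;=\;
(1-\epsilon)\, \E_{\bX \sim P}[(\cA_p f)(\bX)] + \epsilon\, \E_{\bY \sim R}[(\cA_p f)(\bY)]
\;=\;
\epsilon\, \E_{\bY \sim R}[(\cA_p f)(\bY)]\;.
\end{align*}

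Finally, I would take absolute values and a supremum over $f \in \cB$, factoring out the nonnegative constant $\epsilon$:
\begin{align*}
\ksd(Q, P)
\;=\;
\sup_{f \in \cB} \big|\epsilon\, \E_{\bY \sim R}[(\cA_p f)(\bY)]\big|
\;=\;
\epsilon \sup_{f \in \cB} \big|\E_{\bY \sim R}[(\cA_p f)(\bY)]\big|
\;=\;
\epsilon\, \ksd(R, P)\;,
\end{align*}
which is the claimed identity. There is no serious obstacle here; the only subtlety is the bookkeeping to ensure integrability of $(\cA_p f)(\cdot)$ against both $P$ and $R$ uniformly over $f \in \cB$, so that the Stein identity can be applied and the expectation can be split along the mixture. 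The moment assumption on $u_p(\bY,\bY)^{1/2}$ under $R$ is precisely what is needed for this.
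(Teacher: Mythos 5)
Your proof is correct, but it follows a slightly different route from the paper's. The paper works with the \emph{squared} KSD in its double-integral form \eqref{eq:Stein_kernel}: it expands $\ksd^2(Q,P)$ over the mixture into the three terms $(1-\epsilon)^2\,\E_{P,P}[u_p]$, $2\epsilon(1-\epsilon)\,\E_{P,R}[u_p]$, and $\epsilon^2\,\E_{R,R}[u_p]$, kills the first two using $\E_{\bX\sim P}[u_p(\bX,\cdot)]=0$ (the same Gorham--Mackey Proposition 1 you cite), and then takes a square root. You instead work with the variational definition: linearity of the functional $f\mapsto\E_Q[(\cA_pf)]$ across the mixture, the Stein identity $\E_P[(\cA_pf)]=0$, and factoring $\epsilon\ge 0$ out of the supremum. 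Both arguments hinge on exactly the same mean-zero property under $P$, so they are close in spirit, but yours avoids the square/square-root round trip and the cross-term $\E_{P,R}[u_p]$ entirely; your integrability bookkeeping via $|(\cA_pf)(\by)|\le u_p(\by,\by)^{1/2}$ is the right uniform bound over $\cB$ and matches the role the assumption $\E_{\bY\sim R}[u_p(\bY,\bY)^{1/2}]<\infty$ plays in the paper (there, via Proposition 2 of Gorham--Mackey, to guarantee $\ksd(R,P)$ is well-defined). The paper's version has the minor advantage of staying in the $u_p$-based computational form used throughout the rest of the manuscript, but your argument is complete and, if anything, marginally cleaner.
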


\begin{proof}[Proof of \Cref{lem: KSD decomposition}]
The assumption $\E_{\bY \sim R}[ u_p(\bY, \bY)^{1/2} ] < \infty$ implies that $D(R, P)$ is well-defined by \citet[Proposition 2]{gorham2017measuring}. Moreover, when $k \in \cC^{(1,1)}_b$ and $\E_{\bX \sim P}[ \| \bs_p(\bX) \|_2 ] < \infty$, \citet[Proposition 1]{gorham2015measuring} shows that $\E_{\bX \sim P}[ u_p(\bX, \cdot) ] = 0$. Using this and the linearity of expectation,
\begin{align*}
    \ksd^2(Q, P)
    \;&=\;
    (1 - \epsilon)^2 \E_{\bX, \bX' \sim P}[ u_p(\bX, \bX') ]
    + 2 (1 - \epsilon) \epsilon \E_{\bX \sim P, \bY \sim R}[ u_p(\bX, \bY) ]
    \\
    &\;\quad
    + \epsilon^2 \E_{\bY, \bY' \sim R}[ u_p(\bY, \bY') ]
    \\
    \;&=\;
    (1 - \epsilon)^2 \ksd^2(P, P)
    + \epsilon^2 \ksd^2(R, P)
    \\
    \;&=\;
    \epsilon^2 \ksd^2(R, P)
    \;.
\end{align*}
Taking square-root of both sides gives the desired result.
\end{proof}

\subsubsection{Proof of \Cref{prop: KSD bound huber model}}
\label{sec: proof of contam models and KSD-ball}

\begin{proof}
The proof follows a similar approach as \citet[Lemma 3.3]{Cherief-Abdellatif2019}. Under the assumed kernel conditions, \Cref{lem: bounded Stein kernel} shows that $\sup_{\bx, \bx' \in \mathbb{R}^d} u_p(\bx, \bx') \leq \tau_\infty < \infty$, which implies $\ksd^2(Q, P) = \E_{\bX,\bX' \sim Q}[ u_p(\bX,\bX') ] \leq \tau_\infty < \infty$ for any probability measure $Q \in \cP(\R^d)$. In particular, $D(Q, P)$ is well-defined for any $Q \in \cP(\R^d)$ and the conditions in \Cref{lem: KSD decomposition} are met. For any $Q = (1 - \epsilon) P + \epsilon R $ with $Q \in \cP(P; \epsilon_0)$, applying \Cref{lem: KSD decomposition} shows that 
\begin{align}
    \ksd(Q, P) \;=\; \epsilon \ksd(R, P) \;\leq\; \epsilon_0 \ksd(R, P) 
    \;\leq\; \epsilon_0 \tau_\infty^{\frac{1}{2}}
    \label{eq: KSD contam model upper bound}
    \;.
\end{align}
To show that this bound is tight, we can lower bound the KSD as
\begin{align*}
    \sup_{Q \in \cP(P; \epsilon_0)} \ksd(Q, P) 
    \;\geq\;
    \sup_{\bz \in \mathbb{R}^d, \epsilon \in [0,\epsilon_0] } \ksd\big( (1-\epsilon)P + \epsilon \delta_\bz, P \big) 
    \;&=\; 
    \sup_{\bz \in \mathbb{R}^d, \epsilon \in [0,\epsilon_0] }  \epsilon \ksd(\delta_\bz, P) 
    \\
    \;&=\; 
    \epsilon_0 \sup_{\bz \in \mathbb{R}^d} u_p(\bz, \bz)^{\frac{1}{2}}
    \\
    \;&=\;
    \epsilon_0 \tau_\infty^{\frac{1}{2}}
    \;,
\end{align*}
where the first inequality holds since $R = \delta_\bz$ is only one possible type of perturbation, the first equality follows from \Cref{lem: KSD decomposition}, the second equality holds because the supremum over $\epsilon$ is reached at $\epsilon_0$, and the last step holds as $\sup_{\bz \in \mathbb{R}^d} u_p(\bz, \bz) = \tau_\infty$ by definition. Combining with the upper bound \eqref{eq: KSD contam model upper bound}, we have shown that $\sup_{Q \in \cP(P; \epsilon_0)} \ksd(Q, P) = \epsilon_0 \tau_\infty^{1/2}$.

\subsubsection{Proof of \Cref{prop: KSD bound density band}}
\label{sec: pf of KSD bound density band}

Under the assumed conditions on $k$, \Cref{lem: bounded Stein kernel} shows that the Stein kernel is bounded by $\tau_\infty = \sup_{\bx \in \mathbb{R}^d} u_p(\bx, \bx) < \infty$, so in particular $D(Q, P)$ is well-defined for all $Q \in \cP(\R^d)$. Under the assumed integrability condition, we have $\E_{\bX \sim P}[u_p(\bX, \cdot)] = 0$ as argued in the paragraph before \eqref{eq: var bound}. We can therefore rewrite the squared KSD as
\begin{align*}
    D^2(Q, P)
    \;&=\;
    \int_{\R^d}\int_{\R^d} u_p(\bx, \bx') q(\bx) q(\bx') \diff\bx \diff\bx'
    \\
    \;&=\;
    \int_{\R^d}\int_{\R^d} u_p(\bx, \bx') (q(\bx) - p(\bx)) (q(\bx') - p(\bx')) \diff\bx \diff\bx'
    \;,
\end{align*}
Using the assumed bound $| q(\bx) - p(\bx) | \leq \delta(\bx)$ and \Cref{lem: bounded Stein kernel}, we can bound the RHS of the above line by
\begin{align*}
    \int_{\R^d}\int_{\R^d} u_p(\bx, \bx') \delta(\bx) \delta(\bx') \diff\bx \diff\bx'
    \;&\leq\;
    \int_{\R^d}\int_{\R^d} | u_p(\bx, \bx') | \delta(\bx) \delta(\bx') \diff\bx \diff\bx'
    \\
    \;&\leq\;
    \tau_\infty \bigg( \int_{\R^d} \delta(\bx) \diff\bx \bigg)^2
    \;=\;
    \tau_\infty \delta_0^2
    \;.
\end{align*}
Taking the square-root of both sides implies $D(Q, P) \leq  \tau_\infty^{1/2} \delta_0$.
\end{proof}

\subsubsection{KSD Balls and Fat Tails}
\label{app: KSDball_fat_tails}

The next proposition serves as an example for how \Cref{prop: KSD bound density band} can be applied to design a robust KSD test when the model is Gaussian but data are drawn from t-distributions that are moment-matched to the model.

\begin{proposition}
    \label{prop: KSD ball fat tails}
    Let $P = \cN(0, 1)$ and let $Q_\nu = t_\nu \sqrt{(\nu - 2) / \nu}$, where $t_\nu$ is the t-distribution with degree-of-freedom (dof) $\nu > 2$. Denote by $p$ and $q_\nu$ their probability density functions, and by $F_\infty$ and $F_\nu$ their cumulative distribution functions, respectively. Then $p(x)$ and $q_\nu(x)$ have exactly two intersections $a_1 < a_2$ on $(0, \infty)$. 
    If furthermore $k$ satisfies the conditions in \Cref{lem: bounded Stein kernel}, then $\KSD(Q_\nu, P) \leq 4 \tau_\infty^{1/2} ( F_\nu(a_1) - F_\infty(a_1) + F_\infty(a_2) - F_\nu(a_2) )$. 
\end{proposition}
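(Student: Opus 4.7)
The proof splits into two parts: the intersection count and the KSD bound.

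For the intersection count, I would work with the log-ratio $g(x) := \log q_\nu(x) - \log p(x)$. Since both densities are symmetric about $0$, $g$ can be written as a function of $u := x^2 \geq 0$ in the form $g(u) = C - \tfrac{\nu+1}{2}\log(1 + u/(\nu-2)) + u/2$ for a constant $C$ depending only on $\nu$. Differentiating in $u$ gives $g'(u) = 1/2 - (\nu+1)/[2(\nu-2+u)]$, whose unique zero is at $u = 3$, i.e., $x = \sqrt{3}$; the second derivative in $u$ is strictly positive, so this is a strict minimum of $g$ on $[0,\infty)$. Since $g(x) \to +\infty$ as $x \to \infty$ (the quadratic dominates the logarithm), $g(0) > 0$ (from the peak inequality $q_\nu(0) > p(0)$ for all $\nu > 2$), and $g$ must take negative values somewhere on $(0, \infty)$—otherwise $q_\nu \geq p$ pointwise there, contradicting $\int_0^\infty q_\nu = \int_0^\infty p = 1/2$—the intermediate value theorem delivers exactly two zeros $0 < a_1 < \sqrt{3} < a_2$, with the sign pattern $q_\nu > p$ on $(0, a_1) \cup (a_2, \infty)$ and $q_\nu < p$ on $(a_1, a_2)$.

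For the KSD bound, I would apply \Cref{prop: KSD bound density band} with the tight choice $\delta(x) := |q_\nu(x) - p(x)|$; the hypothesis $\delta_0 < \infty$ is satisfied since $\int |q_\nu - p|\,\diff x \leq 2$. This yields $\KSD(Q_\nu, P) \leq \tau_\infty^{1/2} \|q_\nu - p\|_{L^1}$. By symmetry of both densities about $0$, $\|q_\nu - p\|_{L^1} = 2\int_0^\infty |q_\nu(x) - p(x)|\,\diff x$. Splitting the latter across the three sign regions identified above and converting each piece into a CDF difference via $F_\nu(0) = F_\infty(0) = 1/2$ and $F_\nu(\infty) = F_\infty(\infty) = 1$, a short telescoping computation gives $\int_0^\infty|q_\nu - p|\,\diff x = 2\bigl[(F_\nu(a_1) - F_\infty(a_1)) + (F_\infty(a_2) - F_\nu(a_2))\bigr]$. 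Multiplying by the factor of $2$ from symmetrization and by $\tau_\infty^{1/2}$ yields the stated inequality.

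The main obstacle is the seemingly innocuous step $q_\nu(0) > p(0)$, which anchors the sign pattern of $g$. It reduces to the Gamma-function inequality $\Gamma((\nu+1)/2)/\bigl(\Gamma(\nu/2)\sqrt{(\nu-2)\pi}\bigr) > 1/\sqrt{2\pi}$ for all $\nu > 2$, which is elementary but requires some care (via log-convexity properties of $\Gamma$). An alternative route is to bypass the explicit peak comparison by observing that if $g(0) \leq 0$, then the shape of $g$ (single minimum, blowing up at infinity) would force at most one zero on $(0, \infty)$, hence a single sign change of $q_\nu - p$, which is incompatible with the matched zeroth and second moments of $Q_\nu$ and $P$ combined with the heavier tails of $Q_\nu$. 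The rest of the argument is routine CDF bookkeeping and symmetrization; note that the resulting bound is not tight because the step $\ksd^2 \leq \int\!\int |u_p|\,\delta(\bx)\delta(\bx')\,\diff\bx\,\diff\bx'$ inside \Cref{prop: KSD bound density band} discards cross-cancellations in $u_p$.
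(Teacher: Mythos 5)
Your proof is correct, and the second half (invoking \Cref{prop: KSD bound density band} with $\delta = |q_\nu - p|$, splitting $\int_0^\infty|q_\nu-p|$ over the three sign regions, and telescoping the CDFs using $F_\nu(0)=F_\infty(0)=1/2$) is exactly what the paper does. Where you genuinely diverge is the intersection count. The paper verifies the same convexity of $u \mapsto \log q_\nu(\sqrt{u})$ that underlies your $g$, but then outsources the conclusion to a cited classification theorem of Palmer et al.\ (their Theorem 2), which states that a symmetric strongly super-Gaussian density crosses an equal-variance Gaussian exactly four times and exceeds it at $0$ and in the tails; by symmetry this gives the two crossings on $(0,\infty)$ and the sign pattern. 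You instead reprove this from scratch for the specific pair $(q_\nu, p)$: the log-ratio $g$ is strictly convex in $u=x^2$ with unique minimum at $u=3$, diverges at infinity, must go negative by mass conservation on $(0,\infty)$, and is positive at $0$ — hence exactly two zeros, one on each side of $\sqrt{3}$. Your argument is self-contained (the paper's is not), pins down the crossing locations, and your fallback for the one delicate step $q_\nu(0)>p(0)$ — ruling out a single sign change via $\int_0^\infty (x^2-a^2)(q_\nu-p)\,\diff x = 0$ from the matched zeroth and second moments — is valid and neatly sidesteps the Gamma-ratio inequality that the cited theorem handles implicitly. The only cost is length; the payoff is that you do not rely on an external structural result about the $DC_+$ class.
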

This result suggests that, to ensure quantitative robustness against the scaled t-distribution with $\nu$ degrees-of-freedom, we can choose the uncertainty radius in the robust-KSD test to be $\theta = 4 \tau_\infty^{1/2} ( F_\nu(a_1) - F_\infty(a_1) + F_\infty(a_2) - F_\nu(a_2) )$. The intersection points $a_1, a_2$ do not have a closed form; instead, we approximate them by numerical solvers, which is trivial for this one-dimensional problem, and the numerical error is negligible as evidenced empirically in \Cref{fig: heavy tail}.

\begin{proof}[Proof of \Cref{prop: KSD ball fat tails}]
    For $\nu > 2$, the change-of-variable formula shows that the scaled t-distribution $Q_\nu$ has density function $q(x) = Z_\nu q_\nu^\ast(x)$, where $Z_\nu \coloneqq \Gamma(\frac{\nu + 1}{2})/\big(\sqrt{\pi (\nu-2)} \Gamma(\frac{\nu}{2}) \big)$, $q_\nu^\ast(x) \coloneqq (1 + \frac{x^2}{\nu-2})^{- (\nu + 1)/2}$, and $\Gamma(\cdot)$ is the Gamma function \citep[see, e.g.,][Chapter 8.1]{forbes2011statistical}. 
    To show $q_\nu(x)$ intersects with $p(x)$ at exactly two points on $(0, \infty)$, we first show that $q_\nu(x)$ is \emph{strong super-Gaussian} \citep{palmer2010strong}, i.e., $x \mapsto \log q_\nu(\sqrt{x})$ is convex on $[0, \infty)$. This is immediate by writing
    \begin{align*}
        \log q_\nu(\sqrt{x}) \;=\; \log Z_\nu -\frac{\nu + 1}{2} \log\left( 1 + \frac{x}{\nu - 2} \right) \;,
    \end{align*}
    and noting that $u \mapsto \log(1 + u)$ is concave on $[0, \infty)$. \citet[Theorem 2]{palmer2010strong} shows that any symmetric and strongly super-Gaussian density belongs to the class of \emph{density cross inequalities} ($DC_+$), which are symmetric densities that cross a Gaussian density of equal variance exactly four times on $\R$, and take higher values than that normal density at $x=0$ and as $x \to \infty$.
    Since in this case both $Q_\nu$ and $P$ have unit variance, we conclude that $q_\nu$ and $p$ intersect at exactly four points on $\R$. By symmetry of these densities about $x=0$, this shows that $q_\nu$ and $p$ must intersect at exactly \emph{two} points on $(0, \infty)$.
    
    Call these two intersections $a_1, a_2$ and assume without loss of generality $0 < a_1 < a_2$. It then follows that $q_\nu(x) \geq p(x)$ on $[0, a_1) \cup [a_2, \infty)$ and $q_\nu(x) \leq p(x)$ on $[a_1, a_2)$.
    We therefore have
    \begin{align*}
        \delta_0
        \;&\coloneqq\;
        \int_\R | p(x) - q_\nu(x) | \diff x
        \\
        \;&=\;
        2\int_{0}^{a_1} ( q_\nu(x) - p(x) ) \diff x
        + 2\int_{a_1}^{a_2} ( p(x) - q_\nu(x) ) \diff x
        + 2\int_{a_2}^{\infty} ( q_\nu(x) - p(x) ) \diff x
        \\
        \;&=\;
        2 (F_\nu(a_1) - F_\infty(a_1)) 
        + 2\big(F_\infty(a_2) - F_\infty(a_1) - (F_\nu(a_2) - F_\nu(a_1) ) \big)
        \\ &\;\quad
        + 2 \big((1-F_\nu(a_2)) - (1-F_\infty(a_2))\big)
        \\
        \;&=\;
        2 (F_\nu(a_1) - F_\infty(a_1) + F_\infty(a_2) - F_\infty(a_1) - F_\nu(a_2) + F_\nu(a_1)
        + F_\infty(a_2) - F_\nu(a_2)
        ) 
        \\
        \;&=\;
        4 ( F_\nu(a_1) - F_\infty(a_1) + F_\infty(a_2) - F_\nu(a_2) )
        \;,
    \end{align*}
    where in the second line we have again used the symmetry of normal and t densities. Applying \Cref{prop: KSD bound density band} completes the proof.
\end{proof}

\section{Complementary Experiments}
\label{app: supp experiments}
This section includes additional experimental results. \Cref{app: rate contam ratio} discusses the rate requirement in \Cref{thm: robust tilted} and demonstrates that the KSD test is no longer qualitatively robust when this condition is not met. \Cref{app: mmd tests} reviews the robust MMD tests included in \Cref{sec: contam gaussian} and presents implementation details. \Cref{app: bw robust} includes an ablation study on the choice of kernel bandwidths in the robust-KSD test. \Cref{app: ms experiment} studies its scalability with dimension. \Cref{app: efron vs wild} compares robust-KSD using two different bootstrap methods: the weighted bootstrap used throughout this work, and a wild bootstrap that is more prevalent in the kernel testing literature.

\subsection{Decay Rate of Contamination Ratio}
\label{app: rate contam ratio}
We show empirically that the rate requirement in the qualitative robustness result \Cref{thm: robust tilted} is not an artifact of the proof, i.e., the tilted-KSD test is no longer qualitatively robust to Huber's contamination models with contamination ratio $\epsilon_n$ if $\epsilon_n = n^{-r}$ for any $r \leq 1/2$, where $n$ is the sample size. 
We run the standard KSD test with a tilted kernel under the contaminated Gaussian model as described in \Cref{sec: contam gaussian} with dimension $d=1$ and outlier $z = 10$. The contamination ratio is chosen to be $\epsilon_n = n^{-r}$ for different choices of $r$, and the probability of rejection as $n$ grows is shown in \Cref{fig: ol contam rate}. The results are averaged over $400$ repetitions instead of 100 repetitions as in \Cref{sec: contam gaussian} to reduce numerical uncertainty. When $r > 0.5$, the rejection probability converges to the prescribed test level $\alpha$, which is the limit of the rejection probability \emph{without} contamination. This aligns with \Cref{thm: robust tilted}. However, this rejection probability no longer converges to $\alpha$ when $r \leq 0.5$, thus showing that the tilted-KSD test is no longer qualitatively robust.

\begin{figure}[t]
    \begin{minipage}[t]{0.47\textwidth}
        \centering
        \includegraphics[width=1\textwidth]{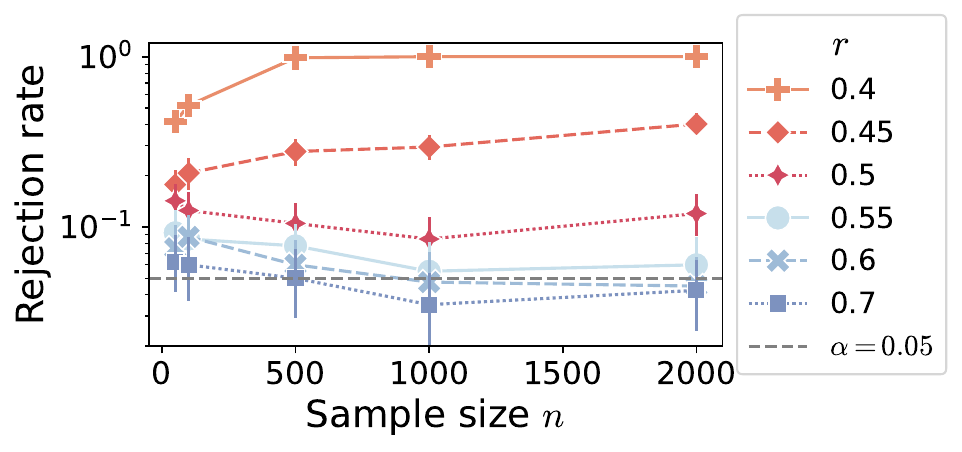}
        \caption{Rejection probability (in log scale) against sample size under the univariate contaminated Gaussian model with outlier $z = 10$. The contamination ratio scales as $\epsilon_n = n^{-r}$ for different values of $r$.}
        \label{fig: ol contam rate}
    \end{minipage}
    \hfill
    \begin{minipage}[t]{0.50\textwidth}
        \centering
        \includegraphics[width=.86\textwidth]{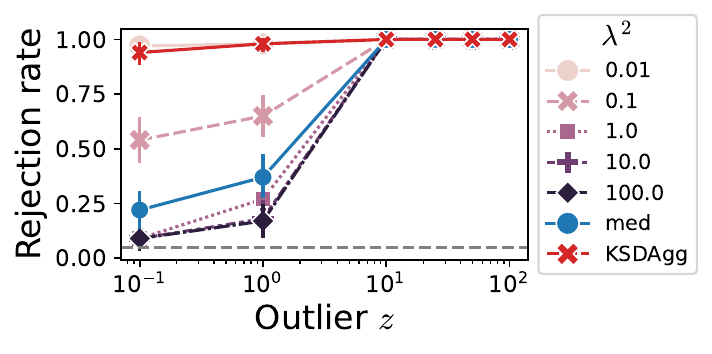}
        \caption{Rejection probability for an IMQ kernel with bandwidth $\lambda$. ``med'' is the median heuristic. ``KSDAgg'' is the test of \citet{schrab2022ksd}. The dashed line shows $\alpha = 0.05$. All tests all reject $H_0$ for large $z$.}
        \label{fig: bw standard}
    \end{minipage}
\end{figure}

\subsection{Implementation Details for the Robust MMD Tests}
\label{app: mmd tests}
This section provides implementation details for the two robust MMD-based tests included in \Cref{sec: contam gaussian}. Both tests use the Maximum Mean Discrepancy \citep[MMD;][Definition 2]{gretton2012kernel} between the data-generating distribution $Q$ and the model $P$ as a measure of their disparity. Given a reproducing kernel $k: \R^d \times \R^d \to \R$, the squared MMD between $Q$ and $P$ are defined as
\begin{align*}
    D_\MMD^2(Q, P)
    \;=\;
    \E_{\bX, \bX' \sim Q}[ k(\bX, \bX') ] + \E_{\bY, \bY' \sim P}[ k(\bY, \bY') ] - 2\E_{\bX \sim Q, \bY \sim P}[k(\bX, \bY)]
    \;.
\end{align*}
Given independent samples $\X_n = \{\bX_i\}_{i=1}^n$ from $Q$ and $\Y_m = \{\bY_j\}_{j=1}^m$ from $P$, the MMD can be estimated directly by $D_\MMD^2(Q_n, P_m)$, where $Q_n$ and $P_m$ are the empirical measures formed by $\X_n$ and $\Y_m$, respectively \citep{gretton2012kernel}. The computational cost of such estimate scales with sample size $n$ as $\cO((n+m)^2 + m C_{\text{sim}})$, where $C_{\text{sim}}$ is the cost of simulating one datum from $P$, compared with $\cO(n^2 + n C_{\text{score}})$ for KSD, where $C_{\text{score}}$ denotes the cost of a single score evaluation. In our experiment in \Cref{sec: contam gaussian}, $P$ is a Gaussian model, so both simulation and score evaluation are trivial and $C_{\text{sim}}, C_{\text{score}}$ are both small. Hence, we set $m = n$ so that the costs of MMD and of KSD are comparable. Moreover, we will use the squared-exponential kernel $k(\bx, \bx') = \exp(- \| \bx - \bx' \|_2^2 / (2\gamma^2))$ and set $\gamma^2$ using the median heuristic. Squared-exponential kernels are popular for MMDs and are also used in the original papers that proposed the two robust MMD tests \citep{sun2023kernel,schrab2024robust}. 

\subsubsection{The MMD-Dev Test}
The robust MMD test of \citet[Eq.~38]{sun2023kernel}, which we refer to as \emph{MMD-Dev}, targets the hypotheses
\begin{align}
    H_0^\MMD: \; Q \in \cB^\MMD(P; \theta_\MMD)
    \;,\qquad
    H_1^\MMD: \; Q \not\in \cB^\MMD(P; \theta_\MMD)
    \;,
    \label{eq: MMD-ball hypotheses}
\end{align}
where $\theta_\MMD \geq 0$ and $\cB^\MMD(P; \theta_\MMD) = \{Q \in \cP:  D_\MMD(Q, P) \leq \theta_\MMD \}$ is the MMD ball centered at $P$ and with radius $\theta_\MMD$. Given a test level $\alpha \in (0, 1)$, MMD-Dev rejects $H_0^\MMD$ if $D_\MMD(Q_n, P_n) > \theta_\MMD + \gamma_n$, where $\gamma_n = \sqrt{2K / n} (1 + \sqrt{- \log \alpha})$. \citet[Theorem 4]{sun2023kernel} shows that this test controls the Type-I error for any finite $n$, and is asymptotically optimal against certain alternatives. The decision threshold $\gamma_n$ is derived using a McDiarmid-type deviation inequality for MMDs due to \citet[Theorem 8]{gretton2012kernel}.

In our experiments, we chose $\theta_\mathrm{MMD} = \epsilon_0 \sqrt{2}$. This ensures that $\cB^\MMD(P; \theta_\MMD)$ contains Huber's contamination models with contamination ratios up to $\epsilon_0$, which we show in the next result. Notably, although we can compare the MMD-Dev test with our robust-KSD test on Huber's contamination models, they are in general \emph{not} directly comparable, since they target different null hypotheses.

\begin{lemma}[MMD balls and Huber's model]
\label{lem: MMD ball Huber model}
    Let $k$ be a reproducing kernel with $0 < k(\bx, \bx') \leq K < \infty$, for all $\bx, \bx' \in \R^d$. For any $\epsilon_0 \in [0, 1]$, if $\theta_\MMD = \epsilon_0 \sqrt{2K}$, then $\cP(P; \epsilon_0) \subset \cB^\MMD(P; \theta_\MMD)$, where $\cP(P; \epsilon_0)$ is the Huber's contamination model defined in \eqref{eq: Huber model}.
\end{lemma}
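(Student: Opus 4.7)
The plan is to mirror the argument used for Huber contamination in the KSD case (Proposition~\ref{prop: KSD bound huber model}, via Lemma~\ref{lem: KSD decomposition}) but now adapted to the MMD setting. Fix any $Q\in\cP(P;\epsilon_0)$, so that $Q=(1-\epsilon)P+\epsilon R$ for some $\epsilon\in[0,\epsilon_0]$ and $R\in\cP(\R^d)$. The bound $k\le K$ ensures the mean embeddings $\mu_Q,\mu_P,\mu_R\in\cH_k$ are well defined with finite norm, so all MMD quantities below exist.

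The first step is to use linearity of the mean embedding in its measure argument, which gives $\mu_Q=(1-\epsilon)\mu_P+\epsilon\mu_R$, hence
\begin{equation*}
\mu_Q-\mu_P \;=\; \epsilon\bigl(\mu_R-\mu_P\bigr),
\end{equation*}
and therefore $D_\MMD(Q,P)=\epsilon\,D_\MMD(R,P)$. This is the exact MMD analogue of Lemma~\ref{lem: KSD decomposition}; note that, unlike in the KSD proof, it does not require any vanishing-expectation identity, because $\mu_P$ is simply a well-defined element of $\cH_k$ rather than something forced to zero by a Stein identity.

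The second step is to bound $D_\MMD(R,P)$ uniformly over $R\in\cP(\R^d)$. Expanding the squared norm and using the non-negativity of $k$,
\begin{equation*}
D_\MMD^2(R,P)
\;=\; \|\mu_R\|_{\cH_k}^2 - 2\langle \mu_R,\mu_P\rangle_{\cH_k} + \|\mu_P\|_{\cH_k}^2
\;\le\; \|\mu_R\|_{\cH_k}^2 + \|\mu_P\|_{\cH_k}^2,
\end{equation*}
since $\langle\mu_R,\mu_P\rangle_{\cH_k}=\E_{\bX\sim R,\bY\sim P}[k(\bX,\bY)]\ge 0$ by the assumption $k>0$. Both squared norms are at most $K$ because $\|\mu_R\|_{\cH_k}^2=\E_{\bX,\bX'\sim R}[k(\bX,\bX')]\le K$ and similarly for $P$. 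This yields $D_\MMD(R,P)\le\sqrt{2K}$.

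Combining the two steps gives $D_\MMD(Q,P)=\epsilon\,D_\MMD(R,P)\le\epsilon_0\sqrt{2K}=\theta_\MMD$, so $Q\in\cB^\MMD(P;\theta_\MMD)$, which is the desired inclusion. There is no real obstacle here: the only subtlety is recognising that the positivity assumption $k>0$ is what lets us drop the cross term to get the constant $\sqrt{2K}$ rather than the looser triangle-inequality constant $2\sqrt{K}$; without this sign observation one would not recover exactly the radius $\epsilon_0\sqrt{2K}$ claimed in the statement.
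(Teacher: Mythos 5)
Your proof is correct and follows essentially the same route as the paper's: linearity of the mean embedding gives $D_\MMD(Q,P)=\epsilon D_\MMD(R,P)$, and dropping the non-negative cross term (using $k>0$) bounds $D_\MMD^2(R,P)$ by $2K$. The only cosmetic difference is that you phrase the second step via RKHS norms while the paper writes out the double expectations, so there is nothing substantive to add.
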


\begin{proof}[Proof of \Cref{lem: MMD ball Huber model}]
    For any $Q \in \cP(\R^d)$, we define its kernel mean embedding \citep[Chapter 4]{berlinet2004reproducing} as $\mu_Q(\cdot) \coloneqq \E_{\bX \sim Q}[k(\bX, \cdot)]$, which is well-defined since $k$ is bounded. By \citet[Theorem 1]{sriperumbudur2010hilbert}, the MMD between any $Q, P \in \cP(\R^d)$ can be equivalently written as $D_\MMD(Q, P) = \| \mu_Q - \mu_P \|_{\cH_k}$, where $\cH_k$ is the RKHS associated with $k$. Pick $\epsilon_0 \in [0, 1]$ and $R \in \cP$. For any $\epsilon \in [0, \epsilon_0]$, 
    \begin{align*}
        D_\MMD((1-\epsilon)P + \epsilon R, P)
        \;=\;
        \| \mu_{(1-\epsilon)P + \epsilon R} - \mu_P \|_{\cH_k}
        \;&=\;
        \| (1-\epsilon)\mu_P + \epsilon \mu_R - \mu_P \|_{\cH_k}
        \\
        \;&=\;
        \epsilon \| \mu_R - \mu_P \|_{\cH_k}
        \\
        \;&=\;
        \epsilon D_\MMD(R, P)
        \;,
    \end{align*}
    where the second equality holds due to the linearity of the expectation operator. Moreover, 
    \begin{align*}
        D_\MMD^2(R, P)
        \;&=\;
        \E_{\bX, \bX' \sim R}[ k(\bX, \bX') ] + \E_{\bY, \bY' \sim P}[ k(\bY, \bY') ] - 2\E_{\bX \sim R, \bY \sim P}[k(\bX, \bY)]
        \\
        \;&\leq\;
        \E_{\bX, \bX' \sim R}[ k(\bX, \bX') ] + \E_{\bY, \bY' \sim P}[ k(\bY, \bY') ] 
        \\
        \;&\leq\;
        2K
        \;,
    \end{align*}
    where the last line holds since $\sup_{\bx, \bx' \in \R^d} k(\bx, \bx') \leq K$ by assumption. This shows that $D_\MMD((1-\epsilon)P + \epsilon R, P) \leq \epsilon \sqrt{2K} \leq \epsilon_0 \sqrt{2K}$, so the claim holds.
\end{proof}

\subsubsection{The dcMMD Test}
The dcMMD test \citep{schrab2024robust} targets different hypotheses. Given $\epsilon_0 \in [0, 1]$, the null assumes that the observed sample $\X_n$ is generated by firstly drawing i.i.d.\ random variables from some probability measure, then corrupting them by replacing at most a proportion of $\epsilon_0$ of the sample by arbitrary values. The hypotheses can be formalized as follows:
\begin{align*}
    & H_0^\mathrm{dcMMD}: \textrm{ At least $(1-\epsilon_0) \times n$ random variables in $\X_n$ are i.i.d.\ from $P$.} 
    \\
    & H_1^\mathrm{dcMMD}: \textrm{ Otherwise}
    \;.
\end{align*}
The dcMMD test rejects $H_0^\mathrm{dcMMD}$ if $D_\MMD(Q_n, P_n) > q_{\alpha}^\MMD + 2\epsilon_0\sqrt{2K}$, where $q_{\alpha}^\MMD$ is the empirical quantile of $B$ permutation samples $\{ T_{\MMD}^b \}_{b=1}^B$, and each $T_{\MMD}^b$ is computed by \emph{(i)} randomly permuting $\X_n \cup \Y_n$, \emph{(ii)} partitioning the permuted set into two subsets $\X_n^b$ and $\Y_n^b$ of size $n$, and \emph{(iii)} computing $T_{\MMD}^b \coloneqq D_\MMD(Q_n^b, P_n^b)$, where $Q_n^b$ and $P_n^b$ are the empirical measures based on $\X_n^b$ and $\Y_n^b$, respectively. We use $B=500$, which is the default setup in \citet{schrab2024robust}. 

The dcMMD test gives stronger calibration guarantee than Huber's contamination model because it controls contamination proportion no larger than $\epsilon_0$ for \emph{any realization} of the sample $\X_n$, rather than in expectation as required by Huber's model. However, it is not necessarily stronger than KSD-balls or MMD-balls, since it does not account for the case where \emph{all} samples are under mild perturbation. 

\begin{figure}
    \centering
    \includegraphics[width=0.8\linewidth]{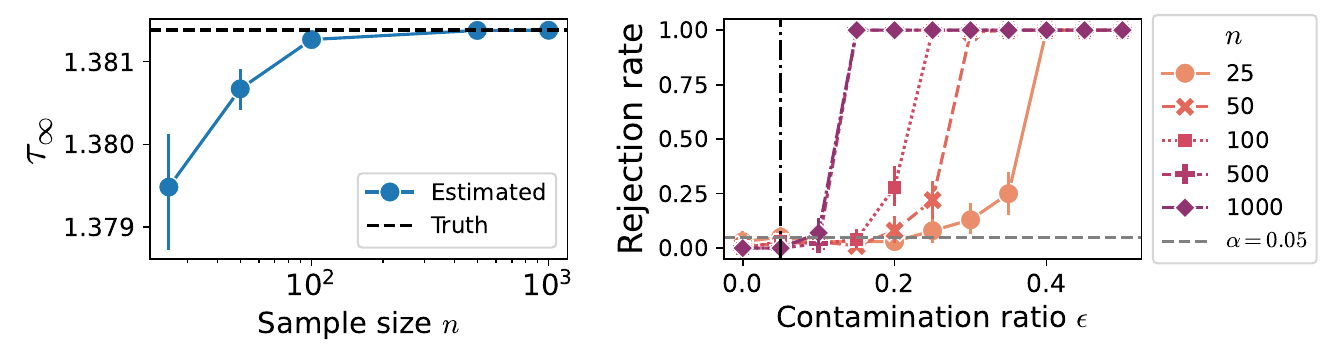}
    \caption{\emph{Left.} Estimated $\tau_\infty$ using the trick described in \Cref{sec: choosing uncertainty radius} compared with the ground-truth; the estimate becomes more accurate with larger samples. \emph{Right.} Rejection probability of robust-KSD under an outlier-contaminated Gaussian model; the Type-I error remains calibrated even for small sample sizes.}
    \label{fig: tau_infty}
\end{figure}

\subsection{Approximating the Supremum of the Stein Kernel}
\label{app: tau_infty}
As described in \Cref{sec: choosing uncertainty radius}, in our experiments we approximated the intractable supremum $\tau_\infty = \sup_{\bx \in \R^d} u_p(\bx, \bx)$ by $\max_{i = 1, \ldots, n} u_p(\bx_i, \bx_i)$. We now study how this approximation affects the performance of the robust-KSD test. We run the same contaminated Gaussian experiment in \Cref{sec: contam gaussian} with $d=1$ and outlier location $z = 10$. For this model, $\bx \mapsto u_p(\bx, \bx)$ is a simple, univariate function, so we can compute the ground-truth $\tau_\infty$ accurately by numerical optimization. Clearly, the ground truth $\tau_\infty$ is equivalent to taking $n = \infty$. This is evidenced by the left plot in \Cref{fig: tau_infty}, which shows that the finite-sample approximation $\max_{i = 1, \ldots, n} u_p(\bx_i, \bx_i)$ becomes more accurate as $n$ increases. The right plot of \Cref{fig: tau_infty} shows the rejection probability of robust-KSD with $\theta$ set to control at most $\epsilon_0 = 0.05$ proportion of contamination. Remarkably, even when this approximation under-estimates $\tau_\infty$ (which is the case for small $n$), robust-KSD still remains well-calibrated. This is not surprising: the supremum $\tau_\infty$ represents the maximal contribution of a single datum $\bx$ taking arbitrary values in the \emph{entire} sample space $\R^d$, regardless of whether it is present in the \emph{observed} data set. However, in practice, it suffices to control the contribution of any single datum in the observed data set.

\subsection{Ablation Study for Kernel Bandwidth in the Standard KSD Test}
\label{app: bw robust}
We evaluate how the choice of the kernel bandwidth affects the performance of the \emph{standard} KSD test. We run the same contaminated Gaussian experiment as in the previous subsection. The standard KSD test uses an IMQ kernel with bandwidth $\lambda^2 \in \Lambda \cup \{ \lambda_\mathrm{med}^2 \}$, where $\Lambda = \{ 0.01, 0.1, 1, 10, 100\}$ and $\lambda_\mathrm{med}^2$ is the bandwidth chosen by median heuristic. We also include KSDAgg, which also uses an IMQ kernel for a fair comparison. As shown in \Cref{fig: bw standard}, all standard IMQ-KSD tests reject the point null with high probability for large outlier values, \emph{regardless} of the bandwidth value. This suggests that no fixed bandwidth can ensure robustness, which is consistent with \Cref{thm: non robust stationary}, and is because the Stein kernel is unbounded regardless of the choice of $\lambda$, thus the non-robustness issue persists. The KSDAgg test is even more sensitive to contamination than the IMQ-KSD tests. As noted in \Cref{sec: contam gaussian}, this is because KSDAgg is designed to optimally combine multiple bandwidths to boost the test power against all alternatives, including contaminated models.

\subsection{Gaussian Mean-Shift Experiment}
\label{app: ms fixed dim}
We run a Gaussian mean-shift experiment to demonstrate the performance of our robust test against model deviations other than contamination. We use a Gaussian model $\cN(0, I_d)$ in dimension $d=50$, and draw data from $Q_{\mu_0} = \cN(\mu_0 e_1, I_d)$, where $e_1 = (1, 0, \ldots, 0)^\top \in \R^d$ and $\mu_0 \in \R$ is a mean-shift. The uncertainty radius $\theta$ is chosen to be the Tilted-KSD value corresponding to $\mu_0 = 0, 0.2$ and $0.6$, respectively. The purpose of this experiment is to demonstrate that the robust-KSD test is well-calibrated when $\ksd(Q_{\mu_0}, P) \leq \theta$ and consistent when $\ksd(Q_{\mu_0}, P) > \theta$. As shown in \Cref{fig: gauss mean shift 50d}, the standard tests reject with probability higher than robust-KSD. This is again expected since the standard tests are not robust to contamination. For mean-shift values not greater than the black vertical line, $Q_{\mu_0} \in \cB^\KSD(P; \theta)$, so we are under the null hypothesis and robust-KSD rejects no more frequently than the level, showing that it is well-calibrated. For larger $\mu_0$, robust-KSD rejects with probability approaching one, thus showing its power. Moreover, when $\mu_0=0$ so that $\theta = 0$, robust-KSD becomes identical to the standard Tilted-KSD test, showing that robust-KSD is indeed a generalization of the standard test.

\subsection{Scalability with Dimension}
\label{app: ms experiment}
We run the Gaussian mean-shift experiment in \Cref{app: ms fixed dim} in different dimensions; other experimental setups remain unchanged. The uncertainty radius for R-KSD is chosen to be the (non-squared) KSD value corresponding to mean-shift $\mu = 0.3$. Results are reported in \Cref{fig: gauss mean shift dim}. As dimension increases, both the two standard tests and R-KSD have declining power in correctly rejecting for large values of $\mu$. This shows that both the standard and the robust KSD tests suffer from the \emph{curse-of-dimensionality}, a known issue for kernel-based tests \citep{huang2023highdimensional,reddi2015high,ramdas2015decreasing}. Unsurprisingly, this issue is more prominent for the R-KSD test. This is because the KSD-ball $\cB^\KSD(P; \theta)$ could potentially include more distributions in higher dimensions.

\begin{figure}[t]
    \centering
    \includegraphics[width=.95\textwidth]{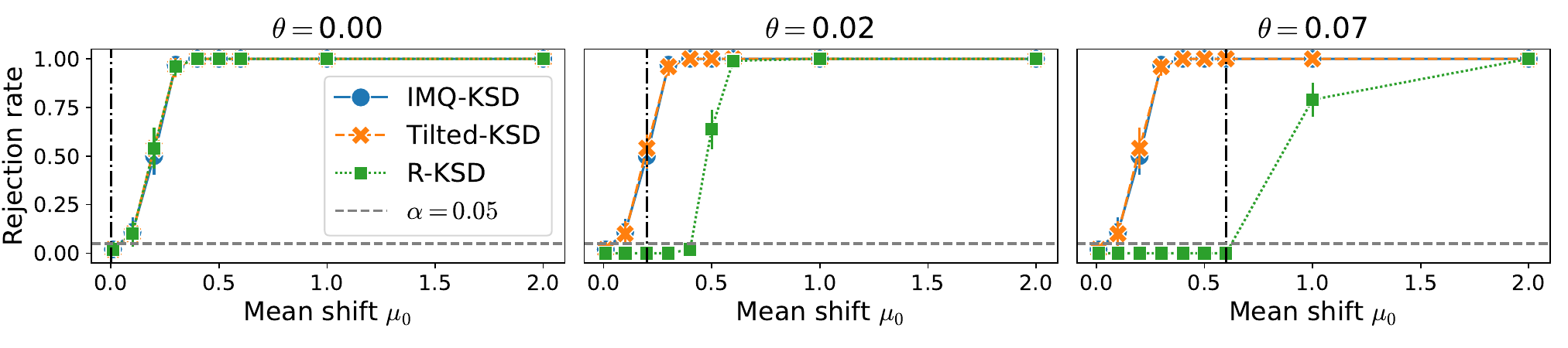}
    \caption{Probability of rejection against the mean-shift $\mu_0$ under a Gaussian model in $d=50$ dimensions. \emph{Black dash-dot line.} Uncertainty radius $\theta$, which is set to be the KSD value $\ksd(Q_{\mu_0}, P)$ corresponding to different values of $\mu_0$.}
    \label{fig: gauss mean shift 50d}
\end{figure}

\begin{figure}[t]
    \centering
    \includegraphics[width=1\textwidth]{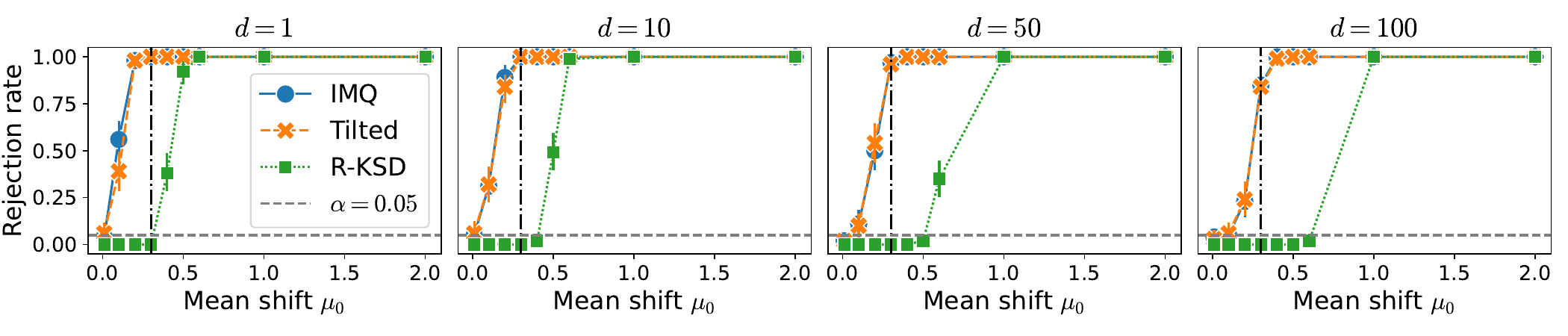}
    \caption{Probability of rejection against the mean shift under a Gaussian model in various dimensions. \emph{Grey dotted line.} Test level $\alpha = 0.05$. \emph{Black dash-dot line.} Uncertainty radius $\theta$, which is set to be the KSD value corresponding to $\mu = 0.3$.}
    \label{fig: gauss mean shift dim}
\end{figure}

\subsection{Different Contamination Distributions}
In most of our experiments in \Cref{sec: Experiments}, the data-generating distribution takes the form $Q = (1-\epsilon_0) P + \epsilon_0 R$ with the contamination distribution $R = \delta_\bz$ being a Dirac delta taking a single value at $\bz$. We now investigate how the choice of $R$ affects the empirical performance of R-KSD. Importantly, our results in \Cref{sec: robust KSD test} make no assumption on the form of $R$.

We set $P = \cN(0, 1)$ and generate data from $Q = (1-\epsilon) P + \epsilon \cN(10, \sigma^2)$ with varying $\sigma > 0$. Following \Cref{prop: KSD bound huber model}, we choose $\theta = \epsilon_0 \tau_\infty^{1/2}$, and fix the contamination tolerance to be $\epsilon_0 = 0.05$. Since \Cref{prop: KSD bound huber model} holds for all $R$, we expect R-KSD to remain (asymptotically) valid in this setting.

Results are shown in \Cref{fig: different noises}. For larger $\sigma$, all tests, including our R-KSD, saw a exhibit reduced test power. This is because, when $\sigma$ is large, the noise component $R$ becomes more dispersed, making it harder for KSD to detect discrepancies. This can be confirmed by the rightmost plot, which shows that the KSD between $Q$ and $P$ decreases with $\sigma$. Crucially, our R-KSD test is able to control the Type-I error regardless of the value of $\sigma$.

\begin{figure}[t]
    \centering
    \includegraphics[width=1\linewidth]{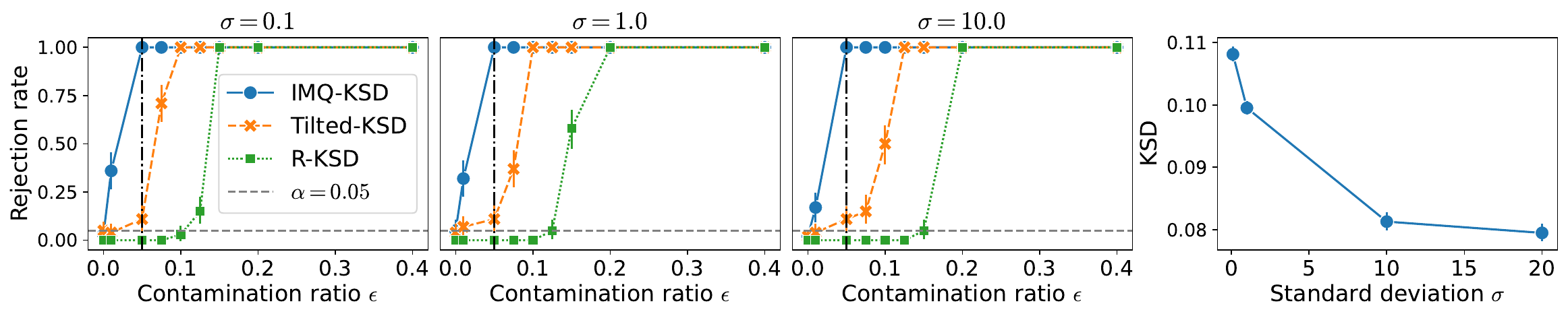}
    \caption{Mixture-of-Gaussian experiment. \emph{Left.} Probability of rejection under different standard deviations for the contamination component. \emph{Right.} KSD estimates. \emph{Grey dotted line.} Test level $\alpha = 0.05$.  \emph{Black dash-dot line.} Uncertainty radius $\theta = \epsilon_0 \tau_\infty^{1/2}$, with $\tau_\infty$ estimated following \Cref{sec: choosing uncertainty radius}.}
    \label{fig: different noises}
\end{figure}

\subsection{Weighted Bootstrap and Wild Bootstrap}
\label{app: efron vs wild}
We compare the robust-KSD test using the weighted bootstrap described in \Cref{sec: KSD-based GOF Testing} (equivalent to the Efron's bootstrap) against the wild bootstrap due to \citet{leucht2013dependent}. For the standard KSD test, weighted bootstrap was used in \citet{liu2016kernelized}, while the wild bootstrap is more popular in the literature \citep{chwialkowski2016kernel,schrab2022ksd,liu2023using}. Compared with weighted bootstrap, the wild approach is more flexible as it can work for dependent samples \citep{chwialkowski2014wild}, and theoretical guarantees on the power of the standard test were only proved with the wild bootstrap \citep{schrab2022ksd}. In this work, we have used the weighted approach since we can then leverage existing theoretical results to show the validity of the test; see \Cref{sec: bootstrap the P-KSD estimate}. This is for simplicity rather than necessity.

We numerically compare these two bootstrap methods using the contaminated Gaussian model in \Cref{sec: contam gaussian} and show that the use of weighted bootstrap does not negatively impact the test. \Cref{fig: gauss ol wild} shows the rejection probability of the robust-KSD test using the weighted bootstrap (R-KSD) and wild bootstrap (R-KSD-Wild). The uncertainty radius is chosen to be $\theta = \epsilon_0 \tau_\infty^{1/2}$ for different values of $\epsilon_0$, following \Cref{sec: contam gaussian}. In particular, $\epsilon_0 = 0$ corresponds to setting $\theta = 0$, in which case robust-KSD reduces to the standard KSD. As evident from the plot, these two bootstrap methods produce almost identical results regardless of the value of $\theta$. 

\begin{figure}[t]
    \centering
    \includegraphics[width=1.\linewidth]{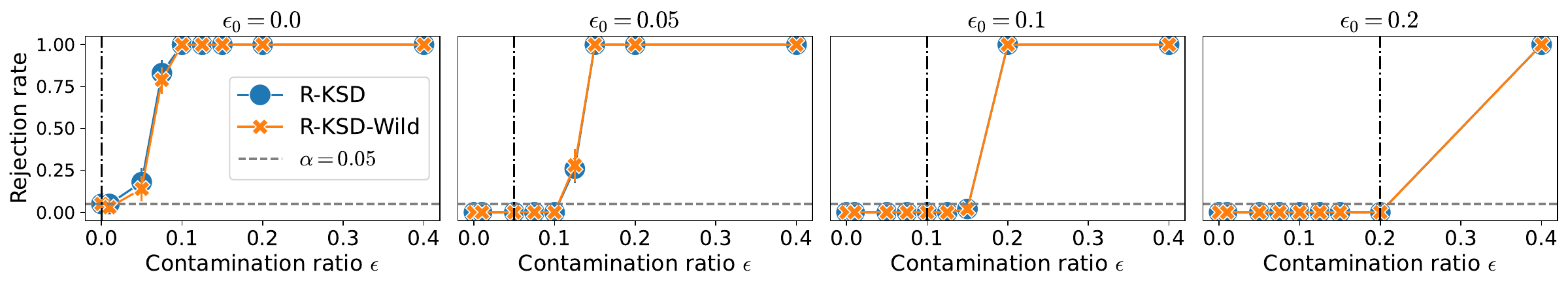}
    \caption{Probability of rejection under an outlier-contaminated Gaussian model using the weighted bootstrap and the wild bootstrap. \emph{Black dash-dot vertical line.} Uncertainty radius $\theta = \epsilon_0 \tau_\infty^{1/2}$, with $\tau_\infty$ estimated following \Cref{sec: choosing uncertainty radius}.}
    \label{fig: gauss ol wild}
\end{figure}

\section{Extension to U-statistics}
\label{app: extension to u-stats}
In this work, we have focused primarily on using the V-statistic \eqref{eq: KSD V stat} for estimating the KSD. An alternative estimator commonly used in the literature \citep{liu2016kernelized,jitkrittum2017linear,schrab2022ksd,liu2023using} is the following U-statistic $ \frac{1}{n(n-1)}\sum_{1 \leq i \neq j \leq n} u_p(\bX_i, \bX_j)$.
In this section, we discuss briefly how to extend our key results to this U-statistic estimate.

\begin{itemize}
\item \textbf{Extending the qualitative (non-)robustness results:}
The main ingredients for proving \Cref{thm: non robust stationary} are \emph{(i)} a deviation bound for the V-statistic $D^2(\X_n)$ as shown in \Cref{lem: bound vstat}, and \emph{(ii)} a concentration bound for the bootstrapped quantile $q^2_{\infty, 1-\alpha}(\X_n)$ as shown in \Cref{lem: bound boot quantile}. A U-statistic counterpart of both results can be shown by following the same proof technique and replacing $D^2(\X_n)$ with the U-statistic. For example, with the U-statistic, a decomposition similar to \eqref{eq: KSD U-stat decomposition} can be derived by summing over only the non-diagonal terms $i \neq j$ and replacing normalizing factors of the form $1/n^2$ to $1/(n(n-1))$. Similarly, the proof of \Cref{thm: robust tilted} can be extended to U-statistics by following the same steps in \Cref{app: robustness tilted}.

\item \textbf{Extending the robust-KSD test:}
To adapt the robust-KSD test to the U-statistic, we first note that rejecting the null if $\Delta_\theta(\X_n) > q_{\infty, 1-\alpha}$ is equivalent to rejecting the null if $D^2(\X_n) > (q_{\infty, 1-\alpha}+\theta)^2$, as argued in \eqref{eq: equiv test expression} and the paragraph thereafter. A natural approach is therefore to replace the V-statistic $D^2(\X_n)$ by the U-statistic, and the bootstrap quantile $q_{\infty, 1-\alpha}$ by the square-root of the bootstrap quantile formed with the U-statistic. However, since the bootstrap samples \eqref{eq: bootstrap sample} based on U-statistics can take \emph{negative} values, the bootstrap quantile can also take negative values, rendering its square-root $q_{\infty, 1-\alpha}$ undefined. One solution is to never reject the null when this happens. This has little impact when $\ksd^2(Q, P)$ is large, since the bootstrap samples are then likely to take positive values. However, when $\ksd^2(Q, P) \approx 0$, the bootstrap samples are more prone to taking negative values, thus making the test conservative.
\end{itemize}

\section{A Non-Asymptotically Valid Robust KSD Test}
\label{app: KSD dev}
The robust-KSD test introduced in \Cref{sec: robust KSD test} is only well-calibrated when $n \rightarrow \infty$. In this appendix, we now derive a robust test that is well-calibrated with finite samples. An immediate consequence is that a stronger, uniform Type-I error control can be achieved.

The test rejects the null $\Hc_0$ in \eqref{eq: KSD-ball hypotheses} if 
\begin{align*}
    \Delta_\theta(\X_n) \;>\; \gamma_n \;,
\end{align*}
where $\gamma_n = \sqrt{\tau_\infty / n} + \sqrt{- 2 \tau_\infty (\log\alpha) / n}$, the constant $\tau_\infty$ is defined in \Cref{lem: bounded Stein kernel}, and $\Delta_\theta(\X_n)$ is defined in \eqref{eq: robust test statistic}. We call this test \emph{robust-KSD-Dev} (R-KSD-Dev). 

The decision threshold $\gamma_n$ is based on the concentration bound \Cref{lem: P-KSD deviation bound}, which is a deviation bound of the McDiarmid's type \citep{mcdiarmid1989method}. This test can be viewed as a counterpart of the robust MMD test of \citet{sun2023kernel}, which is also constructed using McDiarmid's inequality. The next result, proved at the end of this section, shows its finite-sample validity under $\Hc_0$ as well as its consistency under $\Hc_1$.
\begin{theorem}
    \label{thm: robust KSD test validity KSD-ball}
    Let $\X_\infty = \{\bX_i\}_{i=1}^\infty$ be a sequence of independent random variables following $Q$. Suppose $k$ is a tilted kernel satisfying the conditions in \Cref{lem: bounded Stein kernel}, and let $\theta \geq 0$. Then
    \begin{enumerate}
        \item Under $\Hc_0$, for any $n$, it holds that $\sup\nolimits_{Q \in \cB^\KSD(P; \theta)} \Pr\nolimits_{\X_n \sim Q}\big( \Delta_\theta(\X_n) > \gamma_n \big) \;\leq\; \alpha$.
        \item Under $\Hc_1$, it holds that $\Pr\nolimits_{\X_n \sim Q}\big( \Delta_\theta(\X_n) > \gamma_n \big) \to 1$, as $n \to \infty$.
    \end{enumerate}
\end{theorem}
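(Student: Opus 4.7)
}

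The proof will rely on three tools already developed in \Cref{app: P-KSD}: the representation of KSD as a $P$-KSD (\Cref{lem: KSD as PKSD}), the projection identity $\inf_{Q' \in \cB^\KSD(P;\theta)} \S_P(R, Q') = \max(0, \ksd(R,P) - \theta)$ (\Cref{lem: KSD inf closed form}), and the McDiarmid-type concentration bound $\Pr_{\X_n \sim Q}(\S_P(\X_n, Q) > \gamma_n) \leq \alpha$ (\Cref{lem: P-KSD deviation bound}). The tilted-kernel assumption in \Cref{lem: bounded Stein kernel} guarantees that $\tau_\infty < \infty$ and that the moment conditions required by these lemmas are satisfied uniformly over $Q \in \cP(\R^d)$, so no additional integrability hypothesis on $Q$ is needed.

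For part 1, I would fix $Q \in \cB^\KSD(P; \theta)$ and let $Q_n$ denote the empirical measure of $\X_n$. Writing $D(\X_n) = \ksd(Q_n, P)$ and applying \Cref{lem: KSD inf closed form} with $R = Q_n$ rewrites the test statistic as
\[
\Delta_\theta(\X_n) \;=\; \max\bigl(0, \ksd(Q_n, P) - \theta\bigr) \;=\; \inf\nolimits_{Q' \in \cB^\KSD(P; \theta)} \S_P(Q_n, Q').
\]
Since $Q$ itself lies in $\cB^\KSD(P; \theta)$, the infimum is bounded above by $\S_P(Q_n, Q)$, so $\Delta_\theta(\X_n) \leq \S_P(Q_n, Q)$. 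Applying the deviation bound of \Cref{lem: P-KSD deviation bound} then yields $\Pr_{\X_n \sim Q}(\Delta_\theta(\X_n) > \gamma_n) \leq \Pr_{\X_n \sim Q}(\S_P(Q_n, Q) > \gamma_n) \leq \alpha$, and taking the supremum over $Q \in \cB^\KSD(P;\theta)$ finishes the calibration claim for \emph{any} finite $n$.

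For part 2, fix $Q \notin \cB^\KSD(P;\theta)$, so that $\ksd(Q,P) > \theta$. Using that $\S_P$ is an MMD (and thus obeys the triangle inequality, per the discussion in \Cref{app: P-KSD}), together with \Cref{lem: KSD as PKSD} which identifies $\S_P(\argdot, P)$ with $\ksd(\argdot, P)$, I obtain the two-sided bound
\[
\bigl|D(\X_n) - \ksd(Q,P)\bigr| \;=\; \bigl|\S_P(Q_n, P) - \S_P(Q, P)\bigr| \;\leq\; \S_P(Q_n, Q).
\]
The concentration bound implies $\S_P(Q_n, Q) \to 0$ in probability (choosing $\alpha_n \to 0$ slowly enough in \Cref{lem: P-KSD deviation bound} so that the threshold still vanishes), hence $D(\X_n) \to \ksd(Q, P)$ in probability. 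Since $\gamma_n \to 0$ and $\ksd(Q,P) - \theta > 0$, we get $\Pr_{\X_n \sim Q}(\Delta_\theta(\X_n) > \gamma_n) \to 1$.

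The only non-routine step is the calibration argument in part 1, which hinges on the projection identity \Cref{lem: KSD inf closed form} to turn the composite null into a pointwise bound via the single distribution $Q$; once that reformulation is in place, the McDiarmid-type inequality does the rest. No delicate bootstrap asymptotics are required, which is why---unlike \Cref{thm: bootstrap validity}---the conclusion holds at every finite $n$.
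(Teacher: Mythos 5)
Your proof is correct. Part 1 is essentially identical to the paper's argument: rewrite $\Delta_\theta(\X_n)$ via the projection identity of \Cref{lem: KSD inf closed form}, bound the infimum over the KSD-ball by $\S_P(\X_n, Q)$ since $Q$ itself lies in the ball, and invoke the McDiarmid-type bound of \Cref{lem: P-KSD deviation bound}; the finiteness of $\tau_\infty$ under the tilted-kernel assumption indeed makes all moment conditions hold for free. For Part 2, however, you take a genuinely different and arguably cleaner route. The paper lower-bounds $\Delta_\theta(\X_n) \geq D(\X_n) - \theta$ and then invokes the weak convergence of $\sqrt{n}\bigl(D(\X_n) - \ksd(Q,P)\bigr)$ to a Gaussian limit (recycling the non-degenerate V-statistic CLT from the proof of \Cref{thm: bootstrap validity}), so that the rescaled threshold $\sqrt{n}(\gamma_n + \theta - \ksd(Q,P)) \to -\infty$ while the statistic stays tight. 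You instead use the symmetry and triangle inequality of the $P$-KSD together with \Cref{lem: KSD as PKSD} to get $|D(\X_n) - \ksd(Q,P)| \leq \S_P(\X_n, Q)$, and then reuse the \emph{same} concentration bound from Part 1 (with a slowly vanishing level) to conclude $D(\X_n) \to \ksd(Q,P)$ in probability. This avoids the $\sqrt{n}$-rate asymptotics entirely: consistency only requires that $\gamma_n \to 0$ while the gap $\ksd(Q,P) - \theta$ is a fixed positive constant, which your law-of-large-numbers-type argument delivers. The trade-off is that the paper's CLT route yields rate information about how fast the power approaches one, whereas yours does not; but as a proof of the stated claim, your version is self-contained and relies on strictly fewer external results.
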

\begin{remark}[Alternative deviation bounds]
\label{rem: other deviation bounds}
    Alternative deviation bounds to \Cref{lem: P-KSD deviation bound} can also be used to construct similar tests. More precisely, \Cref{thm: robust KSD test validity KSD-ball} holds for any threshold $\gamma_n$ that satisfies
    \begin{align*}
        \Pr\nolimits_{\X_n \sim Q}( \S_P(\X_n, Q) > \gamma_n ) \;\leq\; \alpha \;,
    \end{align*}
    where $\S_P(\X_n, Q) = \big\| \E_{\bX \sim Q_n}[u_p(\cdot, \bX)] - \E_{\bX \sim Q}[u_p(\cdot, \bY)] \big\|_{\cH_u}$ is a Hilbert-space norm, as shown in \Cref{lem: P-KSD closed form}. Thus, any deviation bound for Hilbert-space norms may be applied. Examples include another McDiarmid's bound of \citet[Theorem 7]{gretton2012kernel}, the empirical Bernstein bounds of \citet[Theorem A.1]{wolfer2022variance} and \citet[Corollary 1]{martinez2024empirical}, as well as the Hilbert-space valued Hoeffding bound of \citet[Theorem 3.5]{pinelis1994optimum}, particularly its i.i.d.\ variant \citep[Lemma E.1]{chatalic2022nystrom}.  
\end{remark}
\begin{remark}[Justification for McDiarmid-type bound]
    We opted for the McDiarmid-type bound in \Cref{lem: P-KSD deviation bound} because, in our setting, it is tighter than the bounds from \citet{gretton2012kernel}, \citet{wolfer2022variance} and \citet{chatalic2022nystrom}. In particular, while \citet{wolfer2022variance} claim their empirical Bernstein bound outperforms McDiarmid’s inequality, we find that this only holds for their bound tailored to translation-invariant kernels \citep[Theorem 3.1]{wolfer2022variance}, but not for the non-translation-invariant version \citep[Theorem A.1]{wolfer2022variance}, which our setting requires since we have assumed \Cref{lem: bounded Stein kernel}. Moreover, although the bound in \Cref{lem: P-KSD deviation bound} is worse than the empirical Bernstein bound of \citet{martinez2024empirical}, we find that the difference is only marginal, and the former is considerably easier to implement.
\end{remark}
In \Cref{fig: gauss ol dev}, we compare R-KSD-Dev with the bootstrap-based robust-KSD test, the robust MMD test and the standard tests under the outlier-contaminated Gaussian model in \Cref{sec: contam gaussian}. We set the uncertainty radius to be $\theta = \epsilon_0 \tau_\infty^{1/2}$ for various values of $\epsilon_0$, where $\tau_\infty$ is estimated following \Cref{sec: choosing uncertainty radius}. As expected, R-KSD-Dev controls Type-I error against Huber's contamination with maximal proportion $\epsilon_0$. However, when $\epsilon > \epsilon_0$, it has lower power compared with robust-KSD and the MMD-based tests. This is because R-KSD-Dev uses a deviation bound to construct the decision threshold, which is a uniform bound over all data-generating distributions and is therefore conservative. For the same reason, we expect tests constructed using the alternative deviation bounds discussed in \Cref{rem: other deviation bounds} to suffer from similar conservativeness.

Since the robust-KSD-Dev test has finite-sample guarantee on the Type-I error, it might be preferable to the bootstrap-based robust-KSD test when the sample size is small. On the other hand, the conservative nature of the robust-KSD-Dev test suggests it might only be effective for identifying the most aberrant behaviors.

\begin{figure}
    \centering
    \includegraphics[width=.9\linewidth]{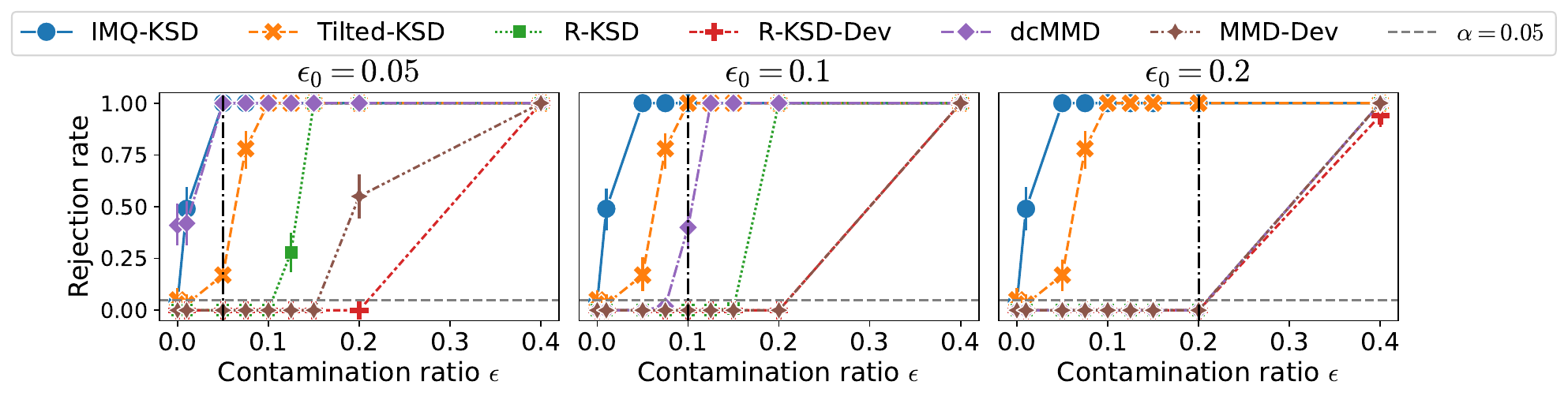}
    \caption{Rejection probability under an outlier-contaminated Gaussian model. \emph{Black dash-dot vertical line.} Uncertainty radius $\theta = \epsilon_0 \tau_\infty^{1/2}$, with $\tau_\infty$ estimated following \Cref{sec: choosing uncertainty radius}.}
    \label{fig: gauss ol dev}
\end{figure}

\begin{proof}[Proof of \Cref{thm: robust KSD test validity KSD-ball}]
    Proceeding as in the proof of \Cref{thm: bootstrap validity}, for any $Q \in \cB^\KSD(P; \theta)$, we have the inequality
    \begin{align*}
        \Pr\nolimits_{\X_n \sim Q}\big( \Delta_\theta(\X_n) \;>\; \gamma_n \big)
        \;&\leq\;
        \Pr\nolimits_{\X_n \sim Q}\big( \S_P(\X_n, Q) \;>\; \gamma_n \big)
        \;,
    \end{align*}
    By the deviation bound in \Cref{lem: P-KSD deviation bound}, the RHS is bounded by $\alpha$, thus proving the first claim. 
    
    We now fix $Q \not\in \cB^\KSD(P; \theta)$ so that we are under $\Hc_1$. Since $\Delta_\theta(\X_n) = \max(D(\X_n) - \theta, 0) \geq D(\X_n) - \theta$, we have
    \begin{align*}
        \Pr\nolimits_{\X_n \sim Q}\big( \Delta_\theta(\X_n) > \gamma_n \big)
        \;&\geq\;
        \Pr\nolimits_{\X_n \sim Q}\big( \ksd(\X_n) - \theta > \gamma_n \big)
        \\
        \;&=\;
        \Pr\nolimits_{\X_n \sim Q}\Big( \sqrt{n} \big(\ksd(\X_n) - \ksd(Q, P) \big) > \sqrt{n}\big(\gamma_n + \theta - \ksd(Q, P)\big) \Big)
        \;.
    \end{align*}
    The same argument in \Cref{sec: proof of thm: bootstrap validity} shows that $\sqrt{n}\big( \ksd(\X_n) - \ksd(Q, P) \big)$ converges weakly to a Gaussian distribution. On the other hand, since $\ksd(Q, P) > \theta$ under $\Hc_1$ and $\gamma_n \to 0$ as $n \to \infty$, the term $\sqrt{n}\big(\gamma_n + \theta - \ksd(Q, P) \big)\to -\infty$. Therefore, the probability in the last line converges to one, thus proving the second claim.
\end{proof}

\clearpage
\bibliography{ref}

\begin{thebibliography}{110}
\providecommand{\natexlab}[1]{#1}
\providecommand{\url}[1]{\texttt{#1}}
\expandafter\ifx\csname urlstyle\endcsname\relax
  \providecommand{\doi}[1]{doi: #1}\else
  \providecommand{\doi}{doi: \begingroup \urlstyle{rm}\Url}\fi

\bibitem[Adamczak(2015)]{adamczak2015note}
Radoslaw Adamczak.
\newblock {A note on the Hanson-Wright inequality for random vectors with
  dependencies}.
\newblock \emph{Electronic Communications in Probability}, 20\penalty0
  (none):\penalty0 1 -- 13, 2015.

\bibitem[Alquier and Gerber(2024)]{Alquier2020}
Pierre Alquier and Mathieu Gerber.
\newblock {Universal robust regression via maximum mean discrepancy}.
\newblock \emph{Biometrika}, 111\penalty0 (1):\penalty0 71--92, 2024.

\bibitem[Altamirano et~al.(2023)Altamirano, Briol, and
  Knoblauch]{altamirano2023robusta}
Matias Altamirano, Fran{\c c}ois-Xavier Briol, and Jeremias Knoblauch.
\newblock Robust and scalable {{Bayesian}} online changepoint detection.
\newblock In \emph{International {{Conference}} on {{Machine Learning}}}, pages
  642--663, 2023.

\bibitem[Altamirano et~al.(2024)Altamirano, Briol, and
  Knoblauch]{altamirano2024}
Matias Altamirano, Fran{\c c}ois-Xavier Briol, and Jeremias Knoblauch.
\newblock {Robust and conjugate Gaussian process regression}.
\newblock In \emph{International Conference on Machine Learning}, pages
  1155--1185, 2024.

\bibitem[Amin et~al.(2023)Amin, Weinstein, and Marks]{amin2023kernelized}
Alan~Nawzad Amin, Eli~N. Weinstein, and Debora~Susan Marks.
\newblock A kernelized {{Stein}} discrepancy for biological sequences.
\newblock In \emph{International Conference of Machine Learning}, pages
  718--767, 2023.

\bibitem[Anastasiou et~al.(2023)Anastasiou, Barp, Briol, Ebner, Gaunt,
  Ghaderinezhad, Gorham, Gretton, Ley, Liu, Mackey, Oates, Reinert, and
  Swan]{anastasiou2023stein}
Andreas Anastasiou, Alessandro Barp, Fran{\c{c}}ois-Xavier Briol, Bruno Ebner,
  Robert~E. Gaunt, Fatemeh Ghaderinezhad, Jackson Gorham, Arthur Gretton,
  Christophe Ley, Qiang Liu, Lester Mackey, Chris~J. Oates, Gesine Reinert, and
  Yvik Swan.
\newblock {Stein’s method meets computational statistics: A review of some
  recent developments}.
\newblock \emph{Statistical Science}, 38\penalty0 (1):\penalty0 120 -- 139,
  2023.

\bibitem[Arcones and Gine(1992)]{arcones1992bootstrap}
Miguel~A. Arcones and Evarist Gine.
\newblock On the bootstrap of {{U}} and {{V}} statistics.
\newblock \emph{The Annals of Statistics}, 20\penalty0 (2):\penalty0 655--674,
  1992.

\bibitem[Balasubramanian et~al.(2021)Balasubramanian, Li, and
  Yuan]{balasubramanian2021optimality}
Krishnakumar Balasubramanian, Tong Li, and Ming Yuan.
\newblock {On the optimality of kernel-embedding based goodness-of-fit tests}.
\newblock \emph{Journal of Machine Learning Research}, 22\penalty0
  (1):\penalty0 1--45, 2021.

\bibitem[Barp et~al.(2019)Barp, Briol, Duncan, Girolami, and
  Mackey]{barp2019minimum}
Alessandro Barp, Fran{\c{c}}ois-Xavier Briol, Andrew Duncan, Mark Girolami, and
  Lester Mackey.
\newblock Minimum {{Stein}} discrepancy estimators.
\newblock \emph{Advances in Neural Information Processing Systems},
  32:\penalty0 12964--12976, 2019.

\bibitem[Barp et~al.(2024)Barp, Simon-Gabriel, Girolami, and
  Mackey]{barp2022targeted}
Alessandro Barp, Carl-Johann Simon-Gabriel, Mark Girolami, and Lester Mackey.
\newblock Targeted separation and convergence with kernel discrepancies.
\newblock \emph{Journal of Machine Learning Research}, 25\penalty0
  (378):\penalty0 1--50, 2024.

\bibitem[Baum et~al.(2023)Baum, Kanagawa, and Gretton]{baum2023kernel}
Jerome Baum, Heishiro Kanagawa, and Arthur Gretton.
\newblock A kernel {S}tein test of goodness of fit for sequential models.
\newblock In \emph{International Conference on Machine Learning}, pages
  1936--1953. PMLR, 2023.

\bibitem[Berlinet and {Thomas-Agnan}(2004)]{berlinet2004reproducing}
Alain Berlinet and Christine {Thomas-Agnan}.
\newblock \emph{Reproducing Kernel {{Hilbert}} Spaces in Probability and
  Statistics}.
\newblock Springer Science \& Business Media, New York, 2004.

\bibitem[Blackard et~al.(1993)Blackard, Rappaport, and
  Bostian]{blackard1993measurements}
Kenneth~L. Blackard, Theodore~S. Rappaport, and Charles~W. Bostian.
\newblock Measurements and models of radio frequency impulsive noise for indoor
  wireless communications.
\newblock \emph{IEEE Journal on selected areas in communications}, 11\penalty0
  (7):\penalty0 991--1001, 1993.

\bibitem[Briol et~al.(2019)Briol, Barp, Duncan, and Girolami]{Briol2019}
Fran{\c{c}}ois-Xavier Briol, Alessandro Barp, Andrew~B. Duncan, and Mark
  Girolami.
\newblock {Statistical inference for generative models with maximum mean
  discrepancy}.
\newblock \emph{arXiv:1906.05944}, pages 1--57, 2019.

\bibitem[Canu and Smola(2006)]{canu2006kernel}
St{\'e}phane Canu and Alex Smola.
\newblock Kernel methods and the exponential family.
\newblock \emph{Neurocomputing}, 69\penalty0 (7-9):\penalty0 714--720, 2006.

\bibitem[Chatalic et~al.(2022)Chatalic, Schreuder, Rosasco, and
  Rudi]{chatalic2022nystrom}
Antoine Chatalic, Nicolas Schreuder, Lorenzo Rosasco, and Alessandro Rudi.
\newblock Nystr{\"o}m kernel mean embeddings.
\newblock In \emph{International Conference on Machine Learning}, pages
  3006--3024. PMLR, 2022.

\bibitem[Ch{\'{e}}rief-Abdellatif and
  Alquier(2020)]{Cherief-Abdellatif2020_MMDBayes}
Badr-Eddine Ch{\'{e}}rief-Abdellatif and Pierre Alquier.
\newblock {MMD-Bayes: Robust Bayesian estimation via maximum mean discrepancy}.
\newblock In \emph{Advances in Approximate Bayesian Inference (AABI)}, pages
  1--21, 2020.

\bibitem[Ch{\'{e}}rief-Abdellatif and Alquier(2022)]{Cherief-Abdellatif2019}
Badr-Eddine Ch{\'{e}}rief-Abdellatif and Pierre Alquier.
\newblock {Finite sample properties of parametric MMD estimation: robustness to
  misspecification and dependence}.
\newblock \emph{Bernoulli}, 28\penalty0 (1):\penalty0 181--213, 2022.

\bibitem[Cho et~al.(2013)Cho, Raiko, and Ilin]{cho2013gaussian}
Kyung~Hyun Cho, Tapani Raiko, and Alexander Ilin.
\newblock Gaussian-{B}ernoulli deep {B}oltzmann machine.
\newblock In \emph{International Joint Conference on Neural Networks (IJCNN)},
  pages 1--7. IEEE, 2013.

\bibitem[Chung and Lu(2002)]{chung2002connected}
Fan Chung and Linyuan Lu.
\newblock Connected components in random graphs with given expected degree
  sequences.
\newblock \emph{Annals of Combinatorics}, 6\penalty0 (2):\penalty0 125--145,
  2002.

\bibitem[Chwialkowski et~al.(2016)Chwialkowski, Strathmann, and
  Gretton]{chwialkowski2016kernel}
Kacper Chwialkowski, Heiko Strathmann, and Arthur Gretton.
\newblock A kernel test of goodness of fit.
\newblock In \emph{{{International Conference}} on {{Machine Learning}}},
  volume~48 of \emph{Proceedings of {{Machine Learning Research}}}, pages
  2606--2615, 2016.

\bibitem[Chwialkowski et~al.(2014)Chwialkowski, Sejdinovic, and
  Gretton]{chwialkowski2014wild}
Kacper~P Chwialkowski, Dino Sejdinovic, and Arthur Gretton.
\newblock A wild bootstrap for degenerate kernel tests.
\newblock \emph{Advances in Neural Information Processing Systems}, 27, 2014.

\bibitem[Dabak and Johnson(1994)]{dabak1994geometrically}
Anand~G. Dabak and D.H. Johnson.
\newblock Geometrically based robust detection.
\newblock \emph{Conference on Information Sciences and Systems, Johns Hopkins
  University, Baltimore}, pages 73--77, May 1994.

\bibitem[D'Agostino(1986)]{d2017goodness}
Ralph~B D'Agostino.
\newblock \emph{{Goodness-of-Fit Techniques}}.
\newblock Routledge, 1986.

\bibitem[Dehling and Mikosch(1994)]{dehling1994random}
Herold Dehling and Thomas Mikosch.
\newblock Random quadratic forms and the bootstrap for {{U-statistics}}.
\newblock \emph{Journal of Multivariate Analysis}, 51\penalty0 (2):\penalty0
  392--413, 1994.

\bibitem[Dellaporta and Damoulas(2023)]{Dellaporta2023}
Charita Dellaporta and Theodoros Damoulas.
\newblock {Robust Bayesian Inference for Berkson and Classical Measurement
  Error Models}.
\newblock \emph{arXiv:2306.01468}, 2023.

\bibitem[Dellaporta et~al.(2022)Dellaporta, Knoblauch, Damoulas, and
  Briol]{Dellaporta2022}
Charita Dellaporta, Jeremias Knoblauch, Theodoros Damoulas, and
  Fran\c{c}ois-Xavier Briol.
\newblock {Robust Bayesian inference for simulator-based models via the MMD
  posterior bootstrap}.
\newblock In \emph{International Conference on Artificial Intelligence and
  Statistics}, pages 943--970, 2022.

\bibitem[Duran-Martin et~al.(2024)Duran-Martin, Altamirano, Shestopaloff,
  Knoblauch, Jones, Briol, and Murphy]{duran-martin2024outlierrobust}
Gerardo Duran-Martin, Matias Altamirano, Alexander~Y. Shestopaloff, Jeremias
  Knoblauch, Matt Jones, Fran{\c{c}}ois-Xavier Briol, and Kevin Murphy.
\newblock Outlier-robust {K}alman filtering through generalised {B}ayes.
\newblock In \emph{International Conference on Machine Learning}, pages
  12138--12171, 2024.

\bibitem[Fau{\ss} and Zoubir(2016)]{fauss2016old}
Michael Fau{\ss} and Abdelhak~M Zoubir.
\newblock Old bands, new tracks—revisiting the band model for robust
  hypothesis testing.
\newblock \emph{IEEE Transactions on signal Processing}, 64\penalty0
  (22):\penalty0 5875--5886, 2016.

\bibitem[Fau{\ss} et~al.(2021)Fau{\ss}, Zoubir, and Poor]{fauss2021minimax}
Michael Fau{\ss}, Abdelhak~M. Zoubir, and H.~Vincent Poor.
\newblock Minimax robust detection: {{Classic}} results and recent advances.
\newblock \emph{IEEE Transactions on Signal Processing}, 69:\penalty0
  2252--2283, 2021.

\bibitem[Fernandez et~al.(2020)Fernandez, Rivera, Xu, and
  Gretton]{fernandez2020kernelized}
Tamara Fernandez, Nicolas Rivera, Wenkai Xu, and Arthur Gretton.
\newblock Kernelized {{Stein}} discrepancy tests of goodness-of-fit for
  time-to-event data.
\newblock In \emph{International {{Conference}} on {{Machine Learning}}}, pages
  3112--3122. PMLR, 2020.

\bibitem[Fischer and Igel(2014)]{fischer2014training}
Asja Fischer and Christian Igel.
\newblock {Training restricted Boltzmann machines: An introduction}.
\newblock \emph{Pattern Recognition}, 47\penalty0 (1):\penalty0 25--39, 2014.

\bibitem[Forbes et~al.(2011)Forbes, Evans, Hastings, and
  Peacock]{forbes2011statistical}
Catherine Forbes, Merran Evans, Nicholas Hastings, and Brian Peacock.
\newblock \emph{{Statistical Distributions}}.
\newblock John Wiley \& Sons, 4 edition, 2011.

\bibitem[Fortet and Mourier(1953)]{fortet1953convergence}
Robert Fortet and Edith Mourier.
\newblock Convergence de la r\'epartition empirique vers la r\'epartition
  th\'eorique.
\newblock \emph{Annales scientifiques de l'\'Ecole Normale Sup\'erieure}, 3e
  s{\'e}rie, 70\penalty0 (3):\penalty0 267--285, 1953.

\bibitem[Fr{\'e}nay and Verleysen(2013)]{frenay2013classification}
Beno{\^\i}t Fr{\'e}nay and Michel Verleysen.
\newblock {Classification in the presence of label noise: A survey}.
\newblock \emph{IEEE transactions on neural networks and learning systems},
  25\penalty0 (5):\penalty0 845--869, 2013.

\bibitem[Fukumizu et~al.(2009)Fukumizu, Gretton, Lanckriet, Sch{\"o}lkopf, and
  Sriperumbudur]{fukumizu2009kernel}
Kenji Fukumizu, Arthur Gretton, Gert Lanckriet, Bernhard Sch{\"o}lkopf, and
  Bharath~K Sriperumbudur.
\newblock {Kernel choice and classifiability for RKHS embeddings of probability
  distributions}.
\newblock \emph{Advances in Neural Information Processing Systems}, 22, 2009.

\bibitem[Gao et~al.(2018)Gao, Xie, Xie, and Xu]{gao2018robust}
Rui Gao, Liyan Xie, Yao Xie, and Huan Xu.
\newblock Robust hypothesis testing using {W}asserstein uncertainty sets.
\newblock In \emph{Advances in Neural Information Processing Systems},
  volume~31. Curran Associates, Inc., 2018.

\bibitem[Gao et~al.(2021)Gao, Liu, Zhang, Han, Liu, Niu, and
  Sugiyama]{gao2021maximum}
Ruize Gao, Feng Liu, Jingfeng Zhang, Bo~Han, Tongliang Liu, Gang Niu, and
  Masashi Sugiyama.
\newblock Maximum mean discrepancy test is aware of adversarial attacks.
\newblock In \emph{International Conference on Machine Learning}, volume 139 of
  \emph{Proceedings of Machine Learning Research}, pages 3564--3575. PMLR,
  18--24 Jul 2021.

\bibitem[Golub and van Loan(2013)]{golub2013matrix}
Gene~H. Golub and Charles~F. van Loan.
\newblock \emph{Matrix Computations}.
\newblock JHU Press, fourth edition, 2013.
\newblock ISBN 1421407949 9781421407944.

\bibitem[Gorham and Mackey(2015)]{gorham2015measuring}
Jackson Gorham and Lester Mackey.
\newblock Measuring sample quality with {{Stein}}'s method.
\newblock \emph{Advances in Neural Information Processing Systems}, 28, 2015.

\bibitem[Gorham and Mackey(2017)]{gorham2017measuring}
Jackson Gorham and Lester Mackey.
\newblock Measuring sample quality with kernels.
\newblock In \emph{International {{Conference}} on {{Machine Learning}}}, pages
  1292--1301. PMLR, 2017.

\bibitem[Gretton et~al.(2012)Gretton, Borgwardt, Rasch, Sch{\"o}lkopf, and
  Smola]{gretton2012kernel}
Arthur Gretton, Karsten~M Borgwardt, Malte~J Rasch, Bernhard Sch{\"o}lkopf, and
  Alexander Smola.
\newblock A kernel two-sample test.
\newblock \emph{Journal of Machine Learning Research}, 13\penalty0
  (1):\penalty0 723--773, 2012.

\bibitem[G{\"u}l and Zoubir(2016)]{gul2016robust}
G{\"o}khan G{\"u}l and Abdelhak~M Zoubir.
\newblock Robust hypothesis testing with {{$\alpha$}}-divergence.
\newblock \emph{IEEE Transactions on Signal Processing}, 64\penalty0
  (18):\penalty0 4737--4750, 2016.

\bibitem[Hafner(1993)]{hafner1993construction}
Robert Hafner.
\newblock Construction of minimax-tests for bounded families of
  probability-densities.
\newblock \emph{Metrika}, 40\penalty0 (1):\penalty0 1--23, 1993.

\bibitem[Hagrass et~al.(2024{\natexlab{a}})Hagrass, Sriperumbudur, and
  Li]{hagrass2024spectral}
Omar Hagrass, Bharath Sriperumbudur, and Bing Li.
\newblock {Spectral regularized kernel two-sample tests}.
\newblock \emph{The Annals of Statistics}, 52\penalty0 (3):\penalty0
  1076--1101, 2024{\natexlab{a}}.

\bibitem[Hagrass et~al.(2024{\natexlab{b}})Hagrass, Sriperumbudur, and
  Li]{hagrass2024spectralGOF}
Omar Hagrass, Bharath~K. Sriperumbudur, and Bing Li.
\newblock Spectral regularized kernel goodness-of-fit tests.
\newblock \emph{Journal of Machine Learning Research}, 25\penalty0
  (309):\penalty0 1--52, 2024{\natexlab{b}}.

\bibitem[Hagrass et~al.(2025)Hagrass, Sriperumbudur, and
  Balasubramanian]{hagrass2025minimax}
Omar Hagrass, Bharath Sriperumbudur, and Krishnakumar Balasubramanian.
\newblock {Minimax optimal goodness-of-fit testing with kernel Stein
  discrepancy}.
\newblock \emph{arXiv preprint arXiv:2404.08278}, 2025.

\bibitem[Hampel(1974)]{hampel1974influence}
Frank~R. Hampel.
\newblock The influence curve and its role in robust estimation.
\newblock \emph{Journal of the American Statistical Association}, 69\penalty0
  (346):\penalty0 383--393, 1974.

\bibitem[Hogg et~al.(1977)Hogg, Tanis, and Zimmerman]{hogg1977probability}
Robert~V. Hogg, Elliot~A. Tanis, and Dale~L. Zimmerman.
\newblock \emph{{Probability and Statistical Inference}}, volume 993.
\newblock Macmillan New York, 1977.

\bibitem[Huang et~al.(2023)Huang, Liu, Duncan, and
  Gandy]{huang2023highdimensional}
Kevin~H. Huang, Xing Liu, Andrew Duncan, and Axel Gandy.
\newblock A high-dimensional convergence theorem for {U}-statistics with
  applications to kernel-based testing.
\newblock In \emph{Conference on {{Learning Theory}}}, volume 195 of
  \emph{Proceedings of {{Machine Learning Research}}}, pages 3827--3918, 2023.

\bibitem[Huber(1964)]{huber1964robust}
Peter~J. Huber.
\newblock Robust estimation of a location parameter.
\newblock \emph{The Annals of Mathematical Statistics}, 35\penalty0
  (1):\penalty0 73--101, 1964.

\bibitem[Huber(1965)]{huber1965robust}
Peter~J. Huber.
\newblock A robust version of the probability ratio test.
\newblock \emph{The Annals of Mathematical Statistics}, pages 1753--1758, 1965.

\bibitem[Huber and Ronchetti(2011)]{huber2011robust}
Peter~J. Huber and Elvezio~M. Ronchetti.
\newblock \emph{{Robust Statistics}}.
\newblock John Wiley \& Sons, 2011.

\bibitem[Huggins and Mackey(2018)]{huggins2018random}
Jonathan Huggins and Lester Mackey.
\newblock Random feature {{Stein}} discrepancies.
\newblock \emph{Advances in Neural Information Processing Systems}, 31, 2018.

\bibitem[Huskova and Janssen(1993)]{huskova1993consistency}
Marie Huskova and Paul Janssen.
\newblock Consistency of the generalized bootstrap for degenerate
  {{U-statistics}}.
\newblock \emph{The Annals of Statistics}, 21\penalty0 (4):\penalty0
  1811--1823, 1993.

\bibitem[Hyv{{\"a}}rinen(2005)]{hyvarinen2005estimation}
Aapo Hyv{{\"a}}rinen.
\newblock Estimation of non-normalized statistical models by score matching.
\newblock \emph{Journal of Machine Learning Research}, 6\penalty0
  (24):\penalty0 695--709, 2005.

\bibitem[Ingster(1987)]{ingster1987minimax}
Yu~I Ingster.
\newblock {Minimax testing of nonparametric hypotheses on a distribution
  density in the L\_p metrics}.
\newblock \emph{Theory of Probability \& Its Applications}, 31\penalty0
  (2):\penalty0 333--337, 1987.

\bibitem[Ingster(1993)]{ingster1993asymptotically}
Yuri~I. Ingster.
\newblock {Asymptotically minimax hypothesis testing for nonparametric
  alternatives. I, II, III}.
\newblock \emph{Math. Methods Statist}, 2\penalty0 (2):\penalty0 85--114, 1993.

\bibitem[Janssen(1994)]{janssen1994weighted}
Paul Janssen.
\newblock {Weighted Bootstrapping of U-Statistics}.
\newblock \emph{Journal of Statistical Planning and Inference}, 38\penalty0
  (1):\penalty0 31--41, 1994.
\newblock ISSN 0378-3758.

\bibitem[Janssen(1997)]{janssen1997bootstrapping}
Paul Janssen.
\newblock Bootstrapping {{U-statistics}}.
\newblock \emph{South African Statistical Journal}, 31\penalty0 (2):\penalty0
  185--216, 1997.

\bibitem[Jitkrittum et~al.(2017)Jitkrittum, Xu, Szab{\'o}, Fukumizu, and
  Gretton]{jitkrittum2017linear}
Wittawat Jitkrittum, Wenkai Xu, Zolt{\'a}n Szab{\'o}, Kenji Fukumizu, and
  Arthur Gretton.
\newblock A linear-time kernel goodness-of-fit test.
\newblock In \emph{International Conference on Neural Information Processing
  Systems}, pages 261--270. Curran Associates Inc., 2017.

\bibitem[Kanagawa et~al.(2022)Kanagawa, Gretton, and
  Mackey]{kanagawa2022controlling}
Heishiro Kanagawa, Arthur Gretton, and Lester Mackey.
\newblock Controlling moments with kernel {{Stein}} discrepancies.
\newblock \emph{arXiv preprint arXiv:2211.05408}, 2022.

\bibitem[Kassam(1981)]{kassam1981robust}
S.~Kassam.
\newblock Robust hypothesis testing for bounded classes of probability
  densities (corresp.).
\newblock \emph{IEEE Transactions on Information Theory}, 27\penalty0
  (2):\penalty0 242--247, 1981.

\bibitem[Key et~al.(2025)Key, Gretton, Briol, and Fernandez]{key2021composite}
Oscar Key, Arthur Gretton, Fran{\c{c}}ois-Xavier Briol, and Tamara Fernandez.
\newblock Composite goodness-of-fit tests with kernels.
\newblock \emph{Journal of Machine Learning Research}, 26\penalty0
  (51):\penalty0 1--60, 2025.

\bibitem[Kolmogorov(1933)]{kolmogorov1933sulla}
A.N. Kolmogorov.
\newblock Sulla determinazione empirica di una legge didistribuzione.
\newblock \emph{Giorn Dell'inst Ital Degli Att}, 4:\penalty0 89--91, 1933.

\bibitem[Lambert(1982)]{lambert1982qualitative}
Diane Lambert.
\newblock Qualitative robustness of tests.
\newblock \emph{Journal of the American Statistical Association}, 77\penalty0
  (378):\penalty0 352--357, 1982.

\bibitem[LeCam(1973)]{lecam1973convergence}
Lucien LeCam.
\newblock Convergence of estimates under dimensionality restrictions.
\newblock \emph{The Annals of Statistics}, 1\penalty0 (1):\penalty0 38--53,
  1973.

\bibitem[Lehmann and Romano(2022)]{lehmann2008testing}
Erich~Leo Lehmann and Joseph~P Romano.
\newblock \emph{{Testing Statistical Hypotheses}}.
\newblock Springer Cham, fourth edition, 2022.

\bibitem[Leucht and Neumann(2013)]{leucht2013dependent}
Anne Leucht and Michael~H. Neumann.
\newblock Dependent wild bootstrap for degenerate {{U-}} and {{V-statistics}}.
\newblock \emph{Journal of Multivariate Analysis}, 117:\penalty0 257--280,
  2013.
\newblock ISSN 0047-259X.

\bibitem[Levy(2008)]{levy2008robust}
Bernard~C. Levy.
\newblock Robust hypothesis testing with a relative entropy tolerance.
\newblock \emph{IEEE Transactions on Information Theory}, 55\penalty0
  (1):\penalty0 413--421, 2008.

\bibitem[Liu et~al.(2016)Liu, Lee, and Jordan]{liu2016kernelized}
Qiang Liu, Jason Lee, and Michael Jordan.
\newblock A kernelized {S}tein discrepancy for goodness-of-fit tests.
\newblock In \emph{{{International Conference}} on {{Machine Learning}}},
  volume~48 of \emph{Proceedings of {{Machine Learning Research}}}, pages
  276--284, 2016.

\bibitem[Liu et~al.(2023)Liu, Duncan, and Gandy]{liu2023using}
Xing Liu, Andrew~B. Duncan, and Axel Gandy.
\newblock Using perturbation to improve goodness-of-fit tests based on
  kernelized {S}tein discrepancy.
\newblock In \emph{International Conference on Machine Learning}, Proceedings
  of {{Machine Learning Research}}, pages 21527--21547, 2023.

\bibitem[Martinez-Taboada and Ramdas(2024)]{martinez2024empirical}
Diego Martinez-Taboada and Aaditya Ramdas.
\newblock {Empirical Bernstein in smooth Banach spaces}.
\newblock \emph{arXiv preprint arXiv:2409.06060}, 2024.

\bibitem[Matsubara et~al.(2022)Matsubara, Knoblauch, Briol, and
  Oates]{matsubara2022robust}
Takuo Matsubara, Jeremias Knoblauch, Fran{\c c}ois-Xavier Briol, and Chris~J
  Oates.
\newblock Robust generalised {{Bayesian}} inference for intractable
  likelihoods.
\newblock \emph{Journal of the Royal Statistical Society Series B: Statistical
  Methodology}, 84\penalty0 (3):\penalty0 997--1022, 2022.

\bibitem[McDiarmid et~al.(1989)]{mcdiarmid1989method}
Colin McDiarmid et~al.
\newblock On the method of bounded differences.
\newblock \emph{Surveys in combinatorics}, 141\penalty0 (1):\penalty0 148--188,
  1989.

\bibitem[M{\"u}ller(1997)]{muller1997integral}
Alfred M{\"u}ller.
\newblock Integral probability metrics and their generating classes of
  functions.
\newblock \emph{Advances in Applied Probability}, 29\penalty0 (2):\penalty0
  429--443, 1997.

\bibitem[Oates et~al.(2017)Oates, Girolami, and Chopin]{oates2017control}
Chris~J Oates, Mark Girolami, and Nicolas Chopin.
\newblock {Control functionals for Monte Carlo integration}.
\newblock \emph{Journal of the Royal Statistical Society Series B: Statistical
  Methodology}, 79\penalty0 (3):\penalty0 695--718, 2017.

\bibitem[Ouimet(2020)]{ouimet2020explicit}
Fr{\'e}d{\'e}ric Ouimet.
\newblock Explicit formulas for the joint third and fourth central moments of
  the multinomial distribution.
\newblock \emph{arXiv preprint arXiv:2006.09059}, 2020.

\bibitem[Palmer et~al.(2010)Palmer, Kreutz-Delgado, and
  Makeig]{palmer2010strong}
Jason~A. Palmer, Ken Kreutz-Delgado, and Scott Makeig.
\newblock {Strong Sub- and Super-Gaussianity}.
\newblock In Vincent Vigneron, Vicente Zarzoso, Eric Moreau, R{\'e}mi
  Gribonval, and Emmanuel Vincent, editors, \emph{Latent Variable Analysis and
  Signal Separation}, pages 303--310, Berlin, Heidelberg, 2010. Springer Berlin
  Heidelberg.
\newblock ISBN 978-3-642-15995-4.

\bibitem[Pinelis(1994)]{pinelis1994optimum}
Iosif Pinelis.
\newblock {Optimum bounds for the distributions of martingales in Banach
  spaces}.
\newblock \emph{The Annals of Probability}, pages 1679--1706, 1994.

\bibitem[Postman et~al.(1986)Postman, Huchra, and Geller]{postman1986probes}
Marc Postman, John~Peter Huchra, and Margaret~J Geller.
\newblock Probes of large-scale structure in the corona borealis region.
\newblock \emph{Astronomical Journal (ISSN 0004-6256), vol. 92, Dec. 1986, p.
  1238-1247.}, 92:\penalty0 1238--1247, 1986.

\bibitem[Prokhorov(1956)]{prokhorov1956convergence}
Yu~V Prokhorov.
\newblock Convergence of random processes and limit theorems in probability
  theory.
\newblock \emph{Theory of Probability \& Its Applications}, 1\penalty0
  (2):\penalty0 157--214, 1956.

\bibitem[Qin and Priebe(2017)]{qin2017robust}
Yichen Qin and Carey~E Priebe.
\newblock {Robust hypothesis testing via $L_q$-likelihood}.
\newblock \emph{Statistica Sinica}, pages 1793--1813, 2017.

\bibitem[Ramdas et~al.(2015)Ramdas, Reddi, P{\'o}czos, Singh, and
  Wasserman]{ramdas2015decreasing}
Aaditya Ramdas, Sashank~Jakkam Reddi, Barnab{\'a}s P{\'o}czos, Aarti Singh, and
  Larry Wasserman.
\newblock On the decreasing power of kernel and distance based nonparametric
  hypothesis tests in high dimensions.
\newblock In \emph{the {{AAAI}} Conference on Artificial Intelligence},
  volume~29, 2015.

\bibitem[Reddi et~al.(2015)Reddi, Ramdas, P{\'o}czos, Singh, and
  Wasserman]{reddi2015high}
Sashank Reddi, Aaditya Ramdas, Barnab{\'a}s P{\'o}czos, Aarti Singh, and Larry
  Wasserman.
\newblock On the high dimensional power of a linear-time two sample test under
  mean-shift alternatives.
\newblock In \emph{Artificial Intelligence and Statistics}, pages 772--780.
  PMLR, 2015.

\bibitem[Rieder(1982)]{rieder1982qualitative}
Helmut Rieder.
\newblock Qualitative robustness of rank tests.
\newblock \emph{The Annals of Statistics}, 10\penalty0 (1):\penalty0 205--211,
  1982.

\bibitem[Rizzoni and Min(1991)]{rizzoni1991detection}
Giorgio Rizzoni and PS~Min.
\newblock Detection of sensor failures in automotive engines.
\newblock \emph{IEEE Transactions on Vehicular Technology}, 40\penalty0
  (2):\penalty0 487--500, 1991.

\bibitem[Roeder(1990)]{roeder1990density}
Kathryn Roeder.
\newblock Density estimation with confidence sets exemplified by superclusters
  and voids in the galaxies.
\newblock \emph{Journal of the American Statistical Association}, 85\penalty0
  (411):\penalty0 617--624, 1990.

\bibitem[Schrab and Kim(2024)]{schrab2024robust}
Antonin Schrab and Ilmun Kim.
\newblock Robust kernel hypothesis testing under data corruption.
\newblock \emph{arXiv preprint arXiv:2405.19912}, 2024.

\bibitem[Schrab et~al.(2022)Schrab, Guedj, and Gretton]{schrab2022ksd}
Antonin Schrab, Benjamin Guedj, and Arthur Gretton.
\newblock {{KSD}} aggregated goodness-of-fit test.
\newblock \emph{Advances in Neural Information Processing Systems},
  35:\penalty0 32624--32638, 2022.

\bibitem[Schrab et~al.(2023)Schrab, Kim, Albert, Laurent, Guedj, and
  Gretton]{schrab2023mmd}
Antonin Schrab, Ilmun Kim, M{\'e}lisande Albert, B{\'e}atrice Laurent, Benjamin
  Guedj, and Arthur Gretton.
\newblock {{MMD}} aggregated two-sample test.
\newblock \emph{Journal of Machine Learning Research}, 24\penalty0
  (194):\penalty0 1--81, 2023.

\bibitem[Serfling(2009)]{serfling2009approximation}
Robert~J Serfling.
\newblock \emph{Approximation Theorems of Mathematical Statistics}.
\newblock John Wiley \& Sons, 2009.

\bibitem[Shao(2010)]{shao2010dependent}
Xiaofeng Shao.
\newblock The dependent wild bootstrap.
\newblock \emph{Journal of the American Statistical Association}, 105\penalty0
  (489):\penalty0 218--235, 2010.

\bibitem[Sharma et~al.(2010)Sharma, Golubchik, and Govindan]{sharma2010sensor}
Abhishek~B Sharma, Leana Golubchik, and Ramesh Govindan.
\newblock {Sensor faults: Detection methods and prevalence in real-world
  datasets}.
\newblock \emph{ACM Transactions on Sensor Networks (TOSN)}, 6\penalty0
  (3):\penalty0 1--39, 2010.

\bibitem[Shi and Mackey(2024)]{shi2024Finiteparticle}
Jiaxin Shi and Lester Mackey.
\newblock A finite-particle convergence rate for stein variational gradient
  descent.
\newblock \emph{Advances in Neural Information Processing Systems}, 36, 2024.

\bibitem[Sriperumbudur et~al.(2010)Sriperumbudur, Gretton, Fukumizu,
  Sch{\"o}lkopf, and Lanckriet]{sriperumbudur2010hilbert}
Bharath~K Sriperumbudur, Arthur Gretton, Kenji Fukumizu, Bernhard
  Sch{\"o}lkopf, and Gert~RG Lanckriet.
\newblock Hilbert space embeddings and metrics on probability measures.
\newblock \emph{Journal of Machine Learning Research}, 11:\penalty0 1517--1561,
  2010.

\bibitem[Sun and Zou(2023)]{sun2023kernel}
Zhongchang Sun and Shaofeng Zou.
\newblock Kernel robust hypothesis testing.
\newblock \emph{IEEE Transactions on Information Theory}, 2023.

\bibitem[Tolstikhin et~al.(2017)Tolstikhin, Sriperumbudur, and
  Muandet]{tolstikhin2017minimax}
Ilya Tolstikhin, Bharath~K. Sriperumbudur, and Krikamol Muandet.
\newblock Minimax estimation of kernel mean embeddings.
\newblock \emph{Journal of Machine Learning Research}, 18\penalty0
  (86):\penalty0 1--47, 2017.

\bibitem[{Van der Vaart}(2000)]{vandervaart2000asymptotic}
Aad~W {Van der Vaart}.
\newblock \emph{{Asymptotic Statistics}}, volume~3.
\newblock Cambridge University Press, 2000.

\bibitem[Wenliang and Kanagawa(2020)]{wenliang2020blindness}
Li~K Wenliang and Heishiro Kanagawa.
\newblock Blindness of score-based methods to isolated components and mixing
  proportions.
\newblock \emph{arXiv preprint arXiv:2008.10087}, 2020.

\bibitem[Wolfer and Alquier(2022)]{wolfer2022variance}
Geoffrey Wolfer and Pierre Alquier.
\newblock Variance-aware estimation of kernel mean embedding.
\newblock \emph{arXiv preprint arXiv:2210.06672}, 2022.

\bibitem[Wynne et~al.(2022)Wynne, Kasprzak, and Duncan]{wynne2022spectral}
George Wynne, Miko{\l}aj Kasprzak, and Andrew~B. Duncan.
\newblock A spectral representation of kernel {{Stein}} discrepancy with
  application to goodness-of-fit tests for measures on infinite dimensional
  {{Hilbert}} spaces.
\newblock \emph{arXiv preprint arXiv:2206.04552}, 2022.

\bibitem[Xu and Matsuda(2020)]{xu2020stein}
Wenkai Xu and Takeru Matsuda.
\newblock A {{Stein}} goodness-of-fit test for directional distributions.
\newblock In \emph{International {{Conference}} on {{Artificial Intelligence}}
  and {{Statistics}}}, pages 320--330. PMLR, 2020.

\bibitem[Xu and Matsuda(2021)]{xu2021interpretable}
Wenkai Xu and Takeru Matsuda.
\newblock {Interpretable Stein goodness-of-fit tests on Riemannian manifold}.
\newblock In \emph{International Conference on Machine Learning}, pages
  11502--11513. PMLR, 2021.

\bibitem[Xu and Reinert(2021)]{xu2021stein}
Wenkai Xu and Gesine Reinert.
\newblock A {{Stein}} goodness-of-test for exponential random graph models.
\newblock In \emph{International {{Conference}} on {{Artificial Intelligence}}
  and {{Statistics}}}, pages 415--423. PMLR, 2021.

\bibitem[Yang et~al.(2018)Yang, Liu, Rao, and Neville]{yang2018goodness}
Jiasen Yang, Qiang Liu, Vinayak Rao, and Jennifer Neville.
\newblock Goodness-of-fit testing for discrete distributions via {{Stein}}
  discrepancy.
\newblock In \emph{International Conference on Machine Learning}, pages
  5561--5570. PMLR, 2018.

\bibitem[Yang et~al.(2019)Yang, Rao, and Neville]{yang2019stein}
Jiasen Yang, Vinayak Rao, and Jennifer Neville.
\newblock A {S}tein–{P}apangelou goodness-of-fit test for point processes.
\newblock In \emph{International Conference on Artificial Intelligence and
  Statistics}, pages 226--235. PMLR, 2019.

\bibitem[Yang and Chen(2018)]{yang2018robust}
Pengfei Yang and Biao Chen.
\newblock Robust {{Kullback-Leibler}} divergence and universal hypothesis
  testing for continuous distributions.
\newblock \emph{IEEE Transactions on Information Theory}, 65\penalty0
  (4):\penalty0 2360--2373, 2018.

\bibitem[Zhang et~al.(2019)Zhang, Lin, Liu, Zhang, Yan, and
  Wei]{zhang2019review}
Liangwei Zhang, Jing Lin, Bin Liu, Zhicong Zhang, Xiaohui Yan, and Muheng Wei.
\newblock A review on deep learning applications in prognostics and health
  management.
\newblock \emph{Ieee Access}, 7:\penalty0 162415--162438, 2019.

\bibitem[Zhang et~al.(2022)Zhang, Key, Hayes, Barber, Paige, and
  Briol]{zhang2022healing}
Mingtian Zhang, Oscar Key, Peter Hayes, David Barber, Brooks Paige, and Fran{\c
  c}ois-Xavier Briol.
\newblock Towards healing the blindness of score matching.
\newblock \emph{arXiv preprint arXiv:2209.07396}, 2022.

\end{thebibliography}


\end{document}